\providecommand{\algorithmname}{Algorithm}
\theoremstyle{plain}
\newtheorem{thm}{\protect\theoremname}[section]
\newenvironment{proof}[1][\protect\proofname]{\par
	\normalfont\topsep6\p@\@plus6\p@\relax
	\trivlist
	\itemindent\parindent
	\item[\hskip\labelsep\scshape #1]\ignorespaces
}{%
	\endtrivlist\@endpefalse
}
\providecommand{\proofname}{Proof}
\theoremstyle{plain}
\newtheorem{cor}[thm]{\protect\corollaryname}
\theoremstyle{plain}
\newtheorem{lem}[thm]{\protect\lemmaname}
\theoremstyle{remark}
\newtheorem{rem}[thm]{\protect\remarkname}
\providecommand{\corollaryname}{Corollary}
\providecommand{\lemmaname}{Lemma}
\providecommand{\remarkname}{Remark}
\providecommand{\theoremname}{Theorem}
\begin{document}
\global\long\def\R{\mathbb{R}}%
\global\long\def\diag{\mathbf{\text{diag}}}%
\global\long\def\bL{\mathbf{\mathbf{L}}}%
\global\long\def\bb{\mathbf{\mathbf{b}}}%
\global\long\def\E{\mathbb{E}}%
\global\long\def\te{e(p)}%
\global\long\def\c{c}%
\global\long\def\poly{\mathrm{poly}}%
\global\long\def\avx{\overline{x}}%

\title{On the Convergence of AdaGrad(Norm) on $\R^{d}$: Beyond Convexity,
Non-Asymptotic Rate and Acceleration}

\author{Zijian Liu\thanks{ Equal contribution, corresponding authors.} \\
New York University\\
\small{\texttt{zl3067@nyu.edu}} \\
\And
Ta Duy Nguyen\footnotemark[1]~ \& Alina Ene \\
Boston University \\
\small{\texttt{\{taduy,aene\}@bu.edu}} \\
\And
Huy L. Nguyen \\
Northeastern University \\
\small{\texttt{hu.nguyen@northeastern.edu}}
}

\maketitle
\begin{abstract}
Existing analysis of AdaGrad and other adaptive methods for smooth
convex optimization is typically for functions with bounded domain
diameter. In unconstrained problems, previous works guarantee an asymptotic
convergence rate without an explicit constant factor that holds true
for the entire function class. Furthermore, in the stochastic setting,
only a modified version of AdaGrad, different from the one commonly
used in practice, in which the latest gradient is not used to update
the stepsize, has been analyzed. Our paper aims at bridging these
gaps and developing a deeper understanding of AdaGrad and its variants
in the standard setting of smooth convex functions as well as the
more general setting of quasar convex functions. First, we demonstrate
new techniques to explicitly bound the convergence rate of the vanilla
AdaGrad for unconstrained problems in both deterministic and stochastic
settings. Second, we propose a variant of AdaGrad for which we can
show the convergence of the last iterate, instead of the average iterate.
Finally, we give new accelerated adaptive algorithms and their convergence
guarantee in the deterministic setting with explicit dependency on
the problem parameters, improving upon the asymptotic rate shown in
previous works. 
\end{abstract}

\section{Introduction}

In recent years, the prevalence of machine learning models has motivated
the development of new optimization tools, among which adaptive methods
such as Adam \citep{kingma2014adam}, AmsGrad \citep{reddi2018convergence},
AdaGrad \citep{duchi2011adaptive} emerge as the most important class
of algorithms. These methods do not require the knowledge of the problem
parameters when setting the stepsize as traditional methods like SGD,
while still showing robust performances in many ML tasks. 

However, it remains a challenge to analyze and understand the properties
of these methods. Take AdaGrad and its variants for example. In its
vanilla scalar form, also known as AdaGradNorm, the step size is set
using the cumulative sum of the gradient norm of all iterates so far.
The work of \cite{ward2020adagrad} has shown the convergence of this
algorithm for non-convex funtions by bounding the decay of the gradient
norms. However, in convex optimization, usually we require a stronger
convergence criterion---bounding the function value gap. This is
where we lack theoretical understanding. Even in the deterministic
setting, most existing works \citep{levy2017online, levy2018online, ene2021adaptive}
rely on the assumption that the domain of the function is bounded.
The dependence on the domain diameter can become an issue if it is
unknown or cannot be readily estimated. Other works for unconstrained
problems \citep{antonakopoulos2020adaptive, antonakopoulos2022undergrad}
offer a convergence rate that depends on the limit of the step size
sequence. This limit is shown to exist for each function, but without
an explicit value, and more importantly, it is not shown to be a constant
for the entire function class. This means that these methods essentially
do not tell us how fast the algorithm converges in the worst case.
Another work by \citet{ene2022adaptive} gives an explicit rate of
convergence for the entire class but requires the strong assumption
that the gradients are bounded even in the smooth setting and the
convergence guarantee has additional error terms depending on this
bound.

In the stochastic setting, one common approach is to analyze a modified
version of AdaGrad with off-by-one step size, i.e. the gradient at
the current time step is not taken into account when setting the new
step size. This is where the gap between theory and practice exists.

\subsection{Our Contribution}

In this paper, we make the following contributions. First, we demonstrate
a method to show an explicit non-asymptotic convergence rate of AdaGradNorm
and AdaGrad on $\R^{d}$ in the deterministic setting. Our method
extends to a more general function class known as $\gamma$-quasar
convex functions with a weaker condition for smoothness. To the best
of our knowledge, we are the first to prove this result. Second, we
present new techniques to analyze stochastic AdaGradNorm and offer
an explicit convergence guarantee for $\gamma$-quasar convex optimization
on $\R^{d}$ with a mild assumption on the noise of the gradient estimates.
We propose two new variants of AdaGradNorm which demonstrate the convergence
of the last iterate instead of the average iterate as shown in AdaGradNorm.
Finally, we propose a new accelerated algorithm with two variants
and show their non-asymptotic convergence rate in the deterministic
setting.

\subsection{Related Work }

\paragraph{Adaptive methods }

There has been a long line of works on adaptive methods, including
AdaGrad \citep{duchi2011adaptive}, RMSProp \citep{tieleman2012lecture}
and Adam \citep{kingma2014adam}. AdaGrad was first designed for stochastic
online optimization; subsequent works \citep{levy2017online,kavis2019unixgrad,bach2019universal,antonakopoulos2020adaptive,ene2021adaptive}
analyzed AdaGrad and various adaptive algorithms for convex optimization
and generalized them for variational inequality problems. These works
commonly assume that the optimization problem is contrained in a set
with bounded diameter. \cite{li2019convergence} are the first to
analyze a variant of AdaGrad for unbounded domains where the latest
gradient is not used to construct the step size, which differs from
the standard version of AdaGrad commonly used in practice. However,
the algorithm and analysis of \cite{li2019convergence} set the initial
step size based on the smoothness parameter and thus they do not adapt
to it. Other works provide convergence guarantees for adaptive methods
for unbounded domains, yet without explicit dependency on the problem
parameters \citep{antonakopoulos2020adaptive, antonakopoulos2022undergrad},
or for a class of strongly convex functions \citep{xie2020linear}.
Another work by \cite{ene2022adaptive} requires the strong assumption
that the gradients are bounded even for smooth functions and the convergence
guarantee has additional error terms depending on the gradient upperbound.
Our work analyzes the standard version of AdaGrad for unconstrained
and general convex problems and shows explicit convergence rate in
both the deterministic and stochastic setting. 

Accelerated adaptive methods have been designed to achieve $O(1/T^{2})$
and $O(1/\sqrt{T})$ respectively in the deterministic and stochastic
setting in the works of \cite{levy2018online,ene2022adaptive,antonakopoulos2022undergrad}.
We show different variants and demonstrate the same but explicit accelerated
convergence rate in the deterministic setting for unconstrained problems.

\paragraph{Analysis beyond convexity}

The convergence of some variants of AdaGrad has been established for
nonconvex functions in the work of \cite{li2019convergence,ward2020adagrad,faw2022power}
under various assumptions. Other works \citep{li2020high, kavis2022high}
demonstrate the convergence with high probability. We refer the reader
to \cite{faw2022power} for a more detailed survey on AdaGrad-style
methods for nonconvex optimization. In general, the criterion used
to study these convergence rates is the gradient norm of the function,
which is weaker than the function value gap normally used in the study
of convex functions. In comparison, we study the convergence of AdaGrad
via the function value gap for a broader notion of convexity, known
as quasar-convexity, as well as a more generalized definition of smoothness.

\section{Preliminaries}

We consider the following optimization problem: $\mathrm{minimize}_{x\in\R^{d}}F(x)$,
where $F$ is differentiable satisfying $F^{*}=\inf_{x\in\R^{d}}F(x)>-\infty$
and $x^{*}\in\arg\min_{x\in\R^{d}}F(x)\neq\emptyset$. We will use
the following notations throughout the paper: $a^{+}=\max\left\{ a,0\right\} $,
$a\lor b=\max\left\{ a,b\right\} $, $\left[n\right]=\left\{ 1,2,\cdots,n\right\} $,
and $\|\cdot\|$ denotes the $\ell_{2}$-norm $\|\cdot\|_{2}$ for
simplicity.

Additionally, we list below the assumptions that will be used in the
paper.

\textbf{1.} \textbf{$\gamma$-quasar convexity}: There exists $\gamma\in(0,1]$
such that $F^{*}\geq F(x)+\frac{1}{\gamma}\langle\nabla F(x),x^{*}-x\rangle,\forall x\in\R^{d}$
where $x^{*}\in\arg\min_{x\in\R^{d}}F(x)$. When $\gamma=1$, $F$
is also known as star-convex.

\textbf{1'.} \textbf{Convexity}: $F$ is convex. This stronger assumption
implies that Assumption 1 holds with $\gamma=1$.

\textbf{2.} \textbf{Weak} $L$\textbf{-smoothness}: $\exists L>0$
such that $F(x)-F^{*}\geq\|\nabla F(x)\|^{2}/2L,\forall x\in\R^{d}$.

\textbf{2'.} $L$\textbf{-smoothness}: $\exists L>0$ such that $F(x)\leq F(y)+\langle\nabla F(y),x-y\rangle+\frac{L}{2}\|x-y\|^{2},\forall x,y\in\R^{d}$.

\textbf{2''.} $\bL$\textbf{-smoothness}: $\exists\bL=\diag\left(L_{i\in\left[d\right]}\right)$
with $L_{i}>0$ such that $F(x)\leq F(y)+\langle\nabla F(y),x-y\rangle+\frac{1}{2}\|x-y\|_{\bL}^{2},\forall x,y\in\R^{d}$
where $\|a\|_{\bL}=\sqrt{\langle a,\bL a\rangle}$. 

In the stochastic setting, we assume that we have access to a stochastic
gradient oracle $\widehat{\nabla}F(x)$ that is independent of the
history of the randomness and it satisfies the following assumptions:

\textbf{3.} \textbf{Unbiased gradient estimate}: $\E[\widehat{\nabla}F(x)\mid x]=\nabla F(x)$.

\textbf{4. Sub-Weibull noise}: $\E\left[\exp\left((\|\widehat{\nabla}F(x)-\nabla F(x)\|/\sigma)^{1/\theta}\right)\mid x\right]\leq\exp(1)$
for some $\theta>0$.

Here, we give a brief discussion of our assumptions. Assumption 1
is introduced by \cite{hinder2020near} and it is strictly weaker
than Assumption 1'. Assumption 2 is a relaxation of Assumption 2',
the latter is the standard definition of smoothness used in many existing
works (see \cite{pmlr-v130-guille-escuret21a} for a detailed comparison
between different smoothness conditions). Assumption 2'' is used to
analyze the AdaGrad algorithm which uses per-coordinate step sizes.
Assumption 3 is a standard assumption in stochastic optimization problems.
Assumption 4 is more general and encapsulates sub-Gaussian ($\theta=1/2$,
used in \cite{li2019convergence}) and sub-exponential noise ($\theta=1$).
We refer the reader to \cite{vladimirova2020sub} for more discussion
on sub-Weibull noise.

\section{Convergence of AdaGradNorm on $\protect\R^{d}$ under $\gamma$-quasar
convexity\label{sec:Main-AdaGrad}}

We first turn our attention to AdaGradNorm (Algorithm \ref{alg:AdaGradNorm})
in the deterministic setting, which will serve as the basis for the
understanding of Stochastic AdaGradNorm (Algorithm \ref{alg:AdaGradNorm-stochastic})
and deterministic AdaGrad (Algorithm \ref{alg:AdaGrad}). To the best
of our knowledge, we are the first to present the explict convergence
rate of these three algorithms on $\R^{d}$. Due to the space limit,
we defer the theorem of the convergence guarantee of AdaGrad and its
proof to Section \ref{subsec:Appendix-AdaGrad} in the appendix.

\begin{figure*}[t]

\begin{minipage}[t]{0.475\columnwidth}%
\begin{algorithm}[H]
\caption{AdaGradNorm}
\label{alg:AdaGradNorm}

Initialize: $x_{1},\eta>0$

for $t=1$ to $T$

$\quad$$b_{t}=\sqrt{b_{0}^{2}+\sum_{i=1}^{t}\|\nabla F(x_{i})\|^{2}}$

$\quad$$x_{t+1}=x_{t}-\frac{\eta}{b_{t}}\nabla F(x_{t})$
\end{algorithm}
\end{minipage}\hfill{}%
\begin{minipage}[t]{0.475\columnwidth}%
\begin{algorithm}[H]
\caption{Stochastic AdaGradNorm}
\label{alg:AdaGradNorm-stochastic}

Initialize: $x_{1},\eta>0$

for $t=1$ to $T$

$\quad$$b_{t}=\sqrt{b_{0}^{2}+\sum_{i=1}^{t}\|\widehat{\nabla}F(x_{i})\|^{2}}$

$\quad$$x_{t+1}=x_{t}-\frac{\eta}{b_{t}}\|\widehat{\nabla}F(x_{t})\|^{2}$
\end{algorithm}
\end{minipage}

\end{figure*}

\subsection{AdaGradNorm\label{sec:Main-AdaGradNorm}}

Previous analysis of AdaGradNorm often aims at bounding the gradient
norm of smooth nonconvex functions, or is conducted for smooth convex
functions in constrained problems with a bounded domain. Bounding
the gradient norm is strictly weaker than bounding the function value
gap due to the fact that $\|\nabla F(x)\|^{2}\leq2L(F(x)-F^{*})$,
where $L$ is the smoothness parameter. For convex functions, the
common analysis will always meet the following intermediate step
\[
F(x_{t})-F^{*}\le\frac{b_{t}}{2\eta}\left[\|x_{t}-x^{*}\|^{2}-\|x_{t+1}-x^{*}\|^{2}\right]+\text{Other terms}.
\]
Assuming a bounded domain is a way to making the terms $\frac{b_{t}}{2\eta}\left[\|x_{t}-x^{*}\|^{2}-\|x_{t+1}-x^{*}\|^{2}\right]$
telescope after taking the sum over all iterations $t$. This is critical
in the analysis, but at the same time leads to the dependence on the
domain diameter, which can be hard to estimate. For unconstrained
problems, a natural approach is to divide the terms by $b_{t}$, so
that the remaining terms $\frac{1}{2\eta}\left[\|x_{t}-x^{*}\|^{2}-\|x_{t+1}-x^{*}\|^{2}\right]$
can telescope. Our key insight is that we can bound the function value
gap via the step size $b_{t}$, which in turn can be bounded via the
function value gap. This self-bounding argument allows us to finally
prove the convergence rate. This result holds under more general conditions
than convexity and smoothness (Assumptions 1 and 2).
\begin{thm}
\label{thm:Main-AdaGradNorm-rate}With Assumptions 1 and 2, AdaGradNorm
(Algorithm \ref{alg:AdaGradNorm}) admits
\[
\frac{\sum_{t=1}^{T}F(x_{t})-F^{*}}{T}\leq\frac{\left(\frac{2L\|x_{1}-x^{*}\|^{2}}{\gamma\eta}+\frac{4\eta L}{\gamma}\log^{+}\frac{2\eta L}{\gamma b_{0}}+b_{0}\right)\left(\frac{\|x_{1}-x^{*}\|^{2}}{\gamma\eta}+\frac{2\eta}{\gamma}\log^{+}\frac{2\eta L}{\gamma b_{0}}\right)}{T}
\]
\end{thm}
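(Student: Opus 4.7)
The plan is to prove the bound via a self-bounding argument that couples the cumulative suboptimality gap $S:=\sum_{t=1}^{T}(F(x_{t})-F^{*})$ with the step-size accumulator $b_{T}$. The first step is to expand $\|x_{t+1}-x^{*}\|^{2}$ using the AdaGradNorm update and apply Assumption~1 to replace $\langle\nabla F(x_{t}),x_{t}-x^{*}\rangle$ by $\gamma(F(x_{t})-F^{*})$; after rearrangement this yields a per-iterate inequality of the form $\tfrac{2\gamma\eta(F(x_{t})-F^{*})}{b_{t}}\leq\|x_{t}-x^{*}\|^{2}-\|x_{t+1}-x^{*}\|^{2}+\tfrac{\eta^{2}\|\nabla F(x_{t})\|^{2}}{b_{t}^{2}}$. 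Summing telescopes the distance terms against $\|x_{1}-x^{*}\|^{2}$, while the gradient residual is controlled by the classical logarithmic bound $\sum_{t=1}^{T}\|\nabla F(x_{t})\|^{2}/b_{t}^{2}\leq 2\log(b_{T}/b_{0})$, which follows from $1-1/u\leq\log u$ applied to $b_{t}^{2}/b_{t-1}^{2}$.

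Because $b_{t}\leq b_{T}$ for all $t$ and $F(x_{t})\geq F^{*}$, I factor $1/b_{T}$ out of the left-hand side to obtain $S\leq\tfrac{b_{T}}{2}\bigl(\tfrac{\|x_{1}-x^{*}\|^{2}}{\gamma\eta}+\tfrac{2\eta}{\gamma}\log(b_{T}/b_{0})\bigr)$. The bracketed expression already has exactly the shape of the theorem's second factor, up to replacing $\log(b_{T}/b_{0})$ by $\log^{+}(2\eta L/(\gamma b_{0}))$. The remaining task is to convert this implicit bound (depending on $b_{T}$) into an explicit one.

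For this, I invoke Assumption~2 (weak $L$-smoothness), which gives $\|\nabla F(x_{t})\|^{2}\leq 2L(F(x_{t})-F^{*})$ and hence $b_{T}^{2}-b_{0}^{2}=\sum_{t}\|\nabla F(x_{t})\|^{2}\leq 2LS$. Substituting the bound on $S$ and writing $\alpha:=L\|x_{1}-x^{*}\|^{2}/(\gamma\eta)$, $\beta:=2L\eta/\gamma$, this produces the self-bounding inequality $b_{T}^{2}\leq b_{0}^{2}+\alpha b_{T}+\beta b_{T}\log(b_{T}/b_{0})$. Dividing by $b_{T}$ and using $b_{0}^{2}/b_{T}\leq b_{0}$ (valid since $b_{T}\geq b_{0}$) simplifies this to the implicit linear-logarithmic inequality $b_{T}\leq b_{0}+\alpha+\beta\log(b_{T}/b_{0})$.

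The main obstacle is extracting an explicit bound on $b_{T}$ from this implicit inequality, since $b_{T}$ appears both linearly and inside the logarithm. My plan is to apply the tangent-line inequality $\log u\leq u/v+\log v-1$ (valid for all $u,v>0$, with equality at $u=v$) at the point $v=2\beta/b_{0}$, so that the coefficient of $b_{T}$ on the right-hand side becomes $1/2$. Rearranging yields $b_{T}\leq 2b_{0}+2\alpha+2\beta(\log(2\beta/b_{0})-1)$, and a short case analysis separating $\beta\leq b_{0}$ from $\beta>b_{0}$ then absorbs the factor $\log 2-1<0$ to rewrite the bound in terms of $\log^{+}(\beta/b_{0})$, which is exactly $\log^{+}(2\eta L/(\gamma b_{0}))$, matching the theorem's first factor. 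Substituting this bound on $b_{T}$ back into the display from the second paragraph, and using the same tangent-line argument to replace $\log(b_{T}/b_{0})$ by $\log^{+}(\beta/b_{0})$ inside the bracket, then dividing by $T$, yields the claim. The two delicate points are choosing the tangent point correctly (a value below $2\beta/b_{0}$ would leave a residual $b_{T}$ on the right-hand side, while a much larger value would inflate the logarithmic constant) and ensuring that the case analysis produces a $\log^{+}$ without an extra constant inside its argument.
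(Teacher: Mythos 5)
Your steps up to the implicit inequality $b_{T}\leq b_{0}+\alpha+\beta\log(b_{T}/b_{0})$ (with $\alpha=L\|x_{1}-x^{*}\|^{2}/(\gamma\eta)$, $\beta=2\eta L/\gamma$) are sound, but the final step has a genuine gap: the tangent-line trick does not let you "replace $\log(b_{T}/b_{0})$ by $\log^{+}(2\eta L/(\gamma b_{0}))$ inside the bracket." The literal replacement is false, since your own bound shows $b_{T}$ can be as large as $2b_{0}+2LC$ with $C=\frac{\|x_{1}-x^{*}\|^{2}}{\gamma\eta}+\frac{2\eta}{\gamma}\log^{+}\frac{2\eta L}{\gamma b_{0}}$, i.e.\ far larger than $2\eta L/\gamma$ when $\|x_{1}-x^{*}\|$ is large, so $\log(b_{T}/b_{0})$ genuinely contains a $\log$ of the distance. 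If instead you apply the linearization $\log u\leq u/v+\log v-1$ at $v=2\beta/b_{0}$ honestly, the term $\frac{\eta b_{T}}{\gamma}\log(b_{T}/b_{0})$ produces a residual $\frac{\eta b_{T}}{\gamma}\cdot\frac{b_{T}}{2\beta}=\frac{b_{T}^{2}}{4L}$ that cannot be dropped; after absorbing it via $b_{T}^{2}\leq b_{0}^{2}+2LS$ (costing a factor through the $S/2$ term) or via your explicit bound $b_{T}\leq 2b_{0}+2LC$, the best you obtain is of the form $S\leq 2LC^{2}+2b_{0}C+\frac{b_{0}^{2}}{2L}$ (or worse), whereas the theorem asserts exactly $(2LC+b_{0})C=2LC^{2}+b_{0}C$. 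So your route yields an $O(1/T)$ bound of the same shape but strictly weaker than the stated inequality; it does not prove the theorem as written.

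The idea you are missing is where Assumption 2 enters. The paper does not bound the gradient residual by $2\log(b_{T}/b_{0})$ at all: after dividing by $b_{t}$ and summing, it lower-bounds half of $\sum_{t}\frac{F(x_{t})-F^{*}}{b_{t}}$ by $\sum_{t}\frac{\|\nabla F(x_{t})\|^{2}}{4Lb_{t}}$, so the residual becomes $\sum_{t}\left(\frac{\eta}{\gamma b_{t}^{2}}-\frac{1}{2Lb_{t}}\right)\|\nabla F(x_{t})\|^{2}$, whose summands are negative once $b_{t}>\frac{2\eta L}{\gamma}$. The sum therefore truncates at the last index $\tau$ with $b_{\tau}\leq\frac{2\eta L}{\gamma}$ and is bounded by $\frac{2\eta}{\gamma}\log^{+}\frac{2\eta L}{\gamma b_{0}}$ with no $b_{T}$ inside the logarithm. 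This gives $\Delta_{T}\leq b_{T}C$ with $C$ exactly the theorem's second factor, and a single quadratic self-bounding step $\Delta_{T}\leq\sqrt{b_{0}^{2}+2L\Delta_{T}}\,C$ then yields exactly $(2LC+b_{0})C$. Using the weak smoothness only at the end, as you do, is precisely what forces the lossy linearization and the degraded constants.
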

\begin{proof}
Starting from the $\gamma$-quasar convexity of $F$, we have
\begin{align*}
F(x_{t})-F^{*} & \leq\frac{\langle\nabla F(x_{t}),x_{t}-x^{*}\rangle}{\gamma}=\frac{b_{t}}{\gamma\eta}\langle x_{t}-x_{t+1},x_{t}-x^{*}\rangle\\
 & =\frac{b_{t}}{2\gamma\eta}\left[\|x_{t}-x^{*}\|^{2}-\|x_{t+1}-x^{*}\|^{2}+\|x_{t+1}-x_{t}\|^{2}\right]
\end{align*}
Notice that $x_{t+1}-x_{t}=-\eta b_{t}^{-1}\nabla F(x_{t})$. Dividing
both sides by $b_{t}$ and taking the sum over $t$, we obtain 
\begin{align*}
\sum_{t=1}^{T}\frac{F(x_{t})-F^{*}}{b_{t}} & \leq\frac{\|x_{1}-x^{*}\|^{2}}{2\gamma\eta}+\sum_{t=1}^{T}\frac{\eta}{2\gamma b_{t}^{2}}\|\nabla F(x_{t})\|^{2}.
\end{align*}
Note that $F$ also satisfies Assumption $2$, i.e., $F(x_{t})-F^{*}\geq\frac{\|\nabla F(x_{t})\|^{2}}{2L}.$
Therefore
\begin{align*}
\sum_{t=1}^{T}\frac{F(x_{t})-F^{*}}{2b_{t}}+\frac{\|\nabla F(x_{t})\|^{2}}{4Lb_{t}} & \leq\sum_{t=1}^{T}\frac{F(x_{t})-F^{*}}{b_{t}}\leq\frac{\|x_{1}-x^{*}\|^{2}}{2\gamma\eta}+\sum_{t=1}^{T}\frac{\eta}{2\gamma b_{t}^{2}}\|\nabla F(x_{t})\|^{2}\\
\Rightarrow\sum_{t=1}^{T}\frac{F(x_{t})-F^{*}}{b_{t}} & \leq\frac{\|x_{1}-x^{*}\|^{2}}{\gamma\eta}+\underbrace{\sum_{t=1}^{T}\left(\frac{\eta}{\gamma b_{t}^{2}}-\frac{1}{2Lb_{t}}\right)\|\nabla F(x_{t})\|^{2}}_{A}.
\end{align*}
We can bound the term $A$ by the technique commonly used in the analysis
of adaptive methods. Let $\tau$ be the last $t$ such that $b_{t}\le\frac{2\eta L}{\gamma}$.
If $b_{1}>\frac{2\eta L}{\gamma}$, we have $A<0\leq\frac{2\eta}{\gamma}\log^{+}\frac{2\eta L}{\gamma b_{0}}$.
Otherwise
\begin{align*}
A & \le\sum_{t=1}^{\tau}\left(\frac{\eta}{\gamma b_{t}^{2}}-\frac{1}{2Lb_{t}}\right)\|\nabla F(x_{t})\|^{2}\le\sum_{t=1}^{\tau}\frac{\eta}{\gamma}\frac{b_{t}^{2}-b_{t-1}^{2}}{b_{t}^{2}}\le\frac{\eta}{\gamma}\sum_{t=1}^{\tau}\log\frac{b_{t}^{2}}{b_{t-1}^{2}}\le\frac{2\eta}{\gamma}\log^{+}\frac{2\eta L}{\gamma b_{0}}.
\end{align*}
Thus we always have $A\le\frac{2\eta}{\gamma}\log^{+}\frac{2\eta L}{\gamma b_{0}}$,
and obtain 
\begin{align*}
\sum_{t=1}^{T}\frac{F(x_{t})-F^{*}}{b_{t}} & \leq\frac{\|x_{1}-x^{*}\|^{2}}{\gamma\eta}+\frac{2\eta}{\gamma}\log^{+}\frac{2\eta L}{\gamma b_{0}},
\end{align*}
which gives 
\begin{align*}
\sum_{t=1}^{T}F(x_{t})-F^{*} & \leq b_{T}\left(\frac{\|x_{1}-x^{*}\|^{2}}{\gamma\eta}+\frac{2\eta}{\gamma}\log^{+}\frac{2\eta L}{\gamma b_{0}}\right).
\end{align*}
Note that by Assumption 2 again, we have
\begin{align*}
b_{T} & =\sqrt{b_{0}^{2}+\sum_{t=1}^{T}\|\nabla F(x_{t})\|^{2}}\leq\sqrt{b_{0}^{2}+\sum_{t=1}^{T}2L\left(F(x_{t})-F^{*}\right)}.
\end{align*}
Let $\Delta_{T}=\sum_{t=1}^{T}F(x_{t})-F^{*}$, then
\begin{align*}
\Delta_{T} & \leq\sqrt{b_{0}^{2}+2L\Delta_{T}}\left(\frac{\|x_{1}-x^{*}\|^{2}}{\gamma\eta}+\frac{2\eta}{\gamma}\log^{+}\frac{2\eta L}{\gamma b_{0}}\right)\\
\Rightarrow\Delta_{T} & \leq\left(\frac{2L\|x_{1}-x^{*}\|^{2}}{\gamma\eta}+\frac{4\eta L}{\gamma}\log^{+}\frac{2\eta L}{\gamma b_{0}}+b_{0}\right)\left(\frac{\|x_{1}-x^{*}\|^{2}}{\gamma\eta}+\frac{2\eta}{\gamma}\log^{+}\frac{2\eta L}{\gamma b_{0}}\right).
\end{align*}
Dividing both sides by $T$, we get the desired result.
\end{proof}

When $F$ is convex (which implies $\gamma=1$), using the above theorem
and convexity, we obtain the following convergence rate for the average
iterate:
\begin{cor}
With Assumptions 1' and 2, for $\bar{x}_{T}=\frac{\sum_{t=1}^{T}x_{t}}{T}$,
AdaGradNorm (Algorithm \ref{alg:AdaGradNorm}) admits
\[
F(\bar{x}_{T})-F^{*}\leq\frac{\left(\frac{2L\|x_{1}-x^{*}\|^{2}}{\eta}+4\eta L\log^{+}\frac{2\eta L}{b_{0}}+b_{0}\right)\left(\frac{\|x_{1}-x^{*}\|^{2}}{\eta}+2\eta\log^{+}\frac{2\eta L}{b_{0}}\right)}{T}.
\]
\end{cor}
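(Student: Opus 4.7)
The plan is to derive this corollary as a direct consequence of Theorem~\ref{thm:Main-AdaGradNorm-rate} combined with a single application of Jensen's inequality. The key observation is that Assumption~1' (full convexity) is strictly stronger than Assumption~1 with $\gamma=1$, as noted in the preliminaries, so the hypotheses of Theorem~\ref{thm:Main-AdaGradNorm-rate} are satisfied with $\gamma=1$. This lets us reuse the theorem verbatim without any modification to the self-bounding argument.

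First I would invoke Theorem~\ref{thm:Main-AdaGradNorm-rate} with $\gamma=1$, which yields
\[
\frac{\sum_{t=1}^{T}\bigl(F(x_{t})-F^{*}\bigr)}{T}\leq\frac{\left(\frac{2L\|x_{1}-x^{*}\|^{2}}{\eta}+4\eta L\log^{+}\frac{2\eta L}{b_{0}}+b_{0}\right)\left(\frac{\|x_{1}-x^{*}\|^{2}}{\eta}+2\eta\log^{+}\frac{2\eta L}{b_{0}}\right)}{T}.
\]
This is exactly the right-hand side appearing in the corollary, so the only remaining task is to replace the average of the function values on the left by $F(\bar{x}_{T})$.

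For this I would use Jensen's inequality: since $F$ is convex (Assumption~1') and $\bar{x}_{T}=\frac{1}{T}\sum_{t=1}^{T}x_{t}$ is a convex combination of the iterates,
\[
F(\bar{x}_{T})\leq\frac{1}{T}\sum_{t=1}^{T}F(x_{t}),
\]
so subtracting $F^{*}$ from both sides gives $F(\bar{x}_{T})-F^{*}\leq\frac{1}{T}\sum_{t=1}^{T}\bigl(F(x_{t})-F^{*}\bigr)$. Chaining this with the bound from Theorem~\ref{thm:Main-AdaGradNorm-rate} produces the claim.

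There is essentially no obstacle here: the corollary is just the specialization $\gamma=1$ of the main theorem followed by Jensen, and neither step requires any new estimates or constants. The only minor thing to verify explicitly is the implication "Assumption~1' $\Rightarrow$ Assumption~1 with $\gamma=1$", but this is already stated in the preliminaries when Assumption~1' is introduced.
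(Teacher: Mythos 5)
Your proposal is correct and matches the paper's (implicit) argument exactly: the corollary is stated as an immediate consequence of Theorem \ref{thm:Main-AdaGradNorm-rate} with $\gamma=1$ together with Jensen's inequality applied to the convex $F$, which is precisely what you do.
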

The rate in Theorem \ref{thm:Main-AdaGradNorm-rate} can be improved
by a factor $1/\gamma$ by replacing Assumption 2 by 2'. The details
and the proof are deferred into Section \ref{subsec:Appendix-AdaGradNorm}
in the appendix.

\subsection{Stochastic AdaGradNorm}

In this section, we consider the stochastic setting where we only
have access to an unbiased gradient estimate $\widehat{\nabla}F(x_{t})$
of $\nabla F(x_{t})$ (Assumption 3). As expected for a stochastic
method, the accumulation of noise is the reason that we can only expect
an $O(1/\sqrt{T})$ convergence rate, instead of $O(1/T)$. This convergence
rate is already shown by prior works \citep{levy2018online} under
the bounded domain assumption. However, in an unbounded domain, when
extending our previous analysis to the stochastic setting, that is,
dividing both sides by $b_{t}$, we will face several challenges.
One of such is the term $b_{t}^{-1}\langle\nabla F(x_{t})-\widehat{\nabla}F(x_{t}),x_{t}-x^{*}\rangle$.
To handle this term, often we see that existing works, such as \cite{li2019convergence},
analyze a modified version of Stochastic AdaGradNorm with off-by-one
stepsize, i.e., $b_{t}=\sqrt{b_{0}^{2}+\sum_{i=1}^{t-1}\|\widehat{\nabla}F(x_{i})\|^{2}}$
in which the latest gradient $\widehat{\nabla}F(x_{t})$ is not used
to calculate $b_{t}$. This allows to decouple the dependency of $b_{t}$
on the randomness at time $t$, thus in expectation $\E[b_{t}^{-1}\langle\nabla F(x_{t})-\widehat{\nabla}F(x_{t}),x_{t}-x^{*}\rangle]=0$.
Yet, this analysis does not apply to the standard algorithm which
is more commonly used in practice.

To the best of our knowledge, we are the first to propose a new technique
that can show the convergence of Algorithm \ref{alg:AdaGradNorm-stochastic}
on $\R^{d}$ without going through the off-by-one stepsize. Here,
we briefly compared the assumptions in our analysis with the assumptions
in \cite{li2019convergence}. Assumptions 2' and 3 used in both works
are standard. Meanwhile, Assumptions 1 ($\gamma$-quasar convexity)
and 4 (sub-Weibull noise) in our analysis are much weaker than the
convexity and sub-Gaussian noise assumptions in \cite{li2019convergence}.
Besides, we note that, while the guarantee in \cite{li2019convergence}
is a bound on $\E\big[\sqrt{(\sum_{t=1}^{T}F(x_{t})-F(x^{*}))/T}\big]$,
we will present a bound for $\E\big[(\sum_{t=1}^{T}F(x_{t})-F(x^{*}))/T\big]$,
which is a stronger criterion that is often used in convex analysis.
We also remark that the algorithm and analysis of \cite{li2019convergence}
still require the smoothness parameter to set the initial stepsize,
thus their method is not fully adaptive.

The first observation is that, if we let $\xi_{t}\coloneqq\widehat{\nabla}F(x_{t})-\nabla F(x_{t})$
be the stochastic error and $M_{T}\coloneqq\max_{t\in\left[T\right]}\|\xi_{t}\|^{2}$,
$M_{T}$ is bounded by $\sigma^{2}\log^{2\theta}\frac{eT}{\delta}$
with probability at least $1-\delta$ (c.f. Lemma \ref{lem:Appendix-LiOrabona-lemma}
in Appendix \ref{sec:Appendix-adagrad}), which can give a high probability
bound on $b_{T}$.
\begin{lem}
\label{lem:Main-AdaGradNorm-Stoc-b_T}Suppose $F$ satisfies Assumptions
2' and 4, if $M_{T}\leq\sigma^{2}\log^{2\theta}\frac{eT}{\delta}$,
then

\[
b_{T}\le2b_{0}+\frac{4(F(x_{1})-F^{*})}{\eta}+4\eta L\log^{+}\frac{\eta L}{b_{0}}+4\sigma\sqrt{T\log^{2\theta}\frac{eT}{\delta}\log\left(1+\frac{16\sigma^{2}T\log^{2\theta}\frac{eT}{\delta}}{b_{0}^{2}}\right)}.
\]
\end{lem}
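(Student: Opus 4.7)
The plan is to derive a self-bounding inequality for $b_{T}$ via a stochastic descent argument and then solve for $b_{T}$. First, I would apply $L$-smoothness (Assumption 2') to the update $x_{t+1}=x_{t}-(\eta/b_{t})\widehat{\nabla}F(x_{t})$, decompose $\nabla F(x_{t})=\widehat{\nabla}F(x_{t})-\xi_{t}$ to split the inner product in the descent lemma, and apply Young's inequality to the cross term $\frac{\eta}{b_{t}}\langle\xi_{t},\widehat{\nabla}F(x_{t})\rangle$ to obtain the per-step bound
\[
F(x_{t+1})\le F(x_{t})-\frac{\eta}{2b_{t}}\|\widehat{\nabla}F(x_{t})\|^{2}+\frac{\eta}{2b_{t}}\|\xi_{t}\|^{2}+\frac{L\eta^{2}}{2b_{t}^{2}}\|\widehat{\nabla}F(x_{t})\|^{2}.
\]
Telescoping over $t\in[T]$ and using $F(x_{T+1})\ge F^{*}$ yields an inequality relating $\sum_{t}\|\widehat{\nabla}F(x_{t})\|^{2}/b_{t}$ to $F(x_{1})-F^{*}$, $\sum_{t}\|\xi_{t}\|^{2}/b_{t}$, and the residual $\sum_{t}\|\widehat{\nabla}F(x_{t})\|^{2}/b_{t}^{2}$.

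To eliminate the residual, I would use the standard AdaGrad-style split at a threshold of order $\eta L$: for those $t$ with $b_{t}$ above the threshold, $\frac{\eta}{4b_{t}}\|\widehat{\nabla}F(x_{t})\|^{2}$ already dominates $\frac{L\eta^{2}}{2b_{t}^{2}}\|\widehat{\nabla}F(x_{t})\|^{2}$ and the latter can be absorbed; for the remaining $t$, the estimate $\sum(b_{t}^{2}-b_{t-1}^{2})/b_{t}^{2}\le\log(b_{T}^{2}/b_{0}^{2})$ restricted to this range gives a constant of order $\eta L\log^{+}(\eta L/b_{0})$. Combined with the lower bound $\sum_{t>\tau}\|\widehat{\nabla}F(x_{t})\|^{2}/b_{t}\ge b_{T}-b_{\tau}$, this converts the LHS into $b_{T}$ and yields a bound of the form $b_{T}\lesssim b_{0}+(F(x_{1})-F^{*})/\eta+\eta L\log^{+}(\eta L/b_{0})+\sum_{t}\|\xi_{t}\|^{2}/b_{t}$. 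For the noise sum, Cauchy--Schwarz gives $\sum_{t}\|\xi_{t}\|^{2}/b_{t}\le\sqrt{TM_{T}}\sqrt{\sum_{t}\|\xi_{t}\|^{2}/b_{t}^{2}}$, so the crux becomes showing $\sum_{t}\|\xi_{t}\|^{2}/b_{t}^{2}\lesssim\log(1+TM_{T}/b_{0}^{2})$. Via the general estimate $\sum_{t}c_{t}/(A+\sum_{i\le t}c_{i})\le\log(1+\sum_{t}c_{t}/A)$, this reduces to a lower bound of the form $b_{t}^{2}\gtrsim b_{0}^{2}/16+\sum_{i\le t}\|\xi_{i}\|^{2}$. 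Using $\|\xi_{i}\|^{2}\le2\|\widehat{\nabla}F(x_{i})\|^{2}+2\|\nabla F(x_{i})\|^{2}$, this in turn reduces to controlling $\sum_{i}\|\nabla F(x_{i})\|^{2}$, which can be extracted from the descent relation of the first step.

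Assembling everything yields a mildly self-referential inequality for $b_{T}$ (with $b_{T}$ appearing inside a logarithm on the right-hand side), resolved by the usual slow-growth argument for $\log(b_{T}/b_{0})$; this only inflates the constants. The final step is to substitute the high-probability bound $M_{T}\le\sigma^{2}\log(eT/\delta)$ from Lemma 5 of \cite{li2020high} to obtain the stated closed form. The hard part will be the logarithmic bound on $\sum_{t}\|\xi_{t}\|^{2}/b_{t}^{2}$: a naive $b_{t}\ge b_{0}$ gives only $\sum_{t}\|\xi_{t}\|^{2}/b_{t}^{2}\le TM_{T}/b_{0}^{2}$ and, through Cauchy--Schwarz, a $TM_{T}/b_{0}$ contribution to $b_{T}$ that is a $\sqrt{T}$ factor worse than the target; the logarithm-inside-the-square-root improvement requires showing that $b_{t}$ grows at the same rate as the accumulated noise $\sqrt{\sum_{i\le t}\|\xi_{i}\|^{2}}$, which is the novel piece relative to the off-by-one analyses.
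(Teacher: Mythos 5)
Your overall scaffolding (smoothness, telescoping, the split at threshold $\eta L$, the lower bound $\sum_t\|\widehat{\nabla}F(x_t)\|^2/b_t\ge b_T-b_0$, and the final resolution of the mild self-reference in the logarithm) matches the paper. The genuine gap is exactly the step you flag as the crux: after applying Young's inequality to $\langle\xi_t,\widehat{\nabla}F(x_t)\rangle$ you are left needing $\sum_t\|\xi_t\|^2/b_t^2\lesssim\log(1+TM_T/b_0^2)$, which via your general estimate requires a lower bound of the form $b_t^2\gtrsim c\sum_{i\le t}\|\xi_i\|^2$. That lower bound is simply not available: $b_t^2$ accumulates $\|\widehat{\nabla}F(x_i)\|^2=\|\nabla F(x_i)+\xi_i\|^2$, and the noise can nearly cancel the true gradient, making $\|\widehat{\nabla}F(x_i)\|$ tiny while $\|\xi_i\|$ is large, so $b_t$ need not track the accumulated noise at all. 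Your proposed repair, $\|\xi_i\|^2\le2\|\widehat{\nabla}F(x_i)\|^2+2\|\nabla F(x_i)\|^2$ followed by controlling $\sum_i\|\nabla F(x_i)\|^2$ ``from the descent relation,'' does not go through: the descent relation only controls the weighted sum $\sum_i\|\widehat{\nabla}F(x_i)\|^2/b_i$, and the natural unweighted bounds $\|\nabla F(x_i)\|^2\le2L(F(x_i)-F^*)$ route through $\sum_i(F(x_i)-F^*)$ or through $b_T$ itself, i.e.\ through exactly the quantities this lemma is supposed to feed into --- the argument becomes circular (and the crude substitution $\|\nabla F\|^2\le2\|\widehat{\nabla}F\|^2+2\|\xi\|^2$ returns the noise sum with a constant larger than one, so it cancels nothing).

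The paper avoids this dead end by never converting the cross term into $\|\xi_t\|^2$. It keeps $\langle\xi_t,\widehat{\nabla}F(x_t)\rangle\le\sqrt{M_T}\,\|\widehat{\nabla}F(x_t)\|$, then bounds
\[
\sum_{t=1}^T\frac{\|\widehat{\nabla}F(x_t)\|}{b_t}\le\sqrt{T\sum_{t=1}^T\frac{\|\widehat{\nabla}F(x_t)\|^2}{b_t^2}}\le\sqrt{T\log\frac{b_T^2}{b_0^2}},
\]
where the logarithmic bound is automatic because $b_t^2$ accumulates exactly $\|\widehat{\nabla}F(x_t)\|^2$ --- no lower bound on $b_t$ in terms of the noise is ever needed. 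The only self-reference left is $\log(b_T^2/b_0^2)$ under the square root, which is removed by writing $\log\frac{b_T^2}{b_0^2}=\log\frac{b_T^2}{b_0^2+16M_TT}+\log\frac{b_0^2+16M_TT}{b_0^2}$, using $\log x\le x$ on the first piece to absorb a $b_T/2$ into the left-hand side, and finally substituting $M_T\le\sigma^2\log(eT/\delta)$. If you redirect your Cauchy--Schwarz step onto $\sum_t\|\widehat{\nabla}F(x_t)\|/b_t$ (pulling out $\sqrt{M_T}$ first) rather than onto $\sum_t\|\xi_t\|^2/b_t$, the rest of your outline closes without the missing lemma.
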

Lemma \ref{lem:Main-AdaGradNorm-Stoc-b_T} gives us an insight: $b_{t}=\widetilde{O}(1+\sigma\sqrt{t\log^{2\theta}t})$.
Note that this can be expected since we know the classic choice of
the step size for SGD is of the order of $O(1+\sigma\sqrt{t})$. Hence,
if we are willing to accept extra $\log$ terms in the convergence
guarantee, the appearance of $\log b_{t}$ is accommodatable. Next
we will introduce our novel technique, which, to the best of our knowledge,
is the first method that allows us to analyze the standard Stochastic
AdaGradNorm on $\R^{d}$.
\begin{lem}
\label{lem:Main-function-gap-over-bT}Suppose $F$ satisfies Assumptions
1 and 3 then 
\begin{align}
\E\left[\frac{\sum_{t=1}^{T}F(x_{t})-F(x^{*})}{b_{T}}\right] & \le\frac{\left\Vert x_{1}-x^{*}\right\Vert ^{2}}{\gamma\eta}+\frac{2\eta}{\gamma}\E\left[\frac{M_{T}}{b_{0}^{2}}+\log\frac{b_{T}}{b_{0}}\right].\label{eq:lemma-gap-bt}
\end{align}
\end{lem}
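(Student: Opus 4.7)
My plan is to combine $\gamma$-quasar convexity with the AdaGradNorm update into a telescoping per-step inequality, then to absorb the resulting stochastic inner product into the target form via a Young's inequality calibrated against the logarithmic weight $\log(eb_T/b_0)$ that appears on the LHS of the lemma.

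By Assumption~1, writing $\nabla F(x_t) = \widehat{\nabla}F(x_t) - \xi_t$ and using the update $\widehat{\nabla}F(x_t) = (b_t/\eta)(x_t - x_{t+1})$ together with the three-point identity $2\langle x_t - x_{t+1}, x_t - x^*\rangle = \|x_t - x^*\|^2 - \|x_{t+1} - x^*\|^2 + \|x_t - x_{t+1}\|^2$, I would first derive
\begin{align*}
F(x_t) - F^* \le \frac{b_t}{2\gamma\eta}\bigl[\|x_t - x^*\|^2 - \|x_{t+1} - x^*\|^2\bigr] + \frac{\eta\|\widehat{\nabla}F(x_t)\|^2}{2\gamma b_t} - \frac{1}{\gamma}\langle \xi_t, x_t - x^*\rangle.
\end{align*}
Dividing by $b_t$, summing, and using the standard estimate $\sum_{t=1}^T \|\widehat{\nabla}F(x_t)\|^2/b_t^2 \le 2\log(b_T/b_0)$ yields
\begin{align*}
\sum_{t=1}^T \frac{F(x_t)-F^*}{b_t} \le \frac{\|x_1-x^*\|^2}{2\gamma\eta}+\frac{\eta}{\gamma}\log\frac{b_T}{b_0}-\frac{1}{\gamma}\sum_{t=1}^T\frac{\langle \xi_t, x_t - x^*\rangle}{b_t}.
\end{align*}
Because $b_t \le b_T$ and each $F(x_t)-F^*\ge 0$, the left-hand side dominates $(\sum_t F(x_t)-F^*)/b_T$; dividing by $\log(eb_T/b_0)\ge 1$ and taking expectation then matches the two deterministic terms of the claimed RHS after using $1/\log(eb_T/b_0)\le 1$ on the initialization term and $\log(b_T/b_0)/\log(eb_T/b_0)\le \log(b_T/b_0)$ on the log term.

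The crux that remains, and the main obstacle, is to show
\begin{align*}
-\E\!\left[\frac{1}{\gamma\log(eb_T/b_0)}\sum_{t=1}^T\frac{\langle \xi_t, x_t-x^*\rangle}{b_t}\right] \le \frac{\eta}{\gamma}\E\!\left[\frac{M_T}{b_0^2}\log\frac{eb_T}{b_0}\right],
\end{align*}
which is non-trivial because $b_t$ depends on $\xi_t$ through $\widehat{\nabla}F(x_t)$, so no term-by-term martingale cancellation is available. My plan is to apply a Young's inequality of the form
\begin{align*}
-\frac{\langle \xi_t, x_t - x^*\rangle}{b_t} \le \frac{\eta\|\xi_t\|^2 w_t}{2b_0^2} + \frac{b_0^2\|x_t - x^*\|^2}{2\eta b_t^2 w_t}
\end{align*}
with a weight $w_t$ chosen so that $\sum_t w_t \le 2\log(eb_T/b_0)$ (for instance $w_t$ proportional to $(b_t^2-b_{t-1}^2)/b_t^2$ together with the global weight $\log(eb_T/b_0)$). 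Then $\|\xi_t\|^2 \le M_T$ forces the first piece to contribute at most $(\eta/b_0^2)\,M_T\log(eb_T/b_0)$, matching the claimed RHS, while the iterate-norm piece $b_0^2\|x_t-x^*\|^2/(2\eta b_t^2 w_t)$ is absorbed into the existing $\|x_t-x^*\|^2 - \|x_{t+1}-x^*\|^2$ telescope; the $1/\log(eb_T/b_0)$ slack introduced on the LHS is what makes this absorption feasible. The hardest step will be calibrating $w_t$ so that both legs balance without needing smoothness or a bound on the iterate diameter---this is precisely the novel technique the authors emphasize for handling the standard, non-off-by-one stochastic AdaGradNorm on $\R^d$.
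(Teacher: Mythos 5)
Your plan breaks down at the absorption step, and the breakdown is structural rather than a matter of calibration. After you divide the per-step inequality by $b_{t}$ and sum, the distance terms appear with the \emph{constant} coefficient $\frac{1}{2\gamma\eta}$, so the telescope cancels exactly and collapses to $\frac{\|x_{1}-x^{*}\|^{2}}{2\gamma\eta}$; there is no leftover negative $\|x_{t}-x^{*}\|^{2}$ term anywhere to absorb the positive pieces $\frac{b_{0}^{2}\|x_{t}-x^{*}\|^{2}}{2\eta w_{t}b_{t}^{2}}$ that your Young's inequality generates. Dividing the whole summed inequality by the \emph{global} factor $\log(eb_{T}/b_{0})$ cannot create slack, because it multiplies every term equally. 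With your suggested choice $w_{t}\propto(b_{t}^{2}-b_{t-1}^{2})/b_{t}^{2}$ the leftover is essentially $\sum_{t}\frac{b_{0}^{2}\|x_{t}-x^{*}\|^{2}}{2\eta(b_{t}^{2}-b_{t-1}^{2})}$, which blows up whenever the stochastic gradient increments are small and is not controlled by anything on the right-hand side of the lemma in an unbounded domain. Note also that your treatment of the noise term is entirely pathwise (Cauchy--Schwarz/Young on $\langle\xi_{t},x_{t}-x^{*}\rangle/b_{t}$), so Assumption~3 is never used; the paper's argument relies on unbiasedness in an essential way.

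The paper's mechanism, which is the idea your proposal is missing, is to put the logarithmic weight \emph{per step}: divide the $t$-th inequality by $b_{t}\log(eb_{t}/b_{0})$, not by $b_{t}$ with a final division by $\log(eb_{T}/b_{0})$. Because $b_{t-1},x_{t}$ are measurable with respect to the past and $\E[\xi_{t}\mid\mathcal{F}_{t-1}]=0$, one has
\begin{align*}
\E\left[\frac{\langle-\xi_{t},x_{t}-x^{*}\rangle}{\gamma b_{t}\log(eb_{t}/b_{0})}\right]=\E\left[\left(\frac{1}{\gamma b_{t}\log(eb_{t}/b_{0})}-\frac{1}{\gamma b_{t-1}\log(eb_{t-1}/b_{0})}\right)\langle-\xi_{t},x_{t}-x^{*}\rangle\right],
\end{align*}
so the noise is multiplied by a \emph{difference} of consecutive weights rather than by the full weight. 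Young's inequality is then calibrated so that the $\|x_{t}-x^{*}\|^{2}$ contribution equals exactly $\frac{1}{2\gamma\eta}\left(\frac{1}{\log(eb_{t-1}/b_{0})}-\frac{1}{\log(eb_{t}/b_{0})}\right)\|x_{t}-x^{*}\|^{2}$, i.e.\ it exactly fills the slack created by the increasing per-step weight, producing the genuine telescope $\frac{\|x_{t}-x^{*}\|^{2}}{2\gamma\eta\log(eb_{t-1}/b_{0})}-\frac{\|x_{t+1}-x^{*}\|^{2}}{2\gamma\eta\log(eb_{t}/b_{0})}$ with no leftover; the induced coefficient on $\|\xi_{t}\|^{2}$ is then bounded (using $b_{t}\log(eb_{t}/b_{0})-b_{t-1}\log(eb_{t-1}/b_{0})\le 2(b_{t}-b_{t-1})\log(eb_{t}/b_{0})$) and summed against $M_{T}$ to give the $\frac{\eta}{\gamma}\frac{M_{T}}{b_{0}^{2}}\log\frac{eb_{T}}{b_{0}}$ term. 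Your opening reduction (quasar convexity, the three-point identity, and $\sum_{t}\|\widehat{\nabla}F(x_{t})\|^{2}/b_{t}^{2}\le 2\log(b_{T}/b_{0})$) is fine, but without the per-step weighting and the zero-mean subtraction the remaining iterate-norm terms cannot be absorbed, so the proof as proposed does not go through.
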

\begin{proof}[Proof sketch]
Starting from the $\gamma$-quasar convexity, with simple transformations,
we obtain
\begin{align*}
F(x_{t})-F^{*} & \leq\frac{\langle-\xi_{t},x_{t}-x^{*}\rangle}{\gamma}+\frac{b_{t}}{2\gamma\eta}\left(\left\Vert x_{t}-x^{*}\right\Vert ^{2}-\left\Vert x_{t+1}-x^{*}\right\Vert ^{2}+\left\Vert x_{t+1}-x_{t}\right\Vert ^{2}\right).
\end{align*}
Here we introduce our novel technique: instead of dividing by $b_{t}$,
we divide both sides by $2b_{t}-b_{0}$. This divisor causes a slight
non-uniformity between the coefficients of the distance terms $\left\Vert x_{t}-x^{*}\right\Vert ^{2}$
making the sum of them not telescoping. However, this is exactly what
we want to handle the difficult term $\frac{\langle-\xi_{t},x_{t}-x^{*}\rangle}{\gamma(2b_{t}-b_{0})}$
which does not disappear after taking the expectation.
\begin{align*}
\E\left[\frac{F(x_{t})-F^{*}}{2b_{t}-b_{0}}\right] & \leq\E\left[\frac{\langle-\xi_{t},x_{t}-x^{*}\rangle}{\gamma(2b_{t}-b_{0})}+\frac{b_{t}}{2b_{t}-b_{0}}\times\frac{\left\Vert x_{t}-x^{*}\right\Vert ^{2}-\left\Vert x_{t+1}-x^{*}\right\Vert ^{2}+\left\Vert x_{t+1}-x_{t}\right\Vert ^{2}}{2\gamma\eta}\right].
\end{align*}
The key step is to use Cauchy-Schwarz inequality for the term 
\begin{align*}
\left|\langle-\xi_{t},x_{t}-x^{*}\rangle\right| & \le\frac{\lambda}{2}\left\Vert \xi_{t}\right\Vert ^{2}+\frac{1}{2\lambda}\left\Vert x_{t}-x^{*}\right\Vert ^{2}
\end{align*}
with the appropriate coefficient $\lambda$ so that the term $\|x_{t}-x^{*}\|^{2}$
can be absorbed to make a telescoping sum $\frac{b_{t-1}\left\Vert x_{t}-x^{*}\right\Vert ^{2}}{2b_{t-1}-b_{0}}-\frac{b_{t}\left\Vert x_{t+1}-x^{*}\right\Vert ^{2}}{2b_{t}-b_{0}}$.
The remaining terms are free of $x^{*}$; hence can be more easily
bounded. We can obtain
\begin{align*}
\E\left[\frac{F(x_{t})-F^{*}}{2b_{t}-b_{0}}\right] & \leq\E\left[Z_{t}\left\Vert \xi_{t}\right\Vert ^{2}+\frac{b_{t-1}\left\Vert x_{t}-x^{*}\right\Vert ^{2}}{2\gamma\eta(2b_{t-1}-b_{0})}-\frac{b_{t}\left\Vert x_{t+1}-x^{*}\right\Vert ^{2}}{2\gamma\eta(2b_{t}-b_{0})}+\frac{\eta\left\Vert \widehat{\nabla}F(x_{t})\right\Vert ^{2}}{2\gamma b_{t}^{2}}\right],
\end{align*}
where $Z_{t}=\frac{\eta}{\gamma b_{0}}\left(\frac{1}{2b_{t-1}-b_{0}}-\frac{1}{2b_{t}-b_{0}}\right)$.
Now we have a telescoping sum $\frac{b_{t-1}\left\Vert x_{t}-x^{*}\right\Vert ^{2}}{2\gamma\eta(2b_{t-1}-b_{0})}-\frac{b_{t}\left\Vert x_{t+1}-x^{*}\right\Vert ^{2}}{2\gamma\eta(2b_{t}-b_{0})}$.
Taking the sum over $t$, we have 
\begin{align*}
\E\left[\sum_{t=1}^{T}\frac{F(x_{t})-F^{*}}{2b_{t}}\right] & \le\E\left[\sum_{t=1}^{T}\frac{F(x_{t})-F^{*}}{2b_{t}-b_{0}}\right]\\
 & \le\frac{\left\Vert x_{1}-x^{*}\right\Vert ^{2}}{2\gamma\eta}+\E\left[\sum_{t=1}^{T}Z_{t}\left\Vert \xi_{t}\right\Vert ^{2}\right]+\E\left[\sum_{t=1}^{T}\frac{\eta\left\Vert \widehat{\nabla}F(x_{t})\right\Vert ^{2}}{2\gamma b_{t}^{2}}\right].
\end{align*}
Proceeding to bound each term, we will obtain Lemma \ref{lem:Main-function-gap-over-bT}.
\end{proof}

We emphasize the following crucial aspect of Lemma \ref{lem:Main-function-gap-over-bT}:
the inequality gives us a relationship between the function gap and
the stepsize $b_{T}$, which we know how to bound with high probability
under Assumptions 2' and 4. On the other hand, this relationship
is not ideal due to the fact that on the L.H.S. of (\ref{eq:lemma-gap-bt}),
we have not obtained a decoupling between the function gap and $b_{T}$.
To this end, we introduce the second novel technique. Let $\Delta_{T}\coloneqq\sum_{t=1}^{T}F(x_{t})-F(x^{*})$,
we write 
\[
\Delta_{T}=\Delta_{T}\mathds{1}_{E(\delta)}+\Delta_{T}\mathds{1}_{E^{c}(\delta)}
\]
where we define the event $E(\delta)=\left\{ M_{T}\leq\sigma^{2}\log^{2\theta}\frac{eT}{\delta}\right\} $.
For the first term, when $E(\delta)$ happens, we also know from Lemma
\ref{lem:Main-AdaGradNorm-Stoc-b_T} that the stepsize is bounded.
Thus we can bound 
\[
\E\left[\Delta_{T}\mathds{1}_{E(\delta)}\right]=\E\left[\frac{\sum_{t=1}^{T}F(x_{t})-F(x^{*})}{b_{T}}b_{T}\mathds{1}_{E(\delta)}\right]
\]
which leads us back to Lemma \ref{lem:Main-function-gap-over-bT}.
We can bound the second term using a tail bound for the event $E^{c}(\delta)$,
knowing from the first observation that $\Pr\left[E^{c}(\delta)\right]\le\delta$.
From this insight, and using the self-bounding argument as in the
proof of Theorem \ref{thm:Main-AdaGradNorm-rate}, we finally obtain
the following result.
\begin{thm}
\label{thm:Main-AdaGradNorm-Stoc-rate}Suppose $F$ satisfies Assumptions
1, 2', 3 and 4, Stochastic AdaGradNorm (Algorithm \ref{alg:AdaGradNorm-stochastic})
admits
\[
\E\left[\frac{\sum_{t=1}^{T}F(x_{t})-F(x^{*})}{T}\right]=O\left(\left(1+\poly\left(\sigma^{2}\log^{2\theta}T,\log(1+\sigma^{2}T\log^{2\theta}T)\right)\right)\left(\frac{1}{T}+\frac{\sigma\log^{\theta}T}{\sqrt{T}}\right)\right).
\]
\end{thm}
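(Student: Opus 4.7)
The plan is to follow the decomposition laid out just before the theorem statement: partition $\Delta_T \coloneqq \sum_{t=1}^T(F(x_t)-F^*)$ via the event $E(\delta)=\{M_T\le\sigma^2\log(eT/\delta)\}$, combine Lemmas \ref{lem:Main-AdaGradNorm-Stoc-b_T} and \ref{lem:Main-function-gap-over-bT} on the good event, control $\E[\Delta_T\mathds{1}_{E^c(\delta)}]$ by a tail bound, and finally close with a self-bounding argument in the spirit of the proof of Theorem \ref{thm:Main-AdaGradNorm-rate}.

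On $E(\delta)$, Lemma \ref{lem:Main-AdaGradNorm-Stoc-b_T} provides a deterministic upper bound $b_T \le B_\delta$ with $B_\delta = \widetilde{O}(1+\sigma\sqrt{T\log(T/\delta)})$ (treating $\|x_1-x^*\|$, $\eta$, $L$, $b_0$ as constants). Since the ratio $\Delta_T/(b_T\log(eb_T/b_0))$ is non-negative and the map $u\mapsto u\log(eu/b_0)$ is increasing, multiplying through yields
$$\E[\Delta_T\mathds{1}_{E(\delta)}] \;\le\; B_\delta\log(eB_\delta/b_0)\cdot\E\!\left[\frac{\Delta_T}{b_T\log(eb_T/b_0)}\right],$$
so Lemma \ref{lem:Main-function-gap-over-bT} reduces the task to bounding $\E[M_T\log(eb_T/b_0)]$ and $\E[\log(b_T/b_0)]$. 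The sub-Gaussian assumption gives $\E[M_T]=O(\sigma^2\log T)$ and $\E[M_T^2]=O(\sigma^4\log^2 T)$, while Jensen's inequality combined with the variance decomposition $\E\|\widehat{\nabla}F(x_t)\|^2\le 2\|\nabla F(x_t)\|^2+2\sigma^2$ and Assumption 2' yield $\E[\log(b_T/b_0)] \le \tfrac12\log\bigl(1 + O(L\,\E[\Delta_T]/b_0^2 + \sigma^2 T/b_0^2)\bigr)$.

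For the complementary event, Cauchy-Schwarz gives $\E[\Delta_T\mathds{1}_{E^c(\delta)}] \le \sqrt{\E[\Delta_T^2]\,\Pr[E^c(\delta)]} \le \sqrt{\E[\Delta_T^2]\cdot\delta}$, and a crude $\poly(T)$ bound on $\E[\Delta_T^2]$ follows from smoothness $F(x_t)-F^*\le\tfrac{L}{2}\|x_t-x^*\|^2$ together with moment bounds on $\|\widehat{\nabla}F(x_t)\|$ produced by Assumption 4 and an inductive control of $\|x_t-x^*\|$ via the update. Choosing $\delta=1/\poly(T)$ absorbs this contribution into the target rate. Putting the two pieces together produces an inequality of the shape
$$\E[\Delta_T] \;\le\; \widetilde{O}\!\left(\sigma\sqrt{T\log T}\right)\Bigl(1 + \sqrt{L\,\E[\Delta_T]} + \sigma^2\log T\Bigr),$$
which I would solve for $\E[\Delta_T]$ exactly as in the self-bounding step closing Theorem \ref{thm:Main-AdaGradNorm-rate}, giving $\E[\Delta_T] = \widetilde{O}(1 + \sigma\sqrt{T\log T})$ times polylogarithmic factors; dividing by $T$ recovers the claim.

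The main obstacle is controlling the coupled term $\E[M_T\log(eb_T/b_0)]$: a naive Cauchy-Schwarz application introduces higher moments of $b_T$ that feed back into $\E[\Delta_T]$ and weaken the self-bounding closure. The cleanest fix I see is to split this expectation again on $\{b_T\le B_\delta\}$ versus its complement, so that the $\log(eb_T/b_0)$ factor is a deterministic $\widetilde O(\log(\sigma\sqrt T))$ whenever it contributes non-trivially, with the complement handled by the same $\delta$-tail bound. A secondary subtlety is choosing $\delta$ small enough to kill the bad-event term without letting $B_\delta$—which scales like $\sqrt{\log(1/\delta)}$—swallow the leading $\sigma\sqrt{\log T}/\sqrt T$ factor, so $\delta = 1/T$ is the natural balance point.
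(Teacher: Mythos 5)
Your overall route is the same as the paper's: the same decomposition of $\Delta_T$ over $E(\delta)$ and its complement, the same use of Lemma \ref{lem:Main-AdaGradNorm-Stoc-b_T} to freeze $b_T\log(eb_T/b_0)$ on the good event, Lemma \ref{lem:Main-function-gap-over-bT} to bound the ratio, and a self-bounding closure through $\E[b_T^2]\le b_0^2+2\sigma^2T+4L\E[\Delta_T]$. Your proposed handling of the coupled term $\E[M_T\log(eb_T/b_0)]$ (a second split on $\{b_T\le B_\delta\}$) is a workable substitute for the paper's device, which instead applies H\"older together with concavity of $\log^2 x$ for $x\ge e$ to get $\E[M_T\log(eb_T/b_0)]\le\sqrt{\E[M_T^2]}\,\log\E[eb_T/b_0]$, so no second split is needed there.

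However, two steps as you wrote them would fail. First, the final inequality you propose to solve, $\E[\Delta_T]\le\widetilde O(\sigma\sqrt{T\log T})\bigl(1+\sqrt{L\,\E[\Delta_T]}+\sigma^2\log T\bigr)$, has the wrong shape: solving it only yields $\E[\Delta_T]=\widetilde O(\sigma^2 T\log T)$, i.e.\ an $O(\sigma^2\log T)$ bound on the average that does not vanish with $T$ and does not prove the theorem. The whole point of dividing by $b_t\log(eb_t/b_0)$ is that $\E[\Delta_T]$ re-enters the right-hand side \emph{only inside logarithms}: your own Jensen step gives $\E[\log(b_T/b_0)]\le\tfrac12\log\bigl(1+O(\sigma^2T/b_0^2+L\,\E[\Delta_T]/b_0^2)\bigr)$, so the correct inequality is of the form $\E[\Delta_T]\le G_0+G_1\log\bigl(1+c_1\sigma^2T+c_2L\,\E[\Delta_T]\bigr)$ with $G_0,G_1=\widetilde O(1+\sigma\sqrt{T\log T})$, and it is this logarithmic (not square-root) coupling that lets the paper close at $\widetilde O(1+\sigma\sqrt{T\log T})$; there is no analogue here of the quadratic self-bounding step of Theorem \ref{thm:Main-AdaGradNorm-rate}. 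Second, your bad-event treatment with plain Cauchy--Schwarz, $\E[\Delta_T\mathds{1}_{E^c(\delta)}]\le\sqrt{\E[\Delta_T^2]\,\delta}$, combined with the crude bound (the deterministic argument $\|x_{t+1}-x^*\|\le\|x_t-x^*\|+\eta$ gives $\Delta_T=O(T^3)$, hence $\E[\Delta_T^2]=O(T^6)$) is incompatible with your stated choice $\delta=1/T$: that leaves a term of order $T^{5/2}$. You need $\delta=T^{-c}$ with $c$ large enough (the paper takes $\delta=T^{-4}$, but only after sharpening the bad-event estimate to $\Delta_T\le O(T)+\eta\sqrt{M_T\log(b_T^2/b_0^2)}\,T^{3/2}$ and using a three-factor H\"older bound $\propto T^{3/2}\delta^{1/4}$); since $g_T(\delta)$ grows only like $\sqrt{\log(1/\delta)}$, a polynomially small $\delta$ costs nothing, so your worry about $B_\delta$ swallowing the rate is unfounded and the fix is easy--but as stated the calibration is wrong.
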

\begin{rem}
In the big-$O$ notation, we only show the dependency on $\sigma,T$
and $\theta$ for simplicity. The dependency on the other parameters
will be made explicit in the proof of the theorem. By setting $\sigma=0$,
we obtain the standard convergence rate $\E\big[(\sum_{t=1}^{T}F(x_{t})-F(x^{*}))/T\big]=O(1/T)$
as shown in Section \ref{sec:Main-AdaGradNorm} for the deterministic
setting. This means our analysis adapts to the noise parameter $\sigma$.
\end{rem}
Finally, it is worth pointing out that even when we relax Assumption
2' to Assumption 2, we can still provide a convergence guarantee for
Stochastic AdaGradNorm. We present the result in Theorem \ref{thm:Appendix-Stoc-AdaGradNorm-weak-rate}
in the appendix.

\section{Last iterate convergence of variants of AdaGradNorm for $\gamma$-quasar
convex and smooth minimization on $\protect\R^{d}$ \label{sec:Main-Last}}

In Section \ref{sec:Main-AdaGrad}, under Assumptions 1 and 2, we
proved that the average iterate produced by AdaGradNorm converges
at the $1/T$ rate, i.e., $\left(\sum_{t=1}^{T}F(x_{t})-F^{*}\right)/T=O(1/T)$.
A natural question is whether there exists an adaptive algorithm that
can guarantee the convergence of the last iterate. In this section,
we give an affirmative answer by presenting two simple variants of
AdaGradNorm and show convergence of the last iterate under Assumptions
1 and 2'. 

\begin{figure*}[t]

\begin{minipage}[t]{0.475\columnwidth}%
\begin{algorithm}[H]
\caption{AdaGradNorm-Last}
\label{alg:AdaGradNorm-Last1}

Initialize: $x_{1},\eta>0,\Delta>0,p_{t}>0$

for $t=1$ to $T$

$\quad$$b_{t}=\left(b_{0}^{2+\Delta}+\sum_{i=1}^{t}\frac{\|\nabla F(x_{i})\|^{2}}{p_{i}}\right)^{\frac{1}{2+\Delta}}$

$\quad$$x_{t+1}=x_{t}-\frac{\eta}{b_{t}}\nabla F(x_{t})$
\end{algorithm}
\end{minipage}\hfill{}%
\begin{minipage}[t]{0.475\columnwidth}%
\begin{algorithm}[H]
\caption{AdaGradNorm-Last}

\label{alg:AdaGradNorm-Last2}

Initialize: $x_{1},\eta>0,\delta\in[2/3,1),p_{t}>0$

for $t=1$ to $T$

$\quad$$b_{t}=\left(b_{0}^{2}+\sum_{i=1}^{t}\frac{\|\nabla F(x_{i})\|^{2}}{p_{i}}\right)^{\frac{1}{2}}$

$\quad$$x_{t+1}=x_{t}-\frac{\eta}{b_{t}^{\delta}b_{t-1}^{1-\delta}}\nabla F(x_{t})$
\end{algorithm}
\end{minipage}

\end{figure*}

In Algorithm \ref{alg:AdaGradNorm-Last1}, by setting $p_{i}=i^{-1}$,
$\|\nabla F(x_{i})\|^{2}$ has a bigger coefficient than in the standard
AdaGradNorm. Should we use the $\frac{1}{2}$-power ($\Delta=0$)
instead of $\frac{1}{2+\Delta}$ with $\Delta>0$, $b_{t}$ will grow
faster compared with the same term in AdaGradNorm. We will see later
that $\Delta=0$ still leads to the convergence of the last iterate.
However, we first focus on the easier case with $\Delta>0$ and state
convergence rate of Algorithm \ref{alg:AdaGradNorm-Last1} in Theorem
\ref{thm:Main-AdaGradNorm-Last1-rate}.
\begin{thm}
\label{thm:Main-AdaGradNorm-Last1-rate}With Assumptions 1 and 2',
by taking $p_{t}=\frac{1}{t}$ in Algorithm \ref{alg:AdaGradNorm-Last1},
we have
\[
F(x_{T+1})-F^{*}\leq\frac{\left(\frac{2}{\eta}\left(\frac{\|x_{1}-x^{*}\|^{2}}{\gamma\eta}+h(\Delta)+g(\Delta)\right)+b_{0}^{\Delta}\right)^{\frac{1}{\Delta}}\left(\frac{\|x_{1}-x^{*}\|^{2}}{\gamma\eta}+h(\Delta)+g(\Delta)\right)}{T}
\]
where
\begin{align*}
h(\Delta) & \coloneqq\begin{cases}
\frac{\left(2+\Delta\right)\eta\left(\eta L\right)^{\Delta}}{2}\log^{+}\frac{\eta L}{b_{0}} & \Delta\geq1\\
\frac{\left(2+\Delta\right)\eta^{2}L}{2b_{0}^{1-\Delta}}\log^{+}\frac{\eta L}{b_{0}} & \Delta\in\left(0,1\right)
\end{cases}\;\text{and}\; & g(\Delta)\coloneqq\frac{(2+\Delta)\eta}{\gamma}\left(\frac{2\eta L}{\gamma}\right)^{\Delta}\log^{+}\frac{2\eta L}{\gamma b_{0}}.
\end{align*}
\end{thm}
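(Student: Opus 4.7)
My plan is to follow the self-bounding template of Theorem~\ref{thm:Main-AdaGradNorm-rate} while exploiting two algorithm-specific facts: the identity $\|\nabla F(x_t)\|^2/p_t = b_t^{2+\Delta}-b_{t-1}^{2+\Delta}$ induced by the reweighted stepsize, and the full $L$-smooth descent lemma that becomes available under Assumption~2' (rather than merely Assumption~2 used in Theorem~\ref{thm:Main-AdaGradNorm-rate}), which is what enables a \emph{last}-iterate bound instead of an averaged one.

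The starting point is unchanged: $\gamma$-quasar convexity together with $x_{t+1}-x_t=-(\eta/b_t)\nabla F(x_t)$ yields
\[
F(x_t)-F^*\le\frac{b_t}{2\gamma\eta}\bigl[\|x_t-x^*\|^2-\|x_{t+1}-x^*\|^2\bigr]+\frac{\eta\|\nabla F(x_t)\|^2}{2\gamma b_t}.
\]
Multiplying by $1/p_t=t$ converts the gradient term into $\eta(b_t^{2+\Delta}-b_{t-1}^{2+\Delta})/(2\gamma b_t)$, and the elementary inequality $(b_t^{2+\Delta}-b_{t-1}^{2+\Delta})/b_t\le b_t^{1+\Delta}-b_{t-1}^{1+\Delta}$ (valid because $b_{t-1}\le b_t$) makes this contribution telescope to $(\eta/2\gamma)(b_T^{1+\Delta}-b_0^{1+\Delta})$ upon summing. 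The first term $\sum_t tb_t[\|x_t-x^*\|^2-\|x_{t+1}-x^*\|^2]/(2\gamma\eta)$ carries the inconvenient non-uniform weight $tb_t$; I would handle it by Abel summation combined with the one-step recursion for $\|x_t-x^*\|^2$, which Assumption~2' makes non-increasing once $b_t\ge L\eta/\gamma$, and by a separate ``warm-up'' estimate over the prefix where $b_t$ is still small. The case split between $\Delta\ge 1$ and $\Delta\in(0,1)$ inside $h(\Delta)$ should emerge from precisely this warm-up estimate, since whether $b\mapsto b^\Delta$ is convex or concave changes how the initial gradient norms aggregate into the $\log^+(\eta L/b_0)$ factor.

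To pass from the summed bound to the last-iterate bound I would invoke the $L$-smooth descent inequality
\[
F(x_{t+1})-F(x_t)\le-\frac{\eta}{b_t}\Bigl(1-\frac{L\eta}{2b_t}\Bigr)\|\nabla F(x_t)\|^2,
\]
which makes $F$ monotonically decreasing once $b_t\ge L\eta$, so $T(F(x_{T+1})-F^*)\le\sum_t(F(x_t)-F^*)$ up to a controllable warm-up. Closing the self-bounding loop via Assumption~2' applied as $b_T^{2+\Delta}-b_0^{2+\Delta}\le 2L\sum_t t(F(x_t)-F^*)$ then produces a fixed-point inequality of the form $b_T^{2+\Delta}\le A+B\,b_T^{1+\Delta}$ in the single scalar $b_T$; solving it gives $b_T=O(\max\{A^{1/(2+\Delta)},B\})$, and raising the resulting bound on $b_T^{\Delta}$ to the $1/\Delta$ power produces exactly the $(\cdot)^{1/\Delta}$ factor appearing in the theorem statement. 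I expect the main obstacle to be the last-iterate extraction in the warm-up regime: on $\R^d$ there is no a priori diameter bound, so controlling $\|x_t-x^*\|^2$ while $b_t<L\eta/\gamma$ requires carefully combining the distance recursion with $L$-smoothness, and this is precisely where the explicit constants $h(\Delta)$ and $g(\Delta)$ will be pinned down.
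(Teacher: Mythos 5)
There is a genuine gap, on two levels. First, a technical slip: the elementary inequality you invoke is reversed. Since $b_{t}\ge b_{t-1}$, one has $\frac{b_{t}^{2+\Delta}-b_{t-1}^{2+\Delta}}{b_{t}}=b_{t}^{1+\Delta}-\frac{b_{t-1}^{2+\Delta}}{b_{t}}\ge b_{t}^{1+\Delta}-b_{t-1}^{1+\Delta}$, so your gradient-term sum is \emph{lower}, not upper, bounded by the telescoped quantity $b_{T}^{1+\Delta}-b_{0}^{1+\Delta}$; the upper bound you need does not follow as stated. Second, and more seriously, the route ``multiply quasar convexity by $t$, Abel-sum the distance terms, then self-bound $b_{T}$'' cannot deliver the claimed fixed-point inequality $b_{T}^{2+\Delta}\le A+B\,b_{T}^{1+\Delta}$ with $A,B$ independent of $T$. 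After Abel summation, $\sum_{t}t\,b_{t}\left[\|x_{t}-x^{*}\|^{2}-\|x_{t+1}-x^{*}\|^{2}\right]$ contributes increments $t b_{t}-(t-1)b_{t-1}$ whose total mass is of order $T b_{T}$, so even with $\|x_{t}-x^{*}\|$ eventually non-increasing your bound on $\sum_{t}t\left(F(x_{t})-F^{*}\right)$ carries a term of order $T b_{T}\|x_{\cdot}-x^{*}\|^{2}$. Feeding that into $b_{T}^{2+\Delta}-b_{0}^{2+\Delta}\le 2L\sum_{t}t\left(F(x_{t})-F^{*}\right)$ yields $b_{T}^{2+\Delta}\lesssim A+B\,b_{T}^{1+\Delta}+C\,T\,b_{T}$, hence $b_{T}$ may grow like $T^{1/(1+\Delta)}$ and the final rate degrades to $T^{-\Delta/(1+\Delta)}$ rather than the stated $O(1/T)$ with explicit constants. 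You do not indicate how to remove this $T$-dependent term.

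The paper's proof avoids both issues by a different decomposition. Its key lemma starts from the smoothness descent inequality at $(x_{t},x_{t+1})$, splits $\langle\nabla F(x_{t}),x_{t+1}-x_{t}\rangle$ into a $\tfrac{2p_{t}}{\gamma}$ fraction (on which quasar convexity and the three-point identity are applied) and the remainder, and uses $F^{*}-F(x_{t})\le-\|\nabla F(x_{t})\|^{2}/2L$ to trade half of the resulting $2p_{t}\left(F^{*}-F(x_{t})\right)$ for a negative gradient term. Dividing by $p_{t}b_{t}$ \emph{before} summing and using $\tfrac{1}{p_{t}}\ge\tfrac{1-p_{t+1}}{p_{t+1}}$ with $b_{t}$ non-decreasing makes the function-value terms chain so that only $\frac{F(x_{T+1})-F^{*}}{p_{T}b_{T}}$ survives on the left, while the distance terms telescope with the constant coefficient $\tfrac{1}{\gamma\eta}$ — no $t b_{t}$-weighted distance sum ever appears. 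The residual is bounded by the usual ``last time $b_{t}$ is below a threshold'' argument, giving $h(\Delta)$ and $g(\Delta)$, and $b_{T}$ is then bounded \emph{directly}, not by a self-bounding loop through $F$: nonnegativity of $F(x_{T+1})-F^{*}$ leaves $\sum_{t}\frac{\eta\|\nabla F(x_{t})\|^{2}}{2b_{t}^{2}p_{t}}\le\frac{\|x_{1}-x^{*}\|^{2}}{\gamma\eta}+h(\Delta)+g(\Delta)$, and $\sum_{t}\frac{b_{t}^{2+\Delta}-b_{t-1}^{2+\Delta}}{b_{t}^{2}}\ge b_{T}^{\Delta}-b_{0}^{\Delta}$ yields the $\left(\cdot\right)^{1/\Delta}$ bound. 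To repair your argument you would essentially have to rediscover this weighted recursion; the averaged-to-last-iterate conversion with a $t$-weighted quasar-convexity sum does not suffice.
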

An issue with variant \ref{alg:AdaGradNorm-Last1} is that, when using
$\frac{1}{2+\Delta}$-power, the stepsize ceases to be scale-invariant.
Algorithm \ref{alg:AdaGradNorm-Last2} shows a different approach,
using the scale-invariant power $\frac{1}{2}$, but a different stepsize
$b_{t}^{\delta}b_{t-1}^{1-\delta}$, for a constant $\delta\in[2/3,1)$.
The tradeoff is that the provable convergence rate of the second variant
depends exponentially on the smoothness parameter. We also note that,
when $\delta=1$, we obtain the same algorithm as when setting $\Delta=0$
in the previous variant. 
\begin{rem}
\label{rem:Main-AdaGradNorm-Last2-remark}$b_{0}$ in every algorithm
is only for stabilization and is set to a constant that is very close
to $0$ in practice. However, the first stepsize in Algorithm \ref{alg:AdaGradNorm-Last2},
i.e., $b_{1}^{\delta}b_{0}^{1-\delta}$ will explode. To avoid this
issue, we can simply set the first stepsize as $b_{1}$ instead of
$b_{1}^{\delta}b_{0}^{1-\delta}$. We note that, under this change,
Algorithm \ref{alg:AdaGradNorm-Last2} still admits a provable convergence
rate. However, for simplicity, we keep $b_{1}^{\delta}b_{0}^{1-\delta}$
in both the description of the algorithm and its analysis.
\end{rem}
\begin{thm}
\label{thm:Main-AdaGradNorm-Last2-rate}With Assumptions 1 and 2',
by taking $p_{t}=\frac{1}{t}$ in Algorithm \ref{alg:AdaGradNorm-Last2},
we have
\[
F(x_{T+1})-F^{*}\leq\frac{\eta b_{0}\exp\left(\frac{k(\delta)}{1-\delta}\right)k(\delta)}{T},
\]
where $k(\delta)=\frac{\|x_{1}-x^{*}\|^{2}}{\gamma\eta^{2}}+\frac{\eta L}{b_{0}}\left(1-\left(\frac{b_{0}}{\eta L}\right)^{\frac{1}{\delta}}\right)^{+}+\frac{2}{\gamma\delta}\left(\frac{2\eta L}{\gamma b_{0}}\right)^{\frac{2}{\delta}-2}\log^{+}\frac{2\eta L}{\gamma b_{0}}$.
\end{thm}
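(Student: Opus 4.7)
My approach is to lift the self-bounding argument of Theorem~\ref{thm:Main-AdaGradNorm-rate} to a last-iterate guarantee, while carefully handling the split stepsize $b_t^\delta b_{t-1}^{1-\delta}$. First, combining $\gamma$-quasar convexity with the update rule yields the three-point inequality
\[
F(x_t) - F^{*} \le \frac{b_t^\delta b_{t-1}^{1-\delta}}{2\gamma\eta}\left[\|x_t - x^{*}\|^2 - \|x_{t+1} - x^{*}\|^2\right] + \frac{\eta\|\nabla F(x_t)\|^2}{2\gamma b_t^\delta b_{t-1}^{1-\delta}}.
\]
Since $b_t^\delta b_{t-1}^{1-\delta}$ is non-decreasing in $t$, dividing by this factor and summing gives a cleanly telescoping distance term plus a residual gradient sum. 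Multiplying back by $b_T \ge b_t^\delta b_{t-1}^{1-\delta}$ then yields an inequality of the shape $\sum_{t=1}^T(F(x_t)-F^{*}) \le b_T\cdot k(\delta)$.

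To convert this average-type bound into one on $F(x_{T+1})-F^{*}$, I would use $L$-smoothness (Assumption~2') to establish the descent property $F(x_{t+1}) \le F(x_t)$ whenever $b_t^\delta b_{t-1}^{1-\delta} \ge \eta L$. Since $b_t$ is non-decreasing, this condition activates after at most a short transient phase; when $b_0 < \eta L$, bounding the cumulative ascent during this phase (by summing the smoothness upper bound until $b_t$ grows past the required threshold, which takes $O((\eta L/b_0)^{1/\delta})$-type behavior because the effective stepsize scales with $b_t^\delta$) contributes exactly the $\frac{\eta L}{b_0}(1-(b_0/\eta L)^{1/\delta})^{+}$ term in $k(\delta)$. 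Once descent is in force for the remaining iterations, monotonicity of $F(x_t)$ yields $T(F(x_{T+1})-F^{*}) \le \sum_{t=1}^T(F(x_t)-F^{*}) \le b_T\cdot k(\delta)$.

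For the residual gradient sum, I would adapt the standard AdaGrad log-trick to the mixed exponent via concavity of $u \mapsto u^{1-\delta}$, which gives $b_t^{2(1-\delta)}-b_{t-1}^{2(1-\delta)} \ge (1-\delta)b_t^{-2\delta}(b_t^2-b_{t-1}^2)$ and hence
\[
\sum_{t=1}^T \frac{t\|\nabla F(x_t)\|^2}{b_t^{2\delta}} \le \frac{b_T^{2(1-\delta)}}{1-\delta}.
\]
Combining this with $b_{t-1}^{2(1-\delta)} \ge b_0^{2(1-\delta)}$ and a self-bounding step (using Assumption~2' to trade $\|\nabla F(x_t)\|^2$ against $F(x_t)-F(x_{t+1})$ in the regime where $b_t^\delta b_{t-1}^{1-\delta}$ is small) produces the $\frac{2}{\gamma\delta}(2\eta L/\gamma b_0)^{2/\delta - 2}\log^{+}(2\eta L/\gamma b_0)$ contribution to $k(\delta)$. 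The restriction $\delta \in [2/3,1)$ is exactly what keeps $2/\delta - 2 \le 1$ and prevents this term from blowing up.

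The main obstacle is capping $b_T$ itself by $b_0\exp(k(\delta)/(1-\delta))$, which produces the exponential prefactor in the theorem. Using $b_t^2 - b_{t-1}^2 = t\|\nabla F(x_t)\|^2$ and $\log(1+x) \le x$, one obtains
\[
2\log\frac{b_T}{b_0} = \sum_{t=1}^T \log\frac{b_t^2}{b_{t-1}^2} \le \sum_{t=1}^T \frac{t\|\nabla F(x_t)\|^2}{b_{t-1}^2}.
\]
Substituting the descent-based bound $\|\nabla F(x_t)\|^2 \le \frac{2b_t^\delta b_{t-1}^{1-\delta}}{\eta}(F(x_t)-F(x_{t+1}))$ and telescoping, combined with the power-function integration from the previous paragraph, yields (after careful constant-tracking) an inequality of the form $\log(b_T/b_0) \le k(\delta)/(1-\delta)$, hence $b_T \le b_0\exp(k(\delta)/(1-\delta))$. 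Multiplying this with $T(F(x_{T+1})-F^{*}) \le b_T\cdot k(\delta)$ gives the claimed bound. The genuinely delicate step is aligning the constants in the self-bounding estimate so that exactly $k(\delta)/(1-\delta)$ appears in the exponent, which is where the precise form of $k(\delta)$ and the $\delta \ge 2/3$ constraint are forced.
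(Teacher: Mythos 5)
Your plan diverges from the paper's route in a way that breaks at its central step: the conversion of the average-iterate bound $\sum_{t=1}^{T}(F(x_t)-F^{*})\le b_T\,k(\delta)$ into a bound on $F(x_{T+1})-F^{*}$ via descent. Monotonicity $F(x_{t+1})\le F(x_t)$ only holds once $b_t^{\delta}b_{t-1}^{1-\delta}\ge \eta L$, and the length of the pre-descent phase is \emph{not} controlled by $O((\eta L/b_0)^{1/\delta})$ iterations: it is the last time $\tau$ at which $b_t^{\delta}b_{t-1}^{1-\delta}\le\eta L$, and since $b_t$ grows only through accumulated gradient norms, $\tau$ can be comparable to $T$ (e.g.\ when gradients are tiny), so you cannot treat it as a constant transient. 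Moreover, even granting a finite transient, the inequality you need is $F(x_{T+1})\le F(x_t)$ for \emph{all} $t\in[T]$; for $t\le\tau$ you only get $F(x_{T+1})\le F(x_t)+A$ where $A$ is the cumulative ascent, and this error is then multiplied by $T$, so after dividing by $T$ it survives as a constant additive term rather than being absorbed into $k(\delta)/T$. The $\frac{\eta L}{b_0}(1-(b_0/\eta L)^{1/\delta})^{+}$ term in $k(\delta)$ does not arise in the paper as "transient ascent" at all; it is a residual smoothness term inside a recursion. Your $b_T$ bound has a related problem: substituting $\|\nabla F(x_t)\|^2\le \frac{2b_t^{\delta}b_{t-1}^{1-\delta}}{\eta}(F(x_t)-F(x_{t+1}))$ is only valid in the descent regime, the differences can be negative early on, and the weight $t$ prevents the sum $\sum_t t\,(F(x_t)-F(x_{t+1}))$ from telescoping, so it is not clear how the exponent $k(\delta)/(1-\delta)$ would emerge.

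The paper sidesteps all of this with a weighted last-iterate recursion (Lemma \ref{lem:Appendix-last-key-lemma}): starting from $L$-smoothness applied to $F(x_{t+1})-F(x_t)$, splitting $\langle\nabla F(x_t),x_{t+1}-x_t\rangle$ with coefficient $2p_t/\gamma$ and invoking quasar convexity plus $\|\nabla F(x_t)\|^2\le 2L(F(x_t)-F^{*})$, one gets
\begin{align*}
\frac{F(x_{t+1})-F^{*}}{p_t c_t}\le \frac{1-p_t}{p_t c_t}(F(x_t)-F^{*})+\frac{\|x_t-x^{*}\|^2-\|x_{t+1}-x^{*}\|^2}{\gamma\eta}+\Big(\tfrac{L}{2c_t}-\tfrac{1}{\eta}+\tfrac{p_t}{\gamma\eta}-\tfrac{p_t c_t}{2\eta^2 L}\Big)\frac{\|x_{t+1}-x_t\|^2}{p_t},
\end{align*}
and the condition $\frac{1}{p_t}\ge\frac{1-p_{t+1}}{p_{t+1}}$ with $c_t$ non-decreasing makes this chain telescope directly to $\frac{F(x_{T+1})-F^{*}}{p_T c_T}\le\ldots$, with no monotonicity of $F(x_t)$ required. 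The residual sums are then bounded exactly into the three terms of $k(\delta)$, and $b_T\le b_0\exp(k(\delta)/(1-\delta))$ follows from the leftover $-\frac{1}{2\eta}$ piece of the same recursion (nonnegativity of $F(x_{T+1})-F^{*}$ gives $\sum_t\frac{\eta\|\nabla F(x_t)\|^2}{2c_t^2 p_t}\le \eta\,k(\delta)$) combined with the elementary inequality $\big(\tfrac{b_t}{b_{t-1}}\big)^{2-2\delta}-\big(\tfrac{b_{t-1}}{b_t}\big)^{2\delta}\ge 2(1-\delta)\log\tfrac{b_t}{b_{t-1}}$. If you want to salvage your outline, you would need to replace the average-plus-descent conversion with such a weighted recursion; as written, the argument does not establish the claimed $O(1/T)$ last-iterate rate.
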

To finish this section, we briefly discuss the case when $\Delta=0$
in Algorithm \ref{alg:AdaGradNorm-Last1} or equivalently $\delta=1$
in Algorithm \ref{alg:AdaGradNorm-Last2}. First, by seeing $\Delta$
tends to $0$, we can expect a convergence rate depending exponentially
on the problem parameters. When $\Delta=0$, while we can still expect
a bound of the function gap via the final stepsize $b_{T}$, bounding
$b_{T}$ becomes problematic. In the proof of Theorem \ref{thm:Main-AdaGradNorm-Last1-rate},
to bound $b_{T}$, we use the sum $\sum_{t=1}^{T}\frac{\|\nabla F(x_{t})\|^{2}}{b_{t}^{2}p_{t}}=\sum_{t=1}^{T}\frac{b_{t}^{2+\Delta}-b_{t-1}^{2+\Delta}}{b_{t}^{2}}$.
This sum only admits a lower bound in terms of $b_{T}$ when $\Delta>0$,
thus the argument does not work when $\Delta=0$. However, it is still
possible to give an asymptotic rate under the $\gamma$-quasar convexity
assumption. If we further assume that $F$ is convex, we can give
a non-asymptotic rate. The main idea on how to bound $b_{T}$ is as
follows. Let $\tau$ be the last time such that $b_{t}\leq\eta L/2$.
The increment from $b_{\tau+1}$ to $b_{T}$ can be bounded by observing
that the increase in each step $\|\nabla F(x_{t})\|^{2}\leq\frac{2}{3}p_{t}b_{t}^{2}$.
Moreover, the critical step is the increase from $b_{\tau}$ to $b_{\tau+1}$,
which again can be analyzed via the function gap and smoothness. We
present the asymptotic and a non-symptotic convergence rate and their
analysis in Sections \ref{subsec:Appendix-Last-Delta=00003D0-asy}
and \ref{subsec:Appendix-Last-Delta=00003D0-non-asy} in the appendix.

\section{Accelerated variants of AdaGradNorm for convex and smooth minimization
on $\protect\R^{d}$ \label{sec:Main-Acc}}

In this section, by using the stronger Assumption 1', we give two
new algorithms that achieve the accelerated rate $O(1/T^{2})$, matching
the optimal rate in $T$ for convex and smooth optimization for unconstrained
deterministic problems. Our new algorithms are adapted from the acceleration
scheme introduced in \cite{auslender2006interior} (see also \cite{lan2020first}).
They are also similar to existing adaptive accelerated methods designed
for bounded domains, including \cite{levy2018online,ene2021adaptive}.
However, previous analysis does not apply in unconstrained problems;
we therefore have to make necessary modifications. 

To the best of our knowledge, in unconstrained problems under the
deterministic setting, the only existing analysis for an accelerated
method was introduced in \cite{antonakopoulos2022undergrad}. Here
we discuss some limitations of this work. The convergence rate for
the weighted average iterate $\avx_{T+1/2}$ is given by
\[
f(\avx_{T+1/2})-f(x)\le O\left(\frac{1}{T^{2}}\left(R_{h}\lim_{t\to\infty}b_{t}+K_{h}\lim_{t\to\infty}b_{t}^{2}\right)\right)
\]
where $h$ is a $K_{h}$-strongly convex mirror map function, $R_{h}=\max h(x)-\min h(x)$
is the range of $h$. This result is only applicable when the domain
is unbounded but the range of the mirror map is bounded. Even in the
standard $\ell_{2}$ setup with $h(x)=\frac{1}{2}\|x\|^{2}$, this
assumption does not hold. Moreover, due to the term $\lim_{t\to\infty}b_{t}$,
the above guarantee is dependent on the particular function. Thus,
while a standard convergence guarantee is applicable to say, all SVM
models with Huber loss, the above guarantee varies for each SVM model
and there is no universal bound for all of them.

We further highlight some key differences between this work and ours.
While the convergence rate above depends on the convergence of the
stepsize, for both our variants, we will show an explicit convergence
rate that holds universally for the entire function class. Second,
the algorithm in \cite{antonakopoulos2022undergrad} is based on an
extra gradient method which requires to calculate gradients twice
in one iteration. Instead, our algorithms only need one gradient computation
per iteration. Finally, our algorithms guarantee the convergence of
the last iterate as opposed to that for the weighted average iterate
as shown above. 

\begin{figure*}[t]

\begin{minipage}[t]{0.475\columnwidth}%
\begin{algorithm}[H]
\caption{AdaGradNorm-Acc}
\label{alg:AdaGradNorm-Acc1}

Initialize: $x_{1}=w_{1},\eta>0,\Delta>0,a_{t}>0,q_{t}>0$

for $t=1$ to $T$

$\quad$$v_{t}=(1-a_{t})w_{t}+a_{t}x_{t}$

$\quad$$b_{t}=\left(b_{0}^{2+\Delta}+\sum_{i=1}^{t}\frac{\|\nabla F(v_{i})\|^{2}}{q_{i}^{2}}\right)^{\frac{1}{2+\Delta}}$

$\quad$$x_{t+1}=x_{t}-\frac{\eta}{q_{t}b_{t}}\nabla F(v_{t})$

$\quad$$w_{t+1}=(1-a_{t})w_{t}+a_{t}x_{t+1}$
\end{algorithm}
\end{minipage}\hfill{}%
\begin{minipage}[t]{0.475\columnwidth}%
\begin{algorithm}[H]
\caption{AdaGradNorm-Acc}
\label{alg:AdaGradNorm-Acc2}

Initialize: $x_{1}=w_{1},\eta>0,\delta\in[2/3,1),a_{t}>0,q_{t}>0$

for $t=1$ to $T$

$\quad$$v_{t}=(1-a_{t})w_{t}+a_{t}x_{t}$

$\quad$$b_{t}=\left(b_{0}^{2}+\sum_{i=1}^{t}\frac{\|\nabla F(v_{i})\|^{2}}{q_{i}^{2}}\right)^{\frac{1}{2}}$

$\quad$$x_{t+1}=x_{t}-\frac{\eta}{q_{t}b_{t}^{\delta}b_{t-1}^{1-\delta}}\nabla F(v_{t})$

$\quad$$w_{t+1}=(1-a_{t})w_{t}+a_{t}x_{t+1}$
\end{algorithm}
\end{minipage}

\end{figure*}

Algorithm \ref{alg:AdaGradNorm-Acc1} shows the first variant. For
an accelerated method, the step size typically has the form $b_{t}=\left(b_{0}^{2}+\sum_{i=1}^{t}s_{i}\|\nabla F_{i}\|^{2}\right)^{\frac{1}{2}}$
where $\nabla F_{i}$ is the gradient evaluated at time $i$, and
$s_{i}=O(i^{2})$. However in order to be able to give an explicit
convergence rate, Algorithm \ref{alg:AdaGradNorm-Acc1} uses a smaller
$b_{t}$ with power $\frac{1}{2+\Delta}$, with $\Delta>0$. When
$\Delta=0$, we can only show an asymptotic convergence rate, similarly
to \cite{antonakopoulos2022undergrad}. We first focus on the case
when $\Delta>0$. In the appendix we will discuss the convergence
of the algorithm when $\Delta=0$. We have the following theorem.
\begin{thm}
\label{thm:Main-AdaGradNorm-Acc1-rate}Suppose $F$ satisfies Assumptions
1' and 2', let $a_{t}=\frac{2}{t+1}$, $q_{t}=\frac{2}{t}$ in Algorithm
\ref{alg:AdaGradNorm-Acc1}, then
\[
F(w_{T+1})-F^{*}\leq\frac{4}{T(T+1)}\left(\frac{2\left\Vert x^{*}-x_{1}\right\Vert ^{2}}{\eta^{2}}+\frac{4h(\Delta)}{\eta}+b_{0}^{\Delta}\right)^{\frac{1}{\Delta}}\left(\frac{\left\Vert x^{*}-x_{1}\right\Vert ^{2}}{2\eta}+h(\Delta)\right)
\]
where 
\[
h(\Delta)=\begin{cases}
\frac{(2+\Delta)\left(2\eta L\right)^{\Delta-1}L\eta^{2}}{2}\log^{+}\frac{2\eta L}{b_{0}} & \Delta\geq1\\
\frac{(2+\Delta)L\eta^{2}}{2b_{0}^{1-\Delta}}\log^{+}\frac{2\eta L}{b_{0}} & \Delta\in\left(0,1\right).
\end{cases}
\]
\end{thm}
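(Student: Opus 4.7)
The plan is to follow the two-step strategy of Theorem~\ref{thm:Main-AdaGradNorm-rate}---derive a telescoping inequality for $A_T(F(w_{T+1})-F^*)/b_T$, then self-bound $b_T$---but with the Nesterov/Auslender--Teboulle weights $A_t := t(t+1)/2$, chosen so that $A_t(1-a_t) = A_{t-1}$ and $A_t a_t q_t = 2$. First I would combine $L$-smoothness of $F$ at $v_t$ with convexity applied to the interpolation $(1-a_t)w_t + a_t x^*$; the convexity step gives $F(v_t) + a_t\langle\nabla F(v_t), x^*-x_t\rangle \le (1-a_t)F(w_t) + a_t F^*$, producing the standard accelerated one-step bound
\[
F(w_{t+1}) - F^* \le (1-a_t)(F(w_t)-F^*) + a_t\langle\nabla F(v_t), x_{t+1}-x^*\rangle + \tfrac{L a_t^2}{2}\|x_{t+1}-x_t\|^2.
\]
Substituting $\nabla F(v_t) = (q_t b_t/\eta)(x_t - x_{t+1})$ and invoking the three-point identity converts the inner product into telescoping distance terms plus a $\|x_{t+1}-x_t\|^2$ correction.

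Next, multiplying through by $A_t$ and dividing by $b_t$---using monotonicity $b_t \ge b_{t-1}$ to weaken $A_{t-1}(F(w_t)-F^*)/b_t$ to $A_{t-1}(F(w_t)-F^*)/b_{t-1}$---telescopes both the function gap and the distance terms. Writing $\|x_{t+1}-x_t\|^2 = \eta^2(b_t^{2+\Delta}-b_{t-1}^{2+\Delta})/b_t^2$ and summing $t=1,\dots,T$ gives
\[
\frac{A_T(F(w_{T+1})-F^*)}{b_T} \le \frac{\|x_1-x^*\|^2}{\eta} + \sum_{t=1}^T \frac{\eta(b_t^{2+\Delta}-b_{t-1}^{2+\Delta})}{b_t^2}\left(\frac{L\eta t}{(t+1)b_t}-1\right).
\]
Only indices with $b_t < L\eta t/(t+1) \le L\eta$ contribute positively. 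On them I would use the AdaGrad telescoping $(b_t^{2+\Delta}-b_{t-1}^{2+\Delta})/b_t^{2+\Delta} \le (2+\Delta)\log(b_t/b_{t-1})$ together with the casework $b_t^{\Delta-1} \le (2\eta L)^{\Delta-1}$ for $\Delta \ge 1$ or $b_t^{\Delta-1} \le b_0^{\Delta-1}$ for $\Delta \in (0,1)$ to exactly recover $2h(\Delta)$. This yields $A_T(F(w_{T+1})-F^*) \le 2b_T Y$ with $Y = \|x_1-x^*\|^2/(2\eta) + h(\Delta)$.

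The main obstacle is the self-bound on $b_T$: Theorem~\ref{thm:Main-AdaGradNorm-rate}'s argument using $\|\nabla F\|^2 \le 2L(F-F^*)$ does not close here, since $F(v_t)-F^*$ is not directly controlled and a naive treatment would make $b_T$ grow with $T$. The key observation is that the same summed inequality, combined with nonnegativity of the left hand side, also bounds the \emph{negative} part of the error sum: on the regime $b_t \ge 2\eta L$ the coefficient $1/\eta - L t/((t+1)b_t)$ is at least $1/(2\eta)$, so
\[
\sum_{t:\,b_t \ge 2\eta L}\frac{b_t^{2+\Delta}-b_{t-1}^{2+\Delta}}{b_t^2} \le \frac{4Y}{\eta}.
\]
Since every $b_t \le b_T$, this upgrades to $\sum_{t > \tau}(b_t^{2+\Delta}-b_{t-1}^{2+\Delta}) \le 4b_T^2 Y/\eta$ where $\tau$ is the last index with $b_t < 2\eta L$; the complementary portion telescopes to $b_\tau^{2+\Delta} \le (2\eta L)^{2+\Delta}$, whose $\Delta$-th-root contribution is absorbed by $b_0^\Delta$ (when $\tau=0$) or by terms already inside $4h(\Delta)/\eta$ (when $2\eta L > b_0$). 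Dividing the resulting inequality by $b_T^2$ and using $b_T \ge b_\tau$ gives $b_T^\Delta \le b_0^\Delta + 4Y/\eta = X$, hence $b_T \le X^{1/\Delta}$. Plugging this back into $A_T(F(w_{T+1})-F^*) \le 2b_T Y$ and using $A_T = T(T+1)/2$ yields the claim.
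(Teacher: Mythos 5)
Your proposal follows essentially the route the paper intends: your weighted one-step inequality with $A_t=t(t+1)/2$ is exactly the paper's Lemma \ref{lem:Appendix-acc-key-lemma} specialized to $c_t=b_t$ (note $A_t a_t q_t=2$ and $A_t(1-a_t)=A_{t-1}$ encode the lemma's conditions), and your residual bound via the threshold on $b_t$ together with $(b_t^{2+\Delta}-b_{t-1}^{2+\Delta})/b_t^{2+\Delta}\le(2+\Delta)\log(b_t/b_{t-1})$ is Lemma \ref{lem:Appendix-AdaGradNorm-last1-residual} transplanted to the accelerated iterates, which is all the paper itself says (it omits the proof and points to the Theorem \ref{thm:Main-AdaGradNorm-Last1-rate} argument). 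The one place you genuinely deviate is the self-bound on $b_T$, and there your version is lossier than you claim: by reserving the negative mass only on $\{t:\,b_t\ge 2\eta L\}$ you are left with the head term $b_\tau^{2+\Delta}\le(2\eta L)^{2+\Delta}$, and after dividing by $b_T^2$ and using $b_T\ge b_\tau$ the best you obtain is $b_T^\Delta\le\max\{b_0,2\eta L\}^\Delta+4Y/\eta$; absorbing $(2\eta L)^\Delta$ into $b_0^\Delta+4h(\Delta)/\eta$ does hold (it needs the short verification $1-r^\Delta\le(2+\Delta)\log(1/r)$ with $r=b_0/(2\eta L)$ for $\Delta\ge1$, and the analogous check for $\Delta\in(0,1)$, which you assert rather than prove), but it inflates the bracket by an extra $4h(\Delta)/\eta$, so you do not recover ``exactly'' the stated $X$ --- only a constant-factor relaxation of it. The cleaner device, and the one matching Lemma \ref{lem:Appendix-AdaGradNorm-Last1-b_T}, is to split $-\tfrac{1}{2\eta}$ as $-\tfrac{1}{4\eta}-\tfrac{1}{4\eta}$ uniformly in $t$: one quarter produces $h(\Delta)$ with the $2\eta L$ threshold, the other quarter is kept at every $t$, and the full-range lower bound $\sum_{t=1}^T(b_t^{2+\Delta}-b_{t-1}^{2+\Delta})/b_t^2\ge b_T^\Delta-b_0^\Delta$ gives $b_T^\Delta\le b_0^\Delta+4Y/\eta=X$ with no $\tau$ case analysis. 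Finally, both your derivation and this one yield the prefactor $a_Tq_T b_T=4b_T/(T(T+1))$, so ``yields the claim'' glosses over a factor $4$ relative to the displayed $1/(T(T+1))$; that discrepancy sits in the theorem statement itself rather than in your argument.
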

Similarly to the second variant in the previous section, we also have
a scale-invariant accelerated algorithm, shown in Algorithm \ref{alg:AdaGradNorm-Acc2}
using power $\frac{1}{2}$ but a smaller stepsize $b_{t}^{\delta}b_{t-1}^{1-\delta}$.
This algorithm also has an exponential dependency on the problem parameters,
which is given in the following theorem.
\begin{rem}
Similar to Remark \ref{rem:Main-AdaGradNorm-Last2-remark}, the first
stepsize in Algorithm \ref{alg:AdaGradNorm-Acc2}, i.e., $b_{1}^{\delta}b_{0}^{1-\delta}$
can be replaced by $b_{1}$. However, for simplicity, we keep $b_{1}^{\delta}b_{0}^{1-\delta}$
in both the description of the algorithm and its analysis.
\end{rem}
\begin{thm}
\label{thm:Main-AdaGradNorm-Acc2-rate}Suppose $F$ satisfies Assumptions
1' and 2', let $a_{t}=\frac{2}{t+1}$, $q_{t}=\frac{2}{t}$ in Algorithm
\ref{alg:AdaGradNorm-Acc2}, then
\[
F(w_{T+1})-F^{*}\le\frac{4\eta b_{0}\exp\left(\frac{2s(\delta)}{1-\delta}\right)s(\delta)}{T(T+1)},
\]
where $s(\delta)=\frac{\left\Vert x^{*}-x_{1}\right\Vert ^{2}}{2\eta^{2}}+\frac{\eta L}{b_{0}}\left(1-\left(\frac{b_{0}}{2\eta L}\right)^{\frac{1}{\delta}}\right)^{+}$.
\end{thm}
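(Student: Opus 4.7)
The plan is to combine the Auslender--Teboulle acceleration framework underlying Theorem \ref{thm:Main-AdaGradNorm-Acc1-rate} with the scale-invariant stepsize technique used for Theorem \ref{thm:Main-AdaGradNorm-Last2-rate}. Algorithm \ref{alg:AdaGradNorm-Acc2} differs from Algorithm \ref{alg:AdaGradNorm-Acc1} only in that the stepsize $b_t$ is replaced by $b_t^{\delta} b_{t-1}^{1-\delta}$, so the acceleration skeleton transfers directly and the work lies in pushing the scale-invariant bookkeeping through.

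First, I would derive the per-iteration inequality. Applying convexity at $v_t$ against both $w_t$ and $x^*$ gives $F(v_t) \le (1-a_t) F(w_t) + a_t F^* + a_t \langle \nabla F(v_t), x_t - x^* \rangle$; combining with the $L$-smoothness upper bound $F(w_{t+1}) \le F(v_t) + \langle \nabla F(v_t), w_{t+1} - v_t \rangle + (L/2) \|w_{t+1} - v_t\|^2$, the identity $w_{t+1} - v_t = a_t(x_{t+1} - x_t)$, the update rule $x_{t+1} - x_t = -\eta/(q_t b_t^{\delta} b_{t-1}^{1-\delta}) \nabla F(v_t)$, and the three-point formula applied to $\langle x_{t+1} - x_t, x_{t+1} - x^* \rangle$, produces
\[
F(w_{t+1}) - F^* \le (1-a_t)(F(w_t) - F^*) + \frac{a_t q_t b_t^{\delta} b_{t-1}^{1-\delta}}{2\eta}\bigl(\|x_t - x^*\|^2 - \|x_{t+1} - x^*\|^2\bigr) + \Bigl(\frac{L a_t^2}{2} - \frac{a_t q_t b_t^{\delta} b_{t-1}^{1-\delta}}{2\eta}\Bigr)\|x_{t+1} - x_t\|^2.
\]
With $a_t = 2/(t+1)$ and $q_t = 2/t$, the relations $(1-a_t)/a_t^2 \le 1/a_{t-1}^2$ and $a_t q_t/a_t^2 = O(1)$ let me divide by $a_t^2$, telescope in $t$, and collect $T(T+1)(F(w_{T+1})-F^*)/2$ on the left, leaving on the right a sum of $b_t^{\delta} b_{t-1}^{1-\delta}$-weighted squared-distance differences plus residual gradient terms coming from the $L$-smoothness coefficient.

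Second, I would handle the non-telescoping pieces via the scale-invariant technique behind Theorem \ref{thm:Main-AdaGradNorm-Last2-rate}: the increment $b_t^{\delta} b_{t-1}^{1-\delta} - b_{t-1}^{\delta} b_{t-2}^{1-\delta}$ is controlled by an $\|\nabla F(v_t)\|^2$-type quantity divided by $b_{t-1}$, and the constraint $\delta \ge 2/3$ ensures that past a threshold proportional to $\eta L$ the $\|\nabla F(v_t)\|^2$-coefficient in the one-step inequality becomes non-positive, so that the finitely many terms below the threshold contribute the constant $(\eta^2 L/b_0)(1 - (b_0/(2\eta L))^{1/\delta})^+$ appearing in $s(\delta)$. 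Comparing the surviving gradient sum with $\sum_t (b_t^2 - b_{t-1}^2)/b_t^{2\delta}$ (bounded by the integral $\int x^{-\delta}\,dx$) absorbs the remaining error. This leaves an inequality of the shape $T(T+1)(F(w_{T+1}) - F^*)/2 \lesssim b_T^{\delta} b_{T-1}^{1-\delta} \cdot s(\delta)$.

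Finally, I would perform the self-bounding step, which is the source of the exponential factor $\exp(s(\delta)/(1-\delta))$. Smoothness together with convexity gives $\|\nabla F(v_t)\|^2 \le 2L(F(v_t) - F^*) \le 2L[(1-a_t)(F(w_t)-F^*) + a_t(F(x_t)-F^*)]$, which plugged into the definition of $b_T$ and combined with the upper bound above yields an implicit inequality for $b_T$. Because the stepsize is scale-invariant in $b$, this is not a quadratic inequality (as in Theorem \ref{thm:Main-AdaGradNorm-Acc1-rate}) but rather a Grönwall-type recursion whose solution satisfies $b_T \le b_0 \exp(s(\delta)/(1-\delta))$, mirroring the endgame of Theorem \ref{thm:Main-AdaGradNorm-Last2-rate}; substituting back yields the stated rate. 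The main obstacle is exactly this last step: the triple coupling among $F(w_{T+1}) - F^*$, the sum $\sum_t \|\nabla F(v_t)\|^2/q_t^2$ defining $b_T$, and the mixed power $b_t^{\delta} b_{t-1}^{1-\delta}$ (rather than a single power of $b_t$) forces the Grönwall argument and produces the unavoidable blow-up as $\delta \uparrow 1$. A secondary nuisance is the singular first stepsize $b_1^{\delta} b_0^{1-\delta}$ when $b_0 \to 0$, which by Remark \ref{rem:Main-AdaGradNorm-Last2-remark} is either replaced by $b_1$ or absorbed into constants in the bookkeeping.
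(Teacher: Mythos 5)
Your first two steps do track the paper: the per-iteration inequality you derive is exactly Lemma \ref{lem:Appendix-acc-key-lemma} specialized to $c_t=b_t^{\delta}b_{t-1}^{1-\delta}$, and the threshold argument for the $\|\nabla F(v_t)\|^2$-coefficient is the analogue of Lemma \ref{lem:Appendix-AdaGradNorm-last2-residual}, which is how the paper (which explicitly says the proof mirrors Theorem \ref{thm:Main-AdaGradNorm-Last2-rate}) produces the term $\frac{\eta^{2}L}{b_{0}}\bigl(1-(\tfrac{b_0}{2\eta L})^{1/\delta}\bigr)^{+}$ in $s(\delta)$ and the intermediate bound $F(w_{T+1})-F^*\le a_Tq_T\,b_T^{\delta}b_{T-1}^{1-\delta}\,s(\delta)$.

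The gap is in your final step, the bound on $b_T$, and it is a genuine one. First, your self-bounding inequality $\|\nabla F(v_t)\|^{2}\le 2L[(1-a_t)(F(w_t)-F^*)+a_t(F(x_t)-F^*)]$ invokes $F(x_t)-F^*$, a quantity the accelerated analysis never controls (only $F(w_t)-F^*$ and $\|x_t-x^*\|$ are controlled, and the latter only yields $F(x_t)-F^*=O(1)$). Second, even granting control of these gaps, plugging $F(w_{t+1})-F^*\lesssim c_t s(\delta)/t^2$ into $b_t^2=b_{t-1}^2+\|\nabla F(v_t)\|^2/q_t^2$ with $q_t=2/t$ gives $b_t^2-b_{t-1}^2\lesssim L\,s(\delta)\,b_t$, i.e.\ $b_t-b_{t-1}=O(1)$, so this coupling only yields $b_T$ growing \emph{linearly} in $T$ --- which is exactly the obstruction the paper describes for the $\Delta=0$/$\delta=1$ case in Section \ref{subsec:Appendix-acc-Delta=00003D0}, where the same self-bounding route degrades the rate to $O(1/T^2+1/T)$. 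Your claim that the resulting ``Gr\"onwall-type recursion'' has a $T$-independent solution $b_T\le b_0\exp(s(\delta)/(1-\delta))$ is therefore unsubstantiated and, as proposed, does not go through. The paper's route (and the endgame of Theorem \ref{thm:Main-AdaGradNorm-Last2-rate}, via Lemma \ref{lem:Appendix-AdaGradNorm-Last2-b_T}, which you are misremembering as a self-bounding argument) is purely deterministic and never uses $\|\nabla F\|^2\le 2L(F-F^*)$: retain a leftover $-\tfrac{1}{4\eta}\sum_t\frac{\eta^2\|\nabla F(v_t)\|^2}{c_t^2q_t^2}$ in the key lemma, use $F(w_{T+1})-F^*\ge 0$ together with the residual bound to conclude $\sum_{t=1}^{T}\frac{\|\nabla F(v_t)\|^2}{c_t^2q_t^2}=\sum_{t=1}^{T}\frac{b_t^2-b_{t-1}^2}{b_t^{2\delta}b_{t-1}^{2-2\delta}}\le O(s(\delta))$, and then lower-bound each summand by
\begin{equation*}
\Bigl(\tfrac{b_t}{b_{t-1}}\Bigr)^{2-2\delta}-\Bigl(\tfrac{b_{t-1}}{b_t}\Bigr)^{2\delta}\ \ge\ 2(1-\delta)\log\tfrac{b_t}{b_{t-1}},
\end{equation*}
so the sum dominates $2(1-\delta)\log(b_T/b_0)$ and exponentiating gives the constant bound $b_T\le b_0\exp\bigl(s(\delta)/(1-\delta)\bigr)$ directly; this is the step your proposal is missing and the only place the exponential factor actually comes from.
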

Similarly to the previous section, we give a more detailed discussion
of the convergence of the Algorithm \ref{alg:AdaGradNorm-Acc1} when
$\Delta=0$ or equivalently Algorithm \ref{alg:AdaGradNorm-Acc2}
when $\delta=1$ in Section \ref{subsec:Appendix-acc-Delta=00003D0}
in the appendix. While we can still show an accelerated $O(1/T^{2})$
asymptotic convergence rate, we only present an $O(1/T^{2}+1/T)$
non-asymptotic rate. The difference between these algorithms and the
ones in the previous section is that the stepsize $b_{t}$ increases
much faster. More precisely, the increment in each step is now $O(t^{2}\|\nabla F(v_{t})\|^{2})$
instead of $O(t\|\nabla F(x_{t})\|^{2})$. Thus we can only show an
upperbound for $b_{t}$ that grows linearly with time, which leads
to the $O(1/T^{2}+1/T)$ convergence rate.

\section{Conclusion and Future Work}

In this paper, we go back to the most basic AdaGrad algorithm and
study its convergence rate in generalized smooth convex optimization.
We prove explicit convergence guarantees for unconstrained problems
in both the deterministic and stochastic setting. Building on these
insights, we propose new algorithms that exhibit last iterate convergence,
with and without acceleration. We see our work as primarily theoretical
since the first and foremost goal is to understand properties of existing
algorithms that work well in practice. We refer the reader to the
long line of previous works \citep{duchi2011adaptive, levy2017online, kavis2019unixgrad,bach2019universal, antonakopoulos2020adaptive,ene2021adaptive,ene2022adaptive, antonakopoulos2022undergrad}
 that have already demonstrated the behavior of AdaGrad and accelerated
adaptive algorithms empirically.

\clearpage

\section*{Acknowledgments}

TN and AE were supported in part by NSF CAREER grant CCF-1750333,
NSF grant III-1908510, and an Alfred P. Sloan Research Fellowship.
HN was supported in part by NSF CAREER grant CCF-1750716 and NSF grant
CCF-1909314.

The authors would like to thank Thien Hang Nguyen for contributing
his ideas during our discussions.

\paragraph{Reproducibility Statement.}

We include the full proofs of all theorems in the Appendix.

\bibliographystyle{iclr2023_conference}
\bibliography{ref}

\clearpage
\appendix

\section{Missing proofs from Section \ref{sec:Main-AdaGrad} \label{sec:Appendix-adagrad}}

\subsection{AdaGradNorm\label{subsec:Appendix-AdaGradNorm}}

As we pointed out before, it is possible to obtain an improvement
by a factor $1/\gamma$ compared with Theorem \ref{thm:Main-AdaGradNorm-rate}
by assuming $L$-smoothness instead of weak $L$-smoothness.
\begin{thm}
\label{thm:Appendix-improved-AdaGradNorm-rate}With Assumptions 1
and 2', AdaGradNorm (Algorithm \ref{alg:AdaGradNorm}) admits
\[
\frac{\sum_{t=1}^{T}F(x_{t})-F^{*}}{T}\leq\frac{\left(\frac{L\|x_{1}-x^{*}\|^{2}}{\eta}+2\eta L\log^{+}\frac{\eta L}{b_{0}}+b_{0}\right)\left(\frac{\|x_{1}-x^{*}\|^{2}}{\gamma\eta}+\frac{2\eta}{\gamma}\log^{+}\frac{2\eta L}{b_{0}}\right)}{T}.
\]
\end{thm}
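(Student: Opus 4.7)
The plan is to follow the same outline as the proof of Theorem~\ref{thm:Main-AdaGradNorm-rate}, with two modifications enabled by the stronger Assumption 2'. The extra factor of $1/\gamma$ gained in Theorem~\ref{thm:Appendix-improved-AdaGradNorm-rate} comes from replacing the self-bounding quadratic $b_T \le \sqrt{b_0^2 + 2L\Delta_T}$ used in Theorem~\ref{thm:Main-AdaGradNorm-rate} by a direct, $\gamma$-independent upper bound on $b_T$ obtained from the descent lemma.

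First, I would repeat the $\gamma$-quasar-convex derivation of Theorem~\ref{thm:Main-AdaGradNorm-rate} up through $\sum_{t=1}^T (F(x_t)-F^*)/b_t \le \|x_1-x^*\|^2/(2\gamma\eta) + \sum_{t=1}^T \eta\|\nabla F(x_t)\|^2/(2\gamma b_t^2)$. To bound the right-hand sum, instead of relying only on weak smoothness as before, I would invoke the descent lemma from Assumption 2': for indices with $b_t \ge L\eta$ one gets $\|\nabla F(x_t)\|^2 \le 2b_t(F(x_t)-F(x_{t+1}))/\eta$, so that $\eta\|\nabla F(x_t)\|^2/(2\gamma b_t^2) \le (F(x_t)-F(x_{t+1}))/(\gamma b_t)$. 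Splitting the sum at the threshold $b_t \approx 2\eta L$, handling the low-$b_t$ part by the usual $\log^+(2\eta L/b_0)$ estimate, and the high-$b_t$ part by Abel summation of $\sum(F(x_t)-F(x_{t+1}))/b_t$, which telescopes to at most $(F(x_\tau)-F^*)/(L\eta)$ with $\tau$ the first index crossing the threshold, produces the bound $\sum_{t=1}^T (F(x_t)-F^*)/b_t \le A_2$, where $A_2$ is the second factor in the statement. The key point is that the $\gamma$ in the argument of the $\log^+$ in Theorem~\ref{thm:Main-AdaGradNorm-rate} disappears here because the threshold, chosen as $2\eta L$ rather than $2\eta L/\gamma$, no longer carries $1/\gamma$. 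Multiplying through by $b_T$ then gives $\Delta_T \le b_T \cdot A_2$.

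Second, I would bound $b_T$ directly using Assumption 2' (without appealing to $\Delta_T$). From the descent lemma, for $b_t \ge L\eta$ we have $b_t - b_{t-1} \le \|\nabla F(x_t)\|^2/b_t \le 2(F(x_t)-F(x_{t+1}))/\eta$; telescoping from $t=\tau$ to $T$ gives $b_T - b_{\tau-1} \le 2(F(x_\tau)-F^*)/\eta$. For the initial phase $t<\tau$, the descent lemma applied without the descent assumption yields $F(x_{t+1})-F(x_t)\le L\eta^2\log(b_t/b_{t-1})$, which telescopes to $F(x_\tau)-F(x_1)\le L\eta^2\log^+(\eta L/b_0)$; combined with $F(x_1)-F^*\le L\|x_1-x^*\|^2/2$ (from Assumption 2' together with $\nabla F(x^*)=0$ in the unconstrained setting), this yields $b_T \le A_1 = b_0 + L\|x_1-x^*\|^2/\eta + 2\eta L\log^+(\eta L/b_0)$, after a short quadratic step (e.g., $b_T^2 \le b_0^2 + (\eta L)^2 + (2b_T/\eta)(F(x_\tau)-F^*)$ followed by $\sqrt{a+b}\le\sqrt{a}+\sqrt{b}$) to convert the $b_{\tau-1}\le\eta L$ estimate into a leading $b_0$. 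Chaining $\Delta_T \le b_T \cdot A_2 \le A_1 \cdot A_2$ and dividing by $T$ yields the claimed rate.

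The main obstacle is tracking the constants carefully in two places: in the bound on $A_2$, the descent-lemma absorption plus Abel summation must be arranged to reproduce exactly the coefficients $1/\gamma$ and $2/\gamma$ and the threshold $2\eta L$ inside the $\log^+$; in the bound on $b_T$, the case $b_0 < \eta L$ must be treated with care in the quadratic combination step so that the leading constant is $b_0$ rather than the looser $\eta L$ that would come out of a naive telescoping. Both adjustments are classical-style estimates, but the bookkeeping is what ensures the $1/\gamma$ saving materializes in the final bound.
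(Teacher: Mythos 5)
Your overall skeleton is the paper's: first show $\sum_{t=1}^{T}(F(x_t)-F^*) \le b_T \cdot A_2$ with $A_2$ the second factor, then bound $b_T$ directly from the descent lemma rather than through the self-bounding quadratic in $\Delta_T$ — that direct bound is indeed where the $1/\gamma$ saving comes from. The genuine problem is in your $b_T$ bound. By telescoping $b_t-b_{t-1}\le \|\nabla F(x_t)\|^2/b_t \le \tfrac{2}{\eta}(F(x_t)-F(x_{t+1}))$ only from the crossing time $\tau$, your base point is $b_{\tau-1}\le \eta L$, and the "quadratic step" you propose cannot turn that into $b_0$: from $b_T^2\le b_0^2+(\eta L)^2+\tfrac{2b_T}{\eta}(F(x_\tau)-F^*)$ one gets $b_T\le \tfrac{2}{\eta}(F(x_\tau)-F^*)+\sqrt{b_0^2+(\eta L)^2}$, which still carries an additive $\eta L$. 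You can absorb it via $\eta L-b_0\le 2\eta L\log^+\tfrac{\eta L}{b_0}$, or bound $b_{\tau-1}-b_0\le 2\eta L\log^+\tfrac{\eta L}{b_0}$, but either way you pay the $\log^+$ penalty twice (once for the possible increase of $F$ before $\tau$, once for the growth of $b$ before $\tau$), ending with $b_T\le b_0+\tfrac{L\|x_1-x^*\|^2}{\eta}+4\eta L\log^+\tfrac{\eta L}{b_0}$ rather than the stated first factor with coefficient $2$. The paper avoids this by not splitting at $\tau$ at all: it sums the rearranged descent inequality over all $t$, lower bounds $\sum_{t=1}^{T}\|\nabla F(x_t)\|^2/b_t \ge b_T-b_0$, and bounds $\sum_{t=1}^{T}\bigl(\tfrac{L\eta}{b_t^2}-\tfrac{1}{b_t}\bigr)\|\nabla F(x_t)\|^2\le 2\eta L\log^+\tfrac{\eta L}{b_0}$, so the small-$b_t$ penalty is paid exactly once and, with $F(x_1)-F^*\le \tfrac{L}{2}\|x_1-x^*\|^2$, one gets precisely $b_T\le b_0+\tfrac{L\|x_1-x^*\|^2}{\eta}+2\eta L\log^+\tfrac{\eta L}{b_0}$.

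On the first factor, your rework (descent-lemma absorption plus Abel summation with threshold $2\eta L$) is more elaborate than what the paper does — the paper simply reuses the derivation of Theorem \ref{thm:Main-AdaGradNorm-rate}, whose second factor carries $\log^+\tfrac{2\eta L}{\gamma b_0}$; the theorem's improvement lives entirely in the $b_T$ factor. Your route can indeed remove the $\gamma$ from inside the logarithm (matching the printed statement literally), but as written it leaves the Abel-summation remainder $\tfrac{F(x_{\tau+1})-F^*}{\gamma b_{\tau+1}}$ unaccounted for; you need to bound $F(x_{\tau+1})-F^*\le \tfrac{L}{2}\|x_1-x^*\|^2+L\eta^2\log^+\tfrac{\eta L}{b_0}$ and check that $\tfrac{1}{2\gamma\eta L}$ times this, plus the low-phase $\log$ term and $\tfrac{\|x_1-x^*\|^2}{2\gamma\eta}$, still fits under $A_2$ (it does, but that bookkeeping must appear in the proof, not be asserted).
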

\begin{proof}
Note that Assumption 2' can imply Assumption 2, so following the same
proof of Theorem \ref{thm:Main-AdaGradNorm-rate}, we still have
\begin{align*}
\sum_{t=1}^{T}F(x_{t})-F^{*} & \leq b_{T}\left(\frac{\|x_{1}-x^{*}\|^{2}}{\gamma\eta}+\frac{2\eta}{\gamma}\log^{+}\frac{2\eta L}{\gamma b_{0}}\right).
\end{align*}

However, from here, we will bound $b_{T}$ directly, rathe than use
the self bounded argument in the previous proof. By the $L$-smoothness,
we know
\begin{align*}
F(x_{t+1})-F(x_{t}) & \leq\langle\nabla F(x_{t}),x_{t+1}-x_{t}\rangle+\frac{L}{2}\|x_{t+1}-x_{t}\|^{2}\\
 & =\left(\frac{L\eta^{2}}{2b_{t}^{2}}-\frac{\eta}{b_{t}}\right)\|\nabla F(x_{t})\|^{2}\\
\Rightarrow\frac{\|\nabla F(x_{t})\|^{2}}{b_{t}} & \leq\frac{2\left(F(x_{t})-F(x_{t+1})\right)}{\eta}+\left(\frac{L\eta}{b_{t}^{2}}-\frac{1}{b_{t}}\right)\|\nabla F(x_{t})\|^{2}.
\end{align*}
Sum up from $1$ to $T$, we know
\begin{align*}
\sum_{t=1}^{T}\frac{\|\nabla F(x_{t})\|^{2}}{b_{t}} & \leq\frac{2}{\eta}\left(F(x_{1})-F(x_{T+1})\right)+\sum_{t=1}^{T}\left(\frac{L\eta}{b_{t}^{2}}-\frac{1}{b_{t}}\right)\|\nabla F(x_{t})\|^{2}\\
 & \leq\frac{2}{\eta}\left(F(x_{1})-F(x^{*})\right)+\sum_{t=1}^{T}\left(\frac{L\eta}{b_{t}^{2}}-\frac{1}{b_{t}}\right)\|\nabla F(x_{t})\|^{2}.
\end{align*}
Use the the same proof technique as before, we can bound 
\[
\sum_{t=1}^{T}\left(\frac{L\eta}{b_{t}^{2}}-\frac{1}{b_{t}}\right)\|\nabla F(x_{t})\|^{2}\leq2\eta L\log^{+}\frac{\eta L}{b_{0}}.
\]
and
\[
\sum_{t=1}^{T}\frac{\|\nabla F(x_{t})\|^{2}}{b_{t}}=\sum_{t=1}^{T}\frac{b_{t}^{2}-b_{t-1}^{2}}{b_{t}}\geq\sum_{t=1}^{T}b_{t}-b_{t-1}=b_{T}-b_{0}.
\]
Hence, we know
\begin{align*}
b_{T} & \leq\frac{2}{\eta}\left(F(x_{1})-F(x^{*})\right)+2\eta L\log^{+}\frac{\eta L}{b_{0}}+b_{0}\\
 & \leq\frac{L\|x_{1}-x^{*}\|^{2}}{\eta}+2\eta L\log^{+}\frac{\eta L}{b_{0}}+b_{0}.
\end{align*}
By using this bound on $b_{T}$, we can get the final result with
an improvement by a factor $1/\gamma$.
\end{proof}

\subsection{Stochastic AdaGradNorm}

We will employ the following notations for convenience
\begin{align*}
\Delta_{t} & \coloneqq\sum_{s=1}^{t}F(x_{s})-F^{*};\\
\xi_{t} & \coloneqq\widehat{\nabla}F(x_{t})-\nabla F(x_{t});\\
M_{t} & \coloneqq\max_{s\in[t]}\left\Vert \xi_{s}\right\Vert ^{2}.
\end{align*}
Before diving into the details of our proof, we first present some
technical results we will use in the proof of Theorem \ref{thm:Main-AdaGradNorm-Stoc-rate}. 

\subsubsection{Technical lemmas}

To start with, under Assumptions 1 and 3 only, we can obtain a bound
for a term close to our final goal $\Delta_{T}$.
\begin{lem}
\label{lem:Appendix-function-gap-over-bT} (Lemma \ref{lem:Main-function-gap-over-bT})
Suppose $F$ satisfies Assumptions 1 and 3, we have
\[
\E\left[\frac{\Delta_{T}}{b_{T}}\right]\le\frac{\|x_{1}-x^{*}\|^{2}}{\gamma\eta}+\frac{2\eta}{\gamma}\E\left[\frac{M_{T}}{b_{0}^{2}}+\log\frac{b_{T}}{b_{0}}\right].
\]
\end{lem}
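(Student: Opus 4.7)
The plan is to follow the sketch already outlined after the statement of Lemma \ref{lem:Main-function-gap-over-bT} in Section 3.2, filling in the parameter choices and the term-by-term bounds that the sketch skips. The goal is an inequality that couples $\Delta_T$ only to $b_T$ (and to $M_T$), so that later one can split by the event $E(\delta)=\{M_T\le\sigma^2\log(eT/\delta)\}$ and finish via the high-probability bound on $b_T$ from Lemma \ref{lem:Main-AdaGradNorm-Stoc-b_T}.

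First, I would start from $\gamma$-quasar convexity and the identity $x_{t+1}-x_t=-\frac{\eta}{b_t}\widehat{\nabla}F(x_t)$. Writing $\nabla F(x_t)=\widehat{\nabla}F(x_t)-\xi_t$ and applying the three-point identity yields, exactly as in the sketch,
\[
F(x_t)-F^*\le\frac{\langle-\xi_t,x_t-x^*\rangle}{\gamma}+\frac{b_t}{2\gamma\eta}\bigl(\|x_t-x^*\|^2-\|x_{t+1}-x^*\|^2+\|x_{t+1}-x_t\|^2\bigr).
\]
The novelty is then to divide by $b_t\log(eb_t/b_0)$ rather than by $b_t$; the factor $e$ guarantees $\log(eb_t/b_0)\ge1$, and the extra $\log$ is what makes the tangential noise term manageable in expectation.

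Next, the core algebraic step: bound $\langle-\xi_t,x_t-x^*\rangle$ by Cauchy--Schwarz $\tfrac{\lambda}{2}\|\xi_t\|^2+\tfrac{1}{2\lambda}\|x_t-x^*\|^2$, choosing
\[
\frac{1}{\lambda}=\gamma b_t\log\bigl(eb_t/b_0\bigr)\Bigl(\frac{1}{\log(eb_{t-1}/b_0)}-\frac{1}{\log(eb_t/b_0)}\Bigr)\cdot\frac{1}{\gamma\eta},
\]
i.e.\ chosen precisely so that the coefficient of $\|x_t-x^*\|^2$ coming out of Cauchy--Schwarz, combined with the $\frac{1}{2\gamma\eta\log(eb_t/b_0)}\|x_t-x^*\|^2$ coefficient on the right, matches $\frac{1}{2\gamma\eta\log(eb_{t-1}/b_0)}\|x_t-x^*\|^2$. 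After this substitution, summation over $t$ produces the telescoping series
\[
\sum_{t=1}^{T}\Bigl(\frac{\|x_t-x^*\|^2}{2\gamma\eta\log(eb_{t-1}/b_0)}-\frac{\|x_{t+1}-x^*\|^2}{2\gamma\eta\log(eb_t/b_0)}\Bigr)\le\frac{\|x_1-x^*\|^2}{2\gamma\eta},
\]
plus a residual of the form $\sum_t Z_t\|\xi_t\|^2$ with $Z_t$ the expression stated in the main text, and the term $\sum_t\frac{\eta\|\widehat{\nabla}F(x_t)\|^2}{2\gamma b_t^2}$.

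The remaining work is purely a careful bookkeeping of two one-dimensional sums. For the $\widehat{\nabla}F$ sum, use $\|\widehat{\nabla}F(x_t)\|^2=b_t^2-b_{t-1}^2$ and the standard inequality $\frac{b_t^2-b_{t-1}^2}{b_t^2}\le\log\frac{b_t^2}{b_{t-1}^2}$ to telescope into $2\log(b_T/b_0)$. For the $Z_t\|\xi_t\|^2$ sum, note that $\|\xi_t\|^2\le M_T$ for every $t$, so we can pull $M_T$ out and are left with a deterministic sum in the $b_t$; bounding $\frac{1}{b_t}\le\frac{1}{b_0}$, the function $h(u)=\frac{1}{u\log u}$ is decreasing, and the mean-value estimate $\bigl(\frac{1}{b_{t-1}\log(eb_{t-1}/b_0)}-\frac{1}{b_t\log(eb_t/b_0)}\bigr)^2$ over the denominator difference $\log(eb_t/b_0)-\log(eb_{t-1}/b_0)$ collapses (via telescoping and the $\frac{1}{b_0^2}$ bound on $\frac{1}{b_tb_{t-1}}$) to something of the order $\frac{1}{b_0^2}\log(cb_T/b_0)$ for any $c\ge e$. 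Putting the two residuals together gives exactly the right-hand side in the statement.

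The main obstacle is the bookkeeping of $Z_t$: one must verify that the squared difference of reciprocals, scaled by the appropriate $\log$-factors, actually collapses into a tame telescoping sum after using the monotonicity of $1/(u\log u)$ and pulling out the worst-case $\frac{1}{b_0^2}$. This is the step that motivates the constant $c\ge e$ in the statement, since it provides the small slack needed to absorb the extra $\log$ factor that appears from the $(\log(eb_t/b_0)-\log(eb_{t-1}/b_0))^{-1}$ denominator. Once this is handled, taking expectations is immediate and yields the claimed bound.
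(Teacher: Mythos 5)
There is a genuine gap at the core step. In the paper's argument, before any Cauchy--Schwarz is applied, the noise term is first rewritten using Assumption 3: since $\E[\xi_t\mid\mathcal{F}_{t-1}]=0$ and $x_t,b_{t-1}$ are measurable with respect to the past, one has
\[
\E\left[\frac{\langle-\xi_t,x_t-x^*\rangle}{\gamma b_t\log\left(cb_t/b_0\right)}\right]=\E\left[\left(\frac{1}{\gamma b_t\log\left(cb_t/b_0\right)}-\frac{1}{\gamma b_{t-1}\log\left(cb_{t-1}/b_0\right)}\right)\langle-\xi_t,x_t-x^*\rangle\right],
\]
and only then is weighted Young/Cauchy--Schwarz applied to the product of this \emph{difference} $A$ with $\langle-\xi_t,x_t-x^*\rangle$. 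That is precisely why $Z_t$ contains the \emph{squared} difference $A^2$: the small factor $|A|$ in the numerator cancels the potentially tiny denominator $\log(cb_t/b_0)-\log(cb_{t-1}/b_0)$, which is what makes $\sum_t Z_t\|\xi_t\|^2$ telescope to $\frac{\eta}{\gamma b_0^2}M_T\log\frac{cb_T}{b_0}$. Your proposal skips this subtraction and applies Cauchy--Schwarz directly to $\langle-\xi_t,x_t-x^*\rangle$ with $\lambda$ chosen only to fix the $\|x_t-x^*\|^2$ coefficient. With that choice the $\|\xi_t\|^2$ coefficient is $\frac{\eta\log\left(eb_{t-1}/b_0\right)}{2\gamma b_t^{2}\log\left(eb_t/b_0\right)\log\left(b_t/b_{t-1}\right)}$, which is \emph{linear} rather than quadratic in the increment structure: when $\|\widehat{\nabla}F(x_t)\|$ is small, $\log(b_t/b_{t-1})\approx\|\widehat{\nabla}F(x_t)\|^2/(2b_t^2)$ and the coefficient blows up like $\eta/(\gamma\|\widehat{\nabla}F(x_t)\|^2)$, while $\|\xi_t\|^2$ need not be small. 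The claimed collapse of this sum to order $\frac{M_T}{b_0^2}\log\frac{cb_T}{b_0}$ via monotonicity of $1/(u\log u)$ therefore fails; the constant $c\ge e$ provides no such slack.

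A telltale sign of the gap is that your argument never actually uses Assumption 3 (unbiasedness), even though the lemma requires it and would be false without it: the unbiasedness is consumed exactly in the subtraction step above, which is also the source of the squared factor in the $Z_t$ you invoke from the main text. To repair the proof, insert that martingale step first, then choose the Young weight $w=\frac{\gamma\eta\log\left(cb_t/b_0\right)\log\left(cb_{t-1}/b_0\right)}{\log\left(cb_t/b_0\right)-\log\left(cb_{t-1}/b_0\right)}$ so that the $\|x_t-x^*\|^2$ piece telescopes, and then bound $Z_t$ via $\log\left(cb_t/b_0\right)-\log\left(cb_{t-1}/b_0\right)\ge\frac{b_t-b_{t-1}}{b_t}$ and $b_t\log\left(cb_t/b_0\right)-b_{t-1}\log\left(cb_{t-1}/b_0\right)\le2(b_t-b_{t-1})\log\left(cb_t/b_0\right)$, after which the sum telescopes as in the paper. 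The remaining ingredients of your outline (the quasar-convexity start, dividing by $b_t\log(cb_t/b_0)$, the bound $\sum_t\frac{\eta\|\widehat{\nabla}F(x_t)\|^2}{2\gamma b_t^2}\le\frac{\eta}{\gamma}\log\frac{b_T}{b_0}$, and the final lower bound of the left-hand side by $\frac{\Delta_T}{b_T\log(cb_T/b_0)}$) are fine.
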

\begin{proof}
We start by using the $\gamma$-quasar convexity of the function $F$
\begin{align*}
F(x_{t})-F^{*} & \leq\frac{\langle\nabla F(x_{t}),x_{t}-x^{*}\rangle}{\gamma}\\
 & =\frac{\langle\nabla F(x_{t})-\widehat{\nabla}F(x_{t}),x_{t}-x^{*}\rangle}{\gamma}+\frac{\langle\widehat{\nabla}F(x_{t}),x_{t}-x^{*}\rangle}{\gamma}\\
 & =\frac{\langle-\xi_{t},x_{t}-x^{*}\rangle}{\gamma}+\frac{b_{t}}{2\gamma\eta}\left(\left\Vert x_{t}-x^{*}\right\Vert ^{2}-\left\Vert x_{t+1}-x^{*}\right\Vert ^{2}+\left\Vert x_{t+1}-x_{t}\right\Vert ^{2}\right).
\end{align*}
Dividing both sides by $2b_{t}-b_{0}$ and taking expactations, we
have
\begin{align*}
\E\left[\frac{F(x_{t})-F^{*}}{2b_{t}-b_{0}}\right] & \leq\E\left[\frac{\langle-\xi_{t},x_{t}-x^{*}\rangle}{\gamma(2b_{t}-b_{0})}+\frac{b_{t}}{2b_{t}-b_{0}}\times\frac{\left\Vert x_{t}-x^{*}\right\Vert ^{2}-\left\Vert x_{t+1}-x^{*}\right\Vert ^{2}+\left\Vert x_{t+1}-x_{t}\right\Vert ^{2}}{2\gamma\eta}\right].
\end{align*}
Now we no longer have a telescoping sum in the R.H.S.. However, this
is exactly what we want to handle the difficult term $\frac{\langle-\xi_{t},x_{t}-x^{*}\rangle}{\gamma(2b_{t}-b_{0})}$
which does not disappear after taking the expectation. The key step
is to use Cauchy-Schwarz inequality for the term 
\begin{align*}
\left|\langle-\xi_{t},x_{t}-x^{*}\rangle\right| & \le\frac{\lambda}{2}\left\Vert \xi_{t}\right\Vert ^{2}+\frac{1}{2\lambda}\left\Vert x_{t}-x^{*}\right\Vert ^{2}
\end{align*}
with the appropriate coefficient $\lambda$ so that the term $\left\Vert x_{t}-x^{*}\right\Vert ^{2}$
can be absorbed to make a telescoping sum $\frac{b_{t-1}\left\Vert x_{t}-x^{*}\right\Vert ^{2}}{2b_{t-1}-b_{0}}-\frac{b_{t}\left\Vert x_{t+1}-x^{*}\right\Vert ^{2}}{2b_{t}-b_{0}}$.
The remaining terms are free of $x^{*}$; hence can be more easily
bounded. To do this, note that 
\begin{align*}
 & \E\left[\frac{\langle-\xi_{t},x_{t}-x^{*}\rangle}{\gamma(2b_{t}-b_{0})}\right]\\
= & \E\left[\underbrace{\frac{1}{\gamma}\left(\frac{1}{2b_{t}-b_{0}}-\frac{1}{2b_{t-1}-b_{0}}\right)}_{A}\langle-\xi_{t},x_{t}-x^{*}\rangle\right]\\
\leq & \E\left[|A|\left|\langle-\xi_{t},x_{t}-x^{*}\rangle\right|\right]\\
\le & \E\left[\left|A\right|\left(\frac{\left|A\right|}{4}\left(\frac{b_{t-1}}{2\gamma\eta(2b_{t-1}-b_{0})}-\frac{b_{t}}{2\gamma\eta(2b_{t}-b_{0})}\right)^{-1}\left\Vert \xi_{t}\right\Vert ^{2}\right.\right.\\
 & \qquad\left.\left.+\left|A\right|^{-1}\left(\frac{b_{t-1}}{2\gamma\eta(2b_{t-1}-b_{0})}-\frac{b_{t}}{2\gamma\eta(2b_{t}-b_{0})}\right)\left\Vert x_{t}-x^{*}\right\Vert ^{2}\right)\right]\\
= & \E\left[\frac{\eta}{\gamma b_{0}}\left(\frac{1}{2b_{t-1}-b_{0}}-\frac{1}{2b_{t}-b_{0}}\right)\left\Vert \xi_{t}\right\Vert ^{2}\right]\\
 & \qquad+\E\left[\left(\frac{b_{t-1}}{2b_{t-1}-b_{0}}-\frac{b_{t}}{2b_{t}-b_{0}}\right)\frac{\left\Vert x_{t}-x^{*}\right\Vert ^{2}}{2\gamma\eta}\right]
\end{align*}
Thus we have
\begin{align*}
 & \E\left[\frac{F(x_{t})-F^{*}}{2b_{t}-b_{0}}\right]\\
\le & \E\left[\frac{\eta}{\gamma b_{0}}\left(\frac{1}{2b_{t-1}-b_{0}}-\frac{1}{2b_{t}-b_{0}}\right)\left\Vert \xi_{t}\right\Vert ^{2}\right]\\
 & +\E\left[\frac{b_{t-1}\left\Vert x_{t}-x^{*}\right\Vert ^{2}}{2\gamma\eta(2b_{t-1}-b_{0})}-\frac{b_{t}\left\Vert x_{t+1}-x^{*}\right\Vert ^{2}}{2\gamma\eta(2b_{t}-b_{0})}\right]+\E\left[\frac{\eta\left\Vert \widehat{\nabla}F(x_{t})\right\Vert ^{2}}{2\gamma b_{t}(2b_{t}-b_{0})}\right]\\
\le & \E\left[\frac{\eta}{\gamma b_{0}}\left(\frac{1}{2b_{t-1}-b_{0}}-\frac{1}{2b_{t}-b_{0}}\right)\left\Vert \xi_{t}\right\Vert ^{2}\right]\\
 & +\E\left[\frac{b_{t-1}\left\Vert x_{t}-x^{*}\right\Vert ^{2}}{2\gamma\eta(2b_{t-1}-b_{0})}-\frac{b_{t}\left\Vert x_{t+1}-x^{*}\right\Vert ^{2}}{2\gamma\eta(2b_{t}-b_{0})}\right]+\E\left[\frac{\eta\left\Vert \widehat{\nabla}F(x_{t})\right\Vert ^{2}}{2\gamma b_{t}^{2}}\right]
\end{align*}
Now we have a telescoping sum 
\[
\frac{b_{t-1}\left\Vert x_{t}-x^{*}\right\Vert ^{2}}{2\gamma\eta(2b_{t-1}-b_{0})}-\frac{b_{t}\left\Vert x_{t+1}-x^{*}\right\Vert ^{2}}{2\gamma\eta(2b_{t}-b_{0})}
\]
and the remaining terms are free of $x_{t}-x^{*}.$ Taking the sum
over $t$, we have 
\begin{align}
 & \E\left[\sum_{t=1}^{T}\frac{F(x_{t})-F^{*}}{2b_{t}-b_{0}}\right]\nonumber \\
\le & \E\left[\sum_{t=1}^{T}\frac{\eta}{2\gamma b_{0}}\left(\frac{1}{2b_{t}-b_{0}}-\frac{1}{2b_{t-1}-b_{0}}\right)\left\Vert \xi_{t}\right\Vert ^{2}\right]\nonumber \\
 & +\frac{\left\Vert x_{1}-x^{*}\right\Vert ^{2}}{2\gamma\eta}+\E\left[\sum_{t=1}^{T}\frac{\eta\|\widehat{\nabla}F(x_{t})\|^{2}}{2\gamma b_{t}^{2}}\right].\label{eq:Appendix-f-over-bT}
\end{align}
First for the easy term $\E\left[\sum_{t=1}^{T}\frac{\eta\|\widehat{\nabla}F(x_{t})\|^{2}}{2\gamma b_{t}^{2}}\right]$,
we have
\begin{align}
\E\left[\sum_{t=1}^{T}\frac{\eta\|\widehat{\nabla}F(x_{t})\|^{2}}{2\gamma b_{t}^{2}}\right] & =\frac{\eta}{2\gamma}\E\left[\sum_{t=1}^{T}\frac{b_{t}^{2}-b_{t-1}^{2}}{b_{t}^{2}}\right]\nonumber \\
 & \le\frac{\eta}{2\gamma}\E\left[\sum_{t=1}^{T}\log b_{t}^{2}-\log b_{t-1}^{2}\right]\nonumber \\
 & =\frac{\eta}{\gamma}\E\left[\log\frac{b_{T}}{b_{0}}\right].\label{eq:Appendix-f-over-bT-bound-1}
\end{align}
Next, we bound
\begin{align}
 & \E\left[\sum_{t=1}^{T}\frac{\eta}{\gamma b_{0}}\left(\frac{1}{2b_{t-1}-b_{0}}-\frac{1}{2b_{t}-b_{0}}\right)\left\Vert \xi_{t}\right\Vert ^{2}\right]\nonumber \\
\leq & \E\left[\sum_{t=1}^{T}\frac{\eta}{\gamma b_{0}}\left(\frac{1}{2b_{t-1}-b_{0}}-\frac{1}{2b_{t}-b_{0}}\right)M_{T}\right]\nonumber \\
\leq & \E\left[\frac{\eta}{\gamma b_{0}^{2}}M_{T}\right]\label{eq:Appendix-f-over-bT-bound-2}
\end{align}
Plugging the bounds (\ref{eq:Appendix-f-over-bT-bound-1}) and (\ref{eq:Appendix-f-over-bT-bound-2})
into (\ref{eq:Appendix-f-over-bT}), we have
\[
\E\left[\sum_{t=1}^{T}\frac{F(x_{t})-F^{*}}{2b_{t}}\right]\leq\E\left[\sum_{t=1}^{T}\frac{F(x_{t})-F^{*}}{2b_{t}-b_{0}}\right]\le\frac{\|x_{1}-x^{*}\|^{2}}{2\gamma\eta}+\frac{\eta}{\gamma b_{0}^{2}}\E\left[M_{T}\right]+\frac{\eta}{\gamma}\E\left[\log\frac{b_{T}}{b_{0}}\right].
\]
The last step is using $\sum_{t=1}^{T}\frac{F(x_{t})-F^{*}}{2b_{t}}\geq\frac{\sum_{t=1}^{T}F(x_{t})-F^{*}}{2b_{T}}=\frac{\Delta_{T}}{2b_{T}}$
to finish the proof.
\end{proof}

Due to the appearance of $M_{T}$ in Lemma \ref{lem:Appendix-function-gap-over-bT},
it is natural to consider what we can obtain under the additional
Assumption 4, i.e., sub-Weibull noise with parameter $\theta$. We
first provide the following simple bound on $\E\left[\|\xi_{t}\|^{2}\right]$.
The result is not new and the proof is only included for completeness.
\begin{lem}
\label{lem:Appendix-2nd-moments}Under Assumption 4, $\forall t\in[T]$,
we have
\[
\E\left[\|\xi_{t}\|^{2}\right]\leq\Gamma(2\theta+1)e\sigma^{2}.
\]
\end{lem}
\begin{proof}
We first note that from the definition of sub-Weibull noise, the tail
of $\|\xi_{t}\|$ can be bounded as follows
\[
\Pr\left[\|\xi_{t}\|\geq u\right]\leq\frac{\E\left[\exp\left((\|\xi_{t}\|/\sigma)^{1/\theta}\right)\right]}{\exp\left((u/\sigma)^{1/\theta}\right)}\leq\exp\left(1-(u/\sigma)^{1/\theta}\right).
\]
Then we can obtain
\begin{align*}
\E\left[\|\xi_{t}\|^{2}\right] & =\int_{0}^{\infty}2u\Pr\left[\|\xi\|\geq u\right]\text{d}u\\
 & \leq\int_{0}^{\infty}2u\exp\left(1-(u/\sigma)^{1/\theta}\right)\text{d}u\\
 & =2\theta e\sigma^{2}\int_{0}^{\infty}v^{2\theta-1}\exp(-v)\text{d}v\\
 & =\Gamma(2\theta+1)e\sigma^{2}
\end{align*}
where $u$ is substituted by $\sigma v^{\theta}$ in the second equation.
\end{proof}

Next, we prove a high probability bound on $M_{T}$, the proof of
which is inspired by Lemma 5 in \cite{li2020high}.
\begin{lem}
\label{lem:Appendix-LiOrabona-lemma}Under Assumption 4, given $0<\delta<1$,
define the event 
\[
E(\delta)=\left\{ M_{T}\leq\sigma^{2}\log^{2\theta}\frac{eT}{\delta}\right\} ,
\]
we have $\Pr\left[E(\delta)\right]\ge1-\delta$.
\end{lem}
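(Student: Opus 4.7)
The plan is to apply Markov's inequality to the exponential moment bound in Assumption 4 and then take a union bound over the $T$ time steps. The key observation is that Assumption 4 is exactly a sub-Gaussian-type moment condition on each $\|\xi_s\|^2$, and the factor $\log(eT/\delta)$ in the threshold is the standard price paid for extending a pointwise tail bound to a uniform one over a horizon of length $T$.

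First, I would fix an index $s \in [T]$ and convert Assumption 4 into a tail bound on $\|\xi_s\|^2$. Since $\exp(\|\xi_s\|^2/\sigma^2)$ is a nonnegative random variable with expectation at most $e$, Markov's inequality gives, for every $\lambda > 0$,
\[
\Pr\bigl[\|\xi_s\|^2 > \sigma^2 \lambda\bigr]
= \Pr\bigl[\exp(\|\xi_s\|^2/\sigma^2) > e^{\lambda}\bigr]
\le e \cdot e^{-\lambda}.
\]
Choosing $\lambda = \log(eT/\delta)$ makes the right-hand side equal to $\delta/T$, which yields the pointwise bound $\Pr[\|\xi_s\|^2 > \sigma^2 \log(eT/\delta)] \le \delta/T$. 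The choice of the extra factor $e$ inside the logarithm is precisely what lets the $e$ coming from Assumption 4 cancel cleanly.

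Second, I would take a union bound over $s = 1, \ldots, T$ to obtain
\[
\Pr\bigl[M_T > \sigma^2 \log(eT/\delta)\bigr]
= \Pr\bigl[\exists\, s\in[T]:\, \|\xi_s\|^2 > \sigma^2 \log(eT/\delta)\bigr]
\le T \cdot \frac{\delta}{T}
= \delta,
\]
and passing to the complementary event produces the lemma. There is no substantive obstacle in this argument; the whole proof is essentially one Markov inequality followed by one union bound, and the only detail worth checking is the cancellation between the $e$ in the threshold and the $e$ in the moment bound.
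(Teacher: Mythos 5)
Your proof is correct, and it is essentially the standard argument: the paper itself states this lemma without proof, citing Lemma 5 of \cite{li2020high}, and your Markov-plus-union-bound derivation is exactly that argument; it also mirrors the paper's own proof of the adjacent Lemma \ref{lem:Appendix-max-xi-moment}, which applies Markov to $\exp(M_t/\sigma^2)$ after bounding the exponential of the max by the sum $\sum_{s=1}^{t}\E[\exp(\|\xi_{s}\|^{2}/\sigma^{2})]$ — an equivalent way of packaging your union bound. The only detail worth a half-sentence in a write-up is that, since $x_s$ is random, Assumption 4 should be read conditionally on the history of the iterates, and the tower property then gives the unconditional bound $\E[\exp(\|\xi_s\|^2/\sigma^2)]\le e$ that your Markov step uses.
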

\begin{proof}
Note that
\begin{align*}
\Pr\left[M_{T}\geq u\right] & =\Pr\left[\max_{s\in[T]}\|\xi_{s}\|^{2}\geq u\right]\\
 & =\Pr\left[\max_{s\in[T]}\|\xi_{s}\|^{\frac{1}{\theta}}\geq u^{\frac{1}{2\theta}}\right]\\
 & \leq\frac{\E\left[\exp\left(\max_{s\in[T]}(\|\xi_{s}\|/\sigma)^{1/\theta}\right)\right]}{\exp\left((u^{1/2}/\sigma)^{1/\theta}\right)}\\
 & \leq\frac{\sum_{s=1}^{T}\E\left[\exp\left((\|\xi_{s}\|/\sigma)^{1/\theta}\right)\right]}{\exp\left((u^{1/2}/\sigma)^{1/\theta}\right)}\\
 & =T\exp\left(1-(u^{1/2}/\sigma)^{1/\theta}\right).
\end{align*}
Choose $u=\sigma^{2}\log^{2\theta}\frac{eT}{\delta}$ to obtain
\[
\Pr\left[M_{T}\geq\sigma^{2}\log^{2\theta}\frac{eT}{\delta}\right]\leq\delta.
\]
\end{proof}

Lastly, we will find an upper bound on the $p$-th moment of $M_{T}$.
\begin{lem}
\label{lem:Appendix-max-xi-moment}Under Assumption 4, given $p>0$,
there is
\[
\E\left[M_{T}^{p}\right]\leq\sigma^{2p}\left(\log^{2\theta p}\left(\Gamma(4\theta p+1)e^{2}T^{2}\right)+1\right).
\]
\end{lem}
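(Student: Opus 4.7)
The plan is to combine a sub-Gaussian tail bound obtained from Assumption 4 with a union bound, and then use the layer-cake representation of the $p$-th moment, splitting the integral at a carefully chosen threshold so that the low piece contributes the $\log^p$ term and the high piece contributes at most $\sigma^{2p}$.

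First I would obtain a tail bound on each $\|\xi_s\|^2$. By Markov's inequality applied to Assumption 4,
\[
\Pr[\|\xi_s\|^2 \ge u] = \Pr[\exp(\|\xi_s\|^2/\sigma^2) \ge e^{u/\sigma^2}] \le e \cdot e^{-u/\sigma^2}.
\]
A union bound over $s \in [t]$ gives $\Pr[M_t \ge u] \le t e \cdot e^{-u/\sigma^2}$. Using the layer-cake identity $\E[M_t^p] = p\int_0^\infty u^{p-1}\Pr[M_t \ge u]\,du$, I set the threshold $u_0 := \sigma^2\log(2p\,\Gamma(2p)\,e^2 t^2)$ and split
\[
\E[M_t^p] \le p\int_0^{u_0} u^{p-1}\,du \;+\; p\int_{u_0}^\infty u^{p-1}\, t e \, e^{-u/\sigma^2}\,du.
\]
The first integral evaluates to $u_0^p = \sigma^{2p}\log^p(2p\,\Gamma(2p)\,e^2 t^2)$, exactly matching the $\log^p$ term in the claimed bound.

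For the tail piece I would substitute $w = u/\sigma^2$ to rewrite it as $t e p \sigma^{2p}\int_\alpha^\infty w^{p-1}e^{-w}\,dw$, where $\alpha = u_0/\sigma^2 = \log(2p\,\Gamma(2p)\,e^2 t^2)$. It suffices to show the incomplete-gamma factor times $tep$ is at most $1$, so this piece contributes at most $\sigma^{2p}$ (the ``$+1$'' term in the claim). The clean route is Cauchy--Schwarz:
\[
\int_\alpha^\infty w^{p-1}e^{-w}\,dw = \int_\alpha^\infty w^{p-1/2}e^{-w/2}\cdot w^{-1/2}e^{-w/2}\,dw \le \Bigl(\int_\alpha^\infty w^{2p-1}e^{-w}\,dw\Bigr)^{1/2}\Bigl(\int_\alpha^\infty w^{-1}e^{-w}\,dw\Bigr)^{1/2}.
\]
For $\alpha \ge 1$ (which holds for all $t\ge 1$ since the argument of the log exceeds $e^2$), the first factor is bounded by $\sqrt{\Gamma(2p)}$ and the second by $\sqrt{e^{-\alpha}}$, so the integral is at most $\sqrt{\Gamma(2p)}\,e^{-\alpha/2}$. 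The definition of $\alpha$ then cancels the $\sqrt{\Gamma(2p)}$, $e$, and $t$ factors, reducing the question to an absolute numerical check.

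The main obstacle I anticipate is pinning down the constants exactly as stated. The Cauchy--Schwarz bound above yields $tep\sqrt{\Gamma(2p)}\,e^{-\alpha/2} = \sqrt{p/2}$ after cancellation, which is $\le 1$ only for $p \le 2$. To cover all $p \ge 1/2$, I would complement this with a sharper tail estimate for the regime where $p$ (hence $\alpha$) is large, e.g.\ the asymptotic $\Gamma(p,\alpha)\le 2\alpha^{p-1}e^{-\alpha}$ valid for $\alpha \ge 2(p-1)$, and verify that $\alpha = \log(2p\,\Gamma(2p)e^2 t^2)$ grows fast enough in $p$ (via Stirling, $\log\Gamma(2p) \sim 2p\log(2p)$) to dominate $(p-1)\log\alpha$. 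Absorbing any residual absolute constant into the $+1$ term in the bound then yields the claim.
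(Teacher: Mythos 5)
Your calculation is airtight up to the point you yourself flag: the union-bound tail $\Pr[M_t\ge u]\le te\,e^{-u/\sigma^2}$, the layer-cake split at $u_0=\sigma^2\log(2p\Gamma(2p)e^2t^2)$, and the bound $p\int_0^{u_0}u^{p-1}du=u_0^p$ all check out, and your Cauchy--Schwarz estimate of the incomplete gamma correctly yields a tail contribution of $\sqrt{p/2}\,\sigma^{2p}$. But this only proves the lemma for $p\le 2$, and the repair for larger $p$ is where the genuine gap sits. The bound $\int_\alpha^\infty w^{p-1}e^{-w}dw\le 2\alpha^{p-1}e^{-\alpha}$ requires $\alpha\ge 2(p-1)$, which itself has to be verified from $\alpha=\log(2p\Gamma(2p)e^2t^2)$, and after substituting it you are left needing $\alpha^{p-1}\le et\,\Gamma(2p)$ uniformly over all $p\ge 2$ and $t\ge 1$ --- a Stirling-type comparison that you sketch but do not carry out, and which is delicate precisely because $\alpha$ grows with $t$ while the favorable factor is only linear in $t$. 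Finally, the fallback of ``absorbing any residual absolute constant into the $+1$'' is not available: the lemma asserts the constant $1$ exactly, so unless the tail piece is shown to be at most $\sigma^{2p}$ (not $C\sigma^{2p}$), the statement as written is not proved. As it stands, your argument establishes the claim only in the regime $p\le 2$ plus a plausible but unfinished program for $p>2$.

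For contrast, the paper's proof sidesteps the incomplete gamma function entirely, which is why it gets the constant $1$ for every $p\ge 1/2$ with no case analysis. It works with the event $\{M_t^p\ge u\}$ rather than with a split of the layer-cake integral: writing $\E[M_t^p]\le u+\E[M_t^p\mathds{1}_{M_t^p\ge u}]\le u+\sqrt{\E[M_t^{2p}]\Pr[M_t^p\ge u]}$, it bounds the second moment by the \emph{complete} integral $\E[M_t^{2p}]\le\int_0^\infty 2pu^{2p-1}te^{1-u/\sigma^2}du=2p\Gamma(2p)et\,\sigma^{4p}$, chooses $u=\sigma^{2p}\log^p\frac{et}{\delta}$ so that $\Pr[M_t^p\ge u]\le\delta$, and then sets $\delta=\frac{1}{2p\Gamma(2p)et}$ so that the correction term is exactly $\sigma^{2p}$ and the threshold becomes $\sigma^{2p}\log^p(2p\Gamma(2p)e^2t^2)$. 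If you want to salvage your route, you must either complete the uniform-in-$(p,t)$ verification of $\alpha^{p-1}\le et\,\Gamma(2p)$, or switch to the paper's decomposition, where the full Gamma integral does the work that your truncated one cannot do cleanly.
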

\begin{proof}
Note that in Lemma \ref{lem:Appendix-LiOrabona-lemma}, we proved
\[
\Pr\left[M_{T}\geq u\right]\leq T\exp\left(1-(u^{1/2}/\sigma)^{1/\theta}\right).
\]
Let $E(\delta)$ be the same as it in Lemma \ref{lem:Appendix-LiOrabona-lemma}.
Then, by Holder's inequality we have
\begin{align*}
\E\left[M_{T}^{p}\right] & =\E\left[M_{T}^{p}\mathds{1}_{E(\delta)}\right]+\E\left[M_{T}^{p}\mathds{1}_{E^{c}(\delta)}\right]\\
 & \le\E\left[M_{T}^{p}\mathds{1}_{E(\delta)}\right]+\sqrt{\E\left[M_{T}^{2p}\right]\E\left[\mathds{1}_{E^{c}(\delta)}\right]}\\
 & \leq\sigma^{2p}\log^{2\theta p}\frac{eT}{\delta}+\sqrt{\E\left[M_{T}^{2p}\right]\delta}\\
 & =\sigma^{2p}\log^{2\theta p}\frac{eT}{\delta}+\sqrt{\delta\int_{0}^{\infty}2pu^{2p-1}\Pr\left[M_{T}\geq u\right]\text{d}u}\\
 & \leq\sigma^{2p}\log^{2\theta p}\frac{eT}{\delta}+\sqrt{\delta\int_{0}^{\infty}2pu^{2p-1}T\exp\left(1-(u^{1/2}/\sigma)^{1/\theta}\right)\text{d}u}\\
 & =\sigma^{2p}\log^{2\theta p}\frac{eT}{\delta}+\sigma^{2p}\sqrt{\Gamma(4\theta p+1)eT\delta}\\
 & =\sigma^{2p}\left(\log^{2\theta p}\frac{eT}{\delta}+\sqrt{\Gamma(4\theta p+1)eT\delta}\right).
\end{align*}
Choose $\delta=\frac{1}{\Gamma(4\theta p+1)eT}<1$, we have
\[
\E\left[M_{T}^{p}\right]\leq\sigma^{2p}\left(\log^{2\theta p}\left(\Gamma(4\theta p+1)e^{2}T^{2}\right)+1\right).
\]
\end{proof}

Note that all the above results only depend on Assumptions 1, 3 and
4 without requiring the smoothness of $F$.

\subsubsection{Proof of Theorem \ref{thm:Main-AdaGradNorm-Stoc-rate}}

Theorem \ref{thm:Main-AdaGradNorm-Stoc-rate} requires Assumption
2' additionally. Thus we first show that under Assumptions 2' and
4. $b_{T}$ enjoys a $\widetilde{O}(1+\sigma\sqrt{T\log^{2\theta}T})$
upper bound with high probability.
\begin{lem}
\label{lem:Appendix-bound-bT} (Lemma \ref{lem:Main-AdaGradNorm-Stoc-b_T})
Suppose $F$ satisfies Assumptions 2'and 4. Under the event $E(\delta)=\left\{ M_{T}\leq\sigma^{2}\log^{2\theta}\frac{eT}{\delta}\right\} $,
we have
\[
b_{T}\le g_{T}(\delta)\coloneqq2b_{0}+\frac{4(F(x_{1})-F^{*})}{\eta}+4\eta L\log^{+}\frac{\eta L}{b_{0}}+4\sigma\sqrt{T\log^{2\theta}\frac{eT}{\delta}\log\left(1+\frac{16\sigma^{2}T\log^{2\theta}\frac{eT}{\delta}}{b_{0}^{2}}\right)}.
\]
Additionally, by Lemma \ref{lem:Appendix-LiOrabona-lemma}, there
is
\[
1-\delta\leq\Pr\left[E(\delta)\right]\leq\Pr\left[b_{T}\leq g_{T}(\delta)\right].
\]
\end{lem}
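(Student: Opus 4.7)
The plan is to turn $L$-smoothness into a one-step descent inequality, telescope across $T$ iterations, and convert the resulting sum into a bound on $b_T$ via the elementary identity $\sum_t\|\widehat{\nabla}F(x_t)\|^2/b_t \ge b_T - b_0$. Concretely, applying smoothness to $x_{t+1} = x_t - (\eta/b_t)\widehat{\nabla}F(x_t)$ and writing $\widehat{\nabla}F(x_t) = \nabla F(x_t) + \xi_t$ yields
\[ F(x_{t+1}) - F(x_t) \le -\frac{\eta}{b_t}\|\widehat{\nabla}F(x_t)\|^2 + \frac{\eta}{b_t}\langle \xi_t,\widehat{\nabla}F(x_t)\rangle + \frac{L\eta^2}{2b_t^2}\|\widehat{\nabla}F(x_t)\|^2. \]
Summing from $t=1$ to $T$ and using $F(x_{T+1}) \ge F^*$, I would split the quadratic term $\sum_t L\eta^2\|\widehat{\nabla}F(x_t)\|^2/(2b_t^2)$ at $\tau := \max\{t : b_t \le \eta L\}$: for $t > \tau$ the coefficient is dominated by $\eta/(2b_t)$ and can be absorbed into the LHS, while for $t \le \tau$ the standard telescoping $(b_t^2 - b_{t-1}^2)/b_t^2 \le \log(b_t^2/b_{t-1}^2)$ caps the contribution by $L\eta^2\log^+(\eta L/b_0)$.

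After absorption, the surviving inequality reads $(\eta/2)\sum_t \|\widehat{\nabla}F(x_t)\|^2/b_t \le F(x_1) - F^* + L\eta^2\log^+(\eta L/b_0) + \eta\sum_t \langle \xi_t,\widehat{\nabla}F(x_t)\rangle/b_t$. The LHS is bounded below by $(\eta/2)(b_T - b_0)$ via the elementary estimate $(b_t^2 - b_{t-1}^2)/b_t \ge b_t - b_{t-1}$. For the cross term, Cauchy--Schwarz across $t$ gives
\[ \sum_{t=1}^T \frac{\langle \xi_t,\widehat{\nabla}F(x_t)\rangle}{b_t} \le \sqrt{\sum_{t=1}^T\|\xi_t\|^2}\,\sqrt{\sum_{t=1}^T\frac{\|\widehat{\nabla}F(x_t)\|^2}{b_t^2}} \le \sqrt{TM_T \cdot 2\log(b_T/b_0)}, \]
where the second factor uses once more the telescoping bound. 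On the event $E(\delta)$, Lemma \ref{lem:Appendix-LiOrabona-lemma} replaces $M_T$ with $\sigma^2\log(eT/\delta)$.

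Combining everything yields a self-bounding inequality of the schematic form $b_T \le A + B\sqrt{\log(b_T/b_0)}$, with $A = b_0 + 2(F(x_1) - F^*)/\eta + 2\eta L\log^+(\eta L/b_0)$ (whose doubling recovers the first three terms of the target) and $B$ proportional to $\sigma\sqrt{T\log(eT/\delta)}$. I would then invert this by a two-case split: if $B\sqrt{\log(b_T/b_0)} \le A$, then $b_T \le 2A$; otherwise $b_T \le 2B\sqrt{\log(b_T/b_0)}$, i.e., $b_T^2/b_0^2 \le (2B^2/b_0^2)\log(b_T^2/b_0^2)$. Rewriting $\log(b_T^2/b_0^2) = \log(1 + (b_T^2 - b_0^2)/b_0^2)$, which is the source of the ``$1 +$'' inside the logarithm in the target, and then solving the resulting scalar inequality $u \le K\log(1+u)$ with $K = 2B^2/b_0^2$ via the standard explicit bound $u \lesssim K\log(1+K)$, delivers the closed-form upper bound on $b_T$. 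The high-probability half of the lemma follows immediately from $\Pr[E(\delta)] \ge 1 - \delta$ (Lemma \ref{lem:Appendix-LiOrabona-lemma}).

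The hard part will be the last step, the inversion of the self-bounding inequality: recovering the precise constants $2b_0$, $4(F(x_1)-F^*)/\eta$, $4\eta L\log^+(\eta L/b_0)$, and $4\sigma\sqrt{T\log(eT/\delta)\log(1 + 16\sigma^2 T\log(eT/\delta)/b_0^2)}$ requires careful bookkeeping both in when one passes from $\log(b_T/b_0)$ to the ``$1 + \cdot$'' form and in the solution of $u \le K\log(1+u)$. A secondary subtlety is choosing Cauchy--Schwarz on the full sum of $\|\xi_t\|^2$ rather than Young's inequality termwise, since only the former produces the $\sqrt{\log(b_T/b_0)}$ factor instead of a bound that is linear in $b_T$ and would break the self-bounding inversion.
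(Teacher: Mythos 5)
Your derivation tracks the paper's proof almost step for step: smoothness applied to the stochastic update, telescoping with $F(x_{T+1})\ge F^{*}$, the $\tau$-split to bound the residual $\sum_t(\eta L/b_t^{2}-1/b_t)\|\widehat{\nabla}F(x_t)\|^{2}$ by $O(\eta L\log^{+}(\eta L/b_0))$, the lower bound $\sum_t\|\widehat{\nabla}F(x_t)\|^{2}/b_t\ge b_T-b_0$, and Cauchy--Schwarz on the noise cross term combined with $\sum_t\|\widehat{\nabla}F(x_t)\|^{2}/b_t^{2}\le\log(b_T^{2}/b_0^{2})$, arriving at the same self-bounding relation $b_T\le A+B\sqrt{\log(b_T/b_0)}$ with the same $A$. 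The only real divergence is the inversion, and there your route is both different from the paper's and slightly lossier: the paper does not case-split and does not solve $u\le K\log(1+u)$; it writes $4M_TT\log\frac{b_T^{2}}{b_0^{2}}=4M_TT\bigl(\log\frac{b_T^{2}}{b_0^{2}+16M_TT}+\log\frac{b_0^{2}+16M_TT}{b_0^{2}}\bigr)$, bounds the first logarithm by its argument so that the term becomes at most $b_T^{2}/4$, takes square roots, absorbs $b_T/2$ into the left-hand side, and doubles everything --- this is exactly what produces the stated constants $2b_0$, $4(F(x_1)-F^{*})/\eta$, $4\eta L\log^{+}(\eta L/b_0)$ and $4\sigma\sqrt{T\log\frac{eT}{\delta}\log(1+16\sigma^{2}T\log\frac{eT}{\delta}/b_0^{2})}$ in one shot. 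Your two-case split is sound and gives a bound of the same order, but if you carry it through (case two gives $b_T^{2}\le 2B^{2}\log(b_T^{2}/b_0^{2})$ with $B^{2}=8\sigma^{2}T\log\frac{eT}{\delta}$, and the standard resolution of $u\le K\log(1+u)$ yields $u\le 2K\log(2K)$) you end up with roughly $4\sqrt{2}\,\sigma\sqrt{T\log\frac{eT}{\delta}\log(32\sigma^{2}T\log\frac{eT}{\delta}/b_0^{2})}$, which exceeds the lemma's last term; so as written your argument proves the same statement up to constant factors but not the literal $g_T(\delta)$. You correctly anticipated that this bookkeeping is the delicate part; the fix is simply to replace your inversion by the paper's split-the-log-at-$b_0^{2}+16M_TT$ absorption. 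The probabilistic half (monotonicity of events plus $\Pr[E(\delta)]\ge 1-\delta$) is handled identically in both.
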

\begin{proof}
We start by using the smoothness of $F$
\begin{align*}
F(x_{t+1})-F(x_{t}) & \leq\langle\nabla F(x_{t}),x_{t+1}-x_{t}\rangle+\frac{L}{2}\|x_{t+1}-x_{t}\|^{2}\\
 & =-\frac{\eta}{b_{t}}\langle\nabla F(x_{t}),\widehat{\nabla}F(x_{t})\rangle+\frac{\eta^{2}L}{2b_{t}^{2}}\|\widehat{\nabla}F(x_{t})\|^{2}\\
 & =-\frac{\eta}{b_{t}}\langle\nabla F(x_{t})-\widehat{\nabla}F(x_{t}),\widehat{\nabla}F(x_{t})\rangle-\frac{\eta}{b_{t}}\|\widehat{\nabla}F(x_{t})\|^{2}+\frac{\eta^{2}L}{2b_{t}^{2}}\|\widehat{\nabla}F(x_{t})\|^{2}\\
\Rightarrow\frac{\|\widehat{\nabla}F(x_{t})\|^{2}}{b_{t}} & \leq\frac{2}{\eta}(F(x_{t})-F(x_{t+1}))+\frac{2\langle\xi_{t},\widehat{\nabla}F(x_{t})\rangle}{b_{t}}+\left(\frac{\eta L}{b_{t}^{2}}-\frac{1}{b_{t}}\right)\|\widehat{\nabla}F(x_{t})\|^{2}.
\end{align*}
Taking the sum over $t$ we have
\begin{align*}
\sum_{t=1}^{T}\frac{\|\widehat{\nabla}F(x_{t})\|^{2}}{b_{t}} & \le\frac{2(F(x_{1})-F^{*})}{\eta}+2\sum_{t=1}^{T}\frac{\langle\xi_{t},\widehat{\nabla}F(x_{t})\rangle}{b_{t}}+\sum_{t=1}^{T}\left(\frac{\eta L}{b_{t}^{2}}-\frac{1}{b_{t}}\right)\|\widehat{\nabla}F(x_{t})\|^{2}.
\end{align*}
Using the common technique, we know that $\sum_{t=1}^{T}\Big(\frac{\eta L}{b_{t}^{2}}-\frac{1}{b_{t}}\Big)\|\widehat{\nabla}F(x_{t})\|^{2}\le2\eta L\log^{+}\frac{\eta L}{b_{0}}$.
Moreover, for the L.H.S.
\begin{align*}
\sum_{t=1}^{T}\frac{\|\widehat{\nabla}F(x_{t})\|^{2}}{b_{t}} & =\sum_{t=1}^{T}\frac{b_{t}^{2}-b_{t-1}^{2}}{b_{t}}\ge\sum_{t=1}^{T}b_{t}-b_{t-1}=b_{T}-b_{0}.
\end{align*}
Thus we have
\begin{align*}
b_{T} & \le b_{0}+\frac{2(F(x_{1})-F^{*})}{\eta}+2\eta L\log^{+}\frac{\eta L}{b_{0}}+2\sum_{t=1}^{T}\frac{\langle\xi_{t},\widehat{\nabla}F(x_{t})\rangle}{b_{t}}
\end{align*}
For the last term in this equation, we notice that $\langle\xi_{t},\widehat{\nabla}F(x_{t})\rangle\le\|\xi_{t}\|\|\widehat{\nabla}F(x_{t})\|\le\sqrt{M_{T}}\|\widehat{\nabla}F(x_{t})\|$,
hence
\begin{align*}
b_{T} & \le\frac{2(F(x_{1})-F^{*})}{\eta}+2\eta L\log^{+}\frac{\eta L}{b_{0}}+b_{0}+2\sum_{t=1}^{T}\frac{\langle\xi_{t},\widehat{\nabla}F(x_{t})\rangle}{b_{t}}\\
 & \le\frac{2(F(x_{1})-F^{*})}{\eta}+2\eta L\log^{+}\frac{\eta L}{b_{0}}+b_{0}+2\sqrt{M_{T}}\sum_{t=1}^{T}\frac{\|\widehat{\nabla}F(x_{t})\|}{b_{t}}\\
 & \overset{(a)}{\le}\frac{2(F(x_{1})-F^{*})}{\eta}+2\eta L\log^{+}\frac{\eta L}{b_{0}}+b_{0}+2\sqrt{M_{T}}\sqrt{T\sum_{t=1}^{T}\frac{\|\widehat{\nabla}F(x_{t})\|^{2}}{b_{t}^{2}}}\\
 & =\frac{2(F(x_{1})-F^{*})}{\eta}+2\eta L\log^{+}\frac{\eta L}{b_{0}}+b_{0}+\sqrt{4M_{T}T\sum_{t=1}^{T}\frac{b_{t}^{2}-b_{t-1}^{2}}{b_{t}^{2}}}\\
 & \le\frac{2(F(x_{1})-F^{*})}{\eta}+2\eta L\log^{+}\frac{\eta L}{b_{0}}+b_{0}+\sqrt{4M_{T}T\log\frac{b_{T}^{2}}{b_{0}^{2}}}
\end{align*}
where $(a)$ is due to Jensen's inequality. We can write 
\begin{align*}
4M_{T}T\log\frac{b_{T}^{2}}{b_{0}^{2}} & =4M_{T}T\left(\log\frac{b_{T}^{2}}{b_{0}^{2}+16M_{T}T}+\log\frac{b_{0}^{2}+16M_{T}T}{b_{0}^{2}}\right)\\
 & \le4M_{T}T\left(\frac{b_{T}^{2}}{b_{0}^{2}+16M_{T}T}+\log\frac{b_{0}^{2}+16M_{T}T}{b_{0}^{2}}\right)\\
 & \le\frac{b_{T}^{2}}{4}+4M_{T}T\log\frac{b_{0}^{2}+16M_{T}T}{b_{0}^{2}}.
\end{align*}
Hence 
\begin{align*}
b_{T} & \le b_{0}+\frac{2(F(x_{1})-F^{*})}{\eta}+2\eta L\log^{+}\frac{\eta L}{b_{0}}+\sqrt{\frac{b_{T}^{2}}{4}+4M_{T}T\log\frac{b_{0}^{2}+16M_{T}T}{b_{0}^{2}}}\\
 & \le b_{0}+\frac{2(F(x_{1})-F^{*})}{\eta}+2\eta L\log^{+}\frac{\eta L}{b_{0}}+\frac{b_{T}}{2}+2\sqrt{M_{T}T\log\frac{b_{0}^{2}+16M_{T}T}{b_{0}^{2}}}
\end{align*}
which gives us 
\begin{align*}
b_{T} & \le2b_{0}+\frac{4(F(x_{1})-F^{*})}{\eta}+4\eta L\log^{+}\frac{\eta L}{b_{0}}+4\sqrt{M_{T}T\log\frac{b_{0}^{2}+16M_{T}T}{b_{0}^{2}}}.
\end{align*}
Recall the definition of the event $E(\delta)$ is $M_{T}\leq\sigma^{2}\log^{2\theta}\frac{eT}{\delta}$,
thus we know
\[
b_{T}\le2b_{0}+\frac{4(F(x_{1})-F^{*})}{\eta}+4\eta L\log^{+}\frac{\eta L}{b_{0}}+4\sigma\sqrt{T\left(\log^{2\theta}\frac{eT}{\delta}\right)\log\left(1+\frac{16\sigma^{2}T\log^{2\theta}\frac{eT}{\delta}}{b_{0}^{2}}\right)}.
\]
\end{proof}

By using Lemma \ref{lem:Appendix-bound-bT}, we can consider the following
decomposition
\begin{align*}
\E\left[\Delta_{T}\right] & =\E\left[\Delta_{T}\mathds{1}_{E(\delta)}\right]+\E\left[\Delta_{T}\mathds{1}_{E^{c}(\delta)}\right]\\
 & =\E\left[\frac{\Delta_{T}}{b_{T}}b_{T}\mathds{1}_{E(\delta)}\right]+\E\left[\Delta_{T}\mathds{1}_{E^{c}(\delta)}\right]\\
 & \leq g_{T}(\delta)\E\left[\frac{\Delta_{T}}{b_{T}}\mathds{1}_{E(\delta)}\right]+\E\left[\Delta_{T}\mathds{1}_{E^{c}(\delta)}\right]\\
 & \leq g_{T}(\delta)\E\left[\frac{\Delta_{T}}{b_{T}}\right]+\E\left[\Delta_{T}\mathds{1}_{E^{c}(\delta)}\right].
\end{align*}
Note that Lemma \ref{lem:Appendix-function-gap-over-bT} tells us
\[
\E\left[\frac{\Delta_{T}}{b_{T}}\right]\leq\frac{\|x_{1}-x^{*}\|^{2}}{\gamma\eta}+\frac{2\eta}{\gamma}\E\left[\frac{M_{T}}{b_{0}^{2}}+\log\frac{b_{T}}{b_{0}}\right].
\]
Hence our remaining task is to find a proper bound on $\E\left[\Delta_{T}\mathds{1}_{E^{c}(\delta)}\right]$,
which is stated in the following lemma.
\begin{lem}
\label{lem:Appendix-expectation-bound-2}Under Assumptions 2' and
4 we have
\begin{align*}
\E\left[\Delta_{T}\mathds{1}_{E^{c}(\delta)}\right] & \leq\left(F(x_{1})-F^{*}+\eta^{2}L\log^{+}\frac{\eta L}{2b_{0}}\right)T\delta+\eta\E^{1/4}\left[M_{T}^{2}\right]\sqrt{\log\E\left[\frac{b_{T}^{2}}{b_{0}^{2}}\right]}T^{3/2}\delta^{1/4}.
\end{align*}
\end{lem}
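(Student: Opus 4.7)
The plan is to obtain a purely pathwise upper bound on $\Delta_{T}$ that does not rely on the favorable event $E(\delta)$, and then take expectation against the indicator $\mathds{1}_{E^{c}(\delta)}$ using the tail bound $\Pr[E^{c}(\delta)]\le\delta$ together with moment control on $M_{T}$. First, I would use $L$-smoothness (Assumption 2') exactly as in the proof of Lemma \ref{lem:Appendix-bound-bT} to get the per-step inequality
\[
F(x_{t+1})-F(x_{t})\le\left(\tfrac{\eta^{2}L}{2b_{t}^{2}}-\tfrac{\eta}{b_{t}}\right)\|\widehat{\nabla}F(x_{t})\|^{2}+\tfrac{\eta}{b_{t}}\langle\xi_{t},\widehat{\nabla}F(x_{t})\rangle.
\]
Telescoping from $s=1$ to $t-1$ yields, for every $t\in[T]$, a bound on $F(x_{t})-F^{*}$ by $F(x_{1})-F^{*}$ plus a curvature sum and a noise-inner-product sum.

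Summing this bound over $t=1,\dots,T$ and swapping the order $\sum_{t=1}^{T}\sum_{s=1}^{t-1}X_{s}\le T\sum_{s=1}^{T}X_{s}$, I would split into three deterministic terms. The constant term gives $T(F(x_{1})-F^{*})$. The curvature term $T\sum_{s}\bigl(\tfrac{\eta^{2}L}{2b_{s}^{2}}-\tfrac{\eta}{b_{s}}\bigr)\|\widehat{\nabla}F(x_{s})\|^{2}$ is bounded by the standard log trick (separating indices where $b_{s}<\eta L/2$) to give at most $T\eta^{2}L\log^{+}\frac{\eta L}{2b_{0}}$. For the noise term, Cauchy--Schwarz twice yields
\[
\sum_{s=1}^{T}\tfrac{\eta}{b_{s}}\langle\xi_{s},\widehat{\nabla}F(x_{s})\rangle\le\eta\sqrt{M_{T}}\sum_{s=1}^{T}\tfrac{\|\widehat{\nabla}F(x_{s})\|}{b_{s}}\le\eta\sqrt{M_{T}T\sum_{s=1}^{T}\tfrac{b_{s}^{2}-b_{s-1}^{2}}{b_{s}^{2}}}\le\eta\sqrt{M_{T}T\log(b_{T}^{2}/b_{0}^{2})}.
\]
Combined with the outer factor of $T$, this gives $\Delta_{T}\le T(F(x_{1})-F^{*})+T\eta^{2}L\log^{+}\frac{\eta L}{2b_{0}}+T^{3/2}\eta\sqrt{M_{T}\log(b_{T}^{2}/b_{0}^{2})}$ deterministically.

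Now I would multiply by $\mathds{1}_{E^{c}(\delta)}$ and take expectations. The first two summands produce the claimed $T(F(x_{1})-F^{*})\delta$ and $T\eta^{2}L\log^{+}\frac{\eta L}{2b_{0}}\delta$ via $\Pr[E^{c}(\delta)]\le\delta$. The genuine work is in the noise term $T^{3/2}\eta\,\E\bigl[\sqrt{M_{T}\log(b_{T}^{2}/b_{0}^{2})}\,\mathds{1}_{E^{c}(\delta)}\bigr]$. I would first apply Jensen's inequality (concavity of $\sqrt{\cdot}$) to pull the square root outside, obtaining $\le\sqrt{\E[M_{T}\log(b_{T}^{2}/b_{0}^{2})\mathds{1}_{E^{c}(\delta)}]}$, then use Cauchy--Schwarz on the product inside: $\E[M_{T}\log(b_{T}^{2}/b_{0}^{2})\mathds{1}_{E^{c}(\delta)}]\le\sqrt{\E[M_{T}^{2}\log^{2}(b_{T}^{2}/b_{0}^{2})]\cdot\Pr[E^{c}(\delta)]}$. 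A second Cauchy--Schwarz separates $M_{T}$ from $\log(b_{T}^{2}/b_{0}^{2})$, and Jensen applied to the concavity of $x\mapsto\log^{2}(x)$ (using that $b_{T}\ge b_{0}$ so the argument is $\ge 1$) gives $\E[\log^{2}(b_{T}^{2}/b_{0}^{2})]\le\log^{2}\E[b_{T}^{2}/b_{0}^{2}]$. Together these manipulations produce the final factors $\E^{1/4}[M_{T}^{2}]$, $\sqrt{\log\E[b_{T}^{2}/b_{0}^{2}]}$, and $\delta^{1/4}$.

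The main obstacle is precisely that last chain of moment inequalities on the noise term: on $E^{c}(\delta)$ we lose the zero-mean property of $\xi_{s}$, so every $\xi_{s}$ must be absorbed in $M_{T}$ and controlled purely through its moment bound (Lemma \ref{lem:Appendix-max-xi-moment}) and the tail probability of $E^{c}(\delta)$. Balancing the three factors -- the moment $\E^{1/4}[M_{T}^{2}]$, the $\log$-factor from $b_{T}$, and the $\delta$ power -- requires the right Jensen--Cauchy--Schwarz sequence, which is what produces the somewhat unusual $\delta^{1/4}$ exponent rather than the naive $\delta^{1/2}$. Everything else is deterministic or standard.
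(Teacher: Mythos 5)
Your deterministic half is exactly the paper's argument: the same smoothness recursion, the same positive-part log trick giving the $\eta^{2}L\log^{+}\frac{\eta L}{2b_{0}}$ term, and the same double Cauchy--Schwarz on the noise sum, leading to the pathwise bound $\Delta_{T}\le\big(F(x_{1})-F^{*}+\eta^{2}L\log^{+}\frac{\eta L}{2b_{0}}\big)T+\eta\sqrt{M_{T}\log(b_{T}^{2}/b_{0}^{2})}\,T^{3/2}$ that the paper also derives. (Your literal swap $\sum_{t}\sum_{s<t}X_{s}\le T\sum_{s}X_{s}$ requires nonnegative summands, but since the curvature terms are handled through their positive parts in the log trick, this is only a cosmetic imprecision.)

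The genuine gap is in the last step, the bound on $\E\big[\sqrt{M_{T}\log(b_{T}^{2}/b_{0}^{2})}\,\mathds{1}_{E^{c}(\delta)}\big]$. Your chain (Jensen to pull the square root out, Cauchy--Schwarz against the indicator, a second Cauchy--Schwarz to separate $M_{T}$ from the logarithm) yields $\big(\E[M_{T}^{2}\log^{2}(b_{T}^{2}/b_{0}^{2})]\big)^{1/4}\delta^{1/4}\le\E^{1/8}[M_{T}^{4}]\,\E^{1/8}[\log^{4}(b_{T}^{2}/b_{0}^{2})]\,\delta^{1/4}$, which is \emph{not} the claimed $\E^{1/4}[M_{T}^{2}]\sqrt{\log\E[b_{T}^{2}/b_{0}^{2}]}\,\delta^{1/4}$: since $\E^{1/4}[M_{T}^{2}]\le\E^{1/8}[M_{T}^{4}]$, what you obtain is strictly weaker, and you are left with a fourth power of the logarithm rather than its first moment. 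Moreover, the Jensen step you invoke, $\E[\log^{2}(b_{T}^{2}/b_{0}^{2})]\le\log^{2}\E[b_{T}^{2}/b_{0}^{2}]$ justified only by the argument being $\ge1$, is false: $\log^{2}x$ is convex on $[1,e]$ (take a variable equal to $1$ or $e$ with probability $1/2$ each to see the inequality reverse); this is exactly why the paper's Lemma \ref{lem:Appendix-expectation-bound-1} is careful to work with $\log^{2}(eb_{T}/b_{0})$, whose argument is $\ge e$. The paper sidesteps both issues with a single three-factor H\"older inequality with exponents $(4,2,4)$ applied directly to $\sqrt{M_{T}}\cdot\sqrt{\log(b_{T}^{2}/b_{0}^{2})}\cdot\mathds{1}_{E^{c}(\delta)}$, giving exactly $\E^{1/4}[M_{T}^{2}]\,\E^{1/2}[\log(b_{T}^{2}/b_{0}^{2})]\,\delta^{1/4}$, after which only Jensen for the plain logarithm (concave on all of $(0,\infty)$) is needed. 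Your route could be repaired to give a bound of the same order (e.g.\ restoring concavity via $\log^{4}(e^{3}x)$ and accepting the $\E^{1/8}[M_{T}^{4}]$ factor, which Lemma \ref{lem:Appendix-max-xi-moment} also controls), but as written it does not establish the inequality stated in the lemma.
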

\begin{proof}
We restart from the smoothness of $F$:
\begin{align*}
F(x_{s+1})-F(x_{s}) & \le-\frac{\eta}{b_{s}}\langle\nabla F(x_{s})-\widehat{\nabla}F(x_{s}),\widehat{\nabla}F(x_{s})\rangle-\frac{\eta}{b_{s}}\|\widehat{\nabla}F(x_{s})\|^{2}+\frac{\eta^{2}L}{2b_{s}^{2}}\|\widehat{\nabla}F(x_{s})\|^{2}.
\end{align*}
Taking the sum over $s$, we have for $t\ge2$
\begin{align*}
F(x_{t})-F(x_{1}) & \le\sum_{s=1}^{t-1}-\frac{\eta}{b_{s}}\langle\nabla F(x_{s})-\widehat{\nabla}F(x_{s}),\widehat{\nabla}F(x_{s})\rangle+\sum_{s=1}^{t-1}\left(\frac{\eta^{2}L}{2b_{s}^{2}}-\frac{\eta}{b_{s}}\right)\|\widehat{\nabla}F(x_{s})\|^{2}\\
 & \le\eta^{2}L\log^{+}\frac{\eta L}{2b_{0}}+\sum_{s=1}^{t-1}\frac{\eta}{b_{s}}\|\xi_{s}\|\|\widehat{\nabla}F(x_{s})\|.
\end{align*}
Following the same proof of Lemma \ref{lem:Appendix-bound-bT}, we
have
\begin{align*}
F(x_{t})-F^{*} & \le F(x_{1})-F^{*}+\eta^{2}L\log^{+}\frac{\eta L}{2b_{0}}+\eta\sqrt{M_{t-1}(t-1)\log\frac{b_{t-1}^{2}}{b_{0}^{2}}}.
\end{align*}
Now we bound $\Delta_{T}$ as follows
\begin{align*}
\Delta_{T} & =\sum_{t=1}^{T}F(x_{t})-F^{*}\\
 & \leq F(x_{1})-F^{*}+\sum_{t=2}^{T}F(x_{1})-F^{*}+\eta^{2}L\log^{+}\frac{\eta L}{2b_{0}}+\eta\sqrt{M_{t-1}(t-1)\log\frac{b_{t-1}^{2}}{b_{0}^{2}}}\\
 & \leq\left(F(x_{1})-F^{*}+\eta^{2}L\log^{+}\frac{\eta L}{2b_{0}}\right)T+\sum_{t=2}^{T}\eta\sqrt{M_{t-1}(t-1)\log\frac{b_{t-1}^{2}}{b_{0}^{2}}}\\
 & \leq\left(F(x_{1})-F^{*}+\eta^{2}L\log^{+}\frac{\eta L}{2b_{0}}\right)T+\eta\sqrt{M_{T}\log\frac{b_{T}^{2}}{b_{0}^{2}}}T^{3/2}.
\end{align*}
Thus we obtain
\[
\E\left[\Delta_{T}\mathds{1}_{E^{c}(\delta)}\right]\leq\left(F(x_{1})-F^{*}+\eta^{2}L\log^{+}\frac{\eta L}{2b_{0}}\right)T\delta+\eta\E\left[\sqrt{M_{T}\log\frac{b_{T}^{2}}{b_{0}^{2}}}\mathds{1}_{E^{c}(\delta)}\right]T^{3/2}.
\]
Here we invoke Holder's inequality for three variables: for $p,q,r>0$,
$1/p+1/q+1/r=1$ then $\E[XYZ]\le\E^{1/p}[X^{p}]\E^{1/q}[Y^{q}]\E^{1/r}[Z^{r}]$.
By substituting $X=\sqrt{M_{T}}$, $Y=\sqrt{\log\frac{b_{T}^{2}}{b_{0}^{2}}}$,
$Z=\mathds{1}_{E^{c}(\delta)}$, and $p=4$, $q=2$, $r=4$, we have
\begin{align*}
\E\left[\sqrt{M_{T}\log\frac{b_{T}^{2}}{b_{0}^{2}}}\mathds{1}_{E^{c}(\delta)}\right] & \leq\E^{1/4}\left[M_{T}^{2}\right]\E^{1/2}\left[\log\frac{b_{T}^{2}}{b_{0}^{2}}\right]\E^{1/4}\left[\mathds{1}_{E^{c}(\delta)}\right]\\
 & \leq\E^{1/4}\left[M_{T}^{2}\right]\sqrt{\log\E\left[\frac{b_{T}^{2}}{b_{0}^{2}}\right]}\delta^{1/4}.
\end{align*}
So finally we get
\begin{align*}
\E\left[\Delta_{T}\mathds{1}_{E^{c}(\delta)}\right] & \leq\left(F(x_{1})-F^{*}+\eta^{2}L\log^{+}\frac{\eta L}{2b_{0}}\right)T\delta\\
 & \quad+\eta\E^{1/4}\left[M_{T}^{2}\right]\sqrt{\log\E\left[\frac{b_{T}^{2}}{b_{0}^{2}}\right]}T^{3/2}\delta^{1/4}.
\end{align*}
\end{proof}

\begin{lem}
\label{lem:Appendix-function-gap-final}Suppose $F$ satisfies Assumptions
1, 2', 3 and 4 then 
\begin{align*}
 & \E\left[\Delta_{T}\right]\\
\le & g_{T}\left(\frac{\|x_{1}-x^{*}\|^{2}}{2\gamma\eta}+\frac{2\eta\sigma^{2}\left(2^{(4\theta-1)\lor2\theta}\log^{2\theta}T+C_{1}\right)}{\gamma b_{0}^{2}}+\frac{\eta}{\gamma}\log\E\left[\frac{b_{T}^{2}}{b_{0}^{2}}\right]\right)\\
 & +\frac{F(x_{1})-F^{*}+\eta^{2}L\log^{+}\frac{\eta L}{2b_{0}}}{T^{3}}+\frac{\eta\sigma\left(2^{(2\theta-1)\lor\theta}\log^{\theta}T+C_{2}\right)}{2}\left(1+\log\E\left[\frac{b_{T}^{2}}{b_{0}^{2}}\right]\right)\sqrt{T}
\end{align*}
where $C_{1}=2^{(2\theta-1)^{+}}\log^{2\theta}\left(\Gamma(4\theta+1)e^{2}\right)+1$
and $C_{2}=2^{(\theta-1)^{+}}\log^{\theta}\left(\Gamma(8\theta+1)e^{2}\right)+1$
are two constants and
\begin{align*}
g_{T} & =2b_{0}+\frac{4(F(x_{1})-F^{*})}{\eta}+4\eta L\log^{+}\frac{\eta L}{b_{0}}\\
 & \quad+4\sigma\sqrt{T\log^{2\theta}(eT^{5})\log\left(1+\frac{16\sigma^{2}T\log^{2\theta}(eT^{5})}{b_{0}^{2}}\right)}.
\end{align*}
\end{lem}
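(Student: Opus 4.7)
The plan is to carry out exactly the decomposition sketched in the paragraph preceding the lemma and then plug in the already-established bounds. Concretely, split
\[
\E[\Delta_T] = \E[\Delta_T \mathds{1}_{E(\delta)}] + \E[\Delta_T \mathds{1}_{E^c(\delta)}],
\]
where $E(\delta) = \{M_T \le \sigma^2 \log(eT/\delta)\}$. On the event $E(\delta)$, Lemma \ref{lem:Appendix-bound-bT} gives $b_T \le g_T(\delta)$, and since $x \mapsto x\log(ex/b_0)$ is nondecreasing on $[b_0,\infty)$ I may pull this deterministic upper bound outside the expectation to obtain
\[
\E[\Delta_T \mathds{1}_{E(\delta)}] \le g_T(\delta)\log\frac{eg_T(\delta)}{b_0}\cdot \E\left[\frac{\Delta_T}{b_T\log(eb_T/b_0)}\right],
\]
and the remaining factor is exactly what Lemma \ref{lem:Appendix-function-gap-over-bT} (with $c = e$) controls. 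The complementary piece $\E[\Delta_T \mathds{1}_{E^c(\delta)}]$ is handled directly by Lemma \ref{lem:Appendix-expectation-bound-2}.

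Next I would plug in Lemma \ref{lem:Appendix-expectation-bound-1} to replace the $M_T$-dependent expression inside Lemma \ref{lem:Appendix-function-gap-over-bT} by something depending only on $\sqrt{\E[M_T^2]}$ and $\log\E[b_T^2/b_0^2]$. To remove $M_T$ entirely I would apply Lemma \ref{lem:Appendix-max-xi-moment} at $p = 2$, giving $\E[M_T^2] \le \sigma^4(\log^2(24 e^2 T^2) + 1)$, and then use the elementary inequality $\sqrt{a^2 + 1} \le a + 1$ to get $\sqrt{\E[M_T^2]} \le \sigma^2(2\log T + C)$ with $C = \log(24e^3)$. Subsequently $\E^{1/4}[M_T^2] \le \sigma\sqrt{2\log T + C} \le \sigma(\sqrt{2\log T} + C^{1/2})$ by subadditivity of $\sqrt{\cdot}$, and the $\sqrt{\log \E[b_T^2/b_0^2]}$ factor appearing in the tail bound is upper bounded by $(1 + \log \E[b_T^2/b_0^2])/2$ via $\sqrt{a} \le (1+a)/2$.

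To finish I would pick $\delta = 1/T^4$. This choice is dictated by the target: $\log(eT/\delta) = \log(eT^5)$ matches the advertised form of $g_T$, $T\delta = 1/T^3$ produces the prefactor on the $F(x_1) - F^*$ term, and $T^{3/2}\delta^{1/4} = \sqrt{T}$ produces the prefactor on the noise term. Substituting everything and collecting pieces yields the claimed inequality.

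The main obstacle is not a single clever step but the careful bookkeeping to make every constant and logarithm line up, especially the passage from $\sqrt{\log^2 x + 1}$ to $\log(ex)$ that produces the clean $C = \log(24e^3)$, and verifying that the single choice $\delta = 1/T^4$ simultaneously yields $\log(eT^5)$, $1/T^3$, and $\sqrt{T}$ in the three independent places where they appear. Everything else is assembling the four preceding lemmas in order.
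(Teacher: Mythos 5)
Your proposal is correct and follows essentially the same route as the paper's proof: the same split over $E(\delta)$ with Lemmas \ref{lem:Appendix-bound-bT}, \ref{lem:Appendix-function-gap-over-bT} (at $c=e$), \ref{lem:Appendix-expectation-bound-1}, \ref{lem:Appendix-expectation-bound-2}, the moment bound of Lemma \ref{lem:Appendix-max-xi-moment} at $p=2$ with the same $\sqrt{a^{2}+1}\le a+1$ and $\sqrt{a}\le(1+a)/2$ simplifications, and the same choice $\delta=T^{-4}$. No gaps to flag.
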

\begin{proof}
As stated above, we know
\begin{align*}
\E\left[\Delta_{T}\right] & =\E\left[\Delta_{T}\mathds{1}_{E(\delta)}\right]+\E\left[\Delta_{T}\mathds{1}_{E^{c}(\delta)}\right]\\
 & =\E\left[\frac{\Delta_{T}}{b_{T}}b_{T}\mathds{1}_{E(\delta)}\right]+\E\left[\Delta_{T}\mathds{1}_{E^{c}(\delta)}\right]\\
 & \overset{(a)}{\leq}g_{T}(\delta)\E\left[\frac{\Delta_{T}}{b_{T}}\mathds{1}_{E(\delta)}\right]+\E\left[\Delta_{T}\mathds{1}_{E^{c}(\delta)}\right]\\
 & \leq g_{T}(\delta)\E\left[\frac{\Delta_{T}}{b_{T}}\right]+\E\left[\Delta_{T}\mathds{1}_{E^{c}(\delta)}\right]\\
 & \overset{(b)}{\leq}g_{T}(\delta)\left(\frac{\|x_{1}-x^{*}\|^{2}}{2\gamma\eta}+\frac{2\eta}{\gamma}\E\left[\frac{M_{T}}{b_{0}^{2}}+\log\frac{b_{T}}{b_{0}}\right]\right)+\E\left[\Delta_{T}\mathds{1}_{E^{c}(\delta)}\right]\\
 & \leq g_{T}(\delta)\left(\frac{\|x_{1}-x^{*}\|^{2}}{2\gamma\eta}+\frac{2\eta}{\gamma b_{0}^{2}}\E\left[M_{T}\right]+\frac{\eta}{\gamma}\log\E\left[\frac{b_{T}^{2}}{b_{0}^{2}}\right]\right)+\E\left[\Delta_{T}\mathds{1}_{E^{c}(\delta)}\right]
\end{align*}
where $(a)$ is due to Lemma \ref{lem:Appendix-bound-bT}. $(b)$
is by Lemma \ref{lem:Appendix-function-gap-over-bT}.

Lemma \ref{lem:Appendix-expectation-bound-2} gives us
\begin{align*}
\E\left[\Delta_{T}\mathds{1}_{E^{c}(\delta)}\right] & \leq\left(F(x_{1})-F^{*}+\eta^{2}L\log^{+}\frac{\eta L}{2b_{0}}\right)T\delta+\eta\E^{1/4}\left[M_{T}^{2}\right]\sqrt{\log\E\left[\frac{b_{T}^{2}}{b_{0}^{2}}\right]}T^{3/2}\delta^{1/4}.
\end{align*}
Pluggin in this bound, we have
\begin{align*}
 & \E\left[\Delta_{T}\right]\\
\leq & g_{T}(\delta)\left(\frac{\|x_{1}-x^{*}\|^{2}}{2\gamma\eta}+\frac{2\eta}{\gamma b_{0}^{2}}\E\left[M_{T}\right]+\frac{\eta}{\gamma}\log\E\left[\frac{b_{T}^{2}}{b_{0}^{2}}\right]\right)\\
 & +\left(F(x_{1})-F^{*}+\eta^{2}L\log^{+}\frac{\eta L}{2b_{0}}\right)T\delta+\eta\E^{1/4}\left[M_{T}^{2}\right]\sqrt{\log\E\left[\frac{b_{T}^{2}}{b_{0}^{2}}\right]}T^{3/2}\delta^{1/4}.
\end{align*}
Now we take $\delta=T^{-4}$ and let $g_{T}\coloneqq g_{T}(T^{-4})$
to obtain
\begin{align*}
 & \E\left[\Delta_{T}\right]\\
\leq & g_{T}\left(\frac{\|x_{1}-x^{*}\|^{2}}{2\gamma\eta}+\frac{2\eta}{\gamma b_{0}^{2}}\E\left[M_{T}\right]+\frac{\eta}{\gamma}\log\E\left[\frac{b_{T}^{2}}{b_{0}^{2}}\right]\right)\\
 & +\frac{1}{T^{3}}\left(F(x_{1})-F^{*}+\eta^{2}L\log^{+}\frac{\eta L}{2b_{0}}\right)+\eta\E^{1/4}\left[M_{T}^{2}\right]\sqrt{\log\E\left[\frac{b_{T}^{2}}{b_{0}^{2}}\right]}\sqrt{T}.
\end{align*}
From Lemma \ref{lem:Appendix-max-xi-moment}, we know
\[
\E\left[M_{T}\right]\leq\sigma^{2}\left(\log^{2\theta}\left(\Gamma(4\theta+1)e^{2}T^{2}\right)+1\right)\le\sigma^{2}(2^{(4\theta-1)\lor2\theta}\log^{2\theta}(T)+C_{1})
\]
and
\begin{align*}
\E\left[M_{T}^{2}\right] & \leq\sigma^{4}(\log^{4\theta}(\Gamma(8\theta+1)e^{2}T^{2})+1)\\
\Rightarrow\E^{1/4}\left[M_{T}^{2}\right] & =\sigma\left(\log^{4\theta}(\Gamma(8\theta+1)e^{2}T^{2})+1\right)^{1/4}\leq\sigma(\log^{\theta}(\Gamma(8\theta+1)e^{2}T^{2})+1)\\
 & \leq\sigma(2^{(2\theta-1)\lor\theta}\log^{\theta}T+C_{2})
\end{align*}
Hence we have
\begin{align*}
 & \E\left[\Delta_{T}\right]\\
\leq & g_{T}\left(\frac{\|x_{1}-x^{*}\|^{2}}{2\gamma\eta}+\frac{2\eta\sigma^{2}\left(2^{(4\theta-1)\lor2\theta}\log^{2\theta}T+C_{1}\right)}{\gamma b_{0}^{2}}+\frac{\eta}{\gamma}\log\E\left[\frac{b_{T}^{2}}{b_{0}^{2}}\right]\right)\\
 & +\frac{F(x_{1})-F^{*}+\eta^{2}L\log^{+}\frac{\eta L}{2b_{0}}}{T^{3}}+\eta\sigma\left(2^{(2\theta-1)\lor\theta}\log^{\theta}T+C_{2}\right)\sqrt{\log\E\left[\frac{b_{T}^{2}}{b_{0}^{2}}\right]}\sqrt{T}\\
\leq & g_{T}\left(\frac{\|x_{1}-x^{*}\|^{2}}{2\gamma\eta}+\frac{2\eta\sigma^{2}\left(2^{(4\theta-1)\lor2\theta}\log^{2\theta}T+C_{1}\right)}{\gamma b_{0}^{2}}+\frac{\eta}{\gamma}\log\E\left[\frac{b_{T}^{2}}{b_{0}^{2}}\right]\right)\\
 & +\frac{F(x_{1})-F^{*}+\eta^{2}L\log^{+}\frac{\eta L}{2b_{0}}}{T^{3}}+\frac{\eta\sigma\left(2^{(2\theta-1)\lor\theta}\log^{\theta}T+C_{2}\right)}{2}\left(1+\log\E\left[\frac{b_{T}^{2}}{b_{0}^{2}}\right]\right)\sqrt{T}
\end{align*}
\end{proof}

With these results, we can finally show the theorem \ref{thm:Main-AdaGradNorm-Stoc-rate}.

\begin{proof}[Proof of Theorem \ref{thm:Main-AdaGradNorm-Stoc-rate} ]
The key technique we use is the self-bounding argument. That is,
we have expressed a bound for $\E[\Delta_{T}]$ via $\E[b_{T}^{2}/b_{0}^{2}]$,
now we will show how to bound this term via $\Delta_{T}$. To do this,
we rely on the smoothness assumption and Lemma \ref{lem:Appendix-2nd-moments}
\begin{align*}
\E\left[b_{T}^{2}\right] & =\E\left[b_{0}^{2}+\sum_{t=1}^{T}\|\widehat{\nabla}F(x_{t})\|^{2}\right]\\
 & \leq b_{0}^{2}+\E\left[\sum_{t=1}^{T}2\|\xi_{t}\|^{2}\right]+\E\left[\sum_{t=1}^{T}2\|\nabla F(x_{t})\|^{2}\right]\\
 & \leq b_{0}^{2}+2\Gamma(2\theta+1)e\sigma^{2}T+\E\left[4L\sum_{t=1}^{T}F(x_{t})-F(x^{*})\right]\\
 & \leq b_{0}^{2}+2\Gamma(2\theta+1)e\sigma^{2}T+4L\E\left[\Delta_{T}\right].
\end{align*}
Thus from Lemma \ref{lem:Appendix-function-gap-final} we can write
\begin{align}
\E\left[\Delta_{T}\right] & \le G_{0}+G_{1}\log\left(1+\frac{2\Gamma(2\theta+1)e\sigma^{2}T}{b_{0}^{2}}+\frac{4L}{b_{0}^{2}}\E\left[\Delta_{T}\right]\right)\label{eq:final-solution}
\end{align}
where
\begin{align*}
G_{0} & =\frac{F(x_{1})-F^{*}+\eta^{2}L\log^{+}\frac{\eta L}{2b_{0}}}{T^{3}}+\frac{\eta\sigma\left(2^{(2\theta-1)\lor\theta}\log^{\theta}T+C_{2}\right)\sqrt{T}}{2}\\
 & \quad+g_{T}\left(\frac{\|x_{1}-x^{*}\|^{2}}{2\gamma\eta}+\frac{2\eta\sigma^{2}\left(2^{(4\theta-1)\lor2\theta}\log^{2\theta}(T)+C_{1}\right)}{\gamma b_{0}^{2}}\right)\\
 & =O\left(1+\sigma\sqrt{T\log^{2\theta}T}+(1+\sigma^{2}\log^{2\theta}T)g_{T}\right)\\
G_{1} & =\frac{\eta\sigma\left(2^{(2\theta-1)\lor\theta}\log^{\theta}T+C_{2}\right)\sqrt{T}}{2}+\frac{\eta g_{T}}{\gamma}\\
 & =O\left(\sigma\sqrt{T\log^{2\theta}T}+g_{T}\right)\\
g_{T} & =2b_{0}+\frac{4(F(x_{1})-F^{*})}{\eta}+4\eta L\log^{+}\frac{\eta L}{b_{0}}+4\sigma\sqrt{T\log^{2\theta}(eT^{5})\log\left(1+\frac{16\sigma^{2}T\log^{2\theta}(eT^{5})}{b_{0}^{2}}\right)}\\
 & =O\left(1+\sigma\sqrt{T\log^{2\theta}T\log(1+\sigma^{2}T\log^{2\theta}T)}\right)
\end{align*}
Now we solve (\ref{eq:final-solution}). Consider two cases:

If $4L\E\left[\Delta_{T}\right]\le2\Gamma(2\theta+1)e\sigma^{2}T$
then 
\begin{align*}
\E\left[\Delta_{T}\right] & \le G_{0}+G_{1}\log\left(1+\frac{4\Gamma(2\theta+1)e\sigma^{2}T}{b_{0}^{2}}\right).
\end{align*}

If $4L\E\left[\Delta_{T}\right]\ge2\Gamma(2\theta+1)e\sigma^{2}T$
then 
\begin{align*}
\E\left[\Delta_{T}\right] & \le G_{0}+G_{1}\log\left(1+\frac{8L}{b_{0}^{2}}\E\left[\Delta_{T}\right]\right)\\
 & =G_{0}+G_{1}\log\left(\frac{1+\frac{8L}{b_{0}^{2}}\E\left[\Delta_{T}\right]}{1+16LG_{1}/b_{0}^{2}}\right)+G_{1}\log\left(1+\frac{16LG_{1}}{b_{0}^{2}}\right)\\
 & \leq G_{0}+G_{1}\frac{1+\frac{8L}{b_{0}^{2}}\E\left[\Delta_{T}\right]}{1+16LG_{1}/b_{0}^{2}}+G_{1}\log\left(1+\frac{16LG_{1}}{b_{0}^{2}}\right)\\
 & \leq G_{0}+G_{1}+\frac{\E\left[\Delta_{T}\right]}{2}+G_{1}\log\left(1+\frac{16LG_{1}}{b_{0}^{2}}\right)\\
\Rightarrow\E\left[\Delta_{T}\right] & \leq2G_{0}+2G_{1}+2G_{1}\log\left(1+\frac{16eLG_{1}}{b_{0}^{2}}\right).
\end{align*}
In both cases, we have
\begin{align*}
\E\left[\Delta_{T}\right] & \leq3G_{0}+2G_{1}+2G_{1}\log\left(1+\frac{16eLG_{1}}{b_{0}^{2}}\right)+G_{1}\log\left(1+\frac{4\Gamma(2\theta+1)e\sigma^{2}T}{b_{0}^{2}}\right)\\
 & =O\left((1+\poly(\sigma^{2}\log^{2\theta}T,\log(1+\sigma^{2}T\log^{2\theta}T)))(1+\sigma\sqrt{T\log^{2\theta}T})\right)
\end{align*}
Dividing both sides by $T$ concludes the proof.
\end{proof}

\subsubsection{Convergence of Stochastic AdaGradNorm under weaker assumptions}

Note that Theorem \ref{thm:Main-AdaGradNorm-Stoc-rate} depends on
the stronger Assumption 2' instead of Assumption 2. Besides, in Section
\ref{sec:Main-AdaGradNorm}, we proved that Assumptions 1 and 2 are
enough to ensure that AdaGradNorm can converge in the deterministic
setting. Hence it is reasonable to conjecture Stochastic AdaGradNorm
can also converge if replacing Assumption 2' by Assumption 2. In this
section, we show that, indeed, this conjecture is true.
\begin{thm}
\label{thm:Appendix-Stoc-AdaGradNorm-weak-rate}Suppose $F$ satisfies
Assumptions 1, 2, 3 and 4. Stochastic AdaGradNorm (Algorithm \ref{alg:AdaGradNorm-stochastic})
admits
\[
\E\left[\sqrt{\frac{\sum_{t=1}^{T}F(x_{t})-F(x^{*})}{T}}\right]=O\left((1+\poly(\log(1+\sigma\sqrt{T}),\sigma^{2}\log^{2\theta}T))\left(\frac{1}{\sqrt{T}}+\frac{\sigma^{1/2}}{T^{1/4}}\right)\right).
\]
\end{thm}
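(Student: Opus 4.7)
The strategy is to adapt the proof of Theorem \ref{thm:Main-AdaGradNorm-Stoc-rate}, but since weak $L$-smoothness does not yield the descent lemma that produced the high-probability bound on $b_T$ in Lemma \ref{lem:Appendix-bound-bT}, I will avoid that route entirely and instead work with $p$-th moments throughout. The only smoothness input I use is the pointwise inequality
\[
b_T^2 = b_0^2 + \sum_{t=1}^T \|\widehat{\nabla}F(x_t)\|^2 \leq b_0^2 + 2TM_T + 4L\Delta_T,
\]
which follows from $\|\widehat{\nabla}F(x_t)\|^2 \leq 2\|\xi_t\|^2 + 2\|\nabla F(x_t)\|^2$ combined with $\|\nabla F(x_t)\|^2 \leq 2L(F(x_t)-F^*)$ from Assumption 2.

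Setting $U_T \coloneqq b_T\log(eb_T/b_0)$, Lemma \ref{lem:Appendix-function-gap-over-bT} (which needs only Assumptions 1 and 3) gives $\E[\Delta_T/U_T] \leq R_T$ for an explicit $R_T$ depending on $\E[M_T\log(eb_T/b_0)]$ and $\E[\log(b_T/b_0)]$. Hölder's inequality with conjugate exponents $1/p$ and $1/(1-p)$ then yields
\[
\E[\Delta_T^p] \leq \E[\Delta_T/U_T]^p \cdot \E\bigl[U_T^{p/(1-p)}\bigr]^{1-p} \leq R_T^p \cdot \E\bigl[U_T^{p/(1-p)}\bigr]^{1-p}.
\]
To control $\E[U_T^{p/(1-p)}]$, I apply $\log(eb_T/b_0) \leq (eb_T/b_0)^{\epsilon}/\epsilon$ with $\epsilon = 1-2p > 0$, which is tuned so that $(p/(1-p))(1+\epsilon) = 2p$, giving $U_T^{p/(1-p)} \lesssim b_T^{2p}$. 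Subadditivity of $x \mapsto x^p$ applied to the pointwise bound on $b_T^2$ then yields
\[
\E[b_T^{2p}] \leq b_0^{2p} + (2T)^p\E[M_T^p] + (4L)^p\E[\Delta_T^p].
\]
A further application of subadditivity of $x \mapsto x^{1-p}$ produces a self-bounding inequality $\E[\Delta_T^p] \leq \alpha + \beta\,\E[\Delta_T^p]^{1-p}$, where $\alpha,\beta$ depend polynomially on $R_T^p$, $T^{p(1-p)}\E[M_T^p]^{1-p}$, and powers of $L$. This is solved by $\E[\Delta_T^p] \leq 2\alpha + (2\beta)^{1/p}$ after splitting on whether the linear or the sublinear term dominates.

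The remaining task is to reduce $R_T$ to polylogarithmic quantities: the $\log b_T$ term is handled by Jensen's inequality $\E[\log b_T] \leq \tfrac{1}{2p}\log\E[b_T^{2p}]$, and the $\E[M_T \log b_T]$ term by Cauchy--Schwarz combined with the same log-to-power trick; after substituting the moment bound $\E[M_T^p] = O(\sigma^{2p}\log^p T)$ from Lemma \ref{lem:Appendix-max-xi-moment}, these collapse to $\poly(\log(1+\sigma^2 T), \sigma^2\log T)$ factors. Dividing by $T^p$ then recovers the stated rate. The main obstacle I anticipate is closing the self-bounding chain cleanly: the condition $p < 1/2$ is precisely what guarantees $\epsilon = 1-2p > 0$, which in turn is what makes $U_T^{p/(1-p)}$ bound by $b_T^{2p}$---the unique scale at which the $\Delta_T^p$ on the right matches that on the left. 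Managing the bookkeeping of $\log$, moment, and $C_\epsilon$ factors so that they coalesce into the stated polylog prefactor is where the bulk of the technical effort will go.
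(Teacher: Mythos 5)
Your overall architecture does track the paper's: you use Lemma \ref{lem:Appendix-function-gap-over-bT}, H\"older with exponents $1/p$ and $1/(1-p)$, the subadditivity bound $\E[b_T^{2p}]\le b_0^{2p}+(2T)^p\E[M_T^p]+(4L)^p\E[\Delta_T^p]$, the moment bound of Lemma \ref{lem:Appendix-max-xi-moment}, and a self-bounding resolution. The gap is in the single step where you control $\E[U_T^{p/(1-p)}]$ by replacing $\log(eb_T/b_0)$ with $(eb_T/b_0)^{\epsilon}/\epsilon$ for $\epsilon=1-2p$. That substitution costs a \emph{polynomial}, not logarithmic, factor of $b_T$, and it raises the exponent of $\E[b_T^{2p}]$ in your final inequality from $1/2$ to $1-p$. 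Concretely, the noise contribution then enters as $\bigl(\sigma^{2p}T^{p}\log^{p}T\bigr)^{1-p}$, so after dividing by $T^{p}$ you can only conclude $\E[(\Delta_T/T)^p]=O\bigl(\mathrm{polylog}\cdot(T^{-p}+\sigma^{2p(1-p)}T^{-p^2})\bigr)$, whereas the theorem claims $\sigma^{p}T^{-p/2}$; since $p^2<p/2$ for every $p\in(0,\tfrac12)$, your bound is strictly weaker and does not prove the statement. The tuning you call the ``unique scale'' makes the $\Delta_T$-term on the right consistent with the left-hand side, but the same exponent $1-p$ also multiplies the $\sigma^2T$-term, and that term is what dictates the rate.

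The paper avoids this loss by never converting the logarithm into a power of $b_T$: it invokes Lemma \ref{lem:Appendix-function-gap-over-bT} with the specially chosen constant $c=e(p)$ and then shows (Lemma \ref{lem:Appendix-concave}) that $r(x)=x^{\frac{1}{2(1-p)}}\log^{\frac{p}{1-p}}x$ is concave for $x\ge e(p)^{2p}$, so Jensen gives $\E\bigl[b_T^{\frac{p}{1-p}}\log^{\frac{p}{1-p}}(e(p)b_T/b_0)\bigr]^{1-p}\lesssim \E[b_T^{2p}]^{1/2}\bigl(1+\log\E[b_T^{2p}/b_0^{2p}]\bigr)^{p}$, keeping the exponent at exactly $1/2$ up to logarithmic factors; this is where the constraint $p<1/2$ and the odd-looking definition of $e(p)$ actually come from. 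If you want to keep your log-to-power trick, you would need $\epsilon$ vanishing with $T$ (e.g.\ $\epsilon\asymp 1/\log T$, so that $(eb_T/b_0)^{\epsilon}$ costs only a constant and $\epsilon^{-p/(1-p)}$ is polylogarithmic), not the fixed $\epsilon=1-2p$. Separately, your $R_T$ is not a pure polylog quantity: through $\log\E[b_T^{2p}]$ it contains $\log(1+c\,\E[\Delta_T^p])$, so resolving the self-bounding inequality requires the additional case analysis the paper performs at the end (splitting on whether the $\sigma^2T$ term or the $\E[\Delta_T^p]$ term dominates inside the logarithm), rather than treating $\alpha,\beta$ as constants; that part is routine but cannot simply be declared to ``collapse.''
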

\begin{proof}
First we invoke Lemma \ref{lem:Appendix-function-gap-over-bT} to
get
\[
\E\left[\frac{\Delta_{T}}{b_{T}}\right]\le\frac{\|x_{1}-x^{*}\|^{2}}{\gamma\eta}+\frac{2\eta}{\gamma}\E\left[\frac{M_{T}}{b_{0}^{2}}+\log\frac{b_{T}}{b_{0}}\right].
\]
Using Holder's inequality we have
\begin{align*}
\E\left[\sqrt{\Delta_{T}}\right] & \le\sqrt{\left(\frac{\|x_{1}-x^{*}\|^{2}}{\gamma\eta}+\frac{2\eta}{\gamma}\E\left[\frac{M_{T}}{b_{0}^{2}}+\log\frac{b_{T}}{b_{0}}\right]\right)\E\left[b_{T}\right]}\\
 & \leq\sqrt{\left(\frac{\|x_{1}-x^{*}\|^{2}}{\gamma\eta}+\frac{2\eta}{\gamma b_{0}^{2}}\E\left[M_{T}\right]+\frac{2\eta}{\gamma}\log\frac{\E\left[b_{T}\right]}{b_{0}}\right)\E\left[b_{T}\right]}.
\end{align*}

Applying Lemma \ref{lem:Appendix-max-xi-moment} with $p=1$ to get
\begin{align*}
\E\left[M_{T}\right] & \leq\sigma^{2}\left(\log^{2\theta}\left(\Gamma(4\theta+1)e^{2}T^{2}\right)+1\right)\\
 & \le\sigma^{2}\left(2^{2\theta}\log^{2\theta}\left(T^{2}\right)+2^{2\theta}\log^{2\theta}\left(\Gamma(4\theta+1)e^{2}\right)+1\right)\\
 & =\sigma^{2}\left(2^{4\theta}\log^{2\theta}T+C\right)
\end{align*}
where $C=2^{2\theta}\log^{2\theta}\left(\Gamma(4\theta+1)e^{2}\right)+1$. 

Besides, note that
\begin{align*}
b_{T} & =\sqrt{b_{0}^{2}+\sum_{t=1}^{T}\|\widehat{\nabla}F(x_{t})\|^{2}}\leq\sqrt{b_{0}^{2}+2\sum_{t=1}^{T}\|\xi_{t}\|^{2}+4L\Delta_{T}}\leq b_{0}+\sqrt{2\sum_{t=1}^{T}\|\xi_{t}\|^{2}}+2\sqrt{L\Delta_{T}}.
\end{align*}
Thus we know 
\begin{align*}
\E\left[b_{T}\right] & \le\E\left[b_{0}+\sqrt{2\sum_{t=1}^{T}\|\xi_{t}\|^{2}}+2\sqrt{L\Delta_{T}}.\right]\\
 & \leq b_{0}+\sqrt{2\sum_{t=1}^{T}\E\left[\|\xi_{t}\|^{2}\right]}+2\sqrt{L}\E\left[\sqrt{\Delta_{T}}\right]\\
 & \leq b_{0}+\sqrt{2\Gamma(2\theta+1)e\sigma^{2}T}+2\sqrt{L}\E\left[\sqrt{\Delta_{T}}\right]
\end{align*}
where the last inequality is due to Lemma \ref{lem:Appendix-2nd-moments}. 

Hence, by letting
\begin{align*}
B_{1} & =\frac{\|x_{1}-x^{*}\|^{2}}{\gamma\eta}+\frac{2\eta\left(2^{4\theta}\log^{2\theta}T+C\right)\sigma^{2}}{\gamma b_{0}^{2}}\\
 & =O(1+\sigma^{2}\log^{2\theta}T)\\
B_{2} & =b_{0}+\sqrt{2\Gamma(2\theta+1)e\sigma^{2}T}\\
 & =O(1+\sigma\sqrt{T})\\
X & =\E\left[\sqrt{\Delta_{T}}\right]
\end{align*}
we can solve the following inequality
\begin{align*}
X^{2} & \le\left(B_{1}+\frac{2\eta}{\gamma}\log\left(\frac{B_{2}+2\sqrt{L}X}{b_{0}}\right)\right)(B_{2}+2\sqrt{L}X)
\end{align*}
to get the final result.
\end{proof}

\subsection{AdaGrad\label{subsec:Appendix-AdaGrad}}

\begin{algorithm}[h]
\caption{AdaGrad}
\label{alg:AdaGrad}

Initialize: $x_{1},\eta>0$

for $t=1$ to $T$

$\quad$for $j=1$ to $d$

$\qquad$$b_{t,j}=\sqrt{b_{0,j}^{2}+\sum_{i=1}^{t}\left(\nabla_{j}F(x_{i})\right)^{2}}$

$\qquad$$x_{t+1,j}=x_{t,j}-\frac{\eta}{b_{t,j}}\nabla_{j}F(x_{t})$
\end{algorithm}

In this section, we will extend the result of AdaGradNorm to AdaGrad
(Algorithm \ref{alg:AdaGrad}) in the deterministic setting. To our
knowledge, we are the first to give the explicit bound of the counvergence
rate of AdaGrad on $\R^{d}$. First, we examine the growth of the
stepsize.
\begin{lem}
\label{lem:adagrad-b_T}Suppose $F$ satisfies Assumptions 1 and 2'',
we have
\[
\sum_{j=1}^{d}b_{T,j}\leq\sum_{j=1}^{d}b_{0,j}+\frac{2(F(x_{1})-F^{*})}{\eta}+2\eta\sum_{j=1}^{d}L_{j}\log^{+}\frac{\eta L_{j}}{b_{0,j}}.
\]
\end{lem}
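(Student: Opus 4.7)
The plan is to mimic the improved AdaGradNorm argument from Theorem~\ref{thm:Appendix-improved-AdaGradNorm-rate}, but exploit the diagonal smoothness (Assumption 2'') to separate the analysis coordinate by coordinate. Starting from $\bL$-smoothness between $x_t$ and $x_{t+1}$ and plugging in the per-coordinate update $x_{t+1,j}-x_{t,j}=-\frac{\eta}{b_{t,j}}\nabla_j F(x_t)$, I would obtain
\[
F(x_{t+1})-F(x_t)\le \sum_{j=1}^{d}\left(\frac{L_j\eta^2}{2b_{t,j}^2}-\frac{\eta}{b_{t,j}}\right)(\nabla_j F(x_t))^2.
\]
Rearranging and dividing by $\eta/2$ yields the key per-iteration bound
\[
\sum_{j=1}^{d}\frac{(\nabla_j F(x_t))^2}{b_{t,j}}\le \frac{2(F(x_t)-F(x_{t+1}))}{\eta}+\sum_{j=1}^{d}\left(\frac{L_j\eta}{b_{t,j}^2}-\frac{1}{b_{t,j}}\right)(\nabla_j F(x_t))^2,
\]
which is just the coordinate-wise analogue of the step used in the AdaGradNorm analysis.

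Next I would sum over $t=1,\dots,T$. The function-value terms telescope into $F(x_1)-F(x_{T+1})\le F(x_1)-F^*$, giving
\[
\sum_{j=1}^{d}\sum_{t=1}^{T}\frac{(\nabla_j F(x_t))^2}{b_{t,j}}\le \frac{2(F(x_1)-F^*)}{\eta}+\sum_{j=1}^{d}\sum_{t=1}^{T}\left(\frac{L_j\eta}{b_{t,j}^2}-\frac{1}{b_{t,j}}\right)(\nabla_j F(x_t))^2.
\]
Each coordinate's error term can now be controlled in isolation: for the coordinate-$j$ sequence $b_{t,j}$, I would invoke the standard threshold trick (let $\tau_j$ be the last $t$ with $b_{t,j}\le \eta L_j$, bound the positive contributions up to $\tau_j$ using $(b_{t,j}^2-b_{t-1,j}^2)/b_{t,j}^2\le \log(b_{t,j}^2/b_{t-1,j}^2)$, and discard the negative ones) to get $\sum_{t=1}^{T}(\frac{L_j\eta}{b_{t,j}^2}-\frac{1}{b_{t,j}})(\nabla_j F(x_t))^2\le 2\eta L_j\log^+\frac{\eta L_j}{b_{0,j}}$, exactly as in the scalar case.

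Finally, I would lower-bound the left-hand side coordinate by coordinate using
\[
\sum_{t=1}^{T}\frac{(\nabla_j F(x_t))^2}{b_{t,j}}=\sum_{t=1}^{T}\frac{b_{t,j}^2-b_{t-1,j}^2}{b_{t,j}}\ge \sum_{t=1}^{T}(b_{t,j}-b_{t-1,j})=b_{T,j}-b_{0,j},
\]
and summing over $j$ yields the desired bound on $\sum_{j=1}^{d} b_{T,j}$. There is no real obstacle here beyond bookkeeping: the diagonal form of $\bL$ is exactly what makes the smoothness upper bound decompose as a sum over coordinates, so the scalar AdaGradNorm argument applies to each $(b_{t,j})_{t\ge 0}$ independently and the results aggregate additively.
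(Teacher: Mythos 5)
Your proposal is correct and follows essentially the same route as the paper's proof: expand the $\bL$-smoothness inequality along the per-coordinate update so it decomposes over coordinates, telescope the function values, lower-bound $\sum_t (\nabla_j F(x_t))^2/b_{t,j}$ by $b_{T,j}-b_{0,j}$, and control each coordinate's residual with the usual threshold-plus-logarithm argument at level $\eta L_j$. The only differences are cosmetic (when the factor $\eta/2$ is divided out), so nothing further is needed.
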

\begin{proof}
By smoothness we have 
\begin{align*}
F(x_{t+1})-F(x_{t}) & \leq\langle\nabla F(x_{t}),x_{t+1}-x_{t}\rangle+\frac{\|x_{t+1}-x_{t}\|_{\bL}^{2}}{2}\\
 & =\sum_{j=1}^{d}\left(-\frac{\eta}{b_{t,j}}+\frac{L_{j}\eta^{2}}{2b_{t,j}^{2}}\right)\nabla_{j}F(x_{t}){}^{2}\\
\Rightarrow\sum_{j=1}^{d}\frac{\eta}{2b_{t,j}}\nabla_{j}F(x_{t}){}^{2} & \leq F(x_{t})-F(x_{t+1})+\sum_{j=1}^{d}\left(\frac{L_{j}\eta^{2}}{2b_{t,j}^{2}}-\frac{\eta}{2b_{t,j}}\right)\nabla_{j}F(x_{t}){}^{2}\\
\Rightarrow\sum_{t=1}^{T}\sum_{j=1}^{d}\frac{\eta}{2b_{t,j}}\nabla_{j}F(x_{t}){}^{2} & \leq F(x_{1})-F^{*}+\sum_{t=1}^{T}\sum_{j=1}^{d}\left(\frac{L_{j}\eta^{2}}{2b_{t,j}^{2}}-\frac{\eta}{2b_{t,j}}\right)\nabla_{j}F(x_{t}){}^{2}.
\end{align*}
Note that, for the L.H.S.,
\begin{align*}
\sum_{t=1}^{T}\sum_{j=1}^{d}\frac{\eta}{2b_{t,j}}\nabla_{j}F(x_{t}){}^{2} & =\frac{\eta}{2}\sum_{j=1}^{d}\sum_{t=1}^{T}\frac{b_{t,j}^{2}-b_{t-1,j}^{2}}{b_{t,j}}\\
 & \geq\frac{\eta}{2}\sum_{j=1}^{d}\sum_{t=1}^{T}b_{t,j}-b_{t-1,j}\\
 & =\frac{\eta}{2}\sum_{j=1}^{d}\left(b_{T,j}-b_{0,j}\right).
\end{align*}
Besides,
\begin{align*}
\sum_{t=1}^{T}\sum_{j=1}^{d}\left(\frac{L_{j}\eta^{2}}{2b_{t,j}^{2}}-\frac{\eta}{2b_{t,j}}\right)\nabla_{j}F(x_{t}){}^{2} & =\frac{\eta}{2}\sum_{t=1}^{T}\sum_{j=1}^{d}\left(\frac{L_{j}\eta}{b_{t,j}^{2}}-\frac{1}{b_{t,j}}\right)\nabla_{j}F(x_{t}){}^{2}\\
 & \leq\frac{\eta}{2}\sum_{j=1}^{d}\sum_{t=1}^{\tau_{j}}\frac{L_{j}\eta}{b_{t,j}^{2}}\nabla_{i}F(x_{t}){}^{2}\quad(\tau_{j}\text{ is the last }t\text{ such that}b_{t,j}\leq\eta L_{j})\\
 & =\frac{\eta^{2}}{2}\sum_{j=1}^{d}L_{j}\sum_{t=1}^{\tau_{i}}\frac{b_{t,j}^{2}-b_{t-1,j}^{2}}{b_{t,j}^{2}}\\
 & \leq\eta^{2}\sum_{j=1}^{d}L_{j}\log^{+}\frac{\eta L_{j}}{b_{0,j}}.
\end{align*}
Hence we have
\[
\sum_{j=1}^{d}b_{T,j}\leq\sum_{j=1}^{d}b_{0,j}+\frac{2(F(x_{1})-F^{*})}{\eta}+2\eta\sum_{j=1}^{d}L_{j}\log^{+}\frac{\eta L_{j}}{b_{0,j}}.
\]
\end{proof}

Theorem \ref{thm:adagrad} states the convergence guarantee for Algorithm
\ref{alg:AdaGrad}.
\begin{thm}
\label{thm:adagrad}Suppose $F$ satisfies Assumptions 1 and 2'',
AdaGrad (Algorithm \ref{alg:AdaGrad}) admits
\begin{align*}
 & \frac{\sum_{t=1}^{T}F(x_{t})-F^{*}}{T}\\
\le & \frac{\left(\sum_{j=1}^{d}b_{0,j}+\frac{2(F(x_{1})-F^{*})}{\eta}+2\eta\sum_{j=1}^{d}L_{j}\log^{+}\frac{\eta L_{j}}{b_{0,j}}\right)^{d}\left(\frac{\|x_{1}-x^{*}\|_{\bb_{1}}^{2}}{\gamma\eta}+\frac{2\eta}{\gamma}\sum_{j=1}^{d}\left(\frac{2\eta L_{j}}{\gamma}-b_{0,j}\right)^{+}\right)}{T\left(d^{d}\prod_{j=1}^{d}b_{0,j}\right)}
\end{align*}
where
\[
\|x_{1}-x^{*}\|_{\bb_{1}}^{2}=\sum_{j=1}^{d}b_{1,j}(x_{1,j}-x_{j}^{*})^{2}.
\]
\end{thm}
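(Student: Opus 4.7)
The plan is to lift the approach of Theorem~\ref{thm:Main-AdaGradNorm-rate} from the scalar to the coordinate-wise setting, with the critical modification that one divides by the product $\prod_k b_{t,k}$ rather than by the scalar $b_t$. Starting from $\gamma$-quasar convexity and substituting $\nabla_j F(x_t)=(b_{t,j}/\eta)(x_{t,j}-x_{t+1,j})$ coordinate-wise into the three-point identity yields
\begin{align*}
F(x_t)-F^* \leq \frac{1}{2\gamma\eta}\sum_{j=1}^{d} b_{t,j}\left[(x_{t,j}-x_j^*)^2-(x_{t+1,j}-x_j^*)^2\right] + \frac{\eta}{2\gamma}\sum_{j=1}^{d}\frac{(\nabla_j F(x_t))^2}{b_{t,j}}.
\end{align*}
After dividing by $\prod_k b_{t,k}$, the $j$-th distance coefficient becomes $1/\prod_{k\neq j}b_{t,k}$, which is non-increasing in $t$ for \emph{every} $j$ simultaneously (since every $b_{t,k}$ is non-decreasing). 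An Abel summation then bounds the telescoping part by $(x_{1,j}-x_j^*)^2/\prod_{k\neq j}b_{1,k}$.

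For the gradient contribution, I would reuse the self-bounding device from the AdaGradNorm proof. Assumption~2'' implies the weak coordinate-smoothness inequality $\sum_j(\nabla_j F(x))^2/(2L_j)\leq F(x)-F^*$ (obtained by minimizing the $\bL$-smoothness upper bound over $y$). Absorbing this against half of $(F(x_t)-F^*)/\prod_k b_{t,k}$ on the left produces a gradient sum $\sum_{t,j}\bigl(\eta/(\gamma b_{t,j})-1/(2L_j)\bigr)(\nabla_j F(x_t))^2/\prod_k b_{t,k}$ whose integrand becomes non-positive once $b_{t,j}$ crosses the threshold $2\eta L_j/\gamma$. Bounding $\prod_k b_{t,k}\geq \prod_k b_{0,k}$, writing $(\nabla_j F(x_t))^2=b_{t,j}^2-b_{t-1,j}^2$, and using $(b_{t,j}^2-b_{t-1,j}^2)/b_{t,j}\leq 2(b_{t,j}-b_{t-1,j})$ makes the per-coordinate pre-threshold contribution telescope to $2(2\eta L_j/\gamma-b_{0,j})^+$, delivering exactly the clean non-logarithmic error term appearing in the statement (the log bound of AdaGradNorm is replaced by a difference because we telescope $b_{t,j}-b_{t-1,j}$ rather than $\log(b_{t,j}^2/b_{t-1,j}^2)$).

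Combining these two estimates and multiplying both sides by $\prod_k b_{T,k}$, the elementary inequality $1/\prod_{k\neq j}b_{1,k}\leq b_{1,j}/\prod_k b_{0,k}$ (equivalent to $\prod_k b_{1,k}\geq \prod_k b_{0,k}$) converts the distance contribution into $(\prod_k b_{T,k}/\prod_k b_{0,k})\cdot \|x_1-x^*\|_{\bb_1}^2/(\gamma\eta)$, yielding
\begin{align*}
\sum_{t=1}^{T}(F(x_t)-F^*) \leq \frac{\prod_k b_{T,k}}{\prod_k b_{0,k}}\left[\frac{\|x_1-x^*\|_{\bb_1}^2}{\gamma\eta}+\frac{2\eta}{\gamma}\sum_{j=1}^{d}\left(\frac{2\eta L_j}{\gamma}-b_{0,j}\right)^+\right].
\end{align*}
Applying AM-GM in the form $\prod_k b_{T,k}\leq (\sum_k b_{T,k}/d)^d$ and then substituting the upper bound on $\sum_k b_{T,k}$ from Lemma~\ref{lem:adagrad-b_T} recovers the theorem after division by $T$. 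The main obstacle is the coupled telescoping step: choosing $\prod_k b_{t,k}$ as the normalizer is the essential generalization of AdaGradNorm's trick of dividing by $b_t$, because it is the simplest weight that makes all $d$ coefficients $b_{t,j}/\prod_k b_{t,k}$ non-increasing at once, without which the unbounded-domain telescoping fails.
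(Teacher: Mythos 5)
Your proof is correct and delivers exactly the stated bound, but it handles the central difficulty --- the weighted distance terms $\|x_t-x^*\|^2_{\bb_t}$ that do not telescope on an unbounded domain --- by a genuinely different mechanism than the paper. You normalize the per-step inequality by the product $\prod_k b_{t,k}$, observe that each resulting coefficient $b_{t,j}/\prod_k b_{t,k}=1/\prod_{k\neq j}b_{t,k}$ is simultaneously non-increasing in $t$, and conclude by Abel summation; the gradient residual is then controlled by the same threshold argument as in Theorem \ref{thm:Main-AdaGradNorm-rate}, with $\prod_k b_{t,k}\ge\prod_k b_{0,k}$ and $(b_{t,j}^2-b_{t-1,j}^2)/b_{t,j}\le 2(b_{t,j}-b_{t-1,j})$ producing the non-logarithmic term $\frac{2\eta}{\gamma}\sum_j(2\eta L_j/\gamma-b_{0,j})^+$. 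The paper instead keeps the un-normalized inequality, isolates the excess $\sum_{t\ge 2}\|x_t-x^*\|^2_{\bb_t-\bb_{t-1}}$, and controls it recursively: nonnegativity of the accumulated gap bounds $\|x_{T+1}-x^*\|^2_{\bb_{T+1}-\bb_T}$ by $\max_k\left(b_{T+1,k}/b_{T,k}-1\right)$ times everything already on the right-hand side, and unrolling this relation produces the multiplicative factor $\prod_{t=2}^T\max_k b_{t,k}/b_{t-1,k}$, which is only afterwards relaxed to $\prod_j b_{T,j}/b_{0,j}$. Your route is shorter and is the more literal generalization of the scalar divide-by-$b_t$ trick, reaching $\prod_j b_{T,j}/\prod_j b_{0,j}$ in one step; the paper's recursion yields a tighter intermediate factor (a product of per-step maximum ratios rather than of all coordinate ratios), although both are relaxed to the same exponential-in-$d$ quantity before applying AM--GM and Lemma \ref{lem:adagrad-b_T}, so the final theorem is identical. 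The only detail you leave implicit --- lower bounding $\sum_t (F(x_t)-F^*)/\prod_k b_{t,k}$ by $\left(\sum_t F(x_t)-F^*\right)/\prod_k b_{T,k}$ before multiplying through --- is the same standard monotonicity step used for AdaGradNorm and is immediate.
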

Before going into the proof, it is worth discussing the result above
as well as the main challenges and differences compared with AdaGradNorm.
For simplicity, let $\bb_{t}=\diag(b_{t,i})$. We can expect that,
by a similar argument that we used before to bound the function value
gap via the stepsize, we will have 
\begin{align*}
\frac{\sum_{t=1}^{T}F(x_{t})-F^{*}}{T} & \le\frac{g(\bb_{T})\left(\frac{\|x_{1}-x^{*}\|_{\bb_{1}}^{2}}{\gamma\eta}+\frac{2\eta}{\gamma}\sum_{j=1}^{d}\left(\frac{2\eta L_{j}}{\gamma}-b_{0,j}\right)^{+}\right)}{T}
\end{align*}
where $g(\bb_{T})$ is a function of the last stepsize and the factor
$\frac{\|x_{1}-x^{*}\|_{\bb_{1}}^{2}}{\gamma\eta}+\frac{2\eta}{\gamma}\sum_{j=1}^{d}\left(\frac{2\eta L_{j}}{\gamma}-b_{0,j}\right)^{+}$
is obtained in a similar manner as before, but in $d$-dimensions.
The challenge is that since the stepsize is a vector, it is not possible
to use ``division'' by the stepsize as in AdaGradNorm. On the one
hand, we can overcome this by rewriting the argument; on the other
hand, this problem will incur an exponential rate for $g(\bb_{T})$
dependent on the smoothness parameters.

\begin{proof}
We can write
\begin{align*}
x_{t+1} & =x_{t}-\eta\bb_{t}^{-1}\nabla F(x_{t}).
\end{align*}
Starting from $\gamma$-quasar convexity
\begin{align*}
F(x_{t})-F^{*} & \leq\frac{\langle\nabla F(x_{t}),x_{t}-x^{*}\rangle}{\gamma}\\
 & =\frac{\langle\bb_{t}\left(x_{t}-x_{t+1}\right),x_{t}-x^{*}\rangle}{\eta\gamma}\\
 & =\frac{\|x_{t}-x^{*}\|_{\bb_{t}}^{2}-\|x_{t+1}-x^{*}\|_{\bb_{t}}^{2}+\|x_{t+1}-x_{t}\|_{\bb_{t}}^{2}}{2\eta\gamma}\\
 & =\frac{\|x_{t}-x^{*}\|_{\bb_{t}}^{2}-\|x_{t+1}-x^{*}\|_{\bb_{t}}^{2}}{2\eta\gamma}+\frac{\eta}{2\gamma}\|\nabla F(x_{t})\|_{\bb_{t}^{-1}}^{2}.
\end{align*}
Note that $F$ also satisfies Assumption $2''$
\begin{align*}
F(x_{t})-F^{*} & \geq\frac{\|\nabla F(x_{t})\|_{\bL,*}^{2}}{2}.
\end{align*}
Hence
\begin{align*}
\frac{F(x_{t})-F^{*}}{2}+\frac{\|\nabla F(x_{t})\|_{\bL,*}^{2}}{4} & \leq F(x_{t})-F^{*}\\
 & \leq\frac{\|x_{t}-x^{*}\|_{\bb_{t}}^{2}-\|x_{t+1}-x^{*}\|_{\bb_{t}}^{2}}{2\eta\gamma}+\frac{\eta}{2\gamma}\|\nabla F(x_{t})\|_{\bb_{t}^{-1}}^{2}\\
\Rightarrow\frac{F(x_{t})-F^{*}}{2} & \leq\frac{\|x_{t}-x^{*}\|_{\bb_{t}}^{2}-\|x_{t+1}-x^{*}\|_{\bb_{t}}^{2}}{2\eta\gamma}+\frac{\eta}{2\gamma}\|\nabla F(x_{t})\|_{\bb_{t}^{-1}}^{2}-\frac{\|\nabla F(x_{t})\|_{\bL,*}^{2}}{4}\\
\Rightarrow F(x_{t})-F^{*} & \leq\frac{\|x_{t}-x^{*}\|_{\bb_{t}}^{2}-\|x_{t+1}-x^{*}\|_{\bb_{t}}^{2}}{\eta\gamma}+\sum_{j=1}^{d}\left(\frac{\eta}{\gamma b_{t,j}}-\frac{1}{2L_{j}}\right)\nabla_{j}F(x_{t})^{2}.
\end{align*}
Taking the sum over $t$
\begin{align*}
 & \sum_{t=1}^{T}F(x_{t})-F^{*}\\
\le & \sum_{t=1}^{T}\frac{\|x_{t}-x^{*}\|_{\bb_{t}}^{2}-\|x_{t+1}-x^{*}\|_{\bb_{t}}^{2}}{\eta\gamma}+\sum_{t=1}^{T}\sum_{j=1}^{d}\left(\frac{\eta}{\gamma b_{t,j}}-\frac{1}{2L_{j}}\right)\nabla_{j}F(x_{t})^{2}\\
= & \frac{\|x_{1}-x^{*}\|_{\bb_{1}}^{2}-\|x_{T+1}-x^{*}\|_{\bb_{T}}^{2}}{\gamma\eta}+\sum_{t=2}^{T}\frac{\|x_{t}-x^{*}\|_{\bb_{t}-\bb_{t-1}}^{2}}{\gamma\eta}+\sum_{t=1}^{T}\sum_{j=1}^{d}\left(\frac{\eta}{\gamma b_{t,j}}-\frac{1}{2L_{j}}\right)\nabla_{j}F(x_{t})^{2}.
\end{align*}
Due to the excess term $\sum_{t=2}^{T}\frac{\|x_{t}-x^{*}\|_{\bb_{t}-\bb_{t-1}}^{2}}{\gamma\eta}$
in the RHS, we need to proceed and bound $\|x_{t}-x^{*}\|_{\bb_{t}-\bb_{t-1}}^{2}$.
First, observe that since the L.H.S. is non-negative, 
\begin{align*}
\frac{\|x_{T+1}-x^{*}\|_{\bb_{T}}^{2}}{\gamma\eta} & \leq\frac{\|x_{1}-x^{*}\|_{\bb_{1}}^{2}}{\gamma\eta}+\sum_{t=2}^{T}\frac{\|x_{t}-x^{*}\|_{\bb_{t}-\bb_{t-1}}^{2}}{\gamma\eta}+\sum_{t=1}^{T}\sum_{j=1}^{d}\left(\frac{\eta}{\gamma b_{t,j}}-\frac{1}{2L_{j}}\right)\nabla_{j}F(x_{t})^{2}
\end{align*}
To upperbound $\|x_{t}-x^{*}\|_{\bb_{t}-\bb_{t-1}}^{2}$, a key observation
is that
\begin{align*}
\|x_{T+1}-x^{*}\|_{\bb_{T}}^{2} & =\|x_{T+1}-x^{*}\|_{\bb_{T+1}-\bb_{T}}^{2}\times\frac{\|x_{T+1}-x^{*}\|_{\bb_{T}}^{2}}{\|x_{T+1}-x^{*}\|_{\bb_{T+1}-\bb_{T}}^{2}}\\
 & \geq\|x_{T+1}-x^{*}\|_{\bb_{T+1}-\bb_{T}}^{2}\min_{k}\frac{b_{T,k}}{b_{T+1,k}-b_{T,k}}
\end{align*}
Hence for $T\geq1$
\begin{align*}
 & \frac{\|x_{T+1}-x^{*}\|_{\bb_{T+1}-\bb_{T}}^{2}}{\gamma\eta}\\
\le & \max_{k}\left(\frac{b_{T+1,k}}{b_{T,k}}-1\right)\left[\frac{\|x_{1}-x^{*}\|_{\bb_{1}}^{2}}{\gamma\eta}+\sum_{t=2}^{T}\frac{\|x_{t}-x^{*}\|_{\bb_{t}-\bb_{t-1}}^{2}}{\gamma\eta}+\sum_{t=1}^{T}\sum_{j=1}^{d}\left(\frac{\eta}{\gamma b_{t,j}}-\frac{1}{2L_{j}}\right)\nabla_{j}F(x_{t})^{2}\right]
\end{align*}
By using this bound for the last term $\frac{\|x_{T}-x^{*}\|_{\bb_{T}-\bb_{T-1}}^{2}}{\gamma\eta}$
we obtain 
\begin{align*}
 & \sum_{t=1}^{T}F(x_{t})-F^{*}\\
\leq & \frac{\|x_{1}-x^{*}\|_{\bb_{1}}^{2}-\|x_{T+1}-x^{*}\|_{\bb_{T}}^{2}}{\gamma\eta}+\sum_{t=2}^{T}\frac{\|x_{t}-x^{*}\|_{\bb_{t}-\bb_{t-1}}^{2}}{\gamma\eta}+\sum_{t=1}^{T}\sum_{j=1}^{d}\left(\frac{\eta}{\gamma b_{t,j}}-\frac{1}{2L_{j}}\right)\nabla_{j}F(x_{t})^{2}\\
\leq & \frac{\|x_{1}-x^{*}\|_{\bb_{1}}^{2}}{\gamma\eta}+\sum_{t=2}^{T-1}\frac{\|x_{t}-x^{*}\|_{\bb_{t}-\bb_{t-1}}^{2}}{\gamma\eta}+\sum_{t=1}^{T}\sum_{j=1}^{d}\left(\frac{\eta}{\gamma b_{t,j}}-\frac{1}{2L_{j}}\right)\nabla_{j}F(x_{t})^{2}\\
 & +\max_{k}\left(\frac{b_{T,k}}{b_{T-1,k}}-1\right)\left(\frac{\|x_{1}-x^{*}\|_{\bb_{1}}^{2}}{\gamma\eta}+\sum_{t=2}^{T-1}\frac{\|x_{t}-x^{*}\|_{\bb_{t}-\bb_{t-1}}^{2}}{\gamma\eta}+\sum_{t=1}^{T-1}\sum_{j=1}^{d}\left(\frac{\eta}{\gamma b_{t,j}}-\frac{1}{2L_{j}}\right)\nabla_{j}F(x_{t})^{2}\right)\\
= & \max_{k}\frac{b_{T,k}}{b_{T-1,k}}\left(\frac{\|x_{1}-x^{*}\|_{\bb_{1}}^{2}}{\gamma\eta}+\sum_{t=2}^{T-1}\frac{\|x_{t}-x^{*}\|_{\bb_{t}-\bb_{t-1}}^{2}}{\gamma\eta}+\sum_{t=1}^{T-1}\sum_{j=1}^{d}\left(\frac{\eta}{\gamma b_{t,j}}-\frac{1}{2L_{j}}\right)\nabla_{j}F(x_{t})^{2}\right)\\
 & +\sum_{j=1}^{d}\left(\frac{\eta}{\gamma b_{T,j}}-\frac{1}{2L_{j}}\right)\nabla_{j}F(x_{T})^{2}
\end{align*}
Continue to unroll this relation and for convenience let $\prod_{t=T+1}^{T}\max_{k}\frac{b_{t,k}}{b_{t-1,k}}=1$,
we have 
\begin{align*}
 & \sum_{t=1}^{T}F(x_{t})-F^{*}\\
\le & \left[\prod_{t=2}^{T}\max_{k}\frac{b_{t,k}}{b_{t-1,k}}\right]\frac{\|x_{1}-x^{*}\|_{\bb_{1}}^{2}}{\gamma\eta}+\sum_{t=1}^{T}\left(\prod_{\ell=t+1}^{T}\max_{k}\frac{b_{\ell,k}}{b_{\ell-1,k}}\right)\left(\sum_{j=1}^{d}\left(\frac{\eta}{\gamma b_{t,j}}-\frac{1}{2L_{j}}\right)\nabla_{j}F(x_{t})^{2}\right)\\
= & \left[\prod_{t=2}^{T}\max_{k}\frac{b_{t,k}}{b_{t-1,k}}\right]\frac{\|x_{1}-x^{*}\|_{\bb_{1}}^{2}}{\gamma\eta}+\sum_{j=1}^{d}\sum_{t=1}^{T}\left(\prod_{\ell=t+1}^{T}\max_{k}\frac{b_{\ell,k}}{b_{\ell-1,k}}\right)\left(\frac{\eta}{\gamma b_{t,j}}-\frac{1}{2L_{j}}\right)\nabla_{j}F(x_{t})^{2}
\end{align*}
Given $j$, if $b_{1,j}>\frac{2\eta L_{j}}{\gamma}$, we know
\[
\sum_{t=1}^{T}\left(\prod_{\ell=t+1}^{T}\max_{k}\frac{b_{\ell,k}}{b_{\ell-1,k}}\right)\left(\frac{\eta}{\gamma b_{t,j}}-\frac{1}{2L_{j}}\right)\nabla_{j}F(x_{t})^{2}<0\leq\frac{2\eta}{\gamma}\left[\prod_{t=2}^{T}\max_{k}\frac{b_{t,k}}{b_{t-1,k}}\right]\left(\frac{2\eta L_{j}}{\gamma}-b_{0,j}\right)^{+}.
\]
Otherwise, let $\tau_{j}$ be the last $t$ such that $b_{t,j}\le\frac{2\eta L_{j}}{\gamma}$,
we also have 
\begin{align*}
 & \sum_{t=1}^{T}\left(\prod_{\ell=t+1}^{T}\max_{k}\frac{b_{\ell,k}}{b_{\ell-1,k}}\right)\left(\frac{\eta}{\gamma b_{t,j}}-\frac{1}{2L_{j}}\right)\nabla_{j}F(x_{t})^{2}\\
\leq & \sum_{t=1}^{\tau_{j}}\left(\prod_{\ell=t+1}^{T}\max_{k}\frac{b_{\ell,k}}{b_{\ell-1,k}}\right)\left(\frac{\eta}{\gamma b_{t,j}}-\frac{1}{2L_{j}}\right)\nabla_{j}F(x_{t})^{2}\\
\leq & \left[\prod_{t=2}^{T}\max_{k}\frac{b_{t,k}}{b_{t-1,k}}\right]\left(\sum_{t=1}^{\tau_{j}}\left(\frac{\eta}{\gamma b_{t,j}}-\frac{1}{2L_{j}}\right)\nabla_{j}F(x_{t})^{2}\right)\\
\leq & \left[\prod_{t=2}^{T}\max_{k}\frac{b_{t,k}}{b_{t-1,k}}\right]\left(\sum_{t=1}^{\tau_{j}}\frac{\eta}{\gamma b_{t,j}}\nabla_{j}F(x_{t})^{2}\right)\\
= & \left[\prod_{t=2}^{T}\max_{k}\frac{b_{t,k}}{b_{t-1,k}}\right]\left(\frac{\eta}{\gamma}\sum_{t=1}^{\tau_{j}}\frac{b_{t,j}^{2}-b_{t-1,j}^{2}}{b_{t,j}}\right)\\
\leq & \left[\prod_{t=2}^{T}\max_{k}\frac{b_{t,k}}{b_{t-1,k}}\right]\left(\frac{2\eta}{\gamma}\sum_{t=1}^{\tau_{j}}b_{t,j}-b_{t-1,j}\right)\\
\leq & \frac{2\eta}{\gamma}\left[\prod_{t=2}^{T}\max_{k}\frac{b_{t,k}}{b_{t-1,k}}\right]\left(\frac{2\eta L_{j}}{\gamma}-b_{0,j}\right)^{+}.
\end{align*}
Hence we know
\begin{align*}
 & \sum_{j=1}^{d}\sum_{t=1}^{T}\left(\prod_{\ell=t+1}^{T}\max_{k}\frac{b_{\ell,k}}{b_{\ell-1,k}}\right)\left(\frac{\eta}{\gamma b_{t,j}}-\frac{1}{2L_{j}}\right)\nabla_{j}F(x_{t})^{2}\\
\leq & \frac{2\eta}{\gamma}\left[\prod_{t=2}^{T}\max_{k}\frac{b_{t,k}}{b_{t-1,k}}\right]\left[\sum_{j=1}^{d}\left(\frac{2\eta L_{j}}{\gamma}-b_{0,j}\right)^{+}\right].
\end{align*}
Thus we have
\begin{align*}
 & \sum_{t=1}^{T}F(x_{t})-F^{*}\\
\le & \left[\prod_{t=2}^{T}\max_{k}\frac{b_{t,k}}{b_{t-1,k}}\right]\frac{\|x_{1}-x^{*}\|_{\bb_{1}}^{2}}{\gamma\eta}+\sum_{j=1}^{d}\sum_{t=1}^{T}\left(\prod_{\ell=t+1}^{T}\max_{k}\frac{b_{\ell,k}}{b_{\ell-1,k}}\right)\left(\frac{\eta}{\gamma b_{t,j}}-\frac{1}{2L_{j}}\right)\nabla_{j}F(x_{t})^{2}\\
\leq & \left[\prod_{t=2}^{T}\max_{k}\frac{b_{t,k}}{b_{t-1,k}}\right]\left(\frac{\|x_{1}-x^{*}\|_{\bb_{1}}^{2}}{\gamma\eta}+\frac{2\eta}{\gamma}\sum_{j=1}^{d}\left(\frac{2\eta L_{j}}{\gamma}-b_{0,j}\right)^{+}\right)
\end{align*}
From Lemma \ref{lem:adagrad-b_T}
\begin{align*}
\sum_{j=1}^{d}b_{T,j} & \le\sum_{j=1}^{d}b_{0,j}+\frac{2(F(x_{1})-F^{*})}{\eta}+2\eta\sum_{j=1}^{d}L_{j}\log^{+}\frac{\eta L_{j}}{b_{0,j}}
\end{align*}
Using AM-GM we have 
\begin{align*}
\prod_{j=1}^{d}b_{T,j} & \le\left(\frac{\sum_{j=1}^{d}b_{T,j}}{d}\right)^{d}\le\frac{1}{d^{d}}\left(\sum_{j=1}^{d}b_{0,j}+\frac{2(F(x_{1})-F^{*})}{\eta}+2\eta\sum_{j=1}^{d}L_{j}\log^{+}\frac{\eta L_{j}}{b_{0,j}}\right)^{d}
\end{align*}
Note that
\begin{align*}
\prod_{t=2}^{T}\max_{j}\frac{b_{t,j}}{b_{t-1,j}} & \leq\prod_{t=2}^{T}\prod_{j=1}^{d}\frac{b_{t,j}}{b_{t-1,j}}\\
 & \le\prod_{j=1}^{d}\frac{b_{T,j}}{b_{0,j}}\\
 & \le\frac{1}{d^{d}\prod_{j=1}^{d}b_{0,j}}\left(\sum_{j=1}^{d}b_{0,j}+\frac{2(F(x_{1})-F^{*})}{\eta}+2\eta\sum_{j=1}^{d}L_{j}\log^{+}\frac{\eta L_{j}}{b_{0,j}}\right)^{d}
\end{align*}
Hence
\begin{align*}
 & \sum_{t=1}^{T}F(x_{t})-F^{*}\\
\le & \left[\prod_{t=2}^{T}\max_{j}\frac{b_{T,j}}{b_{T-1,j}}\right]\left(\frac{\|x_{1}-x^{*}\|_{\bb_{1}}^{2}}{\gamma\eta}+\frac{2\eta}{\gamma}\sum_{j=1}^{d}\left(\frac{2\eta L_{j}}{\gamma}-b_{0,j}\right)^{+}\right)\\
\le & \frac{\left(\sum_{j=1}^{d}b_{0,j}+\frac{2(F(x_{1})-F^{*})}{\eta}+2\eta\sum_{j=1}^{d}L_{j}\log^{+}\frac{\eta L_{j}}{b_{0,j}}\right)^{d}\left(\frac{\|x_{1}-x^{*}\|_{\bb_{1}}^{2}}{\gamma\eta}+\frac{2\eta}{\gamma}\sum_{j=1}^{d}\left(\frac{2\eta L_{j}}{\gamma}-b_{0,j}\right)^{+}\right)}{d^{d}\prod_{j=1}^{d}b_{0,j}}
\end{align*}
which finishes the proof.
\end{proof}

\section{Missing proofs from Section \ref{sec:Main-Last}}

\subsection{Important lemma}

First, we state a general lemma that can be used for a more general
setting. The proof of the lemma is standard.
\begin{lem}
\label{lem:Appendix-last-key-lemma}Suppose $F$ satisfies Assumptions
1 and 2' and the following conditions hold:
\begin{itemize}
\item $x_{t}$ is generated by $x_{t+1}=x_{t}-\frac{\eta}{c_{t}}\nabla F(x_{t})$,
with $\eta>0$ and $c_{t}>0$ is non-decreasing;
\item $p_{t}\in(0,1]$ satisfies $\frac{1}{p_{t}}\geq\frac{1-p_{t+1}}{p_{t+1}},p_{1}=1$;
\end{itemize}
Then we have
\begin{equation}
\frac{F(x_{T+1})-F^{*}}{p_{T}c_{T}}\leq\frac{\|x_{1}-x^{*}\|^{2}}{\gamma\eta}+\sum_{t=1}^{T}\left(\frac{L}{2c_{t}}-\frac{1}{\eta}+\frac{p_{t}}{\gamma\eta}-\frac{p_{t}c_{t}}{2\eta^{2}L}\right)\frac{\eta^{2}\|\nabla F(x_{t})\|^{2}}{c_{t}^{2}p_{t}}.\label{eq:last-lemma-bound-1}
\end{equation}
\end{lem}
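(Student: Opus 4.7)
The plan is to derive a one-step inequality that, when summed over $t$, telescopes into the stated bound on $\frac{F(x_{T+1})-F^*}{p_T c_T}$. The potential I would telescope is $\Phi_t := \frac{F(x_t)-F^*}{p_{t-1}c_{t-1}}$ for $t\ge 2$, with the $t=1$ step absorbed into a clean base case using $p_1=1$.

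First I would use $L$-smoothness (Assumption 2') with the update $x_{t+1}-x_t = -\frac{\eta}{c_t}\nabla F(x_t)$ to get the descent inequality
\[
F(x_{t+1})-F(x_t) \le \Bigl(\tfrac{L\eta^{2}}{2c_t^{2}}-\tfrac{\eta}{c_t}\Bigr)\|\nabla F(x_t)\|^{2}.
\]
Dividing by $p_tc_t$ yields exactly the two terms $\frac{L}{2c_t}-\frac{1}{\eta}$ in the stated coefficient. I would then split
\[
\tfrac{F(x_t)-F^*}{p_tc_t} \;=\; \tfrac{F(x_t)-F^*}{c_t} \;+\; \tfrac{(1-p_t)(F(x_t)-F^*)}{p_tc_t},
\]
and control the second piece using the hypothesis $\frac{1-p_t}{p_t}\le \frac{1}{p_{t-1}}$ (a shift of the given recursion) together with the monotonicity of $c_t$ to bound it by $\Phi_t$. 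This is what produces the telescope on the function-gap side.

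The main obstacle, and the key trick, is bounding $\frac{F(x_t)-F^*}{c_t}$ in a way that produces the distance coefficient $\frac{1}{\gamma\eta}$ (not the $\frac{1}{2\gamma\eta}$ that a naive application would give) and the \emph{exact} gradient coefficient $\frac{\eta}{\gamma c_t^{2}}-\frac{1}{2Lc_t}$. Quasar convexity (Assumption 1) together with the identity $2\langle x_t-x_{t+1},x_t-x^*\rangle=\|x_t-x^*\|^{2}-\|x_{t+1}-x^*\|^{2}+\|x_{t+1}-x_t\|^{2}$ gives
\[
F(x_t)-F^* \;\le\; \tfrac{c_t}{2\gamma\eta}\bigl(\|x_t-x^*\|^{2}-\|x_{t+1}-x^*\|^{2}\bigr) + \tfrac{\eta}{2\gamma c_t}\|\nabla F(x_t)\|^{2}.
\]
I would then \emph{double} this inequality and subtract the weak-smoothness lower bound $\frac{\|\nabla F(x_t)\|^{2}}{2L}\le F(x_t)-F^*$ (which follows for free from Assumption 2' by minimizing the quadratic upper model). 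The resulting inequality, after dividing by $c_t$, produces precisely $\frac{1}{\gamma\eta}$ on the distance telescope and contributes the remaining $\frac{\eta}{\gamma c_t^{2}}-\frac{1}{2Lc_t}$ to the gradient coefficient.

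Putting these pieces together yields, for $t\ge 2$,
\[
\Phi_{t+1}-\Phi_t \;\le\; \tfrac{1}{\gamma\eta}\bigl(\|x_t-x^*\|^{2}-\|x_{t+1}-x^*\|^{2}\bigr) + \Bigl(\tfrac{L}{2c_t}-\tfrac{1}{\eta}+\tfrac{p_t}{\gamma\eta}-\tfrac{p_tc_t}{2\eta^{2}L}\Bigr)\tfrac{\eta^{2}\|\nabla F(x_t)\|^{2}}{c_t^{2}p_t}.
\]
Summing from $t=2$ to $T$ telescopes both the function-gap side (to $\frac{F(x_{T+1})-F^*}{p_Tc_T}-\frac{F(x_2)-F^*}{c_1}$) and the distance side. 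The $t=1$ base case is obtained by running the same descent-plus-doubled-quasar-convexity argument directly on $\frac{F(x_2)-F^*}{c_1}$ (the $(1-p_1)$ piece vanishes), which supplies exactly the missing $t=1$ contribution. Using $\|x_{T+1}-x^*\|^{2}\ge 0$ to drop the negative endpoint of the telescope then gives the stated inequality \eqref{eq:last-lemma-bound-1}. The only nontrivial part is spotting the "double-and-subtract" manipulation in the third paragraph; everything else is routine algebra and index bookkeeping.
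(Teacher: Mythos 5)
Your proposal is correct and, despite the different bookkeeping, is essentially the paper's own argument: it uses the same ingredients (the smoothness descent step, quasar convexity with the three-point identity, converting half of the function gap into a gradient term via the weak-smoothness bound implied by Assumption 2', and telescoping via $p_1=1$, $\frac{1-p_t}{p_t}\le\frac{1}{p_{t-1}}$ and the monotonicity of $c_t$), and it arrives at exactly the same per-step inequality
\begin{equation*}
\frac{F(x_{t+1})-F^{*}}{p_{t}c_{t}}\leq\frac{(1-p_{t})\left(F(x_{t})-F^{*}\right)}{p_{t}c_{t}}+\frac{\|x_{t}-x^{*}\|^{2}-\|x_{t+1}-x^{*}\|^{2}}{\gamma\eta}+\left(\frac{L}{2c_{t}}-\frac{1}{\eta}+\frac{p_{t}}{\gamma\eta}-\frac{p_{t}c_{t}}{2\eta^{2}L}\right)\frac{\eta^{2}\|\nabla F(x_{t})\|^{2}}{c_{t}^{2}p_{t}}
\end{equation*}
that the paper telescopes. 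The only differences are cosmetic: you apply quasar convexity to $\frac{F(x_t)-F^*}{c_t}$ separately instead of splitting the inner product in the smoothness inequality with weights $\frac{2p_t}{\gamma}$ and $1-\frac{2p_t}{\gamma}$, and you telescope an explicitly defined potential with shifted indices rather than the in-place recursion.
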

\begin{proof}
Starting from $L$-smoothness
\begin{align*}
F(x_{t+1})-F(x_{t}) & \leq\langle\nabla F(x_{t}),x_{t+1}-x_{t}\rangle+\frac{L}{2}\|x_{t+1}-x_{t}\|^{2}\\
 & =\frac{2p_{t}}{\gamma}\langle\nabla F(x_{t}),x_{t+1}-x_{t}\rangle+\left(1-\frac{2p_{t}}{\gamma}\right)\langle\nabla F(x_{t}),x_{t+1}-x_{t}\rangle+\frac{L}{2}\|x_{t+1}-x_{t}\|^{2}\\
 & =\frac{2p_{t}}{\gamma}\langle\nabla F(x_{t}),x^{*}-x_{t}\rangle+\frac{2p_{t}}{\gamma}\langle\nabla F(x_{t}),x_{t+1}-x^{*}\rangle\\
 & \quad+\left(1-\frac{2p_{t}}{\gamma}\right)\langle\nabla F(x_{t}),x_{t+1}-x_{t}\rangle+\frac{L}{2}\|x_{t+1}-x_{t}\|^{2}\\
 & \leq2p_{t}\left(F^{*}-F(x_{t})\right)+\frac{p_{t}c_{t}}{\gamma\eta}\left[\|x_{t}-x^{*}\|^{2}-\|x_{t+1}-x^{*}\|^{2}-\|x_{t+1}-x\|^{2}\right]\\
 & \quad-\left(1-\frac{2p_{t}}{\gamma}\right)\frac{c_{t}}{\eta}\|x_{t+1}-x_{t}\|^{2}+\frac{L}{2}\|x_{t+1}-x_{t}\|^{2}\\
 & =2p_{t}\left(F^{*}-F(x_{t})\right)+\frac{p_{t}c_{t}}{\gamma\eta}\left[\|x_{t}-x^{*}\|^{2}-\|x_{t+1}-x^{*}\|^{2}\right]\\
 & \quad+\left(\frac{L}{2}-\frac{c_{t}}{\eta}+\frac{p_{t}c_{t}}{\gamma\eta}\right)\|x_{t+1}-x_{t}\|^{2}.
\end{align*}
Note that Assumption 2 can be implied by Assumption 2', hence we have
\begin{align*}
F^{*}-F(x_{t}) & \leq-\frac{\|\nabla F(x_{t})\|^{2}}{2L}=-\frac{c_{t}^{2}\|x_{t+1}-x_{t}\|^{2}}{2\eta^{2}L}.
\end{align*}
Therefore
\begin{align*}
F(x_{t+1})-F(x_{t}) & \leq2p_{t}\left(F^{*}-F(x_{t})\right)+\frac{p_{t}c_{t}}{\gamma\eta}\left[\|x_{t}-x^{*}\|^{2}-\|x_{t+1}-x^{*}\|^{2}\right]\\
 & \quad+\left(\frac{L}{2}-\frac{c_{t}}{\eta}+\frac{p_{t}c_{t}}{\gamma\eta}\right)\|x_{t+1}-x_{t}\|^{2}\\
 & \leq p_{t}\left(F^{*}-F(x_{t})\right)-\frac{p_{t}c_{t}^{2}\|x_{t+1}-x_{t}\|^{2}}{2\eta^{2}L}+\frac{p_{t}c_{t}}{\gamma\eta}\left[\|x_{t}-x^{*}\|^{2}-\|x_{t+1}-x^{*}\|^{2}\right]\\
 & \quad+\left(\frac{L}{2}-\frac{c_{t}}{\eta}+\frac{p_{t}c_{t}}{\gamma\eta}\right)\|x_{t+1}-x_{t}\|^{2}\\
 & =p_{t}\left(F^{*}-F(x_{t})\right)+\frac{p_{t}c_{t}}{\gamma\eta}\left[\|x_{t}-x^{*}\|^{2}-\|x_{t+1}-x^{*}\|^{2}\right]\\
 & \quad+\left(\frac{L}{2}-\frac{c_{t}}{\eta}+\frac{p_{t}c_{t}}{\gamma\eta}-\frac{p_{t}c_{t}^{2}}{2\eta^{2}L}\right)\|x_{t+1}-x_{t}\|^{2}.
\end{align*}
We obtain 
\begin{align*}
\frac{F(x_{t+1})-F^{*}}{p_{t}c_{t}} & \leq\frac{1-p_{t}}{p_{t}c_{t}}\left(F(x_{t})-F^{*}\right)+\frac{\|x_{t}-x^{*}\|^{2}-\|x_{t+1}-x^{*}\|^{2}}{\gamma\eta}\\
 & \quad+\left(\frac{L}{2c_{t}}-\frac{1}{\eta}+\frac{p_{t}}{\gamma\eta}-\frac{p_{t}c_{t}}{2\eta^{2}L}\right)\frac{\|x_{t+1}-x_{t}\|^{2}}{p_{t}}.
\end{align*}
Note that we require $\frac{1}{p_{t}}\geq\frac{1-p_{t+1}}{p_{t+1}}$
and $c_{t}$ is increasing hence
\[
\frac{1}{p_{t}c_{t}}\geq\frac{1-p_{t+1}}{p_{t+1}c_{t}}\geq\frac{1-p_{t+1}}{p_{t+1}c_{t+1}}
\]
which leads to
\begin{align*}
 & \frac{F(x_{T+1})-F^{*}}{p_{T}c_{T}}\\
\le & \frac{1-p_{1}}{p_{1}c_{1}}\left(F(x_{1})-F^{*}\right)+\frac{\|x_{1}-x^{*}\|^{2}}{\gamma\eta}+\sum_{t=1}^{T}\left(\frac{L}{2c_{t}}-\frac{1}{\eta}+\frac{p_{t}}{\gamma\eta}-\frac{p_{t}c_{t}}{2\eta^{2}L}\right)\frac{\|x_{t+1}-x_{t}\|^{2}}{p_{t}}\\
= & \frac{1-p_{1}}{p_{1}c_{1}}\left(F(x_{1})-F^{*}\right)+\frac{\|x_{1}-x^{*}\|^{2}}{\gamma\eta}+\sum_{t=1}^{T}\left(\frac{L}{2c_{t}}-\frac{1}{\eta}+\frac{p_{t}}{\gamma\eta}-\frac{p_{t}c_{t}}{2\eta^{2}L}\right)\frac{\eta^{2}\|\nabla F(x_{t})\|^{2}}{c_{t}^{2}p_{t}}.
\end{align*}
By setting $p_{1}=1$ we get the desired result.
\end{proof}

\subsection{First variant}

Note that if we assume $p_{t}$ satisfies the condition in Lemma \ref{lem:Appendix-last-key-lemma},
by replacing $c_{t}$ by $b_{t}$, we have 
\[
\frac{F(x_{T+1})-F^{*}}{p_{T}b_{T}}\leq\frac{\|x_{1}-x^{*}\|^{2}}{\gamma\eta}+\sum_{t=1}^{T}\left(\frac{L}{2b_{t}}-\frac{1}{\eta}+\frac{p_{t}}{\gamma\eta}-\frac{p_{t}b_{t}}{2\eta^{2}L}\right)\frac{\eta^{2}\|\nabla F(x_{t})\|^{2}}{b_{t}^{2}p_{t}}
\]
immediately. Now our two left tasks are to bound the residual term
$\sum_{t=1}^{T}\left(\frac{L}{2b_{t}}-\frac{1}{\eta}+\frac{p_{t}}{\gamma\eta}-\frac{p_{t}b_{t}}{2\eta^{2}L}\right)\frac{\eta^{2}\|\nabla F(x_{t})\|^{2}}{b_{t}^{2}p_{t}}$
and find an upper bound on $b_{T}$. Lemmas \ref{lem:Appendix-AdaGradNorm-last1-residual}
and \ref{lem:Appendix-AdaGradNorm-Last1-b_T} demonstrate how we achieve
these two goals. 
\begin{lem}
\label{lem:Appendix-AdaGradNorm-last1-residual}If $p_{t}\leq1$ for
every $t$, we have
\begin{align*}
\sum_{t=1}^{T}\left(\frac{L}{2b_{t}}-\frac{1}{2\eta}\right)\frac{\eta^{2}\|\nabla F(x_{t})\|^{2}}{b_{t}^{2}p_{t}} & \leq h(\Delta)\\
\sum_{t=1}^{T}\left(\frac{p_{t}}{\gamma\eta}-\frac{p_{t}b_{t}}{2\eta^{2}L}\right)\frac{\eta^{2}\|\nabla F(x_{t})\|^{2}}{b_{t}^{2}p_{t}} & \leq g(\Delta)
\end{align*}
where
\begin{align*}
h(\Delta) & \coloneqq\begin{cases}
\frac{\left(2+\Delta\right)\eta\left(\eta L\right)^{\Delta}}{2}\log^{+}\frac{\eta L}{b_{0}} & \Delta\geq1\\
\frac{\left(2+\Delta\right)\eta^{2}L}{2b_{0}^{1-\Delta}}\log^{+}\frac{\eta L}{b_{0}} & \Delta\in\left(0,1\right)
\end{cases}\;\text{and}\; & g(\Delta)\coloneqq\frac{(2+\Delta)\eta}{\gamma}\left(\frac{2\eta L}{\gamma}\right)^{\Delta}\log^{+}\frac{2\eta L}{\gamma b_{0}}.
\end{align*}
\end{lem}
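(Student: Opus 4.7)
The plan is to exploit the identity built into Algorithm \ref{alg:AdaGradNorm-Last1}, namely $\|\nabla F(x_t)\|^2 = p_t(b_t^{2+\Delta} - b_{t-1}^{2+\Delta})$, so that the ``gradient squared over $b_t^{2}$'' factor in each sum rewrites as $\tfrac{\eta^2 \|\nabla F(x_t)\|^2}{b_t^2 p_t} = \tfrac{\eta^2 (b_t^{2+\Delta} - b_{t-1}^{2+\Delta})}{b_t^2}$. With the $p_t$ factor gone from the pure ``gradient-over-stepsize'' part, the $p_t \le 1$ hypothesis will be used in (b) only to drop the remaining $p_t$ from the coefficient $\tfrac{p_t}{\gamma\eta}-\tfrac{p_t b_t}{2\eta^2 L}$.

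Next I would isolate the terms that can actually be positive. For (a) this means letting $\tau$ be the largest index with $b_t \le \eta L$, since for $t>\tau$ the factor $\tfrac{L}{2b_t}-\tfrac{1}{2\eta}$ is non-positive and can be discarded. For (b), let $\tau'$ be the largest index with $b_t \le 2\eta L/\gamma$, and similarly discard the indices $t>\tau'$. After this truncation, both sums reduce to bounding, up to multiplicative constants, $\sum_{t=1}^{\tau}(b_t^{2+\Delta}-b_{t-1}^{2+\Delta})/b_t^{3}$ and $\sum_{t=1}^{\tau'}(b_t^{2+\Delta}-b_{t-1}^{2+\Delta})/b_t^{2}$, respectively.

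The telescoping device is the standard adaptive inequality
\[
\frac{b_t^{2+\Delta}-b_{t-1}^{2+\Delta}}{b_t^{2+\Delta}}\le \log\frac{b_t^{2+\Delta}}{b_{t-1}^{2+\Delta}}=(2+\Delta)\log\frac{b_t}{b_{t-1}},
\]
which telescopes to $(2+\Delta)\log(b_\tau/b_0)\le (2+\Delta)\log^+(\eta L/b_0)$ (and analogously with $\tau'$ and the $\tfrac{2\eta L}{\gamma b_0}$ threshold for (b)). The remaining factor $b_t^{\Delta-1}$ in (a) or $b_t^{\Delta}$ in (b) is what forces the case split: when $\Delta\ge 1$ the exponent is non-negative so $b_t^{\Delta-1}\le (\eta L)^{\Delta-1}$ on $t\le\tau$, while when $\Delta\in(0,1)$ the exponent is negative, forcing the uniform lower bound $b_t\ge b_0$, i.e.\ $b_t^{\Delta-1}\le b_0^{\Delta-1}$. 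For (b) the exponent $\Delta$ is always non-negative, so only the upper bound $b_t\le 2\eta L/\gamma$ on $t\le\tau'$ is needed, giving a single expression without a case split. Plugging these envelopes into the telescoped logarithm recovers $h(\Delta)$ and $g(\Delta)$ exactly as stated.

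The only delicate point is the $\Delta\in(0,1)$ case of (a), where the natural ``$b_t \le \eta L$'' bound fails to control $b_t^{\Delta-1}$ in the right direction; the fix is to switch to the opposite monotonicity $b_t\ge b_0$. All other steps are routine algebra, and the truncation/telescoping pattern is a direct adaptation of the standard AdaGrad $\log$-potential argument already used earlier in the paper (e.g.\ in the bound on term $A$ in the proof of Theorem \ref{thm:Main-AdaGradNorm-rate}), so no new tool is needed.
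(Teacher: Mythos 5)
Your proposal is correct and follows essentially the same route as the paper's proof: truncate at the last index where the coefficient is positive ($b_t\le\eta L$, resp.\ $b_t\le 2\eta L/\gamma$), rewrite $\|\nabla F(x_t)\|^2/p_t=b_t^{2+\Delta}-b_{t-1}^{2+\Delta}$, apply $\frac{x-y}{x}\le\log\frac{x}{y}$ and telescope, and split on $\Delta\ge1$ versus $\Delta\in(0,1)$ using $b_t\le\eta L$ or $b_t\ge b_0$ to control the leftover power of $b_t$ (with $p_t\le1$ only needed for the second sum). The degenerate case where no index satisfies the threshold (so the whole sum is non-positive) is handled implicitly by your truncation and explicitly in the paper, a purely cosmetic difference.
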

\begin{proof}
We first bound 
\[
\sum_{t=1}^{T}\left(\frac{L}{2b_{t}}-\frac{1}{2\eta}\right)\frac{\eta^{2}\|\nabla F(x_{t})\|^{2}}{b_{t}^{2}p_{t}}.
\]
If $b_{1}>\eta L$, we know 
\[
\sum_{t=1}^{T}\left(\frac{L}{2b_{t}}-\frac{1}{2\eta}\right)\frac{\eta^{2}\|\nabla F(x_{t})\|^{2}}{b_{t}^{2}p_{t}}<0\leq h(\Delta).
\]
Otherwise, we define the time $\tau=\max\left\{ t\in[T],b_{t}\leq\eta L\right\} $.
Hence, we have
\begin{align*}
\sum_{t=1}^{T}\left(\frac{L}{2b_{t}}-\frac{1}{2\eta}\right)\frac{\eta^{2}\|\nabla F(x_{t})\|^{2}}{b_{t}^{2}p_{t}} & =\sum_{t=1}^{\tau}\left(\frac{L}{2b_{t}}-\frac{1}{2\eta}\right)\frac{\eta^{2}\|\nabla F(x_{t})\|^{2}}{b_{t}^{2}p_{t}}+\sum_{t=\tau}^{T}\left(\frac{L}{2b_{t}}-\frac{1}{2\eta}\right)\frac{\eta^{2}\|\nabla F(x_{t})\|^{2}}{b_{t}^{2}p_{t}}\\
 & \leq\sum_{t=1}^{\tau}\left(\frac{L}{2b_{t}}-\frac{1}{2\eta}\right)\frac{\eta^{2}\|\nabla F(x_{t})\|^{2}}{b_{t}^{2}p_{t}}\leq\sum_{t=1}^{\tau}\frac{L}{2b_{t}}\times\frac{\eta^{2}\|\nabla F(x_{t})\|^{2}}{b_{t}^{2}p_{t}}\\
 & =\frac{\eta^{2}L}{2}\sum_{t=1}^{\tau}\frac{b_{t}^{2+\Delta}-b_{t-1}^{2+\Delta}}{b_{t}^{3}}=\frac{\eta^{2}L}{2}\sum_{t=1}^{\tau}\frac{b_{t}^{2+\Delta}-b_{t-1}^{2+\Delta}}{b_{t}^{2+\Delta}}\times b_{t}^{\Delta-1}\\
 & \leq\begin{cases}
\frac{\eta^{2}L}{2}\sum_{t=1}^{\tau}\frac{b_{t}^{2+\Delta}-b_{t-1}^{2+\Delta}}{b_{t}^{2+\Delta}}\times\left(\eta L\right)^{\Delta-1} & \Delta\geq1\\
\frac{\eta^{2}L}{2}\sum_{t=1}^{\tau}\frac{b_{t}^{2+\Delta}-b_{t-1}^{2+\Delta}}{b_{t}^{2+\Delta}}\times\frac{1}{b_{0}^{1-\Delta}} & \Delta<1
\end{cases}\\
 & =\begin{cases}
\frac{\eta\left(\eta L\right)^{\Delta}}{2}\sum_{t=1}^{\tau}\frac{b_{t}^{2+\Delta}-b_{t-1}^{2+\Delta}}{b_{t}^{2+\Delta}} & \Delta\geq1\\
\frac{\eta^{2}L}{2b_{0}^{1-\Delta}}\sum_{t=1}^{\tau}\frac{b_{t}^{2+\Delta}-b_{t-1}^{2+\Delta}}{b_{t}^{2+\Delta}} & \Delta<1
\end{cases}\\
 & \le\begin{cases}
\frac{\left(2+\Delta\right)\eta\left(\eta L\right)^{\Delta}}{2}\log\frac{\eta L}{b_{0}} & \Delta\geq1\\
\frac{\left(2+\Delta\right)\eta^{2}L}{2b_{0}^{1-\Delta}}\log\frac{\eta L}{b_{0}} & \Delta<1
\end{cases}\\
 & \le h(\Delta).
\end{align*}

By applying a similar argument, we can prove
\[
\sum_{t=1}^{T}\left(\frac{p_{t}}{\gamma\eta}-\frac{p_{t}b_{t}}{2\eta^{2}L}\right)\frac{\eta^{2}\|\nabla F(x_{t})\|^{2}}{b_{t}^{2}p_{t}}\leq g(\Delta).
\]
\end{proof}

\begin{lem}
\label{lem:Appendix-AdaGradNorm-Last1-b_T}Suppose all the conditions
in Lemma \ref{lem:Appendix-last-key-lemma} are satisfied by replacing
$c_{t}$ by $b_{t}$, additionally, assume $p_{t}\leq1$, we will
have
\[
b_{T}\leq\left(\frac{2}{\eta}\left(\frac{\|x_{1}-x^{*}\|^{2}}{\gamma\eta}+h(\Delta)+g(\Delta)\right)+b_{0}^{\Delta}\right)^{\frac{1}{\Delta}}
\]
\end{lem}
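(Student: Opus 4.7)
The plan is to combine Lemma~\ref{lem:Appendix-last-key-lemma} (applied with $c_t = b_t$) and Lemma~\ref{lem:Appendix-AdaGradNorm-last1-residual} to derive an upper bound on a gradient-weighted sum, and then convert that sum into a telescoping lower bound in terms of $b_T^{\Delta}$ using the definition $b_t^{2+\Delta} = b_{t-1}^{2+\Delta} + \|\nabla F(x_t)\|^2/p_t$.

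First, I would split the coefficient appearing in the residual sum of Lemma~\ref{lem:Appendix-last-key-lemma} as
\[
\frac{L}{2b_t}-\frac{1}{\eta}+\frac{p_t}{\gamma\eta}-\frac{p_t b_t}{2\eta^2 L} \;=\; \Bigl(\frac{L}{2b_t}-\frac{1}{2\eta}\Bigr) + \Bigl(\frac{p_t}{\gamma\eta}-\frac{p_t b_t}{2\eta^2 L}\Bigr) - \frac{1}{2\eta}.
\]
The first two groups are exactly those bounded in Lemma~\ref{lem:Appendix-AdaGradNorm-last1-residual} by $h(\Delta)$ and $g(\Delta)$. Using $F(x_{T+1})-F^{*} \geq 0$ on the left-hand side of \eqref{eq:last-lemma-bound-1}, the leftover piece yields
\[
\sum_{t=1}^{T}\frac{\eta\|\nabla F(x_t)\|^2}{2 b_t^2 p_t} \;\leq\; \frac{\|x_1-x^*\|^2}{\gamma\eta} + h(\Delta) + g(\Delta).
\]

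Next, I would translate the left-hand side into a bound on $b_T$. By the algorithm's definition, $\|\nabla F(x_t)\|^2/p_t = b_t^{2+\Delta} - b_{t-1}^{2+\Delta}$, so the above sum equals $\tfrac{\eta}{2}\sum_t (b_t^{2+\Delta}-b_{t-1}^{2+\Delta})/b_t^2$. The key inequality is the pointwise identity
\[
\frac{b_t^{2+\Delta}-b_{t-1}^{2+\Delta}}{b_t^2} \;=\; b_t^{\Delta} - \frac{b_{t-1}^{2+\Delta}}{b_t^2} \;\geq\; b_t^{\Delta} - b_{t-1}^{\Delta},
\]
where the last step uses $b_{t-1}\leq b_t$ (so $b_{t-1}^{2+\Delta}/b_t^2 \leq b_{t-1}^{\Delta}$). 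Summing telescopes to $b_T^{\Delta}-b_0^{\Delta}$, giving
\[
\frac{\eta}{2}\bigl(b_T^{\Delta}-b_0^{\Delta}\bigr) \;\leq\; \frac{\|x_1-x^*\|^2}{\gamma\eta} + h(\Delta) + g(\Delta),
\]
and rearranging and raising to the power $1/\Delta$ yields exactly the claimed bound.

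I do not anticipate a serious obstacle here: the step that requires the most care is the telescoping inequality above, but it is clean once written as $b_t^{\Delta} - b_{t-1}^{2+\Delta}/b_t^2$. Note also that this argument crucially uses $\Delta>0$ (so that $b_T^{\Delta}-b_0^{\Delta}$ controls $b_T$); as the authors remark in the main text, the analogous telescoping fails for $\Delta=0$, which is why a separate treatment is needed in that boundary case.
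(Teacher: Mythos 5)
Your proposal is correct and follows essentially the same route as the paper: split off $-\tfrac{1}{2\eta}$ from the coefficient in Lemma \ref{lem:Appendix-last-key-lemma}, bound the remaining two groups by $h(\Delta)$ and $g(\Delta)$ via Lemma \ref{lem:Appendix-AdaGradNorm-last1-residual}, use $F(x_{T+1})-F^{*}\ge 0$, and lower-bound $\sum_t (b_t^{2+\Delta}-b_{t-1}^{2+\Delta})/b_t^2$ by the telescoping sum $b_T^{\Delta}-b_0^{\Delta}$. Your justification of the pointwise inequality via $b_{t-1}^{2+\Delta}/b_t^{2}\le b_{t-1}^{\Delta}$ is exactly the step the paper uses (implicitly), so there is nothing to add.
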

\begin{proof}
Using Lemma \ref{lem:Appendix-last-key-lemma} by replacing $c_{t}$
by $b_{t}$, we know
\begin{align*}
 & \frac{F(x_{T+1})-F^{*}}{p_{T}b_{T}}\\
\le & \frac{\|x_{1}-x^{*}\|^{2}}{\gamma\eta}+\sum_{t=1}^{T}\left(\frac{L}{2b_{t}}-\frac{1}{\eta}+\frac{p_{t}}{\gamma\eta}-\frac{p_{t}b_{t}}{2\eta^{2}L}\right)\frac{\eta^{2}\|\nabla F(x_{t})\|^{2}}{b_{t}^{2}p_{t}}\\
= & \frac{\|x_{1}-x^{*}\|^{2}}{\gamma\eta}+\sum_{t=1}^{T}\left(\frac{L}{2b_{t}}-\frac{1}{2\eta}+\frac{p_{t}}{\gamma\eta}-\frac{p_{t}b_{t}}{2\eta^{2}L}\right)\frac{\eta^{2}\|\nabla F(x_{t})\|^{2}}{b_{t}^{2}p_{t}}-\frac{\eta\|\nabla F(x_{t})\|^{2}}{2b_{t}^{2}p_{t}}\\
\le & \frac{\|x_{1}-x^{*}\|^{2}}{\gamma\eta}+h(\Delta)+g(\Delta)-\sum_{t=1}^{T}\frac{\eta\|\nabla F(x_{t})\|^{2}}{2b_{t}^{2}p_{t}},
\end{align*}
where the last inequality is by Lemma \ref{lem:Appendix-AdaGradNorm-last1-residual}.
Noticing $F(x_{T+1})-F^{*}\geq0$, we know
\[
\sum_{t=1}^{T}\frac{\eta\|\nabla F(x_{t})\|^{2}}{2b_{t}^{2}p_{t}}\leq\frac{\|x_{1}-x^{*}\|^{2}}{\gamma\eta}+h(\Delta)+g(\Delta).
\]
Now we use the update rule of $b_{t}$ to get
\[
\sum_{t=1}^{T}\frac{\eta\|\nabla F(x_{t})\|^{2}}{2b_{t}^{2}p_{t}}=\frac{\eta}{2}\sum_{t=1}^{T}\frac{b_{t}^{2+\Delta}-b_{t-1}^{2+\Delta}}{b_{t}^{2}}\geq\frac{\eta}{2}\sum_{t=1}^{T}b_{t}^{\Delta}-b_{t-1}^{\Delta}=\frac{\eta}{2}\left(b_{T}^{\Delta}-b_{0}^{\Delta}\right)
\]
Hence we know
\[
b_{T}\leq\left(\frac{2}{\eta}\left(\frac{\|x_{1}-x^{*}\|^{2}}{\gamma\eta}+h(\Delta)+g(\Delta)\right)+b_{0}^{\Delta}\right)^{\frac{1}{\Delta}}
\]
\end{proof}

Equipped with Lemmas \ref{lem:Appendix-AdaGradNorm-last1-residual}
and \ref{lem:Appendix-AdaGradNorm-Last1-b_T}, we can give a proof
of Theorem \ref{thm:Main-AdaGradNorm-Last1-rate}.

\begin{proof}
Note that if $p_{t}=\frac{1}{t}$, all the conditions in Lemma \ref{lem:Appendix-last-key-lemma}
are satisfied by replacing $c_{t}$ by $b_{t}$. Hence we have
\begin{align*}
\frac{F(x_{T+1})-F^{*}}{p_{T}b_{T}} & \leq\frac{\|x_{1}-x^{*}\|^{2}}{\gamma\eta}+\sum_{t=1}^{T}\left(\frac{L}{2b_{t}}-\frac{1}{\eta}+\frac{p_{t}}{\gamma\eta}-\frac{p_{t}b_{t}}{2\eta^{2}L}\right)\frac{\eta^{2}\|\nabla F(x_{t})\|^{2}}{b_{t}^{2}p_{t}}\\
 & \leq\frac{\|x_{1}-x^{*}\|^{2}}{\gamma\eta}+\sum_{t=1}^{T}\left(\frac{L}{2b_{t}}-\frac{1}{2\eta}+\frac{p_{t}}{\gamma\eta}-\frac{p_{t}b_{t}}{2\eta^{2}L}\right)\frac{\eta^{2}\|\nabla F(x_{t})\|^{2}}{b_{t}^{2}p_{t}}\\
 & \leq\frac{\|x_{1}-x^{*}\|^{2}}{\gamma\eta}+h(\Delta)+g(\Delta),
\end{align*}
where the last inequality is by Lemma \ref{lem:Appendix-AdaGradNorm-last1-residual}.
Multiplying both sides by $p_{T}b_{T}$, we get
\begin{align*}
F(x_{T+1})-F^{*} & \leq\frac{b_{T}\left(\frac{\|x_{1}-x^{*}\|^{2}}{\gamma\eta}+h(\Delta)+g(\Delta)\right)}{T}.
\end{align*}
By using the upper bound of $b_{T}$ in Lemma \ref{lem:Appendix-AdaGradNorm-Last1-b_T},
we finish the proof.
\end{proof}

\subsection{Second variant}

Similar to the previous section, what we need to do is to bound the
residual term $\sum_{t=1}^{T}\left(\frac{L}{2c_{t}}-\frac{1}{\eta}+\frac{p_{t}}{\gamma\eta}-\frac{p_{t}c_{t}}{2\eta^{2}L}\right)\frac{\eta^{2}\|\nabla F(x_{t})\|^{2}}{c_{t}^{2}p_{t}}$
and find an upper bound on $c_{T}$ where $c_{t}=b_{t}^{\delta}b_{t-1}^{1-\delta}$
here. We first bound the residual term by the following lemma.
\begin{lem}
\label{lem:Appendix-AdaGradNorm-last2-residual}If $p_{t}\leq1$ for
every $t$, we have
\begin{align*}
\sum_{t=1}^{T}\left(\frac{L}{2c_{t}}-\frac{1}{2\eta}\right)\frac{\eta^{2}\|\nabla F(x_{t})\|^{2}}{c_{t}^{2}p_{t}} & \leq\frac{\eta^{2}L}{b_{0}}\left(1-\left(\frac{b_{0}}{\eta L}\right)^{\frac{1}{\delta}}\right)^{+}\\
\sum_{t=1}^{T}\left(\frac{p_{t}}{\gamma\eta}-\frac{p_{t}c_{t}}{2\eta^{2}L}\right)\frac{\eta^{2}\|\nabla F(x_{t})\|^{2}}{c_{t}^{2}p_{t}} & \leq\frac{2\eta}{\gamma\delta}\left(\frac{2\eta L}{\gamma b_{0}}\right)^{\frac{2}{\delta}-2}\log^{+}\frac{2\eta L}{\gamma b_{0}}
\end{align*}
where $c_{t}=b_{t}^{\delta}b_{t-1}^{1-\delta}$.
\end{lem}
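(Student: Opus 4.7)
The plan is to handle each sum analogously to the first variant's proof (Lemma~\ref{lem:Appendix-AdaGradNorm-last1-residual}) but leveraging the geometric-mean structure $c_{t}=b_{t}^{\delta}b_{t-1}^{1-\delta}$. For both sums I would first observe that the summand is non-positive whenever $c_{t}$ exceeds the relevant threshold ($\eta L$ for the first sum, $2\eta L/\gamma$ for the second), so it suffices to restrict attention to the prefix $t\le\tau$ where $\tau$ is the last index with $c_{t}$ below the threshold (the case $c_{1}$ already above threshold is trivial). In this regime, dropping the negative $-1/(2\eta)$ (resp.\ $-p_{t}c_{t}/(2\eta^{2}L)$) term and substituting $\|\nabla F(x_{t})\|^{2}/p_{t}=b_{t}^{2}-b_{t-1}^{2}$ reduces the two targets to $\frac{L\eta^{2}}{2}\sum_{t\le\tau}(b_{t}^{2}-b_{t-1}^{2})/c_{t}^{3}$ and $\frac{\eta}{\gamma}\sum_{t\le\tau}(b_{t}^{2}-b_{t-1}^{2})/c_{t}^{2}$.

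For the second sum, the key decomposition is
\[
\frac{b_{t}^{2}-b_{t-1}^{2}}{c_{t}^{2}}=\frac{b_{t}^{2}-b_{t-1}^{2}}{b_{t}^{2}}\cdot\Bigl(\frac{b_{t}}{b_{t-1}}\Bigr)^{2-2\delta}.
\]
Using the identity $b_{t}/b_{t-1}=(c_{t}/b_{t-1})^{1/\delta}$ together with $c_{t}\le2\eta L/\gamma$ and $b_{t-1}\ge b_{0}$, I can bound the second factor uniformly by $(2\eta L/(\gamma b_{0}))^{2/\delta-2}$. The remaining piece $\sum(b_{t}^{2}-b_{t-1}^{2})/b_{t}^{2}\le\log(b_{\tau}^{2}/b_{0}^{2})$ is the usual AdaGrad telescoping, and $\log(b_{\tau}/b_{0})\le(1/\delta)\log^{+}(2\eta L/(\gamma b_{0}))$ again follows from $c_{\tau}\le2\eta L/\gamma$. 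Multiplying by $\eta/\gamma$ yields exactly the claimed second bound.

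For the first sum, the crucial ingredient---and what I expect to be the main obstacle---is the pointwise inequality
\[
\frac{b_{t}^{2}-b_{t-1}^{2}}{c_{t}^{3}}\le2\Bigl(\frac{1}{b_{t-1}}-\frac{1}{b_{t}}\Bigr),
\]
which I would prove by clearing denominators and reducing to $(b_{t}/b_{t-1})^{2-3\delta}+(b_{t}/b_{t-1})^{1-3\delta}\le2$. This is precisely where $\delta\ge2/3$ is used: both exponents are then non-positive, so each term is $\le1$ since $b_{t}\ge b_{t-1}$. Once this is in hand, the sum telescopes to $2/b_{0}-2/b_{\tau}$, and the lower bound $b_{\tau}\le(\eta L)^{1/\delta}/b_{0}^{(1-\delta)/\delta}$ extracted from $c_{\tau}=b_{\tau}^{\delta}b_{\tau-1}^{1-\delta}\le\eta L$ and $b_{\tau-1}\ge b_{0}$ converts $1/b_{0}-1/b_{\tau}$ into $(1-(b_{0}/\eta L)^{1/\delta})^{+}/b_{0}$. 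Multiplying by $L\eta^{2}$ gives the first claimed bound. Finding the right telescoping kernel---in particular realizing that the simple $1/b$ potential already suffices, with the full strength of $\delta\ge2/3$ absorbed into a clean two-term inequality---is the step that requires the most care; the second sum is a more routine adaptation of the standard adaptive argument with an extra geometric factor controlled by $b_{t-1}\ge b_{0}$.
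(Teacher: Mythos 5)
Your proposal is correct and follows essentially the same route as the paper's proof: your pointwise inequality $\frac{b_t^2-b_{t-1}^2}{c_t^3}\le 2\left(\frac{1}{b_{t-1}}-\frac{1}{b_t}\right)$, proved from $3\delta\ge 2$ and $b_t\ge b_{t-1}$, is exactly the paper's chain $\frac{b_t^2-b_{t-1}^2}{b_t^{3\delta}b_{t-1}^{3-3\delta}}\le\frac{2(b_t-b_{t-1})}{b_t b_{t-1}}$, followed by the same telescoping and the same use of $c_\tau\le\eta L$ and $b_{\tau-1}\ge b_0$. Your treatment of the second sum (bounding $(b_t/b_{t-1})^{2-2\delta}=(c_t/b_{t-1})^{2/\delta-2}$ by $(2\eta L/(\gamma b_0))^{2/\delta-2}$, then the standard $\log$ telescoping and $\log(b_\tau/b_0)\le\frac{1}{\delta}\log^{+}\frac{2\eta L}{\gamma b_0}$) matches the paper's argument verbatim.
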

\begin{proof}
Note that$\frac{c_{t}}{c_{t-1}}=\frac{b_{t}^{\delta}}{b_{t-1}^{2\delta-1}b_{t-2}^{1-\delta}}\geq1$,
this means $c_{t}$is monotone increasing. We first bound 
\[
\sum_{t=1}^{T}\left(\frac{L}{2c_{t}}-\frac{1}{2\eta}\right)\frac{\eta^{2}\|\nabla F(x_{t})\|^{2}}{c_{t}^{2}p_{t}}.
\]
If $c_{1}>\eta L$, we know 
\[
\sum_{t=1}^{T}\left(\frac{L}{2c_{t}}-\frac{1}{2\eta}\right)\frac{\eta^{2}\|\nabla F(x_{t})\|^{2}}{c_{t}^{2}p_{t}}<0\leq\frac{\eta^{2}L}{b_{0}}\left(1-\left(\frac{b_{0}}{\eta L}\right)^{\frac{1}{\delta}}\right)^{+}.
\]
Otherwise, let $\tau=\max\left\{ t\in\left[T\right],c_{t}\leq\eta L\right\} $.
We have
\begin{align*}
\sum_{t=1}^{T}\left(\frac{L}{2c_{t}}-\frac{1}{2\eta}\right)\frac{\eta^{2}\|\nabla F(x_{t})\|^{2}}{c_{t}^{2}p_{t}} & \leq\sum_{t=1}^{\tau}\frac{\eta^{2}L\|\nabla F(x_{t})\|^{2}}{2c_{t}^{3}p_{t}}\\
 & =\frac{\eta^{2}L}{2}\sum_{t=1}^{\tau}\frac{b_{t}^{2}-b_{t-1}^{2}}{b_{t}^{3\delta}b_{t-1}^{3-3\delta}}\\
 & \leq\eta^{2}L\sum_{t=1}^{\tau}\frac{b_{t}-b_{t-1}}{b_{t}^{3\delta-1}b_{t-1}^{3-3\delta}}\\
 & \leq\eta^{2}L\sum_{t=1}^{\tau}\frac{b_{t}-b_{t-1}}{b_{t}b_{t-1}}\\
 & \leq\eta^{2}L\left(\frac{1}{b_{0}}-\frac{1}{b_{\tau}}\right).
\end{align*}
Note that $c_{\tau}=b_{\tau}^{\delta}b_{\tau-1}^{1-\delta}\leq\eta L\Rightarrow b_{\tau}\leq\left(\frac{\eta L}{b_{\tau-1}^{1-\delta}}\right)^{1/\delta}$.
Hence
\[
\frac{1}{b_{0}}-\frac{1}{b_{\tau}}\leq\frac{1}{b_{0}}-\frac{\left(b_{\tau-1}\right)^{\frac{1}{\delta}-1}}{\left(\eta L\right)^{1/\delta}}\leq\frac{1}{b_{0}}\left(1-\left(\frac{b_{0}}{\eta L}\right)^{\frac{1}{\delta}}\right)^{+}.
\]
Combinging two cases, there is always 
\[
\sum_{t=1}^{T}\left(\frac{L}{2c_{t}}-\frac{1}{2\eta}\right)\frac{\eta^{2}\|\nabla F(x_{t})\|^{2}}{c_{t}^{2}p_{t}}\leq\frac{\eta^{2}L}{b_{0}}\left(1-\left(\frac{b_{0}}{\eta L}\right)^{\frac{1}{\delta}}\right)^{+}.
\]

Now we turn to the second bound. If $c_{1}>2\eta L/\gamma$, we know
\[
\sum_{t=1}^{T}\left(\frac{p_{t}}{\gamma\eta}-\frac{p_{t}c_{t}}{2\eta^{2}L}\right)\frac{\eta^{2}\|\nabla F(x_{t})\|^{2}}{c_{t}^{2}p_{t}}<0\leq\frac{2\eta}{\gamma\delta}\left(\frac{2\eta L}{\gamma b_{0}}\right)^{\frac{2}{\delta}-2}\log^{+}\frac{2\eta L}{\gamma b_{0}}.
\]
Otherwise, we define the time $\tau=\max\left\{ t\in\left[T\right],c_{t}\leq2\eta L/\gamma\right\} $.
Then, we have
\begin{align*}
\sum_{t=1}^{T}\left(\frac{p_{t}}{\gamma\eta}-\frac{p_{t}c_{t}}{2\eta^{2}L}\right)\frac{\eta^{2}\|\nabla F(x_{t})\|^{2}}{c_{t}^{2}p_{t}} & \leq\sum_{t=1}^{\tau}\frac{p_{t}}{\gamma\eta}\frac{\eta^{2}\|\nabla F(x_{t})\|^{2}}{c_{t}^{2}p_{t}}\\
 & \leq\frac{\eta}{\gamma}\sum_{t=1}^{\tau}\frac{\|\nabla F(x_{t})\|^{2}}{c_{t}^{2}p_{t}}\\
 & =\frac{\eta}{\gamma}\sum_{t=1}^{\tau}\frac{b_{t}^{2}-b_{t-1}^{2}}{b_{t}^{2\delta}b_{t-1}^{2-2\delta}}\\
 & =\frac{\eta}{\gamma}\sum_{t=1}^{\tau}\left(\frac{b_{t}}{b_{t-1}}\right)^{2-2\delta}\frac{b_{t}^{2}-b_{t-1}^{2}}{b_{t}^{2}}
\end{align*}
Because $b_{t}^{\delta}b_{t-1}^{1-\delta}=c_{t}\leq2\eta L/\gamma$
for $t\leq\tau$, so we know $b_{t}\leq\left(\frac{2\eta L}{\gamma b_{t-1}^{1-\delta}}\right)^{1/\delta}$.
Using this bound
\begin{align*}
\frac{\eta}{\gamma}\sum_{t=1}^{\tau}\left(\frac{b_{t}}{b_{t-1}}\right)^{2-2\delta}\frac{b_{t}^{2}-b_{t-1}^{2}}{b_{t}^{2}} & \leq\frac{\eta}{\gamma}\sum_{t=1}^{\tau}\left(\frac{2\eta L}{\gamma b_{t-1}}\right)^{\frac{2}{\delta}-2}\frac{b_{t}^{2}-b_{t-1}^{2}}{b_{t}^{2}}\\
 & \leq\frac{\eta}{\gamma}\left(\frac{2\eta L}{\gamma b_{0}}\right)^{\frac{2}{\delta}-2}\sum_{t=1}^{\tau}\frac{b_{t}^{2}-b_{t-1}^{2}}{b_{t}^{2}}\\
 & \leq\frac{2\eta}{\gamma}\left(\frac{2\eta L}{\gamma b_{0}}\right)^{\frac{2}{\delta}-2}\log\frac{b_{t}}{b_{0}}\\
 & \leq\frac{2\eta}{\gamma\delta}\left(\frac{2\eta L}{\gamma b_{0}}\right)^{\frac{2}{\delta}-2}\log^{+}\frac{2\eta L}{\gamma b_{0}}.
\end{align*}
The proof is completed.
\end{proof}

As before, our last task is to bound $c_{T}$. It is enough to bound
$b_{T}$ since $c_{T}\leq b_{T}$.
\begin{lem}
\label{lem:Appendix-AdaGradNorm-Last2-b_T}Suppose all the conditions
in Lemma \ref{lem:Appendix-last-key-lemma} are satisfied by replacing
$c_{t}$ by $b_{t}$, additionally, assume $p_{t}\leq1$, we will
have
\[
b_{T}\leq b_{0}\exp\left(\frac{\frac{\|x_{1}-x^{*}\|^{2}}{\gamma\eta^{2}}+\frac{\eta L}{b_{0}}\left(1-\left(\frac{b_{0}}{\eta L}\right)^{\frac{1}{\delta}}\right)^{+}+\frac{2}{\gamma\delta}\left(\frac{2\eta L}{\gamma b_{0}}\right)^{\frac{2}{\delta}-2}\log^{+}\frac{2\eta L}{\gamma b_{0}}}{1-\delta}\right).
\]
\end{lem}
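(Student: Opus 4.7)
The plan is to mirror the self-bounding strategy used for Lemma~\ref{lem:Appendix-AdaGradNorm-Last1-b_T}. I start by applying Lemma~\ref{lem:Appendix-last-key-lemma} with $c_t := b_t^{\delta} b_{t-1}^{1-\delta}$, which is nondecreasing in $t$ because $b_t$ is. As in the proof of Lemma~\ref{lem:Appendix-AdaGradNorm-Last1-b_T}, I split the residual coefficient $L/(2c_t) - 1/\eta + p_t/(\gamma\eta) - p_t c_t/(2\eta^2 L)$ into the two groups $L/(2c_t) - 1/(2\eta)$ and $p_t/(\gamma\eta) - p_t c_t/(2\eta^2 L)$ that are handled by Lemma~\ref{lem:Appendix-AdaGradNorm-last2-residual}, plus a leftover $-1/(2\eta)$. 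Using $F(x_{T+1})-F^* \geq 0$ to drop the left-hand side and dividing by $\eta/2$ yields
\begin{align*}
\sum_{t=1}^{T} \frac{\|\nabla F(x_t)\|^{2}}{c_t^{2} p_t} \;\leq\; 2k(\delta),
\end{align*}
with $k(\delta)$ as defined in Theorem~\ref{thm:Main-AdaGradNorm-Last2-rate}. Substituting $\|\nabla F(x_t)\|^{2}/p_t = b_t^{2} - b_{t-1}^{2}$ from the definition of $b_t$ recasts this as
\begin{align*}
\sum_{t=1}^{T} \frac{b_t^{2} - b_{t-1}^{2}}{b_t^{2\delta}\, b_{t-1}^{2-2\delta}} \;\leq\; 2k(\delta).
\end{align*}

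The crux is now to lower bound this sum by a multiple of $\log(b_T/b_0)$ with the correct $(1-\delta)$ coefficient. The key ingredient is the scalar inequality
\begin{align*}
\log r \;\leq\; \frac{(r-1)\,r^{-\delta}}{1-\delta}, \qquad r \geq 1,
\end{align*}
which I would prove by setting $f(r) := (r^{1-\delta} - r^{-\delta})/(1-\delta) - \log r$ and checking that $f(1) = 0$ and $f'(r) \geq 0$ on $[1,\infty)$. Multiplying $f'(r)$ by $r^{\delta+1}$ reduces the latter to the elementary fact that $g(r) := r + \delta/(1-\delta) - r^{\delta}$ is nonnegative on $[1,\infty)$, which follows from $g(1) = \delta/(1-\delta) > 0$ together with $g'(r) = 1 - \delta r^{\delta-1} > 0$ for $r \geq 1$.

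Applying this scalar inequality with $r = b_t^{2}/b_{t-1}^{2} \geq 1$ and multiplying by $1-\delta$ gives
\begin{align*}
2(1-\delta)\log(b_t/b_{t-1}) \;=\; (1-\delta)\log\frac{b_t^{2}}{b_{t-1}^{2}} \;\leq\; \frac{b_t^{2} - b_{t-1}^{2}}{b_t^{2\delta}\, b_{t-1}^{2-2\delta}}.
\end{align*}
Summing from $t = 1$ to $T$ telescopes the left-hand side into $2(1-\delta)\log(b_T/b_0)$, which combined with the upper bound $2k(\delta)$ yields $(1-\delta)\log(b_T/b_0) \leq k(\delta)$ and hence the claimed bound $b_T \leq b_0 \exp\bigl(k(\delta)/(1-\delta)\bigr)$. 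The main obstacle is identifying and proving the scalar inequality with the sharp $1/(1-\delta)$ constant; everything else is routine assembly of pieces already provided by Lemmas~\ref{lem:Appendix-last-key-lemma} and \ref{lem:Appendix-AdaGradNorm-last2-residual}.
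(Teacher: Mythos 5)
Your proposal is correct and follows essentially the same route as the paper: apply Lemma \ref{lem:Appendix-last-key-lemma} with $c_t=b_t^{\delta}b_{t-1}^{1-\delta}$, drop the nonnegative left-hand side, bound the residual groups via Lemma \ref{lem:Appendix-AdaGradNorm-last2-residual}, and lower bound $\sum_t (b_t^2-b_{t-1}^2)/(b_t^{2\delta}b_{t-1}^{2-2\delta})$ by $2(1-\delta)\log(b_T/b_0)$. The only difference is that you supply an explicit derivative argument for the elementary inequality $r^{1-\delta}-r^{-\delta}\geq(1-\delta)\log r$ on $[1,\infty)$, which the paper uses without proof.
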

\begin{proof}
By Lemma \ref{lem:Appendix-last-key-lemma}, we know
\begin{align*}
0 & \leq\frac{F(x_{T+1})-F^{*}}{p_{T}c_{T}}\\
 & \leq\frac{\|x_{1}-x^{*}\|^{2}}{\gamma\eta}+\sum_{t=1}^{T}\left(\frac{L}{2c_{t}}-\frac{1}{\eta}+\frac{p_{t}}{\gamma\eta}-\frac{p_{t}c_{t}}{2\eta^{2}L}\right)\frac{\eta^{2}\|\nabla F(x_{t})\|^{2}}{c_{t}^{2}p_{t}}\\
0 & \leq\frac{\|x_{1}-x^{*}\|^{2}}{\gamma\eta}+\sum_{t=1}^{T}\left(\frac{L}{2c_{t}}-\frac{1}{2\eta}+\frac{p_{t}}{\gamma\eta}-\frac{p_{t}c_{t}}{2\eta^{2}L}\right)\frac{\eta^{2}\|\nabla F(x_{t})\|^{2}}{c_{t}^{2}p_{t}}-\frac{\eta\|\nabla F(x_{t})\|^{2}}{2c_{t}^{2}p_{t}}\\
\sum_{t=1}^{T}\frac{\eta\|\nabla F(x_{t})\|^{2}}{2c_{t}^{2}p_{t}} & \leq\frac{\|x_{1}-x^{*}\|^{2}}{\gamma\eta}+\sum_{t=1}^{T}\left(\frac{L}{2c_{t}}-\frac{1}{2\eta}+\frac{p_{t}}{\gamma\eta}-\frac{p_{t}c_{t}}{2\eta^{2}L}\right)\frac{\eta^{2}\|\nabla F(x_{t})\|^{2}}{c_{t}^{2}p_{t}}\\
 & \leq\frac{\|x_{1}-x^{*}\|^{2}}{\gamma\eta}+\frac{\eta^{2}L}{b_{0}}\left(1-\left(\frac{b_{0}}{\eta L}\right)^{\frac{1}{\delta}}\right)^{+}+\frac{2\eta}{\gamma\delta}\left(\frac{2\eta L}{\gamma b_{0}}\right)^{\frac{2}{\delta}-2}\log^{+}\frac{2\eta L}{\gamma b_{0}}
\end{align*}
where the last inequality is by Lemma \ref{lem:Appendix-AdaGradNorm-last2-residual}.
Note that for the L.H.S., we have
\begin{align*}
\sum_{t=1}^{T}\frac{\eta\|\nabla F(x_{t})\|^{2}}{2c_{t}^{2}p_{t}} & =\frac{\eta}{2}\sum_{t=1}^{T}\frac{b_{t}^{2}-b_{t-1}^{2}}{b_{t}^{2\delta}b_{t-1}^{2-2\delta}}=\frac{\eta}{2}\sum_{t=1}^{T}\left(\frac{b_{t}}{b_{t-1}}\right)^{2-2\delta}-\left(\frac{b_{t-1}}{b_{t}}\right)^{2\delta}\\
 & \geq\eta(1-\delta)\sum_{t=1}^{T}\log\frac{b_{t}}{b_{t-1}}=\eta(1-\delta)\log\frac{b_{T}}{b_{0}}.
\end{align*}
Hence we know
\[
b_{T}\leq b_{0}\exp\left(\frac{\frac{\|x_{1}-x^{*}\|^{2}}{\gamma\eta^{2}}+\frac{\eta L}{b_{0}}\left(1-\left(\frac{b_{0}}{\eta L}\right)^{\frac{1}{\delta}}\right)^{+}+\frac{2}{\gamma\delta}\left(\frac{2\eta L}{\gamma b_{0}}\right)^{\frac{2}{\delta}-2}\log^{+}\frac{2\eta L}{\gamma b_{0}}}{1-\delta}\right)
\]
\end{proof}

Finally, the proof of Theorem \ref{thm:Main-AdaGradNorm-Last2-rate}
is similar to the proof of Theorem \ref{thm:Main-AdaGradNorm-Last1-rate},
hence, which is omitted.

\subsection{An asymptotic rate when $\Delta=0$ and $\delta=1$\label{subsec:Appendix-Last-Delta=00003D0-asy}}

As mentioned before, by setting $\Delta=0$ in Algorithm \ref{alg:AdaGradNorm-Last1}
and $\delta=1$ in Algorithm \ref{alg:AdaGradNorm-Last2} we obtain
the same algorithm. The square root update rule of $b_{t}$ and the
step size now are both more similar to the original AdaGradNorm. Intuitively,
we can also expect the convergence of the last iterate in this case;
furthermore, by taking the limit when $\Delta\to0$ and $\delta\to1$,
we can have a sense of the exponential dependency of the provable
convergence rate on the problem parameters. However, previous analysis
strictly requires that $\Delta>0$ and $\delta<1$, thus does not
apply here.

In this section, we partially confirm the convergence of this variant
by proving an asymptotic rate, i.e., $F(x_{T+1})-F^{*}=O\left(1/T\right)$.
Unfortunately, under Assumptions 1 and 2', we cannot figure out the
explicit dependency of the convergence rate on the problem parameters.
However, in the next section, we will give an explicit rate by replacing
Assumption 1 with the stronger Assumption 1'. As stated, our goal
is to prove Theorem \ref{thm:Appendix-last-asy-rate} in this section.
\begin{thm}
\label{thm:Appendix-last-asy-rate}Suppose $F$ satisfies Assumptions
1 and 2', when $\Delta=0$ for Algorithm \ref{alg:AdaGradNorm-Last1},
or equivalently, $\delta=1$ for Algorithm \ref{alg:AdaGradNorm-Last2},
by taking $p_{t}=\frac{1}{t}$, we have
\begin{align*}
F(x_{T+1})-F^{*} & =O\left(1/T\right).
\end{align*}
\end{thm}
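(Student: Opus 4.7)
The plan is to invoke Lemma~\ref{lem:Appendix-last-key-lemma} with $c_t=b_t$ and $p_t=1/t$, which satisfies the required conditions ($p_1=1$ and $1/p_t\ge(1-p_{t+1})/p_{t+1}$) with $b_t$ non-decreasing. This yields
\[
\frac{T(F(x_{T+1})-F^*)}{b_T}\le\frac{\|x_1-x^*\|^2}{\gamma\eta}+\sum_{t=1}^T\left(\frac{L}{2b_t}-\frac{1}{\eta}+\frac{p_t}{\gamma\eta}-\frac{p_tb_t}{2\eta^2 L}\right)\frac{\eta^2\|\nabla F(x_t)\|^2}{b_t^2 p_t}.
\]
To conclude $F(x_{T+1})-F^*=O(1/T)$ it suffices (i) to bound the right-hand side by a constant independent of $T$, and (ii) to show $\limsup_T b_T<\infty$.

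For (i), let $\tau$ be the last index with $b_\tau\le2\eta L/\gamma$. A direct check shows that for $t>\tau$ the bracketed coefficient is at most $-3/(4\eta)<0$: the term $-p_tb_t/(2\eta^2 L)$ dominates $p_t/(\gamma\eta)$ whenever $b_t\ge 2\eta L/\gamma$, and $L/(2b_t)-1/\eta\le-3/(4\eta)$ in this regime. These negative terms only shrink the right-hand side and can be dropped. For $t\le\tau$, since $b_t\le 2\eta L/\gamma$ is bounded, the integration-style estimates used in Lemmas~\ref{lem:Appendix-AdaGradNorm-last1-residual} and \ref{lem:Appendix-AdaGradNorm-last2-residual}, namely $\sum_{t\le\tau}(b_t^2-b_{t-1}^2)/b_t^3\le 2/b_0$ and $\sum_{t\le\tau}(b_t^2-b_{t-1}^2)/b_t^2\le 2\log(2\eta L/(\gamma b_0))$, bound the positive contribution by an explicit constant $C_0$ depending only on $L,\gamma,\eta,b_0,\|x_1-x^*\|$. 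This gives $T(F(x_{T+1})-F^*)\le C_0 b_T$.

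For (ii), which is the main obstacle, I follow the two-stage strategy sketched at the end of Section~\ref{sec:Main-Last}: the telescoping lower bound $\sum_t(b_t^{2+\Delta}-b_{t-1}^{2+\Delta})/b_t^2\ge b_T^\Delta-b_0^\Delta$ used in the $\Delta>0$ proof degenerates here, so $b_T$ must be bounded indirectly. I first control the single jump $b_\tau\mapsto b_{\tau+1}$: once $b_t\ge\eta L$, the smoothness descent $F(x_{t+1})\le F(x_t)-\eta\|\nabla F(x_t)\|^2/(2b_t)$ forces $F(x_t)$ to be monotone, so $\|\nabla F(x_{\tau+1})\|^2\le 2L(F(x_1)-F^*)$ and $b_{\tau+1}$ is bounded by an explicit constant. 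For $t>\tau+1$, I feed the preliminary bound $F(x_t)-F^*\le C_0 b_{t-1}/(t-1)$ back through $\|\nabla F(x_t)\|^2\le 2L(F(x_t)-F^*)$ to deduce $\|\nabla F(x_t)\|^2\le\tfrac{2}{3}p_tb_t^2$ (equivalently $b_t^2\le 3 b_{t-1}^2$) for all sufficiently large $t$, and combine this with the summability $\sum_{t>\tau}\|\nabla F(x_t)\|^2/b_t<\infty$ from the smoothness telescope to close the self-bounding loop asymptotically and conclude $\limsup_T b_T<\infty$.

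The hardest part will be this final closure: unlike in the $\Delta>0$ proof, where the telescope inverts immediately into an explicit power-of-$b_T$ bound, the degenerate case leaves an implicit inequality on $b_T$ with no explicit solution, so the constant hidden in $O(1/T)$ is necessarily non-explicit. This is also precisely why the companion non-asymptotic rate in Section~\ref{subsec:Appendix-Last-Delta=00003D0-non-asy} must strengthen Assumption~1 to full convexity (Assumption~1'), which supplies the extra structure needed to replace the self-bounding loop by a direct uniform bound on $b_T$.
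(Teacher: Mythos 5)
Your part (i) is fine and matches the paper's Lemma \ref{lem:Appendix-last-asy-residual}, but your part (ii) --- the bound on $b_T$, which is the whole difficulty here --- has two genuine gaps. First, the claim that $b_{\tau+1}$ is ``bounded by an explicit constant'' from $\|\nabla F(x_{\tau+1})\|^{2}\le 2L(F(x_{1})-F^{*})$ does not follow: the update is $b_{\tau+1}^{2}=b_{\tau}^{2}+(\tau+1)\|\nabla F(x_{\tau+1})\|^{2}$, so a constant bound on the gradient norm still leaves the unbounded factor $\tau+1$ (and $F(x_{\tau+1})\le F(x_{1})$ itself is not justified, since the descent inequality only kicks in after $\tau$). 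To kill the factor $\tau+1$ one must feed the rate $F(x_{\tau+1})-F^{*}\le C_{0}b_{\tau}/\tau$ back in, which is exactly the device of Lemma \ref{lem:last-non-asy-b_tau+1}; as written, your step is wrong (though for a purely asymptotic statement any fixed index trivially has finite $b_{\tau+1}$, so this alone is not fatal).

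The fatal gap is the ``closure.'' The implication $\|\nabla F(x_{t})\|^{2}\le\tfrac{2}{3}p_{t}b_{t}^{2}$ from $F(x_{t})-F^{*}\le C_{0}b_{t-1}/(t-1)$ requires $b_{t-1}\gtrsim LC_{0}$, i.e.\ it only holds when $b_{t-1}$ is already large, and even granting it, $b_{t}^{2}\le 3b_{t-1}^{2}$ together with $\sum_{t}\|\nabla F(x_{t})\|^{2}/b_{t}<\infty$ does not bound $b_{T}$: since $b_{t}^{2}-b_{t-1}^{2}=t\|\nabla F(x_{t})\|^{2}$, the per-step growth is $b_{t}-b_{t-1}\le t\|\nabla F(x_{t})\|^{2}/b_{t}$, carrying an extra factor $t$ that your summability cannot control (e.g.\ $\|\nabla F(x_{t})\|^{2}\asymp b_{t}t^{-3/2}$ satisfies your two conditions yet gives $b_{T}\asymp\sqrt{T}$). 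The quantity you actually need is $\sum_{t}\frac{\|\nabla F(x_{t})\|^{2}}{p_{t}b_{t}^{2}}=\sum_{t}\frac{b_{t}^{2}-b_{t-1}^{2}}{b_{t}^{2}}<\infty$, and you discarded it: in part (i) you noted the coefficient is at most $-3/(4\eta)$ for $t>\tau$ and then dropped these negative terms, whereas the paper keeps a $-\tfrac{1}{2\eta}$ portion, uses $F(x_{T+1})-F^{*}\ge0$ to conclude $\sum_{t=1}^{\infty}\frac{\|\nabla F(x_{t})\|^{2}}{p_{t}b_{t}^{2}}<\infty$ (Lemma \ref{lem:last-asy-b_T}), and then closes via the multiplicative identity $b_{T}^{2}=b_{0}^{2}\prod_{t}\bigl(1-\frac{\|\nabla F(x_{t})\|^{2}}{p_{t}b_{t}^{2}}\bigr)^{-1}$ and $\log\frac{1}{1-x}\le\frac{x}{1-x}$, with the summands eventually below $\tfrac12$ so that $\log b_{T}^{2}$ is uniformly bounded. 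No monotonicity of gradient norms or cocoercivity is needed in this asymptotic argument; that extra structure (Assumption 1') is only invoked in the companion non-asymptotic result to make $\tau$-dependent quantities explicit.
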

Before starting the proof, we first discuss why we can obtain only
an asymptotic rate when $\Delta=0$ and $\delta=1$. As before, one
can still expect that $F(x_{T+1})-F^{*}\leq\frac{b_{T}C}{T}$ remains
true for some constant $C$. However, a critical difference will show
up when we want to find an explicit upper bound on $b_{T}$. Using
the proof of Lemma \ref{lem:Appendix-AdaGradNorm-Last1-b_T} as an
example (similarly for the proof of Lemma \ref{lem:Appendix-AdaGradNorm-Last2-b_T}),
one key step is to get $\sum_{t=1}^{T}\frac{\|\nabla F(x_{t})\|^{2}}{b_{t}^{2}p_{t}}=O(1)$,
where in the previous analysis, by replacing $\frac{\|\nabla F(x_{t})\|^{2}}{p_{t}}$
by $b_{t}^{2+\Delta}-b_{t-1}^{2+\Delta}$ with $\Delta>0$, we can
lower bound $\sum_{t=1}^{T}\frac{\|\nabla F(x_{t})\|^{2}}{b_{t}^{2}p_{t}}$
by a function of $b_{T}$ and finally give an explicit bound on $b_{T}$.
However, this is not possible when $\Delta=0$ as $\sum_{t=1}^{T}\frac{\|\nabla F(x_{t})\|^{2}}{b_{t}^{2}p_{t}}=\sum_{t=1}^{T}\frac{b_{t}^{2}-b_{t-1}^{2}}{b_{t}^{2}}$.
The only information we can get from $\sum_{t=1}^{T}\frac{b_{t}^{2}-b_{t-1}^{2}}{b_{t}^{2}}=O(1)$
is $\lim_{T\to\infty}\frac{b_{T-1}^{2}}{b_{T}^{2}}=1$. This is not
enough to tell us whether $b_{T}$ is upper bounded or not. In Lemma
\ref{lem:last-asy-b_T}, we will use a new argument to finally show
that $\lim_{T\to\infty}b_{T}<\infty$, which leads to an asymptotic
rate as desired. It is worth pointing out that finding an asymptotic
without explicit dependency on the problem parameters is the approach
used in some of the previous work, such as \cite{antonakopoulos2022undergrad}.
This also gives us a glimpse of the method used to analyze the convergence
of the accelerated methods in Section \ref{sec:Main-Acc}.

Now we start the proof. As before, we can employ Lemma \ref{lem:Appendix-last-key-lemma}.
Hence we only need to bound the residual terms as following
\begin{lem}
\label{lem:Appendix-last-asy-residual}Suppose $p_{t}\leq1$, when
$\Delta=0$ for Algorithm \ref{alg:AdaGradNorm-Last1}, or equivalently,
$\delta=1$ for Algorithm \ref{alg:AdaGradNorm-Last2}, we have
\begin{align*}
\sum_{t=1}^{T}\left(\frac{L}{2b_{t}}-\frac{1}{2\eta}\right)\frac{\eta^{2}\|\nabla F(x_{t})\|^{2}}{b_{t}^{2}p_{t}} & \leq\eta\left(\frac{\eta L}{b_{0}}-1\right)^{+}\\
\sum_{t=1}^{T}\left(\frac{p_{t}}{\gamma\eta}-\frac{p_{t}b_{t}}{2\eta^{2}L}\right)\frac{\eta^{2}\|\nabla F(x_{t})\|^{2}}{b_{t}^{2}p_{t}} & \leq\frac{2\eta}{\gamma}\log^{+}\frac{2\eta L}{\gamma b_{0}}
\end{align*}
\end{lem}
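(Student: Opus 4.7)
The plan is to mimic the $\tau$-decomposition used throughout Section \ref{sec:Main-Last} (compare Lemmas \ref{lem:Appendix-AdaGradNorm-last1-residual} and \ref{lem:Appendix-AdaGradNorm-last2-residual}), adapted to the degenerate exponent $\Delta=0$ (equivalently $\delta=1$). In this regime both algorithms reduce to the same iteration with $b_t^2=b_0^2+\sum_{i=1}^t\|\nabla F(x_i)\|^2/p_i$ and $x_{t+1}=x_t-(\eta/b_t)\nabla F(x_t)$, so $b_t$ is non-decreasing and $\|\nabla F(x_t)\|^2/p_t=b_t^2-b_{t-1}^2$. Since the parenthesized coefficients in both sums become non-positive once $b_t$ crosses the thresholds $\eta L$ and $2\eta L/\gamma$ respectively, all late-time contributions can be discarded.

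For the first sum, I let $\tau$ be the last index with $b_t\leq\eta L$; if no such index exists (i.e.\ $b_1>\eta L$), the whole sum is non-positive and the claim is trivial. Otherwise I drop $-1/(2\eta)$ and substitute $\|\nabla F(x_t)\|^2/p_t=b_t^2-b_{t-1}^2$, reducing the task to bounding $\tfrac{\eta^2 L}{2}\sum_{t=1}^\tau(b_t^2-b_{t-1}^2)/b_t^3$. The key estimate, in the absence of any clean logarithmic telescope, is
\[
\frac{b_t^2-b_{t-1}^2}{b_t^3}=\frac{(b_t-b_{t-1})(b_t+b_{t-1})}{b_t^3}\leq\frac{2(b_t-b_{t-1})}{b_tb_{t-1}}=2\Bigl(\frac{1}{b_{t-1}}-\frac{1}{b_t}\Bigr),
\]
which telescopes to $2(1/b_0-1/b_\tau)$. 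Using $b_\tau\leq\eta L$ to bound $\eta^2L/b_\tau\geq\eta$ yields $\eta^2L(1/b_0-1/b_\tau)\leq\eta(\eta L/b_0-1)$, and the $(\cdot)^+$ absorbs the case $b_0>\eta L$.

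For the second sum, I redefine $\tau=\max\{t:b_t\leq 2\eta L/\gamma\}$ (empty case immediate). After dropping $-p_tb_t/(2\eta^2L)$ and using $p_t\leq 1$, the same identity reduces the sum to $(\eta/\gamma)\sum_{t=1}^\tau(b_t^2-b_{t-1}^2)/b_t^2$. Here the classical AdaGrad log-telescoping does apply: $(b_t^2-b_{t-1}^2)/b_t^2\leq\log(b_t^2/b_{t-1}^2)$ sums to $2\log(b_\tau/b_0)$, and since $b_0\leq b_\tau\leq 2\eta L/\gamma$ the right-hand side is non-negative and bounded by $2\log^+(2\eta L/(\gamma b_0))$, giving the claimed $(2\eta/\gamma)\log^+(2\eta L/(\gamma b_0))$.

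The only real obstacle is the first bound: with $\Delta=0$ one no longer has the convenient telescope $\sum(b_t^{2+\Delta}-b_{t-1}^{2+\Delta})/b_t^{2+\Delta}$ of Lemma \ref{lem:Appendix-AdaGradNorm-last1-residual}, and a pure log telescope as in the $\delta<1$ case of Lemma \ref{lem:Appendix-AdaGradNorm-last2-residual} is unavailable because the extra $1/b_t$ factor forces a $1/b_{t-1}-1/b_t$ telescope whose total is capped at $1/b_0$. The structural feature that rescues the argument is the defining constraint $b_\tau\leq\eta L$, which is precisely what converts the crude $\eta^2L/b_0$ into the tighter $\eta(\eta L/b_0-1)^+$ and reproduces the $\delta\to 1$ limit of Lemma \ref{lem:Appendix-AdaGradNorm-last2-residual}.
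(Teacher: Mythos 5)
Your proof is correct and is essentially the approach the paper intends: the paper omits this proof as ``essentially similar'' to Lemmas \ref{lem:Appendix-AdaGradNorm-last1-residual} and \ref{lem:Appendix-AdaGradNorm-last2-residual}, and your argument is exactly the $\Delta=0$ (equivalently $\delta=1$) specialization of that technique, with the same $\tau$-decomposition, the telescope $\frac{b_t^2-b_{t-1}^2}{b_t^3}\le 2\bigl(\frac{1}{b_{t-1}}-\frac{1}{b_t}\bigr)$ combined with $b_\tau\le\eta L$ for the first sum, and the standard $\frac{b_t^2-b_{t-1}^2}{b_t^2}\le\log\frac{b_t^2}{b_{t-1}^2}$ telescope for the second.
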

The proof is essentially similar to the proof of Lemmas \ref{lem:Appendix-AdaGradNorm-last1-residual}
and \ref{lem:Appendix-AdaGradNorm-last2-residual}, hence we omit
it here.
\begin{lem}
\label{lem:last-asy-b_T}Suppose all the conditions in Lemma \ref{lem:Appendix-last-key-lemma}
are satisfied by replacing $c_{t}$ by $b_{t}$, then when $\Delta=0$
for Algorithm \ref{alg:AdaGradNorm-Last1}, or equivalently, $\delta=1$
for Algorithm \ref{alg:AdaGradNorm-Last2}, we have
\[
\lim_{T\to\infty}b_{T}=b_{\infty}<\infty.
\]
\end{lem}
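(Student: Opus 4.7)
The plan is to apply Lemma \ref{lem:Appendix-last-key-lemma} (with $c_t = b_t$) together with the residual bounds of Lemma \ref{lem:Appendix-last-asy-residual} to extract a uniform-in-$T$ bound on the series $\sum_{t=1}^{T} \|\nabla F(x_t)\|^{2}/(b_t^{2} p_t)$, and then exploit the specific form of $b_t$ when $\Delta=0$ (equivalently $\delta=1$) to convert this into boundedness of $b_T$.

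Concretely, using $F(x_{T+1})-F^{*}\ge 0$ in Lemma \ref{lem:Appendix-last-key-lemma}, splitting the coefficient $-1/\eta$ as $-1/(2\eta)-1/(2\eta)$, and invoking Lemma \ref{lem:Appendix-last-asy-residual} to bound the first two residual sums, one obtains a constant $C>0$ (depending only on $\|x_1-x^*\|,\gamma,\eta,L,b_0$) such that
\begin{equation*}
\sum_{t=1}^{T}\frac{\eta\|\nabla F(x_t)\|^{2}}{2b_t^{2}p_t} \le \frac{\|x_1-x^{*}\|^{2}}{\gamma\eta}+\eta\Big(\tfrac{\eta L}{b_0}-1\Big)^{+}+\frac{2\eta}{\gamma}\log^{+}\frac{2\eta L}{\gamma b_0} =: \tfrac{\eta C}{2}\quad\text{for every }T\ge 1.
\end{equation*}
When $\Delta=0$ the update rule yields $b_t^{2}-b_{t-1}^{2}=\|\nabla F(x_t)\|^{2}/p_t$, so this is exactly $\sum_{t=1}^{T}(b_t^{2}-b_{t-1}^{2})/b_t^{2}\le C$.

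Setting $r_t := b_{t-1}^{2}/b_t^{2}\in(0,1]$, the bound becomes $\sum_{t=1}^{\infty}(1-r_t)\le C$. This forces $r_t\to 1$, hence there is $T_0$ beyond which $1-r_t\le 1/2$; on that range the elementary inequality $-\log(1-x)\le 2x$ for $x\in[0,1/2]$ gives
\begin{equation*}
\log\frac{b_T^{2}}{b_{T_0}^{2}} = \sum_{t=T_0+1}^{T}(-\log r_t) \le 2\sum_{t=T_0+1}^{T}(1-r_t) \le 2C.
\end{equation*}
Since $(b_t)$ is non-decreasing and bounded above by $b_{T_0}e^{C}$, the limit $b_\infty := \lim_{T\to\infty}b_T$ exists and is finite.

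The main obstacle is the last step: the inequality $\sum(b_t^{2}-b_{t-1}^{2})/b_t^{2}\le C$ does not immediately bound $b_T$, because the naive comparison $(b_t^{2}-b_{t-1}^{2})/b_t^{2}\le \log(b_t^{2}/b_{t-1}^{2})$ points the wrong way. The $\Delta>0$ analysis bypassed this by using $\sum(b_t^{2+\Delta}-b_{t-1}^{2+\Delta})/b_t^{2}\ge b_T^{\Delta}-b_0^{\Delta}$, which is unavailable when $\Delta=0$. The infinite-product argument above replaces that telescoping bound; crucially, it relies on the \emph{tail} of the sum being small (so that $r_t$ is eventually close to $1$), which is why the resulting bound on $b_\infty$ is not effective in the problem parameters and hence the convergence rate one can extract from $F(x_{T+1})-F^{*}\le p_T b_T\cdot\text{const}$ is only asymptotic, namely $O(1/T)$.
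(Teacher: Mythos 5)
Your proposal is correct and follows essentially the same route as the paper: both derive the uniform bound $\sum_{t}\|\nabla F(x_t)\|^{2}/(b_t^{2}p_t)\le C$ from Lemma \ref{lem:Appendix-last-key-lemma} and Lemma \ref{lem:Appendix-last-asy-residual}, then bound $\log b_T^{2}$ via the telescoping product of the ratios $b_t^{2}/b_{t-1}^{2}$, using that the summands eventually drop below $1/2$ so that $-\log(1-x)\le 2x$ (the paper's equivalent step is $\log\frac{1}{1-x}\le\frac{x}{1-x}\le 2x$ on the tail). The only cosmetic difference is that you start the product at $T_0$ rather than at $b_0$ and split off the finite head implicitly through monotonicity, which changes nothing of substance.
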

\begin{proof}
First note that $b_{t}$ is increasing, by the Monotone convergence
theorem, we know $\lim_{T\to\infty}b_{T}=b_{\infty}$ exists. We aim
to show $b_{\infty}<\infty$. By Lemma \ref{lem:Appendix-last-key-lemma}
and replacing $c_{t}$ by $b_{t}$, we have
\begin{align*}
 & \frac{F(x_{T+1})-F^{*}}{p_{T}b_{T}}\\
\le & \frac{\|x_{1}-x^{*}\|^{2}}{\gamma\eta}+\sum_{t=1}^{T}\left(\frac{L}{2b_{t}}-\frac{1}{\eta}+\frac{p_{t}}{\gamma\eta}-\frac{p_{t}b_{t}}{2\eta^{2}L}\right)\frac{\eta^{2}\|\nabla F(x_{t})\|^{2}}{b_{t}^{2}p_{t}}\\
= & \frac{\|x_{1}-x^{*}\|^{2}}{\gamma\eta}+\sum_{t=1}^{T}\left(\frac{L}{2b_{t}}-\frac{1}{2\eta}+\frac{p_{t}}{\gamma\eta}-\frac{p_{t}b_{t}}{2\eta^{2}L}\right)\frac{\eta^{2}\|\nabla F(x_{t})\|^{2}}{b_{t}^{2}p_{t}}-\frac{1}{2\eta}\times\frac{\eta^{2}\|\nabla F(x_{t})\|^{2}}{b_{t}^{2}p_{t}}\\
= & \frac{\|x_{1}-x^{*}\|^{2}}{\gamma\eta}+\sum_{t=1}^{T}\left(\frac{L}{2b_{t}}-\frac{1}{2\eta}+\frac{p_{t}}{\gamma\eta}-\frac{p_{t}b_{t}}{2\eta^{2}L}\right)\frac{\eta^{2}\|\nabla F(x_{t})\|^{2}}{b_{t}^{2}p_{t}}-\frac{\eta\|\nabla F(x_{t})\|^{2}}{2b_{t}^{2}p_{t}}\\
\le & \frac{\|x_{1}-x^{*}\|^{2}}{\gamma\eta}+\eta\left(\frac{\eta L}{b_{0}}-1\right)^{+}+\frac{2\eta}{\gamma}\log^{+}\frac{2\eta L}{\gamma b_{0}}-\sum_{t=1}^{T}\frac{\eta\|\nabla F(x_{t})\|^{2}}{2b_{t}^{2}p_{t}},
\end{align*}
where the last inequality is by Lemma \ref{lem:Appendix-last-asy-residual}.
Noticing $F(x_{T+1})-F^{*}\geq0$, we know
\[
\sum_{t=1}^{T}\frac{\eta\|\nabla F(x_{t})\|^{2}}{2b_{t}^{2}p_{t}}\leq\frac{\|x_{1}-x^{*}\|^{2}}{\gamma\eta}+\eta\left(\frac{\eta L}{b_{0}}-1\right)^{+}+\frac{2\eta}{\gamma}\log^{+}\frac{2\eta L}{\gamma b_{0}},
\]
which implies
\begin{equation}
\sum_{t=1}^{\infty}\frac{\|\nabla F(x_{t})\|^{2}}{b_{t}^{2}p_{t}}\leq\frac{2\|x_{1}-x^{*}\|^{2}}{\gamma\eta^{2}}+2\left(\frac{\eta L}{b_{0}}-1\right)^{+}+\frac{4}{\gamma}\log^{+}\frac{2\eta L}{\gamma b_{0}}.\label{eq:sum-bound}
\end{equation}
We observe that
\begin{align*}
b_{T}^{2} & =b_{T-1}^{2}+\frac{\|\nabla F(x_{T})\|^{2}}{p_{T}}\\
\Rightarrow & b_{T}^{2}=\frac{b_{T-1}^{2}}{1-\frac{\|\nabla F(x_{T})\|^{2}}{b_{T}^{2}p_{T}}}=b_{0}^{2}\prod_{t=1}^{T}\frac{1}{1-\frac{\|\nabla F(x_{t})\|^{2}}{b_{t}^{2}p_{t}}}.
\end{align*}
Taking $\log$ to both sides, we get
\begin{align*}
\log b_{T}^{2} & =\log b_{0}^{2}+\sum_{t=1}^{T}\log\frac{1}{1-\frac{\|\nabla F(x_{t})\|^{2}}{b_{t}^{2}p_{t}}}\leq\log b_{0}^{2}+\sum_{t=1}^{T}\frac{1}{1-\frac{\|\nabla F(x_{t})\|^{2}}{b_{t}^{2}p_{t}}}-1\\
 & =\log b_{0}^{2}+\sum_{t=1}^{T}\frac{\frac{\|\nabla F(x_{t})\|^{2}}{b_{t}^{2}p_{t}}}{1-\frac{\|\nabla F(x_{t})\|^{2}}{b_{t}^{2}p_{t}}}\leq\log b_{0}^{2}+\sum_{t=1}^{\infty}\frac{\frac{\|\nabla F(x_{t})\|^{2}}{b_{t}^{2}p_{t}}}{1-\frac{\|\nabla F(x_{t})\|^{2}}{b_{t}^{2}p_{t}}}
\end{align*}
Note that Inequality (\ref{eq:sum-bound}) tells us $\lim_{t\to\infty}\frac{\|\nabla F(x_{t})\|^{2}}{b_{t}^{2}p_{t}}=0$,
hence we can let $\tau$ be the time such that $\frac{\|\nabla F(x_{t})\|^{2}}{b_{t}^{2}p_{t}}\leq\frac{1}{2}$
for $t\geq\tau$. Then we know
\begin{align*}
\sum_{t=1}^{\infty}\frac{\frac{\|\nabla F(x_{t})\|^{2}}{b_{t}^{2}p_{t}}}{1-\frac{\|\nabla F(x_{t})\|^{2}}{b_{t}^{2}p_{t}}} & =\sum_{t=1}^{\tau-1}\frac{\frac{\|\nabla F(x_{t})\|^{2}}{b_{t}^{2}p_{t}}}{1-\frac{\|\nabla F(x_{t})\|^{2}}{b_{t}^{2}p_{t}}}+\sum_{t=\tau}^{\infty}\frac{\frac{\|\nabla F(x_{t})\|^{2}}{b_{t}^{2}p_{t}}}{1-\frac{\|\nabla F(x_{t})\|^{2}}{b_{t}^{2}p_{t}}}\\
 & \leq\sum_{t=1}^{\tau-1}\frac{\frac{\|\nabla F(x_{t})\|^{2}}{b_{t}^{2}p_{t}}}{1-\frac{\|\nabla F(x_{t})\|^{2}}{b_{t}^{2}p_{t}}}+2\sum_{t=\tau}^{\infty}\frac{\|\nabla F(x_{t})\|^{2}}{b_{t}^{2}p_{t}}\\
 & \leq\sum_{t=1}^{\tau-1}\frac{\frac{\|\nabla F(x_{t})\|^{2}}{b_{t}^{2}p_{t}}}{1-\frac{\|\nabla F(x_{t})\|^{2}}{b_{t}^{2}p_{t}}}+2\left(\frac{2\|x_{1}-x^{*}\|^{2}}{\gamma\eta^{2}}+2\left(\frac{\eta L}{b_{0}}-1\right)^{+}+\frac{4}{\gamma}\log^{+}\frac{2\eta L}{\gamma b_{0}}\right)\\
 & <\infty.
\end{align*}
The above result implies $\log b_{T}^{2}$ has a uniform upper bound
which means $b_{\infty}<\infty$.
\end{proof}

Now we can start to prove Theorem \ref{thm:Appendix-last-asy-rate}.

\begin{proof}
Note that when $\Delta=0$ for Algorithm \ref{alg:AdaGradNorm-Last1},
or equivalently, $\delta=1$ for Algorithm \ref{alg:AdaGradNorm-Last2},
if $p_{t}=\frac{1}{t}$, all the conditions in Lemma \ref{lem:Appendix-last-key-lemma}
are satisfied by replacing $c_{t}$ by $b_{t}$. Hence we have
\begin{align*}
\frac{F(x_{T+1})-F^{*}}{p_{T}b_{T}} & \leq\frac{\|x_{1}-x^{*}\|^{2}}{\gamma\eta}+\sum_{t=1}^{T}\left(\frac{L}{2b_{t}}-\frac{1}{\eta}+\frac{p_{t}}{\gamma\eta}-\frac{p_{t}b_{t}}{2\eta^{2}L}\right)\frac{\eta^{2}\|\nabla F(x_{t})\|^{2}}{b_{t}^{2}p_{t}}\\
 & \leq\frac{\|x_{1}-x^{*}\|^{2}}{\gamma\eta}+\sum_{t=1}^{T}\left(\frac{L}{2b_{t}}-\frac{1}{2\eta}+\frac{p_{t}}{\gamma\eta}-\frac{p_{t}b_{t}}{2\eta^{2}L}\right)\frac{\eta^{2}\|\nabla F(x_{t})\|^{2}}{b_{t}^{2}p_{t}}\\
 & =\frac{\|x_{1}-x^{*}\|^{2}}{\gamma\eta}+\eta\left(\frac{\eta L}{b_{0}}-1\right)^{+}+\frac{2\eta}{\gamma}\log^{+}\frac{2\eta L}{\gamma b_{0}},
\end{align*}
where the last inequality is by Lemma \ref{lem:Appendix-last-asy-residual}.
Multiplying both sides by$p_{T}b_{T}$, we wknow
\begin{align*}
F(x_{T+1})-F^{*} & \leq\frac{b_{T}}{T}\left(\frac{\|x_{1}-x^{*}\|^{2}}{\gamma\eta}+\eta\left(\frac{\eta L}{b_{0}}-1\right)^{+}+\frac{2\eta}{\gamma}\log^{+}\frac{2\eta L}{\gamma b_{0}}\right)\\
 & \leq\frac{b_{\infty}}{T}\left(\frac{\|x_{1}-x^{*}\|^{2}}{\gamma\eta}+\eta\left(\frac{\eta L}{b_{0}}-1\right)^{+}+\frac{2\eta}{\gamma}\log^{+}\frac{2\eta L}{\gamma b_{0}}\right)\\
 & =O\left(\frac{1}{T}\right),
\end{align*}
where the last line is by Lemma \ref{lem:last-asy-b_T}.
\end{proof}

\subsection{A non-asymptotic rate when $\Delta=0$ and $\delta=1$ for convex
smooth functions\label{subsec:Appendix-Last-Delta=00003D0-non-asy}}

In the previous section, we only give an asymptotic rate when $\Delta=0$
and $\delta=1$. In the following, we will show that, by replacing
Assumption 1 by the stronger Assumption 1', a non-asymptotic rate
can be obtained as stated in Theorem \ref{thm:Appendix-last-non-asy-rate}.
\begin{thm}
\label{thm:Appendix-last-non-asy-rate}Suppose $F$ satisfies Assumptions
1' and 2', when $\Delta=0$ for Algorithm \ref{alg:AdaGradNorm-Last1},
or equivalently, $\delta=1$ for Algorithm \ref{alg:AdaGradNorm-Last2},
by taking $p_{t}=\frac{1}{t}$, we have
\begin{align*}
F(x_{T+1})-F^{*} & \leq\frac{b\left(\frac{\|x_{1}-x^{*}\|^{2}}{2\eta}+\frac{\eta}{2}\left(\frac{2\eta L}{b_{0}}-1\right)^{+}\right)}{T},
\end{align*}
where $b=\max\left\{ \frac{\eta L}{2},\sqrt{b_{0}^{2}+\|\nabla F(x_{1})\|^{2}}\exp\left(\frac{3\|x_{1}-x^{*}\|^{2}}{\eta^{2}}+3\left(\frac{2\eta L}{b_{0}}-1\right)^{+}\right),\right.$
$\left.\eta L\sqrt{\frac{1}{4}+\frac{\|x_{t}-x^{*}\|^{2}}{\eta^{2}}+\left(\frac{2\eta L}{b_{0}}-1\right)^{+}}\exp\left(\frac{3\|x_{1}-x^{*}\|^{2}}{\eta^{2}}+3\left(\frac{2\eta L}{b_{0}}-1\right)^{+}\right)\right\} $.
\end{thm}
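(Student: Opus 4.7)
The plan is to retain the high-level strategy used for the asymptotic Theorem~\ref{thm:Appendix-last-asy-rate}, namely apply Lemma~\ref{lem:Appendix-last-key-lemma} with $c_t=b_t$ to obtain a bound of the form $F(x_{T+1})-F^*\le p_T b_T\cdot C$, but to upgrade from quasar-convexity to full convexity (Assumption 1') in order to (a) clean up the constant $C$ so that it matches $\|x_1-x^*\|^2/(2\eta)+\tfrac{\eta}{2}(2\eta L/b_0-1)^+$, and (b) replace the mere finiteness of $b_T$ proved in Lemma~\ref{lem:last-asy-b_T} by an explicit non-asymptotic estimate.

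For (a), instead of invoking Lemma~\ref{lem:Appendix-last-asy-residual} as a black box (which would leave a $\log^+(2\eta L/b_0)$ term in $C$ and a factor $1/\eta$ rather than $1/(2\eta)$ on $\|x_1-x^*\|^2$), I would recombine the two residual sums $\sum_t\bigl(\tfrac{L}{2b_t}-\tfrac{1}{\eta}+\tfrac{p_t}{\eta}-\tfrac{p_t b_t}{2\eta^2 L}\bigr)\tfrac{\eta^2\|\nabla F(x_t)\|^2}{b_t^2 p_t}$ into a single one. Convexity allows the cross term $\langle\nabla F(x_t),x_t-x^*\rangle$ to be split evenly between a descent contribution and a quadratic error (rather than the lopsided $2/\gamma$ split forced by $\gamma$-quasar-convexity), yielding the single residual $\tfrac{\eta}{2}(2\eta L/b_0-1)^+$ and the improved leading constant $\|x_1-x^*\|^2/(2\eta)$.

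For (b), I would use the three-part decomposition sketched in the main text. Let $\tau=\max\{t:b_t\le\eta L/2\}$ (interpreted as $0$ when $b_1>\eta L/2$). \emph{Case 1}, $\tau=T$, immediately gives $b_T\le\eta L/2$, which is the first entry of the max. Otherwise, for the \emph{critical jump} $\tau\mapsto\tau+1$, use $b_{\tau+1}^2=b_\tau^2+(\tau+1)\|\nabla F(x_{\tau+1})\|^2\le(\eta L/2)^2+2L(\tau+1)(F(x_{\tau+1})-F^*)$ by smoothness, and control $F(x_{\tau+1})-F^*$ through $F(x)-F^*\le\tfrac{L}{2}\|x-x^*\|^2$ combined with an explicit trajectory-distance estimate (a by-product of the telescoping term $\|x_t-x^*\|^2-\|x_{t+1}-x^*\|^2$ that appears in the proof of Lemma~\ref{lem:Appendix-last-key-lemma}). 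Depending on which of the two summands dominates, this reproduces the second or the third entry of the max. For the \emph{tail growth} after $\tau+1$: whenever $b_t>\eta L/2$, smoothness forces $\|\nabla F(x_t)\|^2\le 2L(F(x_t)-F^*)\le\tfrac{2}{3}p_t b_t^2$, so the ratios $u_t:=\|\nabla F(x_t)\|^2/(b_t^2 p_t)$ are all at most $2/3$. Plugging this into the identity $\log(b_T^2/b_{\tau+1}^2)=\sum_{t=\tau+2}^T\log\tfrac{1}{1-u_t}$ from the proof of Lemma~\ref{lem:last-asy-b_T} and using $\log\tfrac{1}{1-u}\le 3u$ on $[0,2/3]$ gives $\log(b_T^2/b_{\tau+1}^2)\le 3\sum u_t$; together with the already-established bound $\sum u_t\le 2\|x_1-x^*\|^2/\eta^2+2(\eta L/b_0-1)^+$ this produces the $\exp\bigl(3\|x_1-x^*\|^2/\eta^2+3(2\eta L/b_0-1)^+\bigr)$ multiplier in the theorem.

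The main obstacle is the per-step inequality $\|\nabla F(x_t)\|^2\le\tfrac{2}{3}p_t b_t^2$ for $t>\tau$: since $p_t=1/t$ decays, this requires a uniform-in-$t$ control of $F(x_t)-F^*$ against the quantity $b_t^2/t$, and hence a careful Lyapunov-type argument that assembles smoothness, convexity, and the trajectory-distance estimate simultaneously — this is where Assumption 1' (rather than just 1) is essential, because quasar-convexity alone does not yield the needed monotone-in-time bound on $\|x_t-x^*\|^2$. A secondary delicacy is the boundary case $\tau=0$ and the dichotomy behind the max defining $b$: the second entry $\sqrt{b_0^2+\|\nabla F(x_1)\|^2}\cdot\exp(\cdots)$ governs the regime in which the very first gradient already pushes $b_1$ above $\eta L/2$, while the third entry accounts for the $\|x_1-x^*\|^2$-driven contribution to $b_{\tau+1}$ in the generic case.
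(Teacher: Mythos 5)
Your skeleton matches the paper's (apply the key lemma with $c_t=b_t$ to get $F(x_{T+1})-F^*\le p_Tb_T\cdot C$, define $\tau=\max\{t:b_t\le \eta L/2\}$, bound the critical jump $b_\tau\to b_{\tau+1}$, then control the tail multiplicatively via $\log\frac{1}{1-u_t}\le 3u_t$ with $u_t=\|\nabla F(x_t)\|^2/(b_t^2p_t)$), but the two steps that actually carry the theorem are missing or wrong. First, the tail inequality $u_t\le 2/3$ for $t\ge\tau+2$: your claim that ``smoothness forces $\|\nabla F(x_t)\|^2\le 2L(F(x_t)-F^*)\le\frac{2}{3}p_tb_t^2$'' does not hold — with $p_t=1/t$ this would require $F(x_t)-F^*\lesssim b_t^2/(Lt)$, which is not implied by $b_t>\eta L/2$, and you yourself flag this as the ``main obstacle,'' deferring it to an unspecified Lyapunov argument about monotonicity of $\|x_t-x^*\|^2$. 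That is not the mechanism. The paper proves (via cocoercivity, $\langle\nabla F(x)-\nabla F(y),x-y\rangle\ge\|\nabla F(x)-\nabla F(y)\|^2/L$, which is where Assumption 1' enters) that once $b_t>\eta L/2$ the gradient norms are non-increasing along the trajectory; then $u_t=\frac{t\|\nabla F(x_t)\|^2}{b_0^2+\sum_i i\|\nabla F(x_i)\|^2}\le\frac{t\|\nabla F(x_t)\|^2}{(t-1)\|\nabla F(x_{t-1})\|^2+t\|\nabla F(x_t)\|^2}\le\frac{t}{2t-1}\le\frac23$ purely algebraically, with no appeal to the function-value gap at all.

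Second, the critical jump: you propose $b_{\tau+1}^2\le(\eta L/2)^2+2L(\tau+1)(F(x_{\tau+1})-F^*)$ and then bound $F(x_{\tau+1})-F^*$ by $\frac{L}{2}\|x_{\tau+1}-x^*\|^2$ plus a trajectory-distance estimate. That bound is $\tau$-independent, so the factor $(\tau+1)$ survives and you only get $b_{\tau+1}=O(\sqrt{\tau})$, hence $b_T=O(\sqrt{T})\exp(\cdots)$ and an $O(1/\sqrt{T})$ rate — not the claimed $O(1/T)$. The paper's trick is self-referential: the already-established inequality $F(x_{t+1})-F^*\le p_tb_t\bigl(\frac{\|x_1-x^*\|^2}{2\eta}+\frac{\eta}{2}(\frac{2\eta L}{b_0}-1)^+\bigr)$ is applied at the intermediate time $t=\tau$, giving $F(x_{\tau+1})-F^*\le \frac{b_\tau}{\tau}D$ with $b_\tau\le\eta L/2$, so the $(\tau+1)/\tau$ factor is bounded by $2$ and $b_{\tau+1}\le\eta L\sqrt{\frac14+\frac{\|x_1-x^*\|^2}{\eta^2}+(\frac{2\eta L}{b_0}-1)^+}$, a $T$-independent constant. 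Without these two ingredients (gradient-norm monotonicity after $\tau$, and re-using the rate bound at time $\tau$) the proposal does not yield the stated bound. Part (a) of your plan, recombining the residual under convexity to obtain the constant $\frac{\|x_1-x^*\|^2}{2\eta}+\frac{\eta}{2}(\frac{2\eta L}{b_0}-1)^+$, is consistent with the paper's Lemmas on the modified key inequality and residual bound and is fine.
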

We first give another well-known characterization of convex and $L$-smooth
functions without proof.
\begin{lem}
\label{lem:Appendix-cocoercivity}Suppose $F$ satisfies Assumption
1' and 2', then $\forall x,y\in\R^{d}$
\[
\langle\nabla F(x)-\nabla F(y),x-y\rangle\geq\frac{\|\nabla F(x)-\nabla F(y)\|^{2}}{L}.
\]
\end{lem}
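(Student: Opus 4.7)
The plan is to prove the classical cocoercivity inequality for convex $L$-smooth functions via the standard ``auxiliary function'' trick. For any fixed $y \in \R^d$, define $\phi_y(z) \coloneqq F(z) - \langle \nabla F(y), z \rangle$. Because $F$ is convex (Assumption 1') and differs from $\phi_y$ by a linear term, $\phi_y$ is itself convex; because $F$ is $L$-smooth (Assumption 2') and the Hessian/smoothness is unaffected by adding a linear term, $\phi_y$ is also $L$-smooth. Moreover $\nabla \phi_y(z) = \nabla F(z) - \nabla F(y)$, so $\nabla \phi_y(y) = 0$ and thus $y$ is a global minimizer of $\phi_y$.

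Next I would apply the descent lemma coming from $L$-smoothness of $\phi_y$ at the point $x$ and evaluate at an arbitrary point $z$:
\[
\phi_y(z) \le \phi_y(x) + \langle \nabla \phi_y(x), z - x \rangle + \frac{L}{2}\|z - x\|^2.
\]
Minimizing the right-hand side over $z$ (the minimum is achieved at $z = x - \frac{1}{L}\nabla \phi_y(x)$) yields
\[
\phi_y(y) \le \min_z \phi_y(z) \le \phi_y(x) - \frac{1}{2L}\|\nabla \phi_y(x)\|^2.
\]
Translating back in terms of $F$, this gives
\[
F(y) - \langle \nabla F(y), y \rangle \le F(x) - \langle \nabla F(y), x \rangle - \frac{1}{2L}\|\nabla F(x) - \nabla F(y)\|^2,
\]
which rearranges to $F(y) - F(x) \le \langle \nabla F(y), y - x \rangle - \frac{1}{2L}\|\nabla F(x) - \nabla F(y)\|^2$.

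Finally I would swap the roles of $x$ and $y$ to obtain the symmetric bound $F(x) - F(y) \le \langle \nabla F(x), x - y \rangle - \frac{1}{2L}\|\nabla F(x) - \nabla F(y)\|^2$, and add the two inequalities. The function value differences cancel, leaving
\[
0 \le \langle \nabla F(x) - \nabla F(y), x - y \rangle - \frac{1}{L}\|\nabla F(x) - \nabla F(y)\|^2,
\]
which is exactly the desired cocoercivity bound. There is no real obstacle here; the only subtlety is ensuring that the descent-lemma minimization is a valid inequality (it only uses the unconstrained $L$-smooth upper bound, not any extra regularity), and that the auxiliary function $\phi_y$ genuinely inherits both convexity and $L$-smoothness from $F$, both of which are immediate.
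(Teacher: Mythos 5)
Your proof is correct. The paper itself states Lemma \ref{lem:Appendix-cocoercivity} without proof, citing it as a well-known characterization of convex $L$-smooth functions, so there is no in-paper argument to compare against; your write-up is the standard cocoercivity (Baillon--Haddad-type) argument and it fills that gap cleanly. Two small points worth noting: the auxiliary function $\phi_y$ inherits exactly the quadratic upper bound of Assumption 2' because the linear terms cancel identically, so you never need gradient Lipschitzness beyond what the paper assumes; and the step identifying $y$ as a global minimizer of $\phi_y$ genuinely uses convexity (Assumption 1') via the first-order condition $\phi_y(z)\geq\phi_y(y)+\langle\nabla\phi_y(y),z-y\rangle=\phi_y(y)$, which you correctly invoke.
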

Next, we state a simple variant of Lemma \ref{lem:Appendix-last-key-lemma},
the proof of which is essentially the same as the proof of Lemma \ref{lem:Appendix-last-key-lemma},
hence we omit it.
\begin{lem}
\label{lem:Appendix-last-key-lemma-changed}Suppose the following
conditions hold:
\begin{itemize}
\item $F$ satisfies Assumptions 1' and 2';
\item $p_{t}\in(0,1)$ satisfies $\frac{1}{p_{t}}\geq\frac{1-p_{t+1}}{p_{t+1}},p_{1}=1$.
\end{itemize}
When $\Delta=0$ for Algorithm \ref{alg:AdaGradNorm-Last1}, or equivalently,
$\delta=1$ for Algorithm \ref{alg:AdaGradNorm-Last2}, we have
\[
\frac{F(x_{T+1})-F^{*}}{p_{T}b_{T}}\leq\frac{\|x_{1}-x^{*}\|^{2}}{2\eta}+\sum_{t=1}^{T}\left(\frac{L}{2b_{t}}-\frac{1}{\eta}+\frac{p_{t}}{2\eta}\right)\frac{\eta^{2}\|\nabla F(x_{t})\|^{2}}{b_{t}^{2}p_{t}}
\]
\end{lem}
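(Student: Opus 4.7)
The strategy is to first derive an inequality of the form $F(x_{T+1})-F^{*}\le b_T R/T$ with $R=\frac{\|x_1-x^*\|^2}{2\eta}+\frac{\eta}{2}(\frac{2\eta L}{b_0}-1)^+$, and then bound $b_T$ non-asymptotically by a threshold argument that separates the initial regime, where the stepsize is still small, from the tail regime, where $F$ becomes monotonically decreasing and the growth of $b_t$ can be controlled via a self-bounding estimate.

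I would first invoke Lemma \ref{lem:Appendix-last-key-lemma-changed} with $p_t=1/t$ and rewrite the coefficient as $\frac{L}{2b_t}-\frac{1}{\eta}+\frac{p_t}{2\eta}=(\frac{L}{2b_t}-\frac{1}{2\eta})-\frac{1-p_t}{2\eta}$. The first piece is handled by the threshold argument used in Lemma \ref{lem:Appendix-last-asy-residual}, tracking the last $t$ with $b_t\le\eta L$ and telescoping $\sum(b_t^2-b_{t-1}^2)/b_t^3$, which yields the upper bound $\frac{\eta}{2}(\frac{2\eta L}{b_0}-1)^+$; the second piece is non-positive and discardable. This gives $(F(x_{T+1})-F^*)/(p_T b_T)\le R$. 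From the non-negativity of the left-hand side, together with the same bound on the positive part, rearranging also extracts the self-bounding inequality
$$\sum_{t=2}^{T}\frac{(1-p_t)(b_t^2-b_{t-1}^2)}{b_t^2}\le\frac{2R}{\eta},\qquad\text{hence}\qquad\sum_{t=2}^{T}\frac{b_t^2-b_{t-1}^2}{b_t^2}\le\frac{4R}{\eta},$$
using $1-p_t\ge 1/2$ for $t\ge 2$ and the identity $\|\nabla F(x_t)\|^2/p_t=b_t^2-b_{t-1}^2$.

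To bound $b_T$, I would set $\tau:=\max\{t\in[T]:b_t\le\eta L/2\}$ (with $\tau=0$ if empty) and split $b_T=(b_T/b_{\tau+1})\cdot b_{\tau+1}$. For $\tau\ge 1$, the update rule, the weak-smoothness consequence $\|\nabla F(x_{\tau+1})\|^2\le 2L(F(x_{\tau+1})-F^*)$, and the gap bound $F(x_{\tau+1})-F^*\le p_\tau b_\tau R\le\eta L R/(2\tau)$ (applying the inequality from the previous paragraph at time $\tau$) combine via $(\tau+1)/\tau\le 2$ to give $b_{\tau+1}^2\le(\eta L/2)^2+2\eta L^2 R=(\eta L)^2\bigl[\tfrac{1}{4}+\tfrac{\|x_1-x^*\|^2}{\eta^2}+(\tfrac{2\eta L}{b_0}-1)^+\bigr]$, recovering the third term in the definition of $b$. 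When $\tau=0$, simply $b_{\tau+1}=b_1=\sqrt{b_0^2+\|\nabla F(x_1)\|^2}$, matching the second term.

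The main obstacle is bounding the ratio $b_T/b_{\tau+1}$. For $t>\tau$, $b_t>\eta L/2$ together with $L$-smoothness yields $F(x_{t+1})<F(x_t)$, so $F$ is monotonically decreasing; combining this with weak smoothness and the already-established gap bound $F(x_t)-F^*\le p_{t-1}b_{t-1}R$ (applied inductively, using $b_t\ge b_{t-1}$ and the threshold on $b_t$) is where the delicate per-step estimate $\|\nabla F(x_t)\|^2\le\frac{2}{3}p_tb_t^2$, equivalently $(b_t^2-b_{t-1}^2)/b_t^2\le 2/3$, is produced. Given this, the elementary estimate $-\log(1-x)\le 3x$ on $[0,2/3]$ together with the sum bound of the second paragraph gives
$$\log\frac{b_T^2}{b_{\tau+1}^2}\le 3\sum_{t=\tau+2}^{T}\frac{b_t^2-b_{t-1}^2}{b_t^2}\le\frac{12R}{\eta},$$
so $b_T\le b_{\tau+1}\exp(6R/\eta)$. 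The exponent $6R/\eta=\frac{3\|x_1-x^*\|^2}{\eta^2}+3(\tfrac{2\eta L}{b_0}-1)^+$ matches exactly the one appearing in $b$, and taking the maximum with $\eta L/2$ absorbs the degenerate case where the preceding bounds are vacuous. Combining with $F(x_{T+1})-F^*\le b_T R/T$ then yields the theorem.
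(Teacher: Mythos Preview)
Your proposal does not prove the stated lemma at all: you are sketching a proof of Theorem \ref{thm:Appendix-last-non-asy-rate} (the non-asymptotic last-iterate rate), not of Lemma \ref{lem:Appendix-last-key-lemma-changed}. You even write ``I would first invoke Lemma \ref{lem:Appendix-last-key-lemma-changed} with $p_t=1/t$\ldots'', i.e., you assume as a black box the very statement you are asked to establish. Everything that follows---the threshold time $\tau$, the bound on $b_{\tau+1}$, the monotone decrease of $F(x_t)$ for $t>\tau$, and the control of $b_T/b_{\tau+1}$---belongs to the \emph{downstream theorem}, not to this lemma.

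The lemma itself is a one-step descent inequality followed by a telescoping sum, and the paper explicitly says its proof ``is essentially the same as the proof of Lemma \ref{lem:Appendix-last-key-lemma}''. Concretely: start from $L$-smoothness,
\[
F(x_{t+1})-F(x_t)\le\langle\nabla F(x_t),x_{t+1}-x_t\rangle+\tfrac{L}{2}\|x_{t+1}-x_t\|^2,
\]
split $\langle\nabla F(x_t),x_{t+1}-x_t\rangle=p_t\langle\nabla F(x_t),x^*-x_t\rangle+p_t\langle\nabla F(x_t),x_{t+1}-x^*\rangle+(1-p_t)\langle\nabla F(x_t),x_{t+1}-x_t\rangle$, apply \emph{convexity} (Assumption 1', which replaces the $\gamma$-quasar convexity of Lemma \ref{lem:Appendix-last-key-lemma} and is what produces the factor $\tfrac{1}{2\eta}$ instead of $\tfrac{1}{\gamma\eta}$ and the coefficient $\tfrac{p_t}{2\eta}$ instead of $\tfrac{p_t}{\gamma\eta}-\tfrac{p_t c_t}{2\eta^2 L}$), use the update rule $x_{t+1}-x_t=-\tfrac{\eta}{b_t}\nabla F(x_t)$ to obtain the three-point identity, divide by $p_t b_t$, and telescope using $\tfrac{1}{p_t b_t}\ge\tfrac{1-p_{t+1}}{p_{t+1}b_{t+1}}$ together with $p_1=1$. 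No bound on $b_T$ and no time $\tau$ is involved.
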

The same as Lemma \ref{lem:Appendix-last-asy-residual}, we give the
following bound on the residual term without proof.
\begin{lem}
\label{lem:Appendix-last-non-asy-residual}Suppose $p_{t}\leq1$,
when $\Delta=0$ for Algorithm \ref{alg:AdaGradNorm-Last1}, or equivalently,
$\delta=1$ for Algorithm \ref{alg:AdaGradNorm-Last2}, we have
\begin{align*}
\sum_{t=1}^{T}\left(\frac{L}{2b_{t}}-\frac{1}{4\eta}\right)\frac{\eta^{2}\|\nabla F(x_{t})\|^{2}}{b_{t}^{2}p_{t}} & \leq\frac{\eta}{2}\left(\frac{2\eta L}{b_{0}}-1\right)^{+}
\end{align*}
\end{lem}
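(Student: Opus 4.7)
The plan is to mirror the case-split template from Lemma \ref{lem:Appendix-AdaGradNorm-last1-residual} and Lemma \ref{lem:Appendix-last-asy-residual}, but with a sharper choice of per-step bound so that the endpoint $b_\tau \leq 2\eta L$ contributes the crucial $-\eta/2$ gain. Since $\Delta=0$ (equivalently $\delta=1$), the update rule reads $b_t^2 = b_{t-1}^2 + \|\nabla F(x_t)\|^2/p_t$, so $\|\nabla F(x_t)\|^2/p_t = b_t^2 - b_{t-1}^2$. The coefficient $\frac{L}{2b_t} - \frac{1}{4\eta}$ is nonnegative iff $b_t \leq 2\eta L$, so monotonicity of $b_t$ gives a clean split. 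If $b_1 > 2\eta L$, every term is nonpositive and the left side is $\leq 0$ while the right side is $\geq 0$; done. Otherwise, let $\tau \coloneqq \max\{t \in [T] : b_t \leq 2\eta L\}$, which is well defined ($\tau \geq 1$), and discard the nonpositive $t > \tau$ contributions. Note also that in this second case $b_0 \leq b_1 \leq 2\eta L$, so $(\tfrac{2\eta L}{b_0}-1)^+ = \tfrac{2\eta L}{b_0}-1$.

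For $t \leq \tau$, both factors are nonnegative, so dropping $-\tfrac{1}{4\eta}$ gives the cleaner upper bound
\[
\sum_{t=1}^{\tau}\left(\frac{L}{2b_{t}}-\frac{1}{4\eta}\right)\frac{\eta^{2}\|\nabla F(x_{t})\|^{2}}{b_{t}^{2}p_{t}} \;\leq\; \frac{\eta^{2}L}{2}\sum_{t=1}^{\tau}\frac{b_t^2-b_{t-1}^2}{b_t^3}.
\]
The decisive estimate is not the naive $\tfrac{b_t^2-b_{t-1}^2}{b_t^3}\leq \tfrac{2(b_t-b_{t-1})}{b_t^2}$ (which only telescopes to a logarithm), but rather the slightly sharper
\[
\frac{b_t^2-b_{t-1}^2}{b_t^3} = \frac{(b_t-b_{t-1})(b_t+b_{t-1})}{b_t^3} \;\leq\; \frac{2(b_t-b_{t-1})}{b_t b_{t-1}} \;=\; 2\Bigl(\frac{1}{b_{t-1}}-\frac{1}{b_t}\Bigr),
\]
which holds because $b_{t-1} \leq b_t$ implies $b_t b_{t-1}+b_{t-1}^2 \leq 2b_t^2$. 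Telescoping yields $\eta^{2}L\bigl(\tfrac{1}{b_0}-\tfrac{1}{b_\tau}\bigr)$.

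The final step is to read off the endpoint gain: by definition of $\tau$, $b_\tau \leq 2\eta L$, hence $\tfrac{\eta^2 L}{b_\tau} \geq \tfrac{\eta^2 L}{2\eta L} = \tfrac{\eta}{2}$. Therefore
\[
\frac{\eta^{2}L}{b_0}-\frac{\eta^{2}L}{b_\tau} \;\leq\; \frac{\eta^{2}L}{b_0}-\frac{\eta}{2} \;=\; \frac{\eta}{2}\Bigl(\frac{2\eta L}{b_0}-1\Bigr) \;=\; \frac{\eta}{2}\Bigl(\frac{2\eta L}{b_0}-1\Bigr)^{+},
\]
completing both cases.

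The main obstacle, and the reason this lemma is more delicate than its asymptotic analogue, is getting the precise constant $\eta/2$ subtracted from $\eta^2 L/b_0$. The bookkeeping has to be tight in two places simultaneously: the per-step bound must telescope additively (not logarithmically) so that only the endpoints $b_0$ and $b_\tau$ survive, and the threshold defining $\tau$ must be exactly $2\eta L$ so that $\eta^2 L/b_\tau \geq \eta/2$ matches the $\tfrac{1}{4\eta}$ in the coefficient. Any weaker choice of either the telescoping estimate or the threshold loses the cancellation and forces a log factor or a larger constant, which would propagate undesirably into Theorem \ref{thm:Appendix-last-non-asy-rate}.
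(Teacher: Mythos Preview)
Your proof is correct and follows exactly the template the paper intends: the paper omits the proof of this lemma, deferring to Lemmas \ref{lem:Appendix-last-asy-residual} and \ref{lem:Appendix-AdaGradNorm-last2-residual}, and your case split on the threshold $b_t\le 2\eta L$, drop of the negative terms, telescoping via $\frac{b_t^2-b_{t-1}^2}{b_t^3}\le 2\bigl(\frac{1}{b_{t-1}}-\frac{1}{b_t}\bigr)$, and endpoint estimate $\frac{\eta^2 L}{b_\tau}\ge \frac{\eta}{2}$ reproduce precisely what those referenced proofs do when specialized to $\delta=1$ and the coefficient $\frac{L}{2b_t}-\frac{1}{4\eta}$.
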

Again, the above two lemmas give us 
\begin{equation}
F(x_{T+1})-F^{*}\leq p_{T}b_{T}\left(\frac{\|x_{1}-x^{*}\|^{2}}{2\eta}+\frac{\eta}{2}\left(\frac{2\eta L}{b_{0}}-1\right)^{+}\right)\label{eq:last-non-asy-rate-by-b_T}
\end{equation}
W.l.o.g., we assume $b_{T}>\frac{\eta L}{2}$ in the following analysis.
Otherwise, we can use the bound $b_{T}\leq\frac{\eta L}{2}$ to get
a trivial convergence rate. Now we define the time 
\[
\tau=\max\left\{ t\in\left[T\right],b_{t}\leq\frac{\eta L}{2}\right\} \lor0.
\]
This time $\tau$ is extremly useful and will finally help us bound
$b_{T}$. Now we list the following three important lemmas related
to time $\tau$. 
\begin{lem}
\label{lem:last-non-asy-decreasing}With Assumptions 1' and 2', when
$t\geq\tau+1$, $\|\nabla F(x_{t})\|$ is non-increasing.
\end{lem}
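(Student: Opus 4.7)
The plan is to establish this monotonicity via a direct computation: expand $\|\nabla F(x_{t+1})\|^2$ around $\nabla F(x_t)$ and exploit cocoercivity (Lemma \ref{lem:Appendix-cocoercivity}) together with the fact that, past time $\tau$, the stepsize is small enough that the descent in the gradient norm is guaranteed.

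More concretely, for any $t$ I would write
\begin{align*}
\|\nabla F(x_{t+1})\|^2 &= \|\nabla F(x_t)\|^2 + 2\langle \nabla F(x_t), \nabla F(x_{t+1}) - \nabla F(x_t)\rangle \\
&\quad + \|\nabla F(x_{t+1}) - \nabla F(x_t)\|^2.
\end{align*}
Using the update rule $x_{t+1} - x_t = -\tfrac{\eta}{b_t}\nabla F(x_t)$, the cross term becomes $-\tfrac{2b_t}{\eta}\langle x_{t+1}-x_t, \nabla F(x_{t+1})-\nabla F(x_t)\rangle$, and applying Lemma \ref{lem:Appendix-cocoercivity} yields
\[
\|\nabla F(x_{t+1})\|^2 \leq \|\nabla F(x_t)\|^2 + \left(1 - \frac{2b_t}{\eta L}\right)\|\nabla F(x_{t+1}) - \nabla F(x_t)\|^2.
\]
The definition of $\tau$ together with monotonicity of $b_t$ (which in turn follows directly from the square-root update rule for both variants with $\Delta=0$ or $\delta=1$) guarantees $b_t > \eta L/2$ for every $t \geq \tau+1$, so the coefficient $1 - 2b_t/(\eta L)$ is strictly negative, which immediately gives $\|\nabla F(x_{t+1})\| \leq \|\nabla F(x_t)\|$.

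I do not expect any real obstacle here: the argument is a standard consequence of cocoercivity combined with the threshold on $b_t$ baked into the definition of $\tau$. The only minor point to flag is that one must verify $b_t \geq b_{\tau+1} > \eta L/2$ for \emph{all} $t \geq \tau+1$ (not just $t = \tau+1$), which is immediate from monotonicity. The lemma will then be used later to control the growth of $b_T$ past $\tau$ via the increment $\|\nabla F(x_t)\|^2 / p_t$ appearing in the stepsize recursion.
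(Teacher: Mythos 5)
Your proof is correct and takes essentially the same route as the paper: both arguments rest on the cocoercivity inequality of Lemma \ref{lem:Appendix-cocoercivity} applied to the pair $(x_t,x_{t+1})$ together with the fact that $b_t>\eta L/2$ for all $t\ge\tau+1$ (by maximality of $\tau$ and monotonicity of $b_t$). The only difference is cosmetic algebra: you expand $\|\nabla F(x_{t+1})\|^2$ and bound the cross term directly by cocoercivity, obtaining the coefficient $1-\tfrac{2b_t}{\eta L}\le 0$, whereas the paper rearranges the cocoercivity inequality and applies Young's inequality to $\langle\nabla F(x_t),\nabla F(x_{t+1})\rangle$; the conclusion is identical.
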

\begin{proof}
Taking $x=x_{t}$,$y=x_{t+1}$ in Lemma \ref{lem:Appendix-cocoercivity},
we get
\begin{align*}
\frac{\|\nabla F(x_{t})-\nabla F(x_{t+1})\|^{2}}{L} & \leq\langle\nabla F(x_{t})-\nabla F(x_{t+1}),x_{t}-x_{t+1}\rangle\\
 & =\langle\nabla F(x_{t})-\nabla F(x_{t+1}),\frac{\eta}{b_{t}}\nabla F(x_{t})\rangle\\
\Rightarrow\left(\frac{1}{L}-\frac{\eta}{b_{t}}\right)\|\nabla F(x_{t})\|^{2}+\frac{1}{L}\|\nabla F(x_{t+1})\|^{2} & \leq\left(\frac{2}{L}-\frac{\eta}{b_{t}}\right)\langle\nabla F(x_{t}),\nabla F(x_{t+1})\rangle.
\end{align*}
Note that when $t\geq\tau+1$, we know $b_{t}>\frac{\eta L}{2}\Rightarrow\frac{2}{L}-\frac{\eta}{b_{t}}>0$,
hence we have
\begin{align*}
 & \left(\frac{1}{L}-\frac{\eta}{b_{t}}\right)\|\nabla F(x_{t})\|^{2}+\frac{1}{L}\|\nabla F(x_{t+1})\|^{2}\\
\leq & \left(\frac{2}{L}-\frac{\eta}{b_{t}}\right)\langle\nabla F(x_{t}),\nabla F(x_{t+1})\rangle\\
\leq & \left(\frac{1}{L}-\frac{\eta}{2b_{t}}\right)\|\nabla F(x_{t})\|^{2}+\left(\frac{1}{L}-\frac{\eta}{2b_{t}}\right)\|\nabla F(x_{t+1})\|^{2},
\end{align*}
which implies $\|\nabla F(x_{t+1})\|^{2}\leq\|\nabla F(x_{t})\|^{2}$.
This is just what we want.
\end{proof}

\begin{lem}
\label{lem:last-non-asy-bound-away}With Assumptions 1' and 2', if
$p_{t}=\frac{1}{t}$, when $t\geq\tau+2\geq2$,
\[
\frac{\|\nabla F(x_{t})\|^{2}}{b_{t}^{2}p_{t}}\leq\frac{2}{3}
\]
\end{lem}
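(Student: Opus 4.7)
The plan is to exploit the monotonicity of $\|\nabla F(x_t)\|$ established in Lemma \ref{lem:last-non-asy-decreasing}, together with the explicit form of $b_t^2$ when $\Delta=0$. Recall that in this regime
\[
b_t^2 = b_0^2 + \sum_{i=1}^{t} \frac{\|\nabla F(x_i)\|^2}{p_i} = b_0^2 + \sum_{i=1}^{t} i\,\|\nabla F(x_i)\|^2,
\]
so proving $\|\nabla F(x_t)\|^2/(b_t^2 p_t)\leq 2/3$ amounts to showing $t\,\|\nabla F(x_t)\|^2 \leq \tfrac{2}{3}\,b_t^2$.

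First I would discard the $b_0^2$ term and truncate the sum defining $b_t^2$ to indices $i \in \{\tau+1,\dots,t\}$ — the range on which Lemma \ref{lem:last-non-asy-decreasing} guarantees that $\|\nabla F(x_i)\|$ is non-increasing. This yields the lower bound
\[
b_t^2 \;\geq\; \sum_{i=\tau+1}^{t} i\,\|\nabla F(x_i)\|^2 \;\geq\; \|\nabla F(x_t)\|^2 \sum_{i=\tau+1}^{t} i \;=\; \|\nabla F(x_t)\|^2\,\frac{(t+\tau+1)(t-\tau)}{2},
\]
where the second inequality uses $\|\nabla F(x_i)\|\geq\|\nabla F(x_t)\|$ for $\tau+1\leq i\leq t$.

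The remaining step is a purely arithmetic verification: it suffices to show that $(t+\tau+1)(t-\tau)\geq 3t$ whenever $t\geq\tau+2$. Expanding, $(t+\tau+1)(t-\tau) - 3t = t(t-2) - \tau(\tau+1)$, and for $t\geq\tau+2$ one has $t(t-2)\geq(\tau+2)\tau=\tau^2+2\tau\geq\tau(\tau+1)$, so the inequality holds. Combining gives
\[
\frac{\|\nabla F(x_t)\|^2}{b_t^2\,p_t} \;=\; \frac{t\,\|\nabla F(x_t)\|^2}{b_t^2} \;\leq\; \frac{2t}{(t+\tau+1)(t-\tau)} \;\leq\; \frac{2}{3},
\]
which is the claim. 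The only place where something could go wrong is the boundary case $\tau=0$, $t=2$, but there both the monotonicity hypothesis of Lemma \ref{lem:last-non-asy-decreasing} (valid from index $\tau+1=1$) and the arithmetic check $t(t-2)=0\geq 0=\tau(\tau+1)$ remain satisfied, so no separate treatment is needed.
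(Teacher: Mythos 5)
Your proof is correct and follows essentially the same approach as the paper: both discard $b_{0}^{2}$, invoke Lemma \ref{lem:last-non-asy-decreasing} to replace $\|\nabla F(x_{i})\|$ by $\|\nabla F(x_{t})\|$ on the retained indices, and then lower bound $b_{t}^{2}$ accordingly. The only difference is that the paper keeps just the two terms $i=t-1,t$ (yielding the bound $\tfrac{t}{2t-1}\le\tfrac{2}{3}$), whereas you keep the whole tail $i=\tau+1,\dots,t$, which gives a slightly sharper intermediate bound at the cost of a small extra arithmetic check.
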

\begin{proof}
This is because
\begin{align*}
\frac{\|\nabla F(x_{t})\|^{2}}{b_{t}^{2}p_{t}} & =\frac{t\|\nabla F(x_{t})\|^{2}}{b_{0}^{2}+\sum_{i=1}^{t}i\|\nabla F(x_{i})\|^{2}}\\
 & \leq\frac{t\|\nabla F(x_{t})\|^{2}}{(t-1)\|\nabla F(x_{t-1})\|^{2}+t\|\nabla F(x_{t})\|^{2}}\\
 & \leq\frac{t\|\nabla F(x_{t})\|^{2}}{(t-1)\|\nabla F(x_{t})\|^{2}+t\|\nabla F(x_{t})\|^{2}}\\
 & =\frac{t}{2t-1},
\end{align*}
where the last inequality is because $t-1\geq\tau+1$, hence $\|\nabla F(x_{t-1})\|\geq\|\nabla F(x_{t})\|$
by Lemma \ref{lem:last-non-asy-decreasing}. Note that $t\geq2$,
so $\frac{\|\nabla F(x_{t})\|^{2}}{b_{t}^{2}p_{t}}\leq\frac{t}{2t-1}\leq\frac{2}{3}.$
\end{proof}

\begin{lem}
\label{lem:last-non-asy-b_tau+1}With Assumptions 1' and 2', if $p_{t}=\frac{1}{t}$
\[
b_{\tau+1}\leq\sqrt{b_{0}^{2}+\|\nabla F(x_{1})\|^{2}}\lor\eta L\sqrt{\frac{1}{4}+\frac{\|x_{1}-x^{*}\|^{2}}{\eta^{2}}+\left(\frac{2\eta L}{b_{0}}-1\right)^{+}}
\]
\end{lem}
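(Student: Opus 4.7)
The plan is to split into two cases based on the value of $\tau$. When $\tau=0$ (i.e., $b_1$ already exceeds $\eta L/2$), the bound is immediate since $b_{\tau+1}=b_1=\sqrt{b_0^2+\|\nabla F(x_1)\|^2/p_1}=\sqrt{b_0^2+\|\nabla F(x_1)\|^2}$ by the definition of the stepsize with $p_1=1$, which matches the first argument of the maximum. So the substantive work lies in the case $\tau\geq 1$.

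For $\tau\geq 1$, I would apply Lemma \ref{lem:Appendix-last-key-lemma-changed} with $T=\tau$ to obtain
\[
F(x_{\tau+1})-F^{*}\leq p_{\tau}b_{\tau}\left(\frac{\|x_{1}-x^{*}\|^{2}}{2\eta}+\sum_{t=1}^{\tau}\left(\frac{L}{2b_{t}}-\frac{1}{\eta}+\frac{p_{t}}{2\eta}\right)\frac{\eta^{2}\|\nabla F(x_{t})\|^{2}}{b_{t}^{2}p_{t}}\right).
\]
Since $p_t\leq 1$ gives $-\tfrac{1}{\eta}+\tfrac{p_t}{2\eta}\leq -\tfrac{1}{2\eta}\leq -\tfrac{1}{4\eta}$, the sum is dominated by the one handled in Lemma \ref{lem:Appendix-last-non-asy-residual}, so
\[
F(x_{\tau+1})-F^{*}\leq p_{\tau}b_{\tau}\left(\frac{\|x_{1}-x^{*}\|^{2}}{2\eta}+\frac{\eta}{2}\left(\frac{2\eta L}{b_{0}}-1\right)^{+}\right).
\]

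Next I would convert this function-gap bound into a gradient-norm bound using weak $L$-smoothness (Assumption 2 is implied by 2'), namely $\|\nabla F(x_{\tau+1})\|^{2}\leq 2L(F(x_{\tau+1})-F^{*})$, and then plug into the stepsize recursion
\[
b_{\tau+1}^{2}=b_{\tau}^{2}+(\tau+1)\|\nabla F(x_{\tau+1})\|^{2}.
\]
Using the two crucial bounds $b_{\tau}\leq \eta L/2$ (by definition of $\tau$) and $(\tau+1)p_{\tau}=(\tau+1)/\tau\leq 2$ for $\tau\geq 1$, the cross terms simplify cleanly to
\[
b_{\tau+1}^{2}\leq\frac{\eta^{2}L^{2}}{4}+2\eta L^{2}\left(\frac{\|x_{1}-x^{*}\|^{2}}{2\eta}+\frac{\eta}{2}\left(\frac{2\eta L}{b_{0}}-1\right)^{+}\right)=\eta^{2}L^{2}\left(\frac{1}{4}+\frac{\|x_{1}-x^{*}\|^{2}}{\eta^{2}}+\left(\frac{2\eta L}{b_{0}}-1\right)^{+}\right),
\]
and taking square roots gives the second argument of the maximum. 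Combining the two cases via the $\vee$ yields the lemma.

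The main obstacle is really just pinpointing why $\tau\geq 1$ forces the clean constant: the combination $(\tau+1)p_\tau\leq 2$ together with $b_\tau\leq\eta L/2$ is what eliminates the dependence on $\tau$ itself and leaves an explicit, $T$-independent upper bound. Without the particular choice $p_t=1/t$ (so that $p_\tau$ decays at exactly the right rate) the argument would instead produce a $\tau$-dependent factor, which is why the proof cannot tolerate a more aggressive schedule.
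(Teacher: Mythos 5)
Your proposal is correct and follows essentially the same route as the paper: the $\tau=0$ case is handled identically, and for $\tau\geq 1$ the paper likewise applies the function-gap bound (obtained from Lemma \ref{lem:Appendix-last-key-lemma-changed} together with Lemma \ref{lem:Appendix-last-non-asy-residual}) at time $\tau$, converts it to a gradient bound via $\|\nabla F(x_{\tau+1})\|^{2}\leq 2L(F(x_{\tau+1})-F^{*})$, and uses $b_{\tau}\leq \eta L/2$ with $\frac{\tau+1}{\tau}\leq 2$ in the recursion $b_{\tau+1}^{2}=b_{\tau}^{2}+(\tau+1)\|\nabla F(x_{\tau+1})\|^{2}$ to reach the same constant.
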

\begin{proof}
If $\tau=0$, we have
\[
b_{\tau+1}=b_{1}=\sqrt{b_{0}^{2}+\|\nabla F(x_{1})\|^{2}}.
\]
Otherwise, we know $\tau+1\geq2$, hence
\begin{align*}
b_{\tau+1}^{2} & =b_{\tau}^{2}+\left(\tau+1\right)\|\nabla F(x_{\tau+1})\|^{2}\\
 & \leq b_{\tau}^{2}+2L\left(\tau+1\right)(F(x_{\tau+1})-F^{*})\\
 & \leq b_{\tau}^{2}+2L\frac{\tau+1}{\tau}b_{\tau}\left(\frac{\|x_{1}-x^{*}\|^{2}}{2\eta}+\frac{\eta}{2}\left(\frac{2\eta L}{b_{0}}-1\right)^{+}\right)\\
 & \leq b_{\tau}^{2}+4Lb_{\tau}\left(\frac{\|x_{1}-x^{*}\|^{2}}{2\eta}+\frac{\eta}{2}\left(\frac{2\eta L}{b_{0}}-1\right)^{+}\right)\\
 & \leq\left(\frac{\eta L}{2}\right)^{2}+\eta^{2}L^{2}\left(\frac{\|x_{1}-x^{*}\|^{2}}{\eta^{2}}+\left(\frac{2\eta L}{b_{0}}-1\right)^{+}\right)\\
\Rightarrow b_{\tau+1} & \leq\eta L\sqrt{\frac{1}{4}+\frac{\|x_{1}-x^{*}\|^{2}}{\eta^{2}}+\left(\frac{2\eta L}{b_{0}}-1\right)^{+}}
\end{align*}
where the second inequality is due to (\ref{eq:last-non-asy-rate-by-b_T}).
\end{proof}

Now we combine Lemmas \ref{lem:last-non-asy-bound-away} and \ref{lem:last-non-asy-b_tau+1}
to get an upper bound for $b_{T}$.
\begin{lem}
\label{lem:Appendix-last-non-asy-b_T}With Assumptions 1' and 2',
if $p_{t}=\frac{1}{t}$
\begin{align*}
b_{T} & \leq\max\left\{ \frac{\eta L}{2},\sqrt{b_{0}^{2}+\|\nabla F(x_{1})\|^{2}}\exp\left(\frac{3\|x_{1}-x^{*}\|^{2}}{\eta^{2}}+3\left(\frac{2\eta L}{b_{0}}-1\right)^{+}\right),\right.\\
 & \left.\eta L\sqrt{\frac{1}{4}+\frac{\|x_{1}-x^{*}\|^{2}}{\eta^{2}}+\left(\frac{2\eta L}{b_{0}}-1\right)^{+}}\exp\left(\frac{3\|x_{1}-x^{*}\|^{2}}{\eta^{2}}+3\left(\frac{2\eta L}{b_{0}}-1\right)^{+}\right)\right\} 
\end{align*}
\end{lem}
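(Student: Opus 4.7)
The claim has three branches inside the max, and the first one, $b_T \le \eta L/2$, is immediate from the definition of $\tau$ whenever $T \le \tau$, so I focus on the complementary case $T \ge \tau+1$. In that regime, the strategy is to (i) use Lemma \ref{lem:last-non-asy-b_tau+1} as a starting bound on $b_{\tau+1}$ (this produces the two prefactors in the max, one for $\tau = 0$ and one for $\tau \ge 1$), and (ii) show that between $\tau+1$ and $T$ the stepsize can only grow by a multiplicative factor equal to the stated exponential.

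\textbf{Multiplicative growth after $\tau+1$.} From the update $b_t^2 = b_{t-1}^2 + \|\nabla F(x_t)\|^2 / p_t$, I get $b_{t-1}^2/b_t^2 = 1 - r_t$ where $r_t \coloneqq \|\nabla F(x_t)\|^2/(p_t b_t^2)$. Lemma \ref{lem:last-non-asy-bound-away} guarantees $r_t \le 2/3$ for every $t \ge \tau+2$ (this is exactly why the separation $b_t > \eta L /2$ after $\tau$ is essential). Using the elementary bound $\log\frac{1}{1-x} \le \frac{x}{1-x} \le 3x$ valid for $x \in [0, 2/3]$, telescoping gives
\[
\log\frac{b_T^2}{b_{\tau+1}^2} = \sum_{t=\tau+2}^T \log\frac{1}{1 - r_t} \le 3 \sum_{t=\tau+2}^T \frac{\|\nabla F(x_t)\|^2}{p_t b_t^2}.
\]

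\textbf{Bounding the gradient series.} To control the right-hand side, I invoke Lemma \ref{lem:Appendix-last-key-lemma-changed} and use $p_t \le 1$ to split the coefficient as $\frac{L}{2b_t} - \frac{1}{\eta} + \frac{p_t}{2\eta} \le \left(\frac{L}{2b_t} - \frac{1}{4\eta}\right) - \frac{1}{4\eta}$. Applying Lemma \ref{lem:Appendix-last-non-asy-residual} to absorb the bracketed piece and using $F(x_{T+1}) - F^* \ge 0$ yields
\[
\sum_{t=1}^T \frac{\|\nabla F(x_t)\|^2}{p_t b_t^2} \le \frac{2\|x_1 - x^*\|^2}{\eta^2} + 2\left(\frac{2\eta L}{b_0} - 1\right)^+.
\]
Plugging this into the previous display and exponentiating yields $b_T \le b_{\tau+1} \exp\bigl(3\|x_1-x^*\|^2/\eta^2 + 3(2\eta L/b_0 - 1)^+\bigr)$, and then Lemma \ref{lem:last-non-asy-b_tau+1} supplies the two non-trivial branches in the stated max.

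\textbf{Main obstacle.} The delicate step is the algebraic carving of the coefficient in Lemma \ref{lem:Appendix-last-key-lemma-changed}: I must extract enough of the negative term $-\eta \|\nabla F(x_t)\|^2/(4 b_t^2 p_t)$ to bound the entire gradient series, while leaving behind a positive residual that Lemma \ref{lem:Appendix-last-non-asy-residual} can still dominate. Beyond this bookkeeping and the $\log(1/(1-x)) \le 3x$ estimate (which crucially depends on the uniform $2/3$ gap supplied by Lemma \ref{lem:last-non-asy-bound-away}, itself a consequence of the monotone decrease of $\|\nabla F(x_t)\|$ established in Lemma \ref{lem:last-non-asy-decreasing}), the argument is purely telescoping.
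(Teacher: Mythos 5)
Your proposal is correct and follows essentially the same route as the paper's proof: case-split on whether $b_T\le \eta L/2$, write $b_T^2$ as $b_{\tau+1}^2$ times a product of $1/(1-r_t)$ terms, control $\log\frac{1}{1-r_t}\le 3r_t$ via Lemma \ref{lem:last-non-asy-bound-away}, bound $\sum_t \|\nabla F(x_t)\|^2/(b_t^2 p_t)$ by combining Lemmas \ref{lem:Appendix-last-key-lemma-changed} and \ref{lem:Appendix-last-non-asy-residual} with $F(x_{T+1})-F^*\ge 0$, and finish with the $b_{\tau+1}$ bound of Lemma \ref{lem:last-non-asy-b_tau+1}. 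The coefficient-splitting step you flag as the main obstacle is exactly the implicit computation behind the paper's one-line claim, and your constants match.
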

\begin{proof}
Note that if $b_{T}\leq\frac{\eta L}{2}$, we are done. If $b_{T}>\frac{\eta L}{2}$,
we will bound $b_{T}$ as follows:
\begin{align*}
b_{T}^{2} & =b_{T-1}^{2}+\frac{\|\nabla F(x_{T})\|^{2}}{p_{T}}\\
\Rightarrow b_{T}^{2} & =\frac{b_{T-1}^{2}}{1-\frac{\|\nabla F(x_{T})\|^{2}}{b_{T}^{2}p_{T}}}=b_{\tau+1}^{2}\prod_{t=\tau+2}^{T}\frac{1}{1-\frac{\|\nabla F(x_{t})\|^{2}}{b_{t}^{2}p_{t}}}\\
\Rightarrow\log b_{T}^{2} & \leq\log b_{\tau+1}^{2}+\sum_{t=\tau+2}^{T}\log\frac{1}{1-\frac{\|\nabla F(x)\|^{2}}{b_{t}^{2}p_{t}}}\\
 & \leq\log b_{\tau+1}^{2}+\sum_{t=\tau+2}^{T}\frac{\frac{\|\nabla F(x)\|^{2}}{b_{t}^{2}p_{t}}}{1-\frac{\|\nabla F(x)\|^{2}}{b_{t}^{2}p_{t}}}\\
 & \leq\log b_{\tau+1}^{2}+\sum_{t=\tau+2}^{T}\frac{3\|\nabla F(x)\|^{2}}{b_{t}^{2}p_{t}},
\end{align*}
where the last inequality is by Lemma \ref{lem:last-non-asy-bound-away}.
Noticing $p_{t}=\frac{1}{t}\leq1$, combining Lemmas \ref{lem:Appendix-last-key-lemma-changed}
and \ref{lem:Appendix-last-non-asy-residual},we can find
\[
\sum_{t=1}^{T}\frac{\|\nabla F(x_{t})\|^{2}}{b_{t}^{2}p_{t}}\leq\frac{2\|x_{1}-x^{*}\|^{2}}{\eta^{2}}+2\left(\frac{2\eta L}{b_{0}}-1\right)^{+}.
\]
Hence we know
\begin{align*}
\log b_{T}^{2} & \leq\log b_{\tau+1}^{2}+\sum_{t=\tau+2}^{T}\frac{3\|\nabla F(x)\|^{2}}{b_{t}^{2}p_{t}}\\
 & \leq\log b_{\tau+1}^{2}+\frac{6\|x_{1}-x^{*}\|^{2}}{\eta^{2}}+6\left(\frac{2\eta L}{b_{0}}-1\right)^{+}\\
\Rightarrow b_{T}^{2} & \leq b_{\tau+1}^{2}\exp\left(\frac{6\|x_{1}-x^{*}\|^{2}}{\eta^{2}}+6\left(\frac{2\eta L}{b_{0}}-1\right)^{+}\right).
\end{align*}
The last step is to use the bound on $b_{\tau+1}$ in Lemma \ref{lem:last-non-asy-b_tau+1}.
\end{proof}

Finally, the proof of Theorem \ref{thm:Appendix-last-non-asy-rate}
is obtained by simply using Lemma \ref{lem:Appendix-last-non-asy-b_T}
to Equation (\ref{eq:last-non-asy-rate-by-b_T}).

\section{Missing proofs from Section \ref{sec:Main-Acc} \label{sec:Appendix-Acc}}

\subsection{Important lemma}

First, we state a general lemma that can be used for a more general
setting. The proof of the lemma is standard.
\begin{lem}
\label{lem:Appendix-acc-key-lemma}Suppose $F$ satisfies Assumptions
1' and 2' and the following conditions hold:
\begin{itemize}
\item $w_{t}$ is generated by 
\begin{align*}
v_{t} & =(1-a_{t})w_{t}+a_{t}x_{t}\\
x_{t+1} & =x_{t}-\frac{\eta}{q_{t}c_{t}}\nabla F(v_{t})\\
w_{t+1} & =(1-a_{t})w_{t}+a_{t}x_{t+1}
\end{align*}
with $\eta>0$ and $c_{t}>0$ is non-decreasing;
\item $a_{t}\in(0,1]$ and $q_{t}\geq a_{t}$ satisfy $\frac{1}{a_{t}q_{t}}\geq\frac{1-a_{t+1}}{a_{t+1}q_{t+1}},a_{1}=1$;
\end{itemize}
Then we have
\[
\frac{F(w_{T+1})-F^{*}}{a_{T}q_{T}c_{T}}\leq\frac{\|x_{1}-x^{*}\|^{2}}{2\eta}+\sum_{t=1}^{T}\left(\frac{L}{2c_{t}}-\frac{1}{2\eta}\right)\frac{\eta^{2}\|\nabla F(v_{t})\|^{2}}{c_{t}^{2}q_{t}^{2}}
\]
\end{lem}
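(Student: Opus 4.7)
The structure mirrors the classical acceleration argument (\`a la Auslender--Teboulle / Tseng), with the additional bookkeeping needed to absorb the non-standard step size $\eta/(q_t c_t)$ and the coefficient $a_t$ appearing in the $w$-update. I interpret the $b_t$ in the statement as a typo for $c_t$, since no $b_t$ is introduced in the hypotheses.

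First I would apply $L$-smoothness (Assumption 2') at the point $v_t$ evaluated at $w_{t+1}$. Using $w_{t+1}-v_t = a_t(x_{t+1}-x_t) = -\tfrac{\eta a_t}{q_t c_t}\nabla F(v_t)$, this yields
\[
F(w_{t+1}) \le F(v_t) - \tfrac{\eta a_t}{q_t c_t}\|\nabla F(v_t)\|^2 + \tfrac{L\eta^2 a_t^2}{2 q_t^2 c_t^2}\|\nabla F(v_t)\|^2.
\]
Next, convexity applied to the convex combination $(1-a_t)w_t + a_t x^*$ together with the identity $v_t - (1-a_t)w_t - a_t x^* = a_t(x_t - x^*)$ gives
\[
F(v_t) \le (1-a_t)F(w_t) + a_t F^* + a_t\langle \nabla F(v_t), x_t-x^*\rangle.
\]

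I would then rewrite the inner product via the three-point identity, using $x_t - x_{t+1} = \tfrac{\eta}{q_t c_t}\nabla F(v_t)$:
\[
a_t\langle \nabla F(v_t), x_t-x^*\rangle = \tfrac{a_t q_t c_t}{2\eta}\bigl[\|x_t-x^*\|^2 - \|x_{t+1}-x^*\|^2\bigr] + \tfrac{a_t \eta}{2 q_t c_t}\|\nabla F(v_t)\|^2.
\]
Combining the three displays and subtracting $F^*$, the gradient-norm coefficient collapses to $-\tfrac{a_t\eta}{2 q_t c_t} + \tfrac{L\eta^2 a_t^2}{2 q_t^2 c_t^2}$. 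After dividing through by $a_t q_t c_t$ I obtain
\[
\tfrac{F(w_{t+1})-F^*}{a_t q_t c_t} \le \tfrac{1-a_t}{a_t q_t c_t}(F(w_t)-F^*) + \tfrac{1}{2\eta}\bigl[\|x_t-x^*\|^2-\|x_{t+1}-x^*\|^2\bigr] + \Bigl(\tfrac{L\eta^2 a_t}{2 q_t^3 c_t^3} - \tfrac{\eta}{2 q_t^2 c_t^2}\Bigr)\|\nabla F(v_t)\|^2.
\]
Here I use the hypothesis $a_t \le q_t$ to bound $\tfrac{L\eta^2 a_t}{2 q_t^3 c_t^3} \le \tfrac{L\eta^2}{2 q_t^2 c_t^3}$, so the gradient coefficient is at most $\bigl(\tfrac{L}{2c_t} - \tfrac{1}{2\eta}\bigr)\tfrac{\eta^2}{q_t^2 c_t^2}$, matching the target form.

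Finally I telescope. The monotonicity $c_t \le c_{t+1}$ together with the assumption $\tfrac{1}{a_t q_t} \ge \tfrac{1-a_{t+1}}{a_{t+1} q_{t+1}}$ gives $\tfrac{1}{a_t q_t c_t} \ge \tfrac{1-a_{t+1}}{a_{t+1} q_{t+1} c_{t+1}}$, so summing from $t=1$ to $T$ the function-gap terms on the right collapse into the analogous terms on the left, leaving only the boundary contribution $\tfrac{1-a_1}{a_1 q_1 c_1}(F(w_1)-F^*)$, which vanishes since $a_1 = 1$. The distance terms telescope to $\tfrac{\|x_1-x^*\|^2}{2\eta} - \tfrac{\|x_{T+1}-x^*\|^2}{2\eta} \le \tfrac{\|x_1-x^*\|^2}{2\eta}$, producing exactly the claimed bound. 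Nothing in this proof is a genuine obstacle; the only place requiring care is the coefficient-bounding step where the inequality $a_t \le q_t$ is invoked, and the telescoping step where both the $(a_t,q_t)$-condition and monotonicity of $c_t$ are used together.
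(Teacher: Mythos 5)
Your proposal is correct and follows essentially the same route as the paper's proof: smoothness at $v_t$, convexity to split off $(1-a_t)(F(w_t)-F^*)$ and the inner product with $x^*$, the three-point identity with the update rule, the bound $a_t\le q_t$ on the gradient coefficient, and telescoping via $\tfrac{1}{a_tq_tc_t}\ge\tfrac{1-a_{t+1}}{a_{t+1}q_{t+1}c_{t+1}}$ with $a_1=1$. Your reading of $b_t$ as $c_t$ is also the intended one (the paper's own proof carries the same notational slip), so there is nothing to fix.
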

\begin{proof}
Starting from smoothness

\begin{align*}
 & F(w_{t+1})-F(v_{t})\\
\le & \langle\nabla F(v_{t}),w_{t+1}-v_{t}\rangle+\frac{L}{2}\|w_{t+1}-v_{t}\|^{2}\\
= & (1-a_{t})\langle\nabla F(v_{t}),w_{t}-v_{t}\rangle+a_{t}\langle\nabla F(v_{t}),x_{t+1}-v_{t}\rangle+\frac{L}{2}\|w_{t+1}-v_{t}\|^{2}\\
= & (1-a_{t})\langle\nabla F(v_{t}),w_{t}-v_{t}\rangle+a_{t}\langle\nabla F(v_{t}),x^{*}-v_{t}\rangle+a_{t}\langle\nabla F(v_{t}),x_{t+1}-x^{*}\rangle+\frac{L}{2}\|w_{t+1}-v_{t}\|^{2}\\
\le & (1-a_{t})(F(w_{t})-F(v_{t}))+a_{t}(F^{*}-F(v_{t}))+a_{t}\langle\nabla F(v_{t}),x_{t+1}-x^{*}\rangle+\frac{L}{2}\|w_{t+1}-v_{t}\|^{2}
\end{align*}
Thus 
\begin{align*}
F(w_{t+1})-F^{*} & \leq(1-a_{t})(F(w_{t})-F^{*})+a_{t}\langle\nabla F(v_{t}),x_{t+1}-x^{*}\rangle+\frac{L}{2}\|w_{t+1}-v_{t}\|^{2}
\end{align*}
where the last inequality is due to the convexity of $F$. Using the
update rule $\nabla F(v_{t})=\frac{q_{t}c_{t}}{\eta}\left(x_{t}-x_{t+1}\right)$
and $w_{t+1}-v_{t}=a_{t}\left(x_{t+1}-x_{t}\right)$ we obtain
\begin{align*}
F(w_{t+1})-F^{*} & \leq(1-a_{t})(F(w_{t})-F^{*})\\
 & +\frac{a_{t}q_{t}c_{t}}{2\eta}\left(\left\Vert x^{*}-x_{t}\right\Vert ^{2}-\left\Vert x^{*}-x_{t+1}\right\Vert ^{2}-\left\Vert x_{t+1}-x_{t}\right\Vert ^{2}\right)+\frac{La_{t}^{2}}{2}\|x_{t+1}-x_{t}\|^{2}
\end{align*}
Dividing both sides by $a_{t}q_{t}c_{t}$ and sumning up from $1$
to $T$, we have
\begin{align*}
\sum_{t=1}^{T}\frac{1}{a_{t}q_{t}c_{t}}(F(w_{t+1})-F^{*}) & \leq\sum_{t=1}^{T}\frac{1-a_{t}}{a_{t}q_{t}c_{t}}(F(w_{t})-F^{*})\\
 & +\sum_{t=1}^{T}\left(\frac{La_{t}}{2q_{t}c_{t}}-\frac{1}{2\eta}\right)\|x_{t+1}-x_{t}\|^{2}+\frac{1}{2\eta}\left\Vert x^{*}-x_{1}\right\Vert ^{2}
\end{align*}
Note that $a_{t}\le q_{t}$, $\frac{1}{a_{t}q_{t}c_{t}}\geq\frac{1-a_{t+1}}{a_{t+1}q_{t+1}c_{t+1}}$
and $a_{t}=1$. Thus
\begin{align*}
\frac{F(w_{T+1})-F^{*}}{a_{T}q_{T}c_{T}} & \leq\frac{\left\Vert x_{1}-x^{*}\right\Vert ^{2}}{2\eta}+\sum_{t=1}^{T}\left(\frac{L}{2c_{t}}-\frac{1}{2\eta}\right)\|x_{t+1}-x_{t}\|^{2}\\
 & =\frac{\left\Vert x_{1}-x^{*}\right\Vert ^{2}}{2\eta}+\sum_{t=1}^{T}\left(\frac{L}{2c_{t}}-\frac{1}{2\eta}\right)\frac{\eta^{2}\|\nabla F(v_{t})\|^{2}}{c_{t}^{2}q_{t}^{2}}.
\end{align*}
\end{proof}

\subsection{First variant}

By using Lemma \ref{lem:Appendix-acc-key-lemma}, the proof idea of
Theorem \ref{thm:Main-AdaGradNorm-Acc1-rate} is the same as the proof
of Theorem \ref{thm:Main-AdaGradNorm-Last1-rate}. Hence, we omit
it for brevity.

\subsection{Second variant}

By using Lemma \ref{lem:Appendix-acc-key-lemma}, the proof idea of
Theorem \ref{thm:Main-AdaGradNorm-Acc2-rate} is the same as the proof
of Theorem \ref{thm:Main-AdaGradNorm-Last2-rate}. Hence, we omit
it here.

\subsection{A discussion on when $\Delta=0$ and $\delta=1$\label{subsec:Appendix-acc-Delta=00003D0}}

Algorithms \ref{alg:AdaGradNorm-Acc1} and \ref{alg:AdaGradNorm-Acc2}
become one when $\Delta=0$ and $\delta=1$. As discussed in \ref{subsec:Appendix-Last-Delta=00003D0-asy},
the challenge is to find a explicit bound on $b_{T}$. First, we give
an asymptotic rate in Theorem \ref{thm:acc-asy-rate} of which the
proof idea is the same as the proof of Theorem \ref{thm:Appendix-last-asy-rate},
thus is omitted.
\begin{thm}
\label{thm:acc-asy-rate}Suppose $F$ satisfies Assumptions 1 and
2', when $\Delta=0$ for Algorithm \ref{alg:AdaGradNorm-Acc1}, or
equivalently, $\delta=1$ for Algorithm \ref{alg:AdaGradNorm-Acc2},
by taking $a_{t}=\frac{2}{t+1}$, $p_{t}=\frac{2}{t}$, we have
\begin{align*}
F(w_{T+1})-F^{*} & =O\left(1/T^{2}\right).
\end{align*}
\end{thm}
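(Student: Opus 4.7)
The plan is to mirror the asymptotic argument for Theorem~\ref{thm:Appendix-last-asy-rate}, with Lemma~\ref{lem:Appendix-acc-key-lemma} now standing in for Lemma~\ref{lem:Appendix-last-key-lemma}. First I would verify the hypotheses on the momentum/weight schedules: with $a_t = 2/(t+1)$ and $q_t = 2/t$ we have $a_1 = 1$, $a_t \le q_t$, and by direct computation
\[
\frac{1}{a_t q_t} \;=\; \frac{t(t+1)}{4} \;=\; \frac{1 - a_{t+1}}{a_{t+1} q_{t+1}},
\]
so Lemma~\ref{lem:Appendix-acc-key-lemma} is applicable. At $\Delta = 0$ (equivalently $\delta = 1$) the stepsize $b_t$ is non-decreasing and satisfies $b_t^2 - b_{t-1}^2 = \|\nabla F(v_t)\|^2/q_t^2$. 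Applying the lemma with $c_t = b_t$ yields
\[
\frac{F(w_{T+1}) - F^*}{a_T q_T b_T} \;\le\; \frac{\|x_1 - x^*\|^2}{2\eta} + \sum_{t=1}^T \left(\frac{L}{2b_t} - \frac{1}{2\eta}\right) \frac{\eta^2 \|\nabla F(v_t)\|^2}{b_t^2 q_t^2}.
\]

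Next I would bound the residual sum in the spirit of Lemma~\ref{lem:Appendix-last-asy-residual}. Let $\tau = \max\{t : b_t \le 2\eta L\} \lor 0$. For $t > \tau$ the factor $\frac{L}{2b_t} - \frac{1}{2\eta}$ is at most $-\frac{1}{4\eta}$, while for $t \le \tau$ one telescopes using $b_t^2 - b_{t-1}^2 = \|\nabla F(v_t)\|^2/q_t^2$:
\[
\sum_{t=1}^\tau \frac{L}{2 b_t} \cdot \frac{\eta^2 (b_t^2 - b_{t-1}^2)}{b_t^2} \;\le\; \eta^2 L \sum_{t=1}^\tau \left(\tfrac{1}{b_{t-1}} - \tfrac{1}{b_t}\right) \;\le\; \frac{\eta^2 L}{b_0}.
\]
Since the LHS of the first display is non-negative, rearranging yields a summability estimate analogous to~(\ref{eq:sum-bound}), namely $\sum_{t=1}^\infty \|\nabla F(v_t)\|^2/(b_t^2 q_t^2) < \infty$.

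The final step is to bound $b_T$ by repeating the reasoning of Lemma~\ref{lem:last-asy-b_T}. From the product identity $b_T^2 / b_0^2 = \prod_{t=1}^T \bigl(1 - \|\nabla F(v_t)\|^2/(b_t^2 q_t^2)\bigr)^{-1}$, taking logarithms and using $-\log(1-x) \le 2x$ once $x$ is small (which happens eventually since the summands vanish), one sees that $\log b_T^2$ is uniformly bounded, hence $\lim_{T\to\infty} b_T = b_\infty < \infty$. Substituting this together with $a_T q_T = 4/(T(T+1))$ back into the first display gives $F(w_{T+1}) - F^* = O(b_\infty/T^2) = O(1/T^2)$.

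The main obstacle is qualitatively the same as in the non-accelerated case: at $\Delta = 0$ the telescope $\sum \|\nabla F(v_t)\|^2/(b_t^2 q_t^2) = \sum (b_t^2 - b_{t-1}^2)/b_t^2$ only lower-bounds $\log(b_T/b_0)$, so the convergence-of-series argument establishes that $b_\infty$ is finite but leaves it implicitly determined by the trajectory rather than by the problem parameters. This is precisely why only an asymptotic statement is obtained; promoting the bound to an explicit non-asymptotic $O(1/T^2)$ rate would presumably require importing the cocoercivity-based refinement of Section~\ref{subsec:Appendix-Last-Delta=00003D0-non-asy}, which the main text suggests only delivers the weaker $O(1/T^2 + 1/T)$ guarantee because the per-step increment of $b_t$ is now $O(t^2 \|\nabla F(v_t)\|^2)$.
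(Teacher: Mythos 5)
Your proposal is correct and follows exactly the route the paper intends: it instantiates Lemma \ref{lem:Appendix-acc-key-lemma} with $c_t=b_t$, bounds the residual sum, extracts the summability of $\sum_t \|\nabla F(v_t)\|^2/(b_t^2 q_t^2)$, and then reuses the product/logarithm argument of Lemma \ref{lem:last-asy-b_T} to get $b_\infty<\infty$, which is precisely the adaptation of the proof of Theorem \ref{thm:Appendix-last-asy-rate} that the paper omits. No gaps; your concluding remarks about why the bound stays asymptotic also match the paper's discussion around Theorem \ref{thm:acc-non-asy-rate}.
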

Now, we aim to prove the following non-asymptotic rate.
\begin{thm}
\label{thm:acc-non-asy-rate}Suppose $F$ satisfies Assumptions 1
and 2', when $\Delta=0$ for Algorithm \ref{alg:AdaGradNorm-Acc1},
or equivalently, $\delta=1$ for Algorithm \ref{alg:AdaGradNorm-Acc2},
by taking $a_{t}=\frac{2}{t+1}$, $p_{t}=\frac{2}{t}$, we have
\begin{align*}
F(w_{T+1})-F^{*} & \leq\frac{4\left(b_{0}+\frac{4\eta^{2}L^{2}}{b_{0}}\right)\left(\frac{\|x_{1}-x^{*}\|^{2}}{2\eta}+\frac{\eta^{2}L}{b_{0}}\log^{+}\frac{\eta L}{b_{0}}\right)}{T(T+1)}+\frac{16L\left(\frac{\|x_{1}-x^{*}\|^{2}}{2\eta}+\frac{\eta^{2}L}{b_{0}}\log^{+}\frac{\eta L}{b_{0}}\right)^{2}}{T+1}.
\end{align*}
\end{thm}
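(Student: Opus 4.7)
The plan is to combine Lemma \ref{lem:Appendix-acc-key-lemma} with a self-bounding argument that shows $b_T$ grows at most linearly in $T$. First I would apply the lemma with $c_t = b_t$, $a_t = 2/(t+1)$, $q_t = 2/t$, and bound the residual via the same $\Delta = 0$ AdaGrad-style calculation used in Section \ref{subsec:Appendix-Last-Delta=00003D0-asy} (using the standard estimate $\sum_{t \le \tau'}(b_t^2 - b_{t-1}^2)/b_t^3 \le (2/b_0)\log^+(\eta L/b_0)$ to produce the $\log^+$ factor). Inspecting the telescoping in the lemma's proof in fact yields the slightly stronger statement
\[
\frac{F(w_{T+1}) - F^*}{a_T q_T b_T} + \frac{\|x^* - x_{T+1}\|^2}{2\eta} \le C, \qquad C \coloneqq \frac{\|x_1 - x^*\|^2}{2\eta} + \frac{\eta^2 L}{b_0}\log^+\frac{\eta L}{b_0},
\]
which gives both the rate statement $F(w_{T+1}) - F^* \le \frac{4 b_T C}{T(T+1)}$ and the uniform bound $\|x^* - x_t\|^2 \le 2\eta C$ for every $t$.

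Because $a_1 = 1$ makes $w_2 = x_2$, and inductively every $w_t$ and $v_t$ is a convex combination of $\{x_s\}_{s \le t}$, the iterate bound propagates via convexity of $\|\cdot\|^2$ to $\|v_t - x^*\|^2, \|w_t - x^*\|^2 \le 2\eta C$. Combined with the smoothness inequality $F(y) - F^* \le \frac{L}{2}\|y - x^*\|^2$, this gives the universal estimate $F(y) - F^* \le L\eta C$ for $y \in \{v_t, w_t, x_t\}$.

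The main obstacle is the linear bound $b_T \le b_0 + \frac{4\eta^2 L^2}{b_0} + 4LCT$. Starting from $b_T^2 - b_0^2 = \sum_t \|\nabla F(v_t)\|^2/q_t^2 \le 2L \sum_t (F(v_t) - F^*)/q_t^2$, I would use the convex-combination bound $F(v_t) - F^* \le (1-a_t)(F(w_t)-F^*) + a_t(F(x_t)-F^*)$ together with the rate bound applied at time $t-1$, $F(w_t) - F^* \le \frac{4 b_{t-1} C}{(t-1)t}$ for $t \ge 2$, and the universal estimate $F(x_t) - F^* \le L\eta C$. With $a_t = 2/(t+1)$ and $q_t = 2/t$, and using the inequalities $t/(t+1) \le 1$ and $t^2/(t+1) \le t$, the sum collapses to a quadratic inequality $b_T^2 \le b_0^2 + 2LCT \cdot b_T + O(L^2 \eta C \, T(T+1))$, whose solution is $b_T \le 2LCT + b_0 + LT\sqrt{2\eta C}$ up to absorbable lower order terms. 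Reshaping the cross term $LT\sqrt{2\eta C}$ by a weighted Young's inequality tuned to the base constant $4\eta^2 L^2/b_0$ yields $b_T \le b_0 + \frac{4\eta^2 L^2}{b_0} + 4LCT$. Substituting into $F(w_{T+1}) - F^* \le \frac{4 b_T C}{T(T+1)}$ separates cleanly into $\frac{4(b_0 + 4\eta^2 L^2/b_0) C}{T(T+1)} + \frac{16 L C^2}{T+1}$, which is the claimed bound. The hardest bookkeeping will be choosing the Young weight so that the cross term reshapes exactly into the stated additive and linear constants; handling boundary regimes (such as $b_0 > \eta L$, which nullifies $\log^+$) may require a small case split analogous to the one in Lemma \ref{lem:Appendix-last-non-asy-b_T}.
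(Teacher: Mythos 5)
Your skeleton matches the paper's (apply Lemma \ref{lem:Appendix-acc-key-lemma} with $c_t=b_t$, bound the residual by $D\coloneqq\frac{\eta^2L}{b_0}\log^+\frac{\eta L}{b_0}$, show $b_T$ grows at most linearly in $T$, substitute back), and your intermediate observations are sound: keeping the $-\|x^*-x_{T+1}\|^2/(2\eta)$ term in the telescoping is legitimate and does give $\|x_t-x^*\|^2\le 2\eta C$, hence $F(x_t)-F^*\le L\eta C$ for all $t$. The gap is in how you establish the linear bound on $b_T$. Writing $\|\nabla F(v_t)\|^2\le 2L(F(v_t)-F^*)$ and splitting $F(v_t)-F^*\le(1-a_t)(F(w_t)-F^*)+a_t(F(x_t)-F^*)$, the $x_t$-part enters $b_T^2$ with weight $\frac{a_t}{q_t^2}=\frac{t^2}{2(t+1)}\sim t/2$, so it contributes $\Theta(L^2\eta C\,T^2)$ to $b_T^2$ and hence a term $\Theta(LT\sqrt{\eta C})$ in $b_T$. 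This term is linear in $T$ with coefficient $L\sqrt{\eta C}$, which is \emph{not} bounded by any constant multiple of $LC$ when $C$ is small (take $x_1$ close to $x^*$ and $b_0\ge\eta L$, so $C\to 0$ while $\eta,L$ are fixed); and no weighted Young step can repair this, because a $T$-independent additive constant such as $4\eta^2L^2/b_0$ cannot absorb a deficit that grows linearly in $T$. Carried through, your route yields a correct $O(1/T^2+1/T)$ bound, but with an extra $O\!\left(\frac{LC\sqrt{\eta C}}{T}\right)$ (equivalently $O(\frac{\eta LC}{T})$ after AM-GM) term that is not dominated by the stated $\frac{16LC^2}{T+1}$, so the theorem's exact constants do not follow.

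The missing idea is the paper's decomposition of the stepsize increment through the gradient at $w_{t+1}$ rather than through $F(v_t)-F^*$: write $\|\nabla F(v_t)\|^2\le 2\|\nabla F(v_t)-\nabla F(w_{t+1})\|^2+2\|\nabla F(w_{t+1})\|^2$. By smoothness the first piece is $\le 2L^2a_t^2\|x_{t+1}-x_t\|^2$, and since $a_t\le q_t$ the factor $a_t^2/q_t^2\le 1$ cancels the $t^2$ weight; using $\|x_{t+1}-x_t\|^2=\eta^2(b_t^2-b_{t-1}^2)/(b_t^2q_t^2)$ this piece telescopes as $4\eta^2L^2\left(\frac{1}{b_{t-1}}-\frac{1}{b_t}\right)$, which is precisely where the additive constant $\frac{4\eta^2L^2}{b_0}$ comes from. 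The second piece uses the accelerated self-bound at $w_{t+1}$, $\|\nabla F(w_{t+1})\|^2\le 2L(F(w_{t+1})-F^*)\le 2La_tq_tb_tD$, so that $\frac{2\|\nabla F(w_{t+1})\|^2}{q_t^2}\le\frac{4La_tb_tD}{q_t}\le 4Lb_tD$ --- the $a_tq_t\sim 1/t^2$ decay of the $w_{t+1}$ gap exactly cancels the $1/q_t^2$ weight, which the gap at $x_t$ (only $O(1)$, not $O(1/t^2)$) cannot do. This yields $b_t\le b_{t-1}+4\eta^2L^2\left(\frac{1}{b_{t-1}}-\frac{1}{b_t}\right)+4LD$ and hence $b_T\le b_0+\frac{4\eta^2L^2}{b_0}+4LDT$, giving the stated constants.
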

We shortly discuss here why we can only give a rate in the order of
$1/T$ but not $1/T^{2}$. Recall that in the proof of Theorem \ref{thm:Appendix-last-non-asy-rate},
the key step is that after a certain time, $\|\nabla F(x_{t})\|$
is a non-increasing sequence, by using which we can finally give a
constant upper bound on $b_{T}$ that finally helps us to get the
final $1/T$ rate. However, it is unclear under what condition on
$b_{t}$, $\|\nabla F(v_{t})\|$ now will be a non-increasing sequence
in our accelerated algorithm. Thus it is unclear to us whether it
is possible to give a constant bound on $b_{T}$. Instead, we will
show $b_{t}$ can increase at most linearly in this accelerated scheme
by a new trick, for which reason, we can finally obtain the rate in
the order of $1/T$. This guarantees that the convergence of the last
iterate is no worse than the variants in Section \ref{sec:Main-Last}.

\begin{proof}
As before, to start with, we use Lemma \ref{lem:Appendix-acc-key-lemma}
(replace $c_{t}$ by $b_{t}$)
\begin{align*}
\frac{F(w_{T+1})-F^{*}}{a_{T}q_{T}b_{T}} & \leq\frac{\|x_{1}-x^{*}\|^{2}}{2\eta}+\sum_{t=1}^{T}\left(\frac{L}{2b_{t}}-\frac{1}{2\eta}\right)\frac{\eta^{2}\|\nabla F(v_{t})\|^{2}}{b_{t}^{2}q_{t}^{2}}.
\end{align*}
By using $b_{t}^{2}=b_{t-1}^{2}+\frac{\|\nabla F(v_{t})\|^{2}}{b_{t}^{2}}$
and the same technique in the previous proof, we know
\begin{align*}
\sum_{t=1}^{T}\left(\frac{L}{2b_{t}}-\frac{1}{2\eta}\right)\frac{\eta^{2}\|\nabla F(v_{t})\|^{2}}{b_{t}^{2}q_{t}^{2}} & \leq\frac{\eta^{2}L}{b_{0}}\log^{+}\frac{\eta L}{b_{0}}.
\end{align*}
So we have
\[
F(w_{T+1})-F^{*}\leq a_{T}q_{T}b_{T}\underbrace{\left(\frac{\|x_{1}-x^{*}\|^{2}}{2\eta}+\frac{\eta^{2}L}{b_{0}}\log^{+}\frac{\eta L}{b_{0}}\right)}_{D}
\]
Now we turn to bound $b_{t}$ by observing
\begin{align*}
b_{t}^{2} & =b_{t-1}^{2}+\frac{\|\nabla F(v_{t})\|^{2}}{q_{t}^{2}}\\
 & \leq b_{t-1}^{2}+\frac{2\|\nabla F(v_{t})-\nabla F(w_{t+1})\|^{2}}{q_{t}^{2}}+\frac{2\|\nabla F(w_{t+1})\|^{2}}{q_{t}^{2}}\\
 & \leq b_{t-1}^{2}+\frac{2L^{2}\|v_{t}-w_{t+1}\|^{2}}{q_{t}^{2}}+\frac{2\|\nabla F(w_{t+1})\|^{2}}{q_{t}^{2}}\\
 & =b_{t-1}^{2}+\frac{2L^{2}a_{t}^{2}\|x_{t+1}-x_{t}\|^{2}}{q_{t}^{2}}+\frac{2\|\nabla F(w_{t+1})\|^{2}}{q_{t}^{2}}\\
 & \leq b_{t-1}^{2}+2L^{2}\|x_{t+1}-x_{t}\|^{2}+\frac{2\|\nabla F(w_{t+1})\|^{2}}{q_{t}^{2}},
\end{align*}
where the last inequality is due to $a_{t}\leq p_{t}$. Then we use
$\|x_{t+1}-x_{t}\|^{2}=\frac{\eta^{2}\|\nabla F(v_{t})\|^{2}}{b_{t}^{2}q_{t}^{2}}=\frac{\eta^{2}(b_{t}^{2}-b_{t-1}^{2})}{b_{t}^{2}}$
and $\|\nabla F(w_{t+1})\|^{2}\leq2L(F(w_{t+1})-F^{*})\leq2La_{t}q_{t}b_{t}D$
to get
\begin{align*}
b_{t}^{2} & \leq b_{t-1}^{2}+\frac{2\eta^{2}L^{2}\left(b_{t}^{2}-b_{t-1}^{2}\right)}{b_{t}^{2}}+\frac{4La_{t}q_{t}b_{t}D}{q_{t}^{2}}\\
 & \leq b_{t-1}^{2}+\frac{2\eta^{2}L^{2}\left(b_{t}^{2}-b_{t-1}^{2}\right)}{b_{t}^{2}}+4Lb_{t}D\\
\Rightarrow b_{t} & \leq\frac{b_{t-1}^{2}}{b_{t}}+2\eta^{2}L^{2}\frac{b_{t}^{2}-b_{t-1}^{2}}{b_{t}^{3}}+4LD\\
 & \leq b_{t-1}+4\eta^{2}L^{2}\left(\frac{1}{b_{t-1}}-\frac{1}{b_{t}}\right)+4LD\\
\Rightarrow b_{t} & \leq b_{0}+\frac{4\eta^{2}L^{2}}{b_{0}}+4LDt.
\end{align*}
Using this bound, we finally get
\begin{align*}
F(w_{T+1})-F^{*} & \leq a_{T}q_{T}b_{T}D\\
 & =\frac{4\left(b_{0}+\frac{4\eta^{2}L^{2}}{b_{0}}+4LDT\right)D}{T(T+1)}\\
 & =\frac{4\left(b_{0}+\frac{4\eta^{2}L^{2}}{b_{0}}\right)\left(\frac{\|x_{1}-x^{*}\|^{2}}{2\eta}+\frac{\eta^{2}L}{b_{0}}\log^{+}\frac{\eta L}{b_{0}}\right)}{T(T+1)}+\frac{16L\left(\frac{\|x_{1}-x^{*}\|^{2}}{2\eta}+\frac{\eta^{2}L}{b_{0}}\log^{+}\frac{\eta L}{b_{0}}\right)^{2}}{T+1}.
\end{align*}
\end{proof}

\section{Experiments}

\begin{figure}[h]
\centering{}\subfloat{\includegraphics[width=1\textwidth]{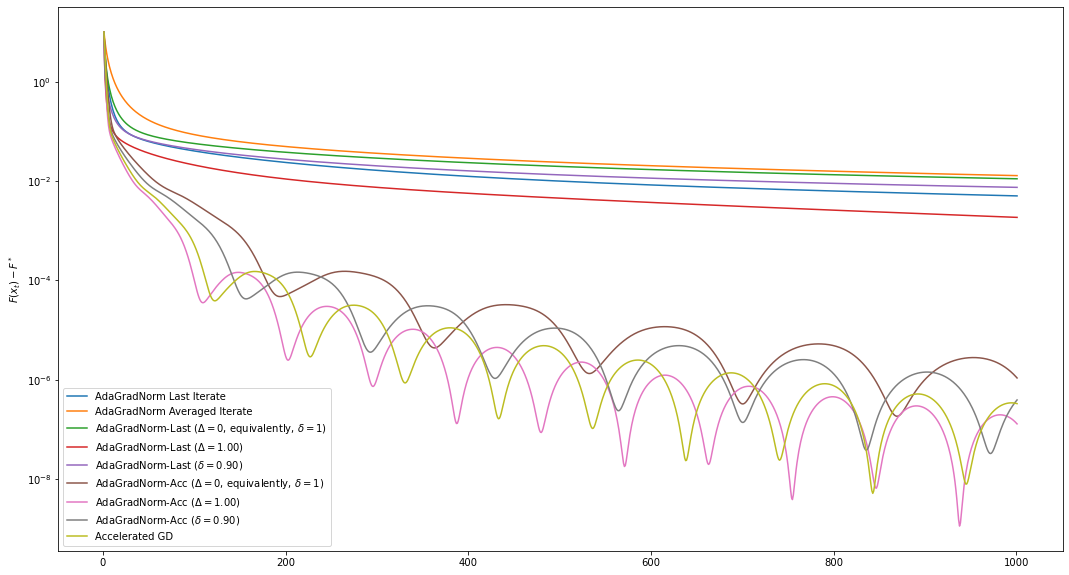}}\caption{\label{fig:Appendix-experiments}Function value gap for different
algorithms}
\end{figure}

In this section, we provide some empirical evidence to compare the
performances of our algorithms in the deterministic setting. Our test
function follows the quadratic function used to prove the lower bound
of the first order method constructed by Nesterov \citep{nesterov2018lectures}.
That is
\[
F(x)=\frac{x[1]^{2}+x[d]^{2}+\sum_{i=1}^{d-1}\left(x[i]-x[i+1]\right)^{2}}{2}-x[1].
\]
where $x[i]$ refers to the $i$-th coordinate of point $x\in\R^{d}$.
It is known that $F$ is $4$-smooth and convex with the unique minimizer
\[
x^{*}[i]=1-\frac{i}{d+1},\forall i\in\left[d\right].
\]
We fix $d=101$ and set the time horizon to $T=1000$ in the test.
The starting point $x_{1}$ is initialized randomly satisfying that
every coordinate is uniformly chosen in $\left[0,1\right)$. All algorithms
share the same $x_{1}$. For the adaptive algorithms, we choose $b_{0}=10^{-2}$
and set $\eta=1$ without any further tuning. We also compare with
an accelerated algorithm \citep{lan2020first}, which requires using
the smoothness constant $L=4$. 

The result is shown in Figure \ref{fig:Appendix-experiments}. We
can find that our Algorithms \ref{alg:AdaGradNorm-Last1} and \ref{alg:AdaGradNorm-Last2}
admit the last iterate convergence. Additionally, both our accelerated
algorithms, i.e., Algorithms \ref{alg:AdaGradNorm-Acc1} and \ref{alg:AdaGradNorm-Acc2},
enjoy the accelerated property without knowing the smoothness parameter
and are competitive against Accelerated Gradient Descent \citep{lan2020first}
which requires the smooth parameter to set the step size. Another
interesting observation is that it seems AdaGradNorm also exhibits
the last iterate convergence. However, whether this is indeed a property
of AdaGradNorm has not been confirmed by the theory. We leave this
as a future direction.

\end{document}